%% file: paper.tex
\documentclass[]{bytedance_seed}

\usepackage[toc,page,header]{appendix}

\usepackage{titlesec}
\usepackage{enumitem}
\usepackage{graphicx}
\usepackage{makecell}
\usepackage{array}
\usepackage{booktabs}
\usepackage{wrapfig}
\usepackage{indentfirst}
\usepackage{amsmath}
\usepackage{amssymb}
\usepackage{bm}
\usepackage{xspace}
\usepackage{hyperref}
\usepackage{xcolor}
\usepackage{tcolorbox}
\usepackage{pifont}
\usepackage{amsthm}
\usepackage{mathtools}
\usepackage{multirow}
\usepackage{diagbox}
\usepackage{marvosym}
\usepackage{tabularx}

\newcolumntype{C}{>{\centering\arraybackslash}X}

\newcommand{\xmark}{\textcolor{red}{\ding{55}}}
\newcommand{\cmark}{\textcolor{green!60!black}{\ding{51}}}

\usepackage{algorithm}
\usepackage{algpseudocode}
\usepackage{caption}

\newenvironment{sidealgorithm}[2]{%
  \refstepcounter{algorithm}%
  \noindent\hrule height 0.8pt\relax
  \vspace{2pt}
  \noindent\textbf{Algorithm \thealgorithm}~#2\label{#1}\par
  \vspace{2pt}
  \noindent\hrule
  \vspace{3pt}
  \begin{algorithmic}[1]
}{%
  \end{algorithmic}
  \vfill
  \noindent\hrule
}

\usepackage{xparse}

\newcounter{implication}
\renewcommand{\theimplication}{\arabic{implication}}

\NewDocumentCommand{\implicationbox}{m m}{%
    \refstepcounter{implication}%
    \label{#1}%
    \obsbox{%
        \textbf{Implication~\theimplication. }%
        \textit{#2}%
    }%
}

\numberwithin{equation}{section}
\theoremstyle{plain}
\newtheorem{theorem}{Theorem}[section]

\newtheorem{proposition}[theorem]{Proposition}
\newtheorem{corollary}[theorem]{Corollary}
\theoremstyle{definition}

\newtheorem{assumption}[theorem]{Assumption}
\theoremstyle{remark}
\newtheorem{remark}[theorem]{Remark}

\newcommand{\Lcal}{\mathcal{L}}
\newcommand{\Ncal}{\mathcal{N}}
\newcommand{\E}{\mathbb{E}}
\newcommand{\KL}{\mathrm{KL}}

\newcommand{\vv}{\bm{v}}
\newcommand{\veps}{\bm{\epsilon}}
\newcommand{\dd}{\mathrm{d}}
\newcommand{\pdata}{p_{\mathrm{data}}}

\newcommand{\method}{$\mathcal{C}ola$ \textsc{DLM}\xspace}
\newcommand{\obsbox}[1]{%
    \begin{tcolorbox}[colframe=black!70, colback=lightgray!15, boxrule=1pt, arc=2mm]
        \small#1
    \end{tcolorbox}
}

\usepackage[most]{tcolorbox}

\newtcblisting{promptbox}[1][]{
  breakable,
  enhanced,
  colback=gray!3,
  colframe=black!35,
  boxrule=0.5pt,
  arc=2pt,
  left=6pt,
  right=6pt,
  top=6pt,
  bottom=6pt,
  listing only,
  listing engine=listings,
  listing options={
    basicstyle=\ttfamily\small,
    columns=fullflexible,
    keepspaces=true,
    breaklines=true
  },
  title=#1,
  fonttitle=\bfseries,
}

\title{\raisebox{-0.3ex}{\includegraphics[height=2.3ex]{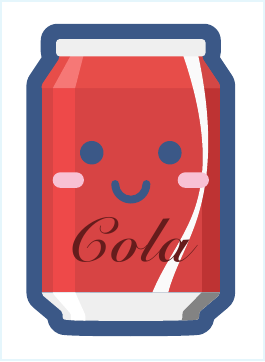}} \textcolor{red}{Co}ntinuous \textcolor{red}{La}tent Diffusion Language Model}

\author[1,2]{Hongcan Guo}
\author[3,\dagger]{Qinyu Zhao}
\author[1,4]{Yian Zhao}
\author[1,5]{Shen Nie}
\author[1]{Rui Zhu}
\author[1]{Qiushan Guo}
\author[1]{Feng Wang}
\author[1]{Tao Yang}
\author[2]{Hengshuang Zhao}
\author[1]{Guoqiang Wei}
\author[1, \textsuperscript{\Letter}]{Yan Zeng}

\affiliation[1]{ByteDance Seed}
\affiliation[2]{The University of Hong Kong} 
\affiliation[3]{The Australian National University} 
\affiliation[4]{Peking University}
\affiliation[5]{Renmin University of China}

\contribution[\dagger]{Work done during an internship at Bytedance Seed}
\contribution[\textsuperscript{\Letter}]{Corresponding author}

\abstract{
Large language models have achieved remarkable success under the autoregressive paradigm, yet high-quality text generation need not be tied to a fixed left-to-right order. Existing alternatives still struggle to jointly achieve generation efficiency, scalable representation learning, and effective global semantic modeling. We propose \method, a hierarchical latent diffusion language model that frames text generation through hierarchical information decomposition. \method first learns a stable text-to-latent mapping with a Text VAE, then models a global semantic prior in continuous latent space with a block-causal DiT, and finally generates text through conditional decoding. From a unified Markov-path perspective, its diffusion process performs latent prior transport rather than token-level observation recovery, thereby separating global semantic organization from local textual realization. This design yields a more flexible non-autoregressive inductive bias, supports semantic compression and prior fitting in continuous space, and naturally extends to other continuous modalities. Through experiments spanning 4 research questions, 8 benchmarks, strictly matched $\sim$2B-parameter autoregressive and LLaDA baselines, and scaling curves up to about 2000 EFLOPs, we identify an effective overall configuration of \method and verify its strong scaling behavior for text generation. Taken together, the results establish hierarchical continuous latent prior modeling as a principled alternative to strictly token-level language modeling, where generation quality and scaling behavior may better reflect model capability than likelihood, while also suggesting a concrete path toward unified modeling across discrete text and continuous modalities.
}

\date{\today}
\correspondence{Yan Zeng at \email{zengyan.yanne@bytedance.com}}

\checkdata[Project Page]{\url{https://hongcanguo.github.io/Cola-DLM/}}

\begin{document}
\maketitle

\input{sections/introduction}
\input{sections/relatedwork}
\input{sections/approach}
\input{sections/experiments}

\input{sections/discussion}
\input{sections/conclusion}

\input{sections/afterword}

\clearpage

\bibliographystyle{plainnat}
\bibliography{main}

\clearpage

\beginappendix

\input{sections/appendix}

\end{document}

%% file: sections/introduction.tex
\section{Introduction}

Large language models have achieved remarkable success under the autoregressive paradigm \citep{brown2020language,guo2025deepseek,liu2024deepseekv2,hurst2024gpt,team2026kimi}. By factorizing the discrete text distribution through the chain rule \citep{malach2023auto,barrault2024large,katharopoulos2020transformers,hutchins2022block,du2022glm,yang2019xlnet}, autoregressive language models have driven major advances in large-scale pretraining, open-ended generation, and downstream transfer, and have become the dominant approach to modern language modeling \citep{pan2025survey,minaee2024large,chang2024survey,wan2023efficient,zhao2023survey}. However, this paradigm tightly couples generation to a fixed left-to-right order, making inference inherently sequential and restricting the model’s inductive bias to a single token ordering \citep{bachmann2024pitfalls,wu2025constrained,li2025beyond,monea2023pass,aishwarya2024tandem,fu2024break,you2024linear}. Recent progress in both discrete and continuous diffusion-based text modeling suggests that high-quality language generation need not rely on such a fixed order; instead, language models can also be defined through more general state evolution and denoising paths \citep{ye2024beyond,yu2025dimple,ou2024your,strudel2022self,chen2025dlm,mahabadi2024tess}.

Despite extensive exploration along autoregressive, discrete diffusion, and continuous diffusion directions \citep{song2020score,von2025generalized,li2025unifying,xu2024energy,goel2026skip,jarrett2025language,jeoung2025examining,lahoti2023improving}, existing methods still struggle to simultaneously achieve generation efficiency, scalable representation, and global semantic modeling. Autoregressive models directly parameterize token-level conditional probabilities, yielding a clear training objective, but their fixed generation order incurs inherent sequential inference cost and introduces a strong hand-crafted inductive bias, which limits performance on more general generation tasks \citep{lin2021limitations,berglund2023reversal,deschenaux2024promises,dalal2019autoregressive,zhu2024towards}. Discrete diffusion language models remove explicit left-to-right factorization \citep{hoogeboom2021autoregressive,gat2024discrete,hoogeboom2021argmax,zhao2025d1}, yet they still typically perform observation recovery in discrete token space, leading to costly multi-step sampling, while intermediate discrete states are not well suited to stably represent global semantic structure \citep{zheng2023reparameterized,zhou2024diffusion,venkatraman2024amortizing,srikanth2025discrete,takida2022sq,meng2022concrete,jang2016categorical}. Continuous diffusion methods further introduce continuous representation spaces \citep{sahoo2025diffusion,gong2024scaling,tae2025tess}, but most existing approaches still use the diffusion path to recover token-aligned representations rather than to explicitly model a latent prior \citep{gulrajani2023likelihood,dieleman2022continuous}. As a result, current methods have not yet provided a unified framework that systematically combines non-autoregressive generation, continuous representation, and probabilistic text modeling.

To address this gap, we propose \method, a hierarchical latent-space diffusion language model. \method first learns a stable mapping between text and continuous latent variables through a Text VAE \citep{zheng2025diffusion,xu2020variational,semeniuta2017hybrid,li2020optimus,bowman2016generating,kingma2013auto}, then models the latent prior in continuous latent space with a block-causal DiT \citep{peebles2023scalable,chandrasegaran2025exploring,park2024switch,mo2023dit,liu2025bcat,cannizzaro2023towards,bandyopadhyay2025block,zhang2020invariant}, and finally generates text through a conditional decoder. The key idea of \method is to use diffusion not for token-level observation recovery, but for latent prior transport. From a unified Markov-path perspective, this design explicitly decomposes text generation into two levels: global semantic organization in continuous latent space and local textual realization through conditional decoding. This decomposition weakens the inductive bias imposed by fixed token order, allows the geometry of continuous space to directly support semantic compression and prior fitting, and enables a more flexible non-autoregressive generation process. Moreover, block-causal prior modeling preserves cross-block causal structure while allowing more efficient parallel computation within each block. Grounded in hierarchical latent-space modeling, \method is also highly modular and naturally extensible to alternative latent modeling components and other continuous modalities \citep{zheng2025diffusion,deng2026generative}.

Motivated by these observations, we systematically study diffusion language modeling in continuous latent space from both theoretical and empirical perspectives. Our contributions are as follows.

\begin{itemize}
     \item We propose \method, a hierarchical latent-space language model that explicitly decomposes text generation into global semantic modeling and local textual realization within a unified probabilistic framework, while using diffusion-based prior modeling in continuous latent space to connect the two, thereby establishing a new paradigm for language generation from the perspective of hierarchical information decomposition.
    \item We analyze the differences between \method and existing language modeling paradigms from a unified Markov-path perspective, clarifying its advantages in global semantic modeling, non-autoregressive inductive bias, and theoretical interpretability, which are further validated in the subsequent experiments.
    \item Through extensive experiments spanning 4 research questions, 8 benchmarks, strictly matched $\sim$2B-parameter autoregressive and LLaDA baselines, and scaling curves up to about 2000 EFLOPs, we systematically validate the central claims of \method, identify an effective overall configuration, and verify its strong potential and favorable scaling behavior for text generation.
    \item We further analyze several issues beyond the core framework, including the mismatch between likelihood estimation and generation quality, first-block conditioning, and latent compression. We also provide preliminary evidence that \method offers a natural bridge from discrete text to continuous modalities such as vision, pointing to a broader unified generative paradigm.
\end{itemize}

%% file: sections/relatedwork.tex
\section{Related Work}

\subsection{Autoregressive Language Models}
Autoregressive language models \citep{radford2019language,touvron2023llama,yang2025qwen3,liu2024deepseek} factorize the discrete text distribution by the chain rule and are trained with token-level maximum likelihood, making them the most widely adopted paradigm for text modeling. Their limitations are that generation is constrained by a fixed left-to-right order, inference is inherently sequential, and they are less suitable for non-monotonic generation tasks such as infilling, local editing, and global reorganization. In contrast, \method first models a global semantic prior in a continuous latent space and then performs conditional decoding, thereby alleviating token-level ordering bias and improving generation efficiency with a block-causal DiT.

\subsection{Discrete Diffusion Language Models}
Discrete diffusion language models mainly fall into two categories. The first category is based on discrete transition kernels \citep{austin2021structured,campbell2022continuous,sun2022score}, which define forward perturbation and reverse recovery in discrete token space and achieve non-autoregressive generation through multi-step denoising; however, sampling is usually slow and these methods cannot easily exploit the smooth semantic structure of continuous spaces. The second category is based on masking or absorbing states \citep{sahoo2024simple,nie2025large,zhu2025llada,you2025llada,zhu2025lladamoe,shi2024simplified,zheng2025continuously,zheng2024masked,sahoo2025diffusion,nie2024scaling}, which construct training objectives by progressively mapping tokens to masks or absorbing states and then recovering the original text; however, information loss in intermediate states limits global semantic planning and fine-grained control. In contrast, \method moves the diffusion process to a continuous latent space, where compressible latent variables carry global semantics, thus combining the manipulability of continuous spaces with hierarchical semantic modeling.

\subsection{Continuous Diffusion Language Models}
Continuous diffusion language models can be broadly divided into three categories. The first category consists of high-dimensional vocabulary-aligned continuous methods \citep{han2023ssd,richemond2022categorical,mahabadi2024tess,jo2025continuous}, which perform continuous diffusion or flow modeling directly on one-hot vectors, logit simplexes, or probability simplexes to preserve alignment with discrete vocabularies; however, their representation dimension scales with vocabulary size, which limits scalability. The second category consists of token-embedding-based continuous methods \citep{li2022diffusion,strudel2022self,gao2024empowering,gong2022diffuseq,gulrajani2023likelihood,chen2023cheaper,lin2023text,dieleman2022continuous}, which first map text into continuous embedding spaces and then apply diffusion or flow modeling to improve generation flexibility; however, their generation process remains essentially the recovery of noisy target representations, lacking an explicit hierarchical latent-variable interpretation and a unified marginal-likelihood view of text distributions. The third category consists of latent-space continuous methods \citep{meshchaninov2025cosmos,kang2025ladir,lovelace2023latent,zhang2023planner}, which compress text into latent spaces with autoencoders or VAEs and then perform diffusion modeling. These methods typically rely on latent-space design and autoregressive decoders, and usually treat the latent space as a fixed representation rather than modeling it under a hierarchical latent-variable framework. In contrast, \method explicitly separates global semantics from local realization through hierarchical latent-variable modeling, and learns a semantic prior in a dynamic continuous latent space, thereby better balancing modeling flexibility, inference efficiency, and theoretical interpretability.

%% file: sections/approach.tex
\section{Continuous Latent Diffusion Language Model}

This section first presents \method as a hierarchical latent-variable language model with a rigorous probabilistic definition. We also outline the overall workflow of \method. We then place \method in a unified theoretical framework together with AR models, discrete denoising language models, and continuous token-space methods. Detailed derivations and proofs are deferred to Appendices~\ref{app:prob_model}, \ref{app:inference_and_likelihood}, \ref{app:comparison} and \ref{app:advantages}.

\subsection{Theoretical Foundations of Cola DLM}

In this subsection, we present \method as a hierarchical latent-variable language model with a rigorous probabilistic definition. We then introduce its unconditional and conditional probability estimators. Detailed derivations and proofs are provided in Appendices~\ref{app:prob_model} and~\ref{app:inference_and_likelihood}.

\subsubsection{Theoretical Formulation of Cola DLM}

\paragraph{\textbf{Hierarchical latent-variable modeling.}}
Let \(x \in \mathcal X\) denote a discrete text sequence, and let \(z_0 \in \mathbb R^d\) denote its continuous latent variable. The generative model of \method consists of a conditional decoder \(p_\theta(x \mid z_0)\) and a latent prior \(p_\psi(z_0)\):
\begin{equation}
p(x,z_0)=p_\theta(x\mid z_0)\,p_\psi(z_0),
\qquad
p(x)=\int p_\theta(x\mid z_0)\,p_\psi(z_0)\,dz_0.
\label{eq:main_ldlm_joint}
\end{equation}
Here, \(q_\phi(z_0\mid x)\) is used only for variational inference during training, and is not part of the generative model itself.

We model \(p_\psi(z_0)\) with a continuous-flow prior. Let the base distribution be \(p_1(z_1)=\mathcal N(0,I)\), and let \(v_\psi(z_t,t)\) be the vector field. Then
\begin{equation}
z_1\sim p_1,\qquad
\frac{dz_t}{dt}=v_\psi(z_t,t),\qquad
z_0=\Phi^\psi_{0\leftarrow 1}(z_1),
\label{eq:main_cnf_prior}
\end{equation}
which induces \(p_\psi=(\Phi^\psi_{0\leftarrow 1})_\sharp p_1\). In the sequence implementation, the latent is further decomposed into blocks, \(z_0=(z_0^{(1)},\ldots,z_0^{(B)})\), with
\begin{equation}
p_\psi(z_0)=p_\psi(z_0^{(1)})\prod_{b=2}^{B}p_\psi(z_0^{(b)}\mid z_0^{(<b)}).
\label{eq:main_block_prior}
\end{equation}
This factorization directly corresponds to the block-causal prior learning and block-wise inference used later.

\paragraph{\textbf{ELBO and prior learning.}}
By Jensen's inequality, the training lower bound of \method is
\begin{equation}
\log p(x)\ge
\mathbb E_{q_\phi(z_0\mid x)}
\!\left[
\log p_\theta(x\mid z_0)+\log p_\psi(z_0)-\log q_\phi(z_0\mid x)
\right]
=: \mathcal L_{\mathrm{ELBO}}(x).
\label{eq:main_elbo}
\end{equation}
Training therefore maximizes \(\mathcal L_{\mathrm{ELBO}}(x)\), or equivalently minimizes \(-\mathcal L_{\mathrm{ELBO}}(x)\).

Let the aggregated posterior be \(\bar q_\phi(z_0)=\int q_\phi(z_0\mid x)\,\pdata(x)\,dx\). The expected ELBO can then be written as
\begin{equation}
\mathbb E_{\pdata(x)}[\mathcal L_{\mathrm{ELBO}}(x)]
=
\mathbb E_{q(x,z_0)}[\log p_\theta(x\mid z_0)]
-
I_q(X;Z_0)
-
\KL(\bar q_\phi(z_0)\,\|\,p_\psi(z_0)),
\label{eq:main_avg_elbo}
\end{equation}
where \(q(x,z_0)=\pdata(x)q_\phi(z_0\mid x)\). This decomposition shows that \method separates text modeling into conditional reconstruction, information compression, and prior matching.

When the encoder and decoder are fixed, prior learning reduces to
\begin{equation}
\max_\psi\ \mathbb E_{z_0\sim \bar q_\phi}[\log p_\psi(z_0)]
\quad\Longleftrightarrow\quad
\min_\psi\ \KL(\bar q_\phi(z_0)\,\|\,p_\psi(z_0)).
\label{eq:main_prior_subproblem}
\end{equation}
In practice, we do not optimize the density directly. Instead, we learn the corresponding vector field with Flow Matching. For block \(b\), the conditional FM objective is
\begin{equation}
\mathcal L_{\mathrm{FM}}
=
\sum_{b=1}^{B}
\mathbb E_{t,z_0,z_1}
\left[
\left\|
v_\psi\!\bigl(z_t^{(b)},t;z_0^{(<b)}\bigr)-u_t^{(b)}(z_0,z_1)
\right\|_2^2
\right].
\label{eq:main_fm}
\end{equation}
Flow Matching is therefore a solver for the prior in \method, rather than the definition of the model itself.

\obsbox{
\textbf{Summary.}
The generative distribution of \method is defined by the hierarchical factorization in Eq.~\eqref{eq:main_ldlm_joint}: the latent prior \(p_\psi(z_0)\) generates global continuous semantics, and the decoder \(p_\theta(x\mid z_0)\) realizes discrete text. The encoder \(q_\phi(z_0\mid x)\) is not part of the generative model, but an inference model that lifts the observed data distribution \(\pdata(x)\) into a model-dependent latent joint distribution,
\[
q_\phi(x,z_0)=\pdata(x)q_\phi(z_0\mid x),
\qquad
\bar q_\phi(z_0)=\int q_\phi(z_0\mid x)\pdata(x)\,dx.
\]
Thus, while \(\pdata(x)\) is fixed, the induced latent distribution \(\bar q_\phi\) is not. When the representation is fixed, prior learning fits \(p_\psi\) to \(\bar q_\phi\). Under joint training, however, \(\bar q_\phi\) and \(p_\psi\) co-evolve: the encoder reshapes the latent data distribution, while the learned prior regularizes and organizes the latent space. Flow Matching is therefore only an implementation choice for learning this prior transport; the underlying model remains a hierarchical latent-variable language model.
}

\begin{figure}[t]
\centering

\begin{minipage}[t][0.185\textheight][t]{0.485\linewidth}
\vspace{0pt}
\footnotesize
\begin{sidealgorithm}{alg:uncond_loglik}{Unconditional log-likelihood estimation}
\Require $x$, $K$
\For{$k = 1,\dots,K$}
    \State sample $z_0^{(k)} \sim q_\phi(z_0 \mid x)$
    \State $w^{(k)} \gets \log p_\theta(x \mid z_0^{(k)}) + \log p_\psi(z_0^{(k)})$
    \Statex \hspace{\algorithmicindent}$\qquad\quad - \log q_\phi(z_0^{(k)} \mid x)$
\EndFor
\State output $\log \widehat p_{\mathrm{ELBO}}(x)=\frac{1}{K}\sum_{k=1}^K w^{(k)}$
\Statex \hspace{\algorithmicindent}or $\log \widehat p_{\mathrm{IWAE}}(x)=\log \frac{1}{K}\sum_{k=1}^K \exp(w^{(k)})$
\Statex \hspace{\algorithmicindent}according to Eq.~\eqref{eq:main_p_estimators}
\end{sidealgorithm}
\end{minipage}
\hfill
\begin{minipage}[t][0.185\textheight][t]{0.485\linewidth}
\vspace{0pt}
\footnotesize
\begin{sidealgorithm}{alg:cond_loglik}{Conditional log-likelihood estimation}
\Require $x_{\mathrm{pre}}$, $x_{\mathrm{res}}$
\State estimate $\log \widehat p(x_{\mathrm{pre}},x_{\mathrm{res}})$
\Statex \hspace{\algorithmicindent}using Algorithm~\ref{alg:uncond_loglik}
\State estimate $\log \widehat p(x_{\mathrm{pre}})$
\Statex \hspace{\algorithmicindent}using Algorithm~\ref{alg:uncond_loglik}
\State \Return $\log \widehat p(x_{\mathrm{res}} \mid x_{\mathrm{pre}})$
\Statex \hspace{\algorithmicindent}$= \log \widehat p(x_{\mathrm{pre}},x_{\mathrm{res}}) - \log \widehat p(x_{\mathrm{pre}})$
\Statex \hspace{\algorithmicindent}according to Eq.~\eqref{eq:main_cond_estimator}
\end{sidealgorithm}
\end{minipage}

\end{figure}
\subsubsection{Probability Estimation for Cola DLM}

\paragraph{\textbf{Unconditional probability estimation.}}
At evaluation time, we approximate \(\log p(x)\). For samples \(z_0^{(k)}\sim q_\phi(z_0\mid x)\), define the importance weight
\begin{equation}
\log w^{(k)}
=
\log p_\theta(x\mid z_0^{(k)})
+\log p_\psi(z_0^{(k)})
-\log q_\phi(z_0^{(k)}\mid x).
\label{eq:main_importance_weight}
\end{equation}

The prior term \(\log p_\psi(z_0^{(k)})\) is evaluated by the CNF change-of-variables formula. Concretely, we solve the augmented ODE
\begin{equation}
\frac{\dd}{\dd t}
\begin{bmatrix}
z_t \\
\ell_t
\end{bmatrix}
=
\begin{bmatrix}
\vv_{\psi}(z_t,t) \\
\nabla\!\cdot \vv_{\psi}(z_t,t)
\end{bmatrix},
\qquad
\begin{bmatrix}
z_0 \\
\ell_0
\end{bmatrix}
=
\begin{bmatrix}
z_0^{(k)} \\
0
\end{bmatrix},
\label{eq:main_augmented_ode}
\end{equation}
from \(t=0\) to \(t=1\), yielding \((z_1^{(k)},\ell_1^{(k)})\). We then obtain
\begin{equation}
\log p_\psi(z_0^{(k)})
=
\log p_1(z_1^{(k)})+\ell_1^{(k)},
\label{eq:main_prior_eval}
\end{equation}
where \(p_1\) is the terminal base distribution. In high dimensions, the divergence term is estimated with Hutchinson's trace estimator:
\begin{equation}
\nabla\!\cdot \vv_{\psi}(z_t,t)
=
\mathrm{Tr}\!\left(\frac{\partial \vv_{\psi}(z_t,t)}{\partial z_t}\right)
\approx
\veps^{\top}
\frac{\partial \vv_{\psi}(z_t,t)}{\partial z_t}
\veps,
\qquad
\veps\sim\mathcal N(0,I),
\label{eq:main_hutchinson}
\end{equation}
where the same \(\veps\) is fixed within one ODE solve.

This gives two standard estimators, namely the ELBO-style and IWAE-style estimators:
\begin{equation}
\log \widehat p_{\mathrm{ELBO},K}(x)
=
\frac{1}{K}\sum_{k=1}^{K}\log w^{(k)},
\qquad
\log \widehat p_{\mathrm{IWAE},K}(x)
=
\log\!\left(\frac{1}{K}\sum_{k=1}^{K}e^{\log w^{(k)}}\right).
\label{eq:main_p_estimators}
\end{equation}
The IWAE-style estimator is typically tighter.

\paragraph{\textbf{Conditional probability estimation.}}
For a prefix--response decomposition \(x=(x_{\mathrm{pre}},x_{\mathrm{res}})\), the exact identity is
\begin{equation}
\log p(x_{\mathrm{res}}\mid x_{\mathrm{pre}})
=
\log p(x_{\mathrm{pre}},x_{\mathrm{res}})
-
\log p(x_{\mathrm{pre}}).
\label{eq:main_cond_identity}
\end{equation}
We therefore obtain a plug-in estimator by scoring the joint sequence and the prefix with the same unconditional estimator:
\begin{equation}
\log \widehat p(x_{\mathrm{res}}\mid x_{\mathrm{pre}})
=
\log \widehat p(x_{\mathrm{pre}},x_{\mathrm{res}})
-
\log \widehat p(x_{\mathrm{pre}}).
\label{eq:main_cond_estimator}
\end{equation}

\subsection{Workflow of Cola DLM}
In this section, we describe the overall workflow of \method in detail. As illustrated in Figure~\ref{fig:cola_main_fig}, we explain the method from three perspectives: the pretraining of the Text VAE, the pretraining of prior learning with the Text DiT, and the inference process of \method.
\subsubsection{Text VAE Pretraining}

In the first stage, we learn a stable latent--text correspondence. The encoder maps text into the latent space, and the decoder reconstructs the original text conditioned on the latent:
\begin{equation}
z_0\sim q_\phi(z_0\mid x),
\qquad
\hat x\sim p_\theta(x\mid z_0).
\label{eq:main_stage1_io}
\end{equation}
The goal of this stage is not to learn the final prior, but to establish a stable division of labor between information stored in the latent and information recovered by the decoder.

The corresponding objective is
\begin{equation}
\mathcal L_{\mathrm{VAE}}
=
-\mathbb E_{q_\phi(z_0\mid x)}\log p_\theta(x\mid z_0)
+\beta\,\KL(q_\phi(z_0\mid x)\,\|\,p_{\mathrm{base}}(z_0))
+\lambda_{\mathrm{mask}}\mathcal L_{\mathrm{mask}}.
\label{eq:main_stage1_loss}
\end{equation}
Here, \(\mathcal L_{\mathrm{mask}}\) is the BERT-style masking loss shown in the figure. It prevents the VAE encoder from collapsing semantically while the decoder merely memorizes surface text. In our experiments, the VAE does not compress the sequence length. To prevent information leakage and facilitate subsequent streaming generation, both our VAE encoder and decoder are strictly causal.

\begin{figure}[t]
    \centering
    \includegraphics[width=1\textwidth]{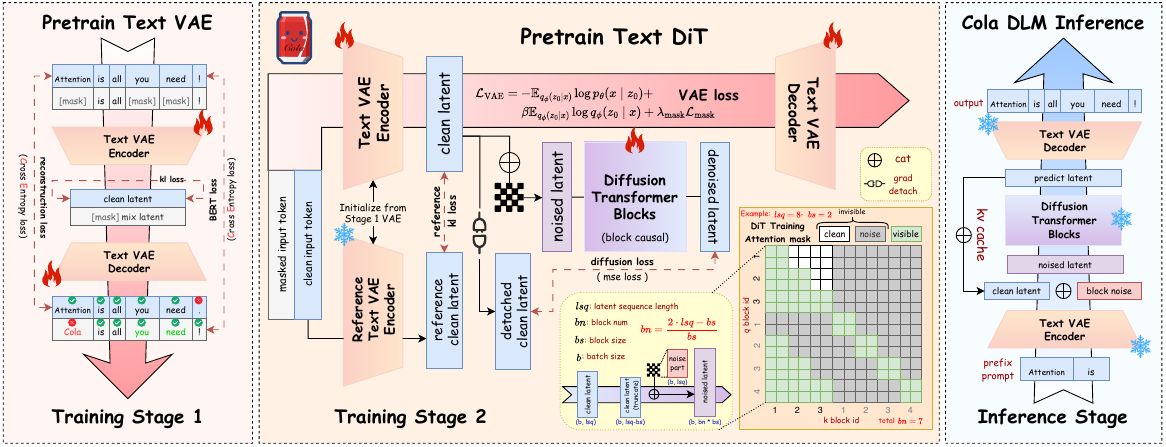}
    \caption{\textbf{The Overall Workflow of Cola DLM.} Detailed illustration of the training and inference pipeline of \method. \emph{Training Stage 1} shows Text VAE pretraining with reconstruction, BERT, and KL losses. \emph{Training Stage 2} shows joint pretraining of the Text VAE and Text DiT with gradient control for stable optimization, where a specialized block-causal mechanism is adopted in the DiT. \emph{Inference Stage} illustrates the decoding process with KV cache.}
    \label{fig:cola_main_fig}
\end{figure}

\subsubsection{Prior Learning with Block-Causal DiT}

In the second stage, we learn a conditional prior on the stabilized latent space. For block \(b\), the visible set consists of the historical clean latent blocks and the current noisy block:
\begin{equation}
\mathcal V_b=\left\{\operatorname{sg}(z_0^{(<b)}),\,z_t^{(b)}\right\},
\label{eq:main_visible_set}
\end{equation}
where \(\operatorname{sg}(\cdot)\) denotes stop-gradient. This visibility constraint enforces bidirectional attention within each block and causal dependence across blocks, consistent with Eq.~\eqref{eq:main_block_prior}.

Under this design, prior learning uses a joint objective that combines conditional Flow Matching with a reference-encoder regularizer:
\begin{equation}
\begin{aligned}
\mathcal L_{\mathrm{stage2}}
=&\;
\lambda_{\mathrm{VAE}}
\Bigl(
-\mathbb E_{q_\phi(z_0\mid x)}\log p_\theta(x\mid z_0)
+\beta\mathbb E_{q_\phi(z_0\mid x)}\log q_\phi(z_0\mid x)
+\lambda_{\mathrm{mask}}\mathcal L_{\mathrm{mask}}
\Bigr) \\
&\;
+\lambda_{\mathrm{fm}}\mathcal L_{\mathrm{FM}}
+\lambda_{\mathrm{ref}}
\mathbb E_{\pdata(x)}
\KL\!\Bigl(q_\phi(z_0\mid x)\,\|\,q_{\phi_{\mathrm{ref}}}(z_0\mid x)\Bigr).
\end{aligned}
\label{eq:main_stage2_loss}
\end{equation}
The first group preserves the autoencoding structure with regularized latent learning, the second term learns the block-level conditional prior, and the third term suppresses latent drift during joint training.

\subsubsection{Inference: Prefix Encoding, Block-wise Generation, and Conditional Decoding}

At inference time, the model first encodes the prefix into clean latent conditions:
\begin{equation}
z^{\mathrm{pre}}\sim q_\phi(z^{\mathrm{pre}}\mid x^{\mathrm{pre}}).
\label{eq:main_prefix_encode}
\end{equation}
It then generates the response latent block by block. Each block is obtained by transporting a noise seed under the historical condition:
\begin{equation}
\hat z_0^{(b)}
=
\Phi^\psi_{0\leftarrow 1}
\!\Bigl(\epsilon^{(b)};\,z^{\mathrm{pre}},\hat z_0^{(<b)}\Bigr),
\qquad
\epsilon^{(b)}\sim\mathcal N(0,I).
\label{eq:main_block_generation}
\end{equation}
Finally, the decoder outputs the text response conditioned on the prefix and the generated latent blocks:
\begin{equation}
\hat x^{\mathrm{res}}
\sim
p_\theta\!\bigl(x^{\mathrm{res}}\mid z^{\mathrm{pre}},\hat z_0^{(1:B)}\bigr).
\label{eq:main_decode}
\end{equation}

\obsbox{
\textbf{Summary.}
The workflow of \method implements the above hierarchical probabilistic model through two training stages and one inference stage, rather than a mechanical cascade of VAE, DiT, and decoder. In Stage 1, the base prior \(p_{\mathrm{base}}\) regularizes the latent--text interface and stabilizes the autoencoding representation, but it is not the final generative prior. In Stage 2, the block-causal DiT learns the final latent prior \(p_\psi(z_0)\) while the VAE remains trainable under reconstruction, masking, and reference regularization. This makes prior learning a controlled co-adaptation between the latent representation and the learned flow prior. At inference time, the model encodes the prefix, generates future latent blocks autoregressively in latent space, and realizes the response through the conditional decoder.
}

\subsection{A Unified View of Cola DLM and Existing Methods}
In this section, we compare \method with AR, LLaDA, and Plaid under a unified Markov-path perspective, and theoretically characterize the specific advantages of \method. More detailed analysis and proofs are provided in Appendices~\ref{app:comparison} and~\ref{app:advantages}.
\subsubsection{Text Modeling under a Unified Stochastic-Path View}

For a unified comparison, let \(\tau=(S_t)_{t\in\mathcal T}\) be a stochastic process on state space \(\mathcal S\), with initial distribution \(\mu_\Theta\), transition kernel \(K_t^\Theta\), and emission mechanism \(e_\Theta(x\mid \tau)\). A process-based generative model can be written as
\begin{equation}
p_\Theta(x)
=
\int e_\Theta(x\mid \tau)\,P_\Theta(d\tau),
\qquad
P_\Theta(d\tau)
=
\mu_\Theta(ds_0)\prod_{t>0}K_t^\Theta(ds_t\mid s_{<t}).
\label{eq:cmp_unified_path}
\end{equation}
This common outer form does not determine the nature of the model. The essential distinction lies in the state space of the path and its semantic role: a path over text or near-lossless text-aligned representations is an observation path, whereas a path used only to generate a latent prior is a prior path.

For AR, the path is the prefix expansion itself, yielding an exact chain factorization but binding generation to a left-to-right filtration:
\begin{equation}
p_{\mathrm{AR}}(x)=\prod_{i=1}^{L}p_\eta(x_i\mid x_{<i}).
\label{eq:main_ar}
\end{equation}

For LLaDA, the path is a discrete corruption--recovery trajectory, whose objective is observation reconstruction in a discrete state space:
\begin{equation}
q(s_{1:T}\mid x)=q_1(s_1\mid x)\prod_{t=2}^{T}q_t(s_t\mid s_{t-1}),
\qquad
p_\theta(s_{0:T})=p(s_T)\prod_{t=1}^{T}p_\theta(s_{t-1}\mid s_t).
\label{eq:main_llada}
\end{equation}
Thus, LLaDA weakens the handcrafted left-to-right bias, but still modifies the observation-recovery process rather than introducing an explicit hierarchical latent variable.

Plaid further moves this recovery process to a continuous token-aligned representation \(h_0=E(x)\):
\begin{equation}
q(h_{1:T}\mid h_0)=q_1(h_1\mid h_0)\prod_{t=2}^{T}q_t(h_t\mid h_{t-1}),
\qquad
p_\theta(h_{0:T})=p(h_T)\prod_{t=1}^{T}p_\theta(h_{t-1}\mid h_t).
\label{eq:main_plaid}
\end{equation}
Its core target is therefore continuous observation recovery, rather than a decomposition into a prior and a conditional decoder.

In \method, by contrast, the stochastic path only transports the latent prior:
\begin{equation}
z_1\sim p_1,\qquad
z_0=\Phi^\psi_{0\leftarrow 1}(z_1),\qquad
x\sim p_\theta(x\mid z_0),
\label{eq:main_ldlm_process}
\end{equation}
with the marginal still given by Eq.~\eqref{eq:main_ldlm_joint}. Hence, diffusion is used to learn a flexible continuous prior, not to impose a left-to-right inductive bias on text.

The reason for using a continuous path is not that continuous modeling is inherently superior, but that it naturally captures the geometry of the latent distribution. In \method, continuity appears in \(p_\psi(z_0)\), rather than in an observation-recovery trajectory:
\begin{equation}
\frac{dz_t}{dt}=v_\psi(z_t,t),
\qquad
p_\psi=(\Phi^\psi_{0\leftarrow 1})_\sharp p_1.
\label{eq:main_why_continuous}
\end{equation}
Thus, the distinction between \method and LLaDA lies in both state space and modeling target.

\begin{table*}[t]
\centering
\footnotesize
\setlength{\tabcolsep}{4pt}
\renewcommand{\arraystretch}{1.18}

\begin{tabularx}{\textwidth}{@{}
>{\hsize=0.90\hsize\centering\arraybackslash}X
>{\hsize=1.10\hsize\centering\arraybackslash}X
>{\hsize=1.15\hsize\centering\arraybackslash}X
>{\hsize=1.00\hsize\centering\arraybackslash}X
>{\hsize=1.20\hsize\centering\arraybackslash}X
>{\hsize=0.40\hsize\centering\arraybackslash}X
@{}}
\toprule
\bfseries Method
& \bfseries State Space
& \bfseries Path Role
& \bfseries \makecell[c]{Generative\\Factorization}
& \bfseries \makecell[c]{Where Continuity\\Appears}
& \bfseries \makecell[c]{Explicit\\Latent} \\
\specialrule{0.8pt}{0pt}{0pt}

\bfseries AR 
& Prefix Tokens 
& Direct Generation Path 
& \mbox{$\prod_i p(x_i \mid x_{<i})$}
& None 
& {\large\xmark} \\
\midrule

\bfseries LLaDA 
& Discrete Masked Sequences 
& Discrete Observation-Recovery Path 
& \mbox{$p(s_T)\prod_t p_\theta(s_{t-1}\mid s_t)$}
& \makecell[c]{Discrete\\Token Space}
& {\large\xmark} \\
\midrule

\bfseries Plaid 
& Continuous Token-Aligned Representations 
& Continuous Observation-Recovery Path 
& \mbox{$p(h_T)\prod_t p_\theta(h_{t-1}\mid h_t)$}
& \makecell[c]{Continuous\\Token Space}
& {\large\xmark} \\
\midrule

\bfseries Cola DLM 
& Compressed Latent Sequences 
& Prior-Transport Path 
& \mbox{$\int p_\theta(x\mid z_0)\,p_\psi(z_0)\,dz_0$}
& Latent Space 
& {\large\cmark} \\
\bottomrule
\end{tabularx}
\caption{\textbf{Unified Perspective.} Key differences among text models under a unified Markov-path view.}
\label{tab:main_unified_compare}
\end{table*}

Finally, the reason for using a latent variable is to explicitly separate semantic structure from token realization. The information decomposition of the expected ELBO,
\begin{equation}
\mathbb E_{\pdata(x)}[\mathcal L_{\mathrm{ELBO}}(x)]
=
\mathbb E_{q(x,z_0)}[\log p_\theta(x\mid z_0)]
-
I_q(X;Z_0)
-
\KL(\bar q_\phi(z_0)\,\|\,p_\psi(z_0)),
\label{eq:main_info_split_again}
\end{equation}
shows that \(z_0\) is not merely a continuous surrogate for discrete text, but an explicit marginalized intermediate variable: global semantics are compressed into \(z_0\), while local token realization is delegated to the decoder.

\subsubsection{Theoretical Advantages of Cola DLM}
\label{sec:theory_advantage}

\paragraph{\textbf{A unified criterion.}}
Let the lower bound of the approximation error for a model family \(\mathcal M\) be
\begin{equation}
\mathcal E(\mathcal M):=\inf_{p\in\mathcal M}\KL(\pdata(x)\,\|\,p(x)).
\label{eq:main_model_error}
\end{equation}
For AR, the population risk is determined solely by \(\mathcal E(\mathcal M_{\mathrm{AR}})\). In contrast, \method also incurs a variational inference gap:
\begin{equation}
G_{\mathrm{infer}}^{\mathrm{ColaDLM}}
:=
\mathbb E_{\pdata(x)}
\KL\!\bigl(q_\phi(z_0\mid x)\,\|\,p_{\theta,\psi}(z_0\mid x)\bigr).
\label{eq:main_infer_gap}
\end{equation}
Its total statistical burden is therefore
\begin{equation}
R_{\mathrm{ColaDLM}}
=
\mathcal E(\mathcal M_{\mathrm{ColaDLM}})
+
\inf_{\phi,\theta,\psi}G_{\mathrm{infer}}^{\mathrm{ColaDLM}}.
\label{eq:main_total_burden}
\end{equation}

\begin{proposition}[Unified criterion]
At the population level, \method outperforms a comparison model if and only if its total statistical burden is smaller. Taking AR as an example,
\begin{equation}
\mathcal{C}olaDLM \succ \mathrm{AR}
\quad\Longleftrightarrow\quad
R_{\mathrm{ColaDLM}}<\mathcal{E}(\mathcal{M}_{\mathrm{AR}}).
\label{eq:main_unified_criterion}
\end{equation}
\end{proposition}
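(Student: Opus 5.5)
The proposition is essentially definitional, so the proof will consist of making the comparison criterion explicit and then reducing both sides to the quantities in Eqs.~\eqref{eq:main_model_error}--\eqref{eq:main_total_burden}. I first fix what ``outperforms at the population level'' means. Following Eq.~\eqref{eq:main_model_error}, a model family is judged by the best achievable expected negative training objective, measured relative to the data entropy $H(\pdata)$. For AR, the training objective is the exact log-likelihood $\log p_{\mathrm{AR}}(x)$ from Eq.~\eqref{eq:main_ar}. Hence
\[
\inf_{p\in\mathcal M_{\mathrm{AR}}}\mathbb E_{\pdata}[-\log p(x)] = H(\pdata)+\mathcal E(\mathcal M_{\mathrm{AR}}).
\]
For \method, the training objective is $\mathcal L_{\mathrm{ELBO}}$ of Eq.~\eqref{eq:main_elbo}, so its population risk is
\[
\inf_{\phi,\theta,\psi}\mathbb E_{\pdata}[-\mathcal L_{\mathrm{ELBO}}(x)].
\]
The relation $\succ$ is then defined as strict inequality between these two risks.

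The key step is the standard exact ELBO identity. Since
\[
p_{\theta,\psi}(z_0\mid x)=p_\theta(x\mid z_0)p_\psi(z_0)/p(x)
\]
by Eq.~\eqref{eq:main_ldlm_joint}, substituting into Eq.~\eqref{eq:main_elbo} gives
\[
\log p_{\theta,\psi}(x)=\mathcal L_{\mathrm{ELBO}}(x)+\KL\bigl(q_\phi(z_0\mid x)\,\|\,p_{\theta,\psi}(z_0\mid x)\bigr).
\]
Taking $\mathbb E_{\pdata}$ and adding $H(\pdata)$, I obtain
\[
\mathbb E_{\pdata}[-\mathcal L_{\mathrm{ELBO}}]-H(\pdata)=\KL(\pdata\,\|\,p_{\theta,\psi})+G_{\mathrm{infer}}^{\mathrm{ColaDLM}}(\phi,\theta,\psi),
\]
where $G_{\mathrm{infer}}^{\mathrm{ColaDLM}}$ is the gap in Eq.~\eqref{eq:main_infer_gap}. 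Taking the infimum over parameters then yields the excess risk of \method.

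The main obstacle is that the infimum of a sum is not, in general, the sum of the infima. The sum is bounded below by
\[
\mathcal E(\mathcal M_{\mathrm{ColaDLM}})+\inf G_{\mathrm{infer}}^{\mathrm{ColaDLM}},
\]
and equality requires a common (near-)minimizer. To obtain Eq.~\eqref{eq:main_total_burden} exactly, I would argue as follows, or adopt it as the interpretation under which the proposition holds. For any fixed $(\theta,\psi)$, the inner infimum over $\phi$ of $G_{\mathrm{infer}}^{\mathrm{ColaDLM}}$ is attained or approached independently of the marginal-fit term, since $\phi$ does not enter $\KL(\pdata\|p_{\theta,\psi})$. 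If the variational family is rich enough that this inner infimum is the same at the $(\theta,\psi)$ approximately attaining $\mathcal E(\mathcal M_{\mathrm{ColaDLM}})$, then the decomposition is additive.

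The fallback, which I would state explicitly, is to read $R_{\mathrm{ColaDLM}}$ as the infimum of the joint sum, in line with the paper's phrase ``total statistical burden.'' Under that reading the identity is exact. With either reading, the excess risks are $R_{\mathrm{ColaDLM}}$ for \method and $\mathcal E(\mathcal M_{\mathrm{AR}})$ for AR. Since $H(\pdata)$ is common to both sides, it cancels, and strict comparison of the risks is equivalent to $R_{\mathrm{ColaDLM}}<\mathcal E(\mathcal M_{\mathrm{AR}})$, which is Eq.~\eqref{eq:main_unified_criterion}. The same argument applies to any comparison model with exact likelihood (for models trained with bounds, such as LLaDA, its own inference gap would be added on the right).
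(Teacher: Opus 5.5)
Your argument is the one the paper gives in Appendix~\ref{app:population_risk}. It uses the per-sample identity $-\Lcal_{\mathrm{ELBO}}(x)=-\log p_{\theta,\psi}(x)+\KL\bigl(q_\phi(z_0\mid x)\,\|\,p_{\theta,\psi}(z_0\mid x)\bigr)$, averages it to $H(\pdata)+\KL(\pdata\,\|\,p_{\theta,\psi})+G_{\mathrm{infer}}^{\mathrm{ColaDLM}}$, and cancels $H(\pdata)$ against the exact-NLL risk of AR.

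Your caution about the infimum of the sum versus the sum of the infima is warranted and goes beyond the paper. The paper simply defines the burden as $\mathcal E(\mathcal M_{\mathrm{ColaDLM}})+\inf G_{\mathrm{infer}}^{\mathrm{ColaDLM}}$ and asserts the equivalence. With that definition, only the implication ``\method{} outperforms AR $\Rightarrow$ $R_{\mathrm{ColaDLM}}<\mathcal E(\mathcal M_{\mathrm{AR}})$'' holds unconditionally. The converse needs exactly what you supply: your common near-minimizer condition, or reading $R_{\mathrm{ColaDLM}}$ as the joint infimum.
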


\paragraph{\textbf{Rate-distortion and structured generation.}}
Whether a latent bottleneck is beneficial depends on whether the data admits a low-rate but informative global representation. Define the representation rate-distortion function as
\begin{equation}
D(R)
:=
\inf_{\substack{q(z_0\mid x):\\ I_q(X;Z_0)\le R}}
\ \inf_{p_\theta(x\mid z_0)}
\mathbb E_{q(x,z_0)}[-\log p_\theta(x\mid z_0)].
\label{eq:main_rate_distortion}
\end{equation}
If \(D(R)\) is already small at a low rate \(R\), then the data admits a low-dimensional semantic variable, and a latent bottleneck is more likely to reduce the overall mismatch. Conversely, if high-quality reconstruction requires a high information rate, aggressive compression only makes conditional reconstruction harder.

This can be characterized further through a structured-generation assumption. Suppose there exists a global variable \(G\) such that
\begin{equation}
\pdata(x)=\int p^\star(x\mid g)\,p^\star(g)\,dg,
\qquad
H(X\mid G)\ll H(X),
\qquad
\dim(G)\ll \dim(E(X)),
\label{eq:main_structured_generation}
\end{equation}
then the factorization of \method is closer to the true generative mechanism: the prior models the distribution of \(G\), while the decoder handles conditional realization. In this case, the latent bottleneck helps rather than hurts.

\paragraph{\textbf{Three governing curves and the applicability boundary.}}
The applicability of \method is ultimately determined by three curves: the representation rate-distortion curve \(D(R)\), the prior approximation curve, and the inference-gap curve \(G_{\mathrm{infer}}^{\mathrm{ColaDLM}}\). More compactly,
\begin{equation}
\begin{aligned}
    \text{\method is advantageous} \Longleftrightarrow \quad
    & \Bigl[D(R)\ \text{is already small at low }R\Bigr] \\
    \wedge\ & \Bigl[\mathcal E(\mathcal M_{\mathrm{ColaDLM}})\ \text{decreases}\Bigr] \\
    \wedge\ & \Bigl[G_{\mathrm{infer}}^{\mathrm{ColaDLM}}\ \text{is controllable}\Bigr].
\end{aligned}
\label{eq:main_three_curves}
\end{equation}
The benefit of \method is therefore not guaranteed by diffusion or continuity alone. It depends on whether the data exhibits a structure with low-dimensional global semantics and high-dimensional local token realization.

\obsbox{
\textbf{Summary.}
The central advantage of \method is not denoising itself, but the latent decomposition that separates text modeling into a global prior and a conditional realization process.}

%% file: sections/experiments.tex
\section{Experiments}

In this section, we conduct experiments to address the following research questions:

\begin{itemize}
    \item \textbf{RQ1: }Does a global semantic structure exist within the latent space?
    \item \textbf{RQ2: }What type of latent space is optimal for text generation?
    \item \textbf{RQ3: }Which diffusion process is most effective for text generation?
    \item \textbf{RQ4: }Why use a \emph{continuous} \emph{latent} \emph{diffusion} model for language modeling?
\end{itemize}

\subsection{Experimental Setup}

\textbf{Datasets. }
For training, we use external open-source pretraining data. For evaluation, the internal component analysis of \method (Sections~\ref{exp:rq1}, \ref{exp:rq2} and \ref{exp:rq3}) is conducted on randomly sampled subsets from the test sets of LAMBADA \citep{paperno2016lambada}, MMLU \citep{hendrycks2020measuring}, and SIQA \citep{sap2019social}. LAMBADA is a continuation benchmark, whereas the remaining two are multiple-choice benchmarks. For external comparisons (Section~\ref{exp:rq4}), we additionally evaluate on the test sets of SQuAD \citep{rajpurkar2016squad}, Story Cloze \citep{mostafazadeh2016corpus}, OBQA \citep{mihaylov2018can}, RACE \citep{lai2017race}, and HellaSwag \citep{zellers2019hellaswag}. Additional dataset details are deferred to Appendix~\ref{app:dataset}.

\textbf{Baselines. } 
In the internal component comparison experiments (Sections~\ref{exp:rq1}, \ref{exp:rq2} and \ref{exp:rq3}), we specify the different configurations of \method. In Section~\ref{exp:rq4}, for the scaling comparison, we independently train the autoregressive and LLaDA baselines under strictly matched settings. Specifically, the autoregressive and discrete diffusion models are randomly initialized using the official modeling implementations of LLaMA \citep{touvron2023llama} and LLaDA \citep{nie2025large}, respectively. Details are provided in Appendix~\ref{app:baseline}.

\textbf{Metrics. }
As discussed in Section~\ref{dis:ppl}, the estimated perplexity exhibits a substantial mismatch with the actual generation quality of \method. Moreover, prior work \citep{wang2023knn, hashimoto2019unifying, holtzman2019curious, meister2021language} has noted that perplexity is not strictly correlated with generation performance. To enable the most objective and fair comparison, we therefore evaluate all models under a unified few-shot setting across both multiple-choice and generative tasks. For multiple-choice benchmarks and continuation tasks such as Lambada and SQuAD, accuracy is computed by strict string matching between the model output and the ground-truth answer under predefined rules. Additional evaluation details are provided in Appendix~\ref{app:metrics}.

\textbf{Setup. }
\method uses the OLMo 2 \citep{olmo20242} tokenizer and is trained with AdamW. The learning rate starts at \(1\times10^{-6}\), is linearly warmed up to \(1.5\times10^{-4}\) over the first 5{,}000 steps, and is then decayed with a cosine schedule to \(1\times10^{-5}\) by 1{,}000{,}000 steps. All evaluations are conducted using the checkpoint at the corresponding FLOPs budget, without EMA weights. The same tokenizer, optimization and evaluation settings are used for \method and all external baselines. In \method, the VAE contains 500M parameters and the DiT contains 1.8B parameters. For the autoregressive and discrete diffusion baselines, the embedding layer has approximately 400M parameters and the non-embedding backbone has 1.8B parameters. We thus keep the total model size of the two model families at a comparable scale of roughly 2B parameters. All methods are trained with the same random seed so that the training data are matched across runs, with the maximum sequence length set to 512. Additional details are provided in Appendix~\ref{app:setup} and~\ref{app:fairness_embed_vae_latent_space}.

\subsection{Evidence of Global Semantic Structures in Cola DLM (RQ1)}
\label{exp:rq1}
In this section, we first present an implication for the existence of global semantic structures, and then provide strong empirical evidence for their existence by quantitatively examining the performance of latent spaces with different dimensions under different timestep shifts. The full theoretical derivation, proof, and technical details are provided in Appendix~\ref{sec:global_semantic_structure}.

\implicationbox{imp:rq1_global_semantic_structure}{
If the latent representation is purely local and fully separable, then the optimal timeshift does not exhibit a stable drift as the latent dimension changes. Therefore, if the optimal timeshift is observed to shift systematically with the latent dimension in experiments, this indicates the existence of cross-dimensional shared structures in the latent space; if this phenomenon is mainly reflected in semantic metrics, it further supports that these shared structures are related to high-level semantics.
}

Based on Implication~\ref{imp:rq1_global_semantic_structure}, the focus of this section is not merely on the specific optimal loc values under different latent dimensions, but on whether the peak position of the optimal timeshift exhibits a stable and regular shift as the latent dimension varies. Figure~\ref{fig:rq1_global_semantic_structure} presents the corresponding experimental results.

\textbf{Obs. \ding{182} The optimal timeshift exhibits a systematic drift with the latent dimension. }As shown in the left panel of Figure~\ref{fig:rq1_global_semantic_structure}, the best loc for Task Avg shifts from approximately $1.0$ at $d=16$, to approximately $1.7$ at $d=64$, and further to approximately $2.3$ at $d=128$. This trend is clear and approximately monotonic. This phenomenon directly contradicts the separable null hypothesis. A more plausible explanation is that changing the latent dimension alters the effective noise calibration position of some cross-dimensional shared structure.

\textbf{Obs. \ding{183} This trend is consistent across multiple semantic metrics. }The right panel of Figure~\ref{fig:rq1_global_semantic_structure} further shows that, although the best loc values for LAMBADA, MMLU, SIQA, and Task Avg are not exactly the same, they all overall favor larger loc regions as the latent dimension increases. This indicates that the peak drift is not an accidental fluctuation of any single task, but rather a stable phenomenon jointly supported by multiple semantic evaluations. Therefore, what is modulated by timeshift is a representation structure shared across different semantic tasks.

\textbf{Obs. \ding{184} The empirical peaks are broadly consistent with the theoretical predictions. }In the left panel of Figure~\ref{fig:rq1_global_semantic_structure}, the dashed lines indicate the theoretically predicted optimal loc values. It can be seen that, under all three latent-dimension settings, the empirical peaks are close to the predicted positions, and the drift directions are fully consistent. This suggests that the observed drift is not an arbitrary empirical hyperparameter effect, but is instead consistent with the theoretical analysis in Appendix~\ref{sec:global_semantic_structure}, namely that shared latent structures lead to dimension-dependent timeshift calibration.

Implication~\ref{imp:rq1_global_semantic_structure} provides a rigorous contrapositive statement for the existence of global semantic structures, while the experimental results, by providing reverse evidence for this contrapositive, offer strong empirical support for the existence of shared and semantically relevant global structures in the latent space of \method. This also provides supporting evidence for the first condition in Eq.~\eqref{eq:main_three_curves} of Section~\ref{sec:theory_advantage}, thereby further supporting the advantage of \method.

\begin{figure}[t]
    \centering
    \includegraphics[width=\textwidth]{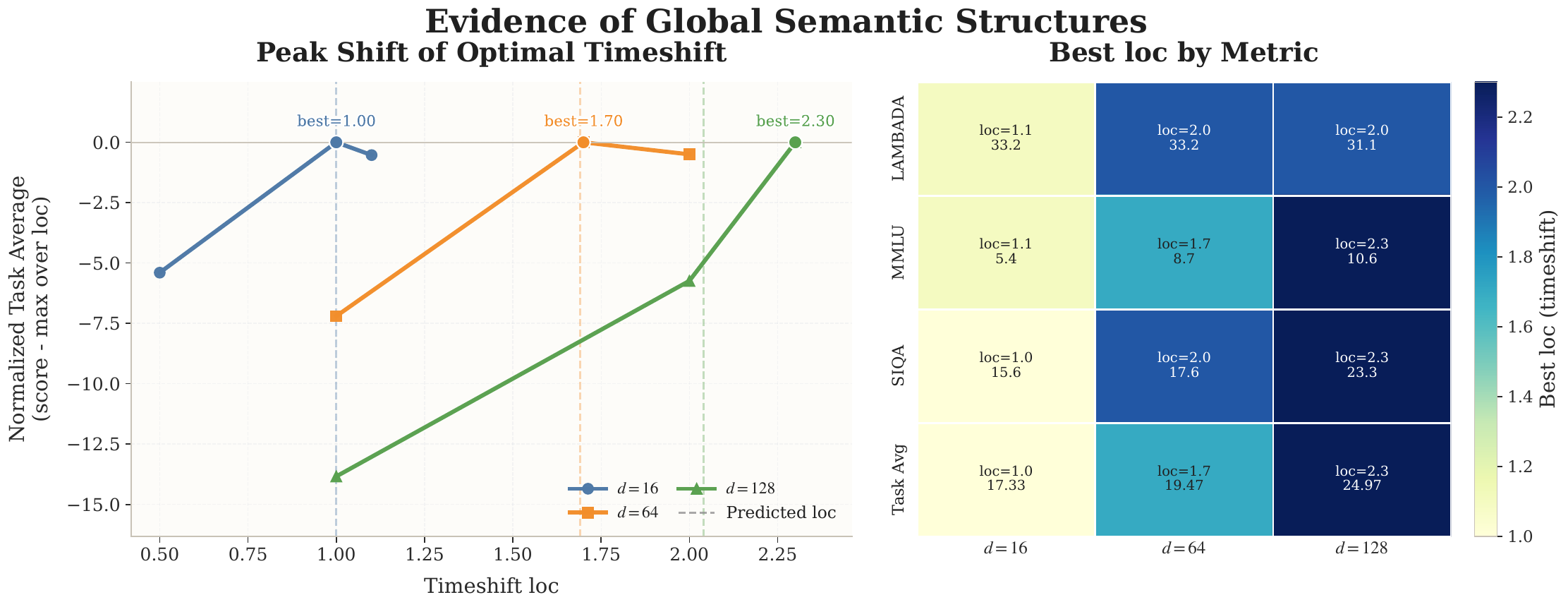}
    \caption{\textbf{Evidence of global semantic structures in the latent space.} Left: as latent dimension increases, optimal timeshifts move toward larger locations, with empirical peaks matching predictions. Right: most of the evaluation metrics consistently prefer larger best locations at higher dimensions. This stable cross-metric trend supports shared global semantic structures within the latent space.}
    \label{fig:rq1_global_semantic_structure}
\end{figure}

\subsection{Analysis of Different Latent Spaces in Cola DLM (RQ2)}
\label{exp:rq2}

In this section, we present a detailed empirical study of different latent spaces through both quantitative evaluation and visualization. We analyze the design of the latent space from three perspectives: whether it should be dynamic or static, what latent dimensionality is most appropriate, and how semantic smoothness contributes to latent-space quality. Based on these analyses, we identify the most effective directions for optimizing the latent space.

\begin{figure}[t]
    \centering
    \begin{subfigure}[t]{0.49\linewidth}
        \centering
        \includegraphics[width=\linewidth]{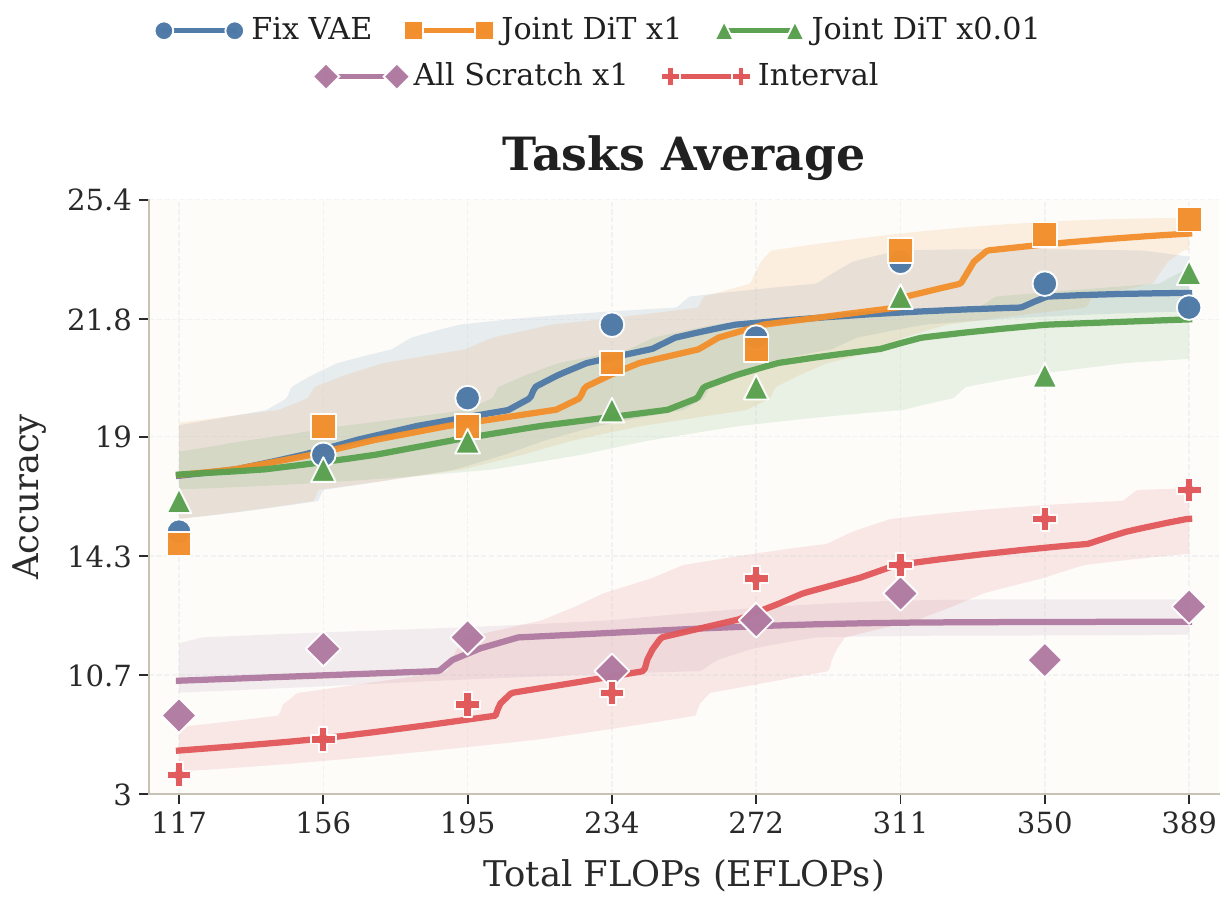}
        \label{fig:rq2_fix_vs_evolve_tasks_avg}
    \end{subfigure}
    \hfill
    \begin{subfigure}[t]{0.49\linewidth}
        \centering
        \includegraphics[width=\linewidth]{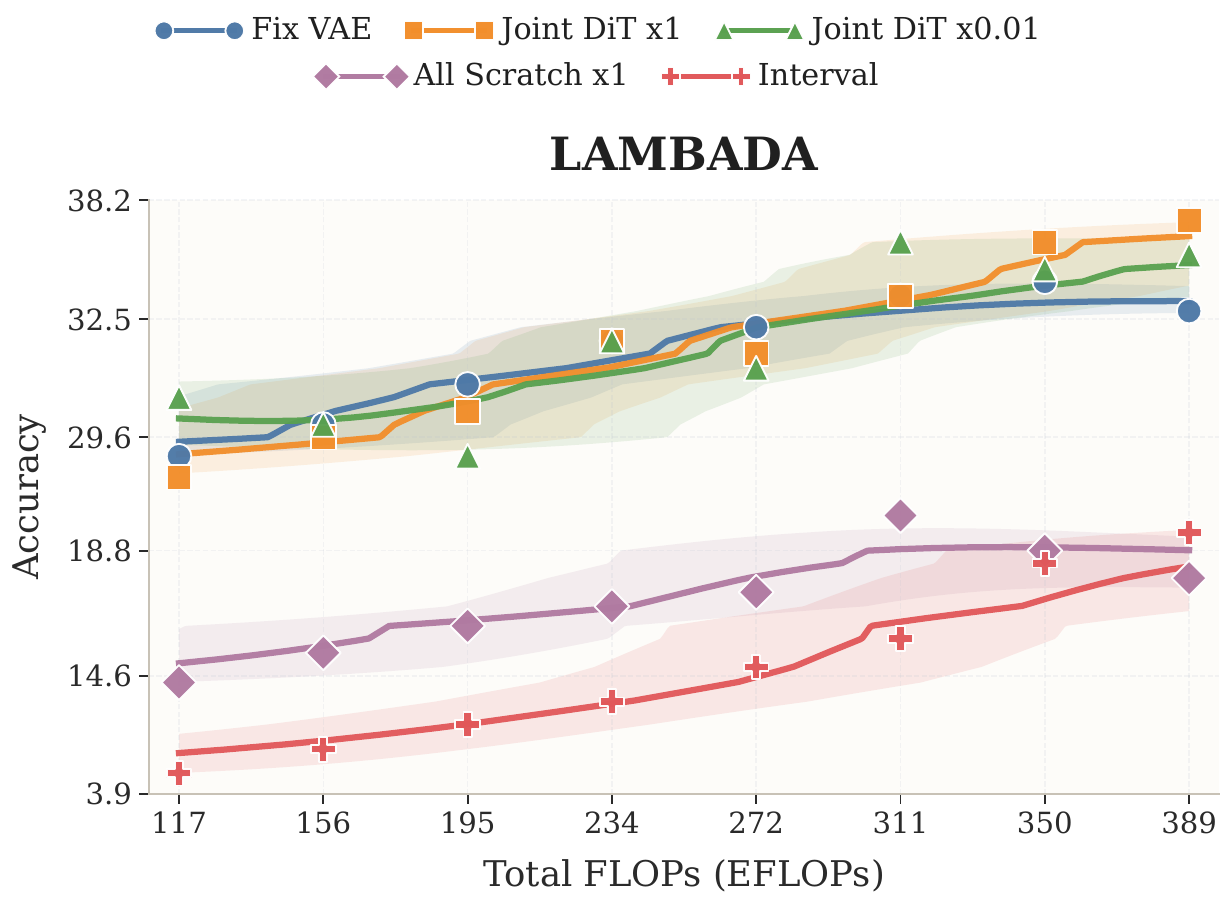}
        \label{fig:rq2_fix_vs_evolve_lambada}
    \end{subfigure}

    \vspace{0.5em}

    \begin{subfigure}[t]{0.49\linewidth}
        \centering
        \includegraphics[width=\linewidth]{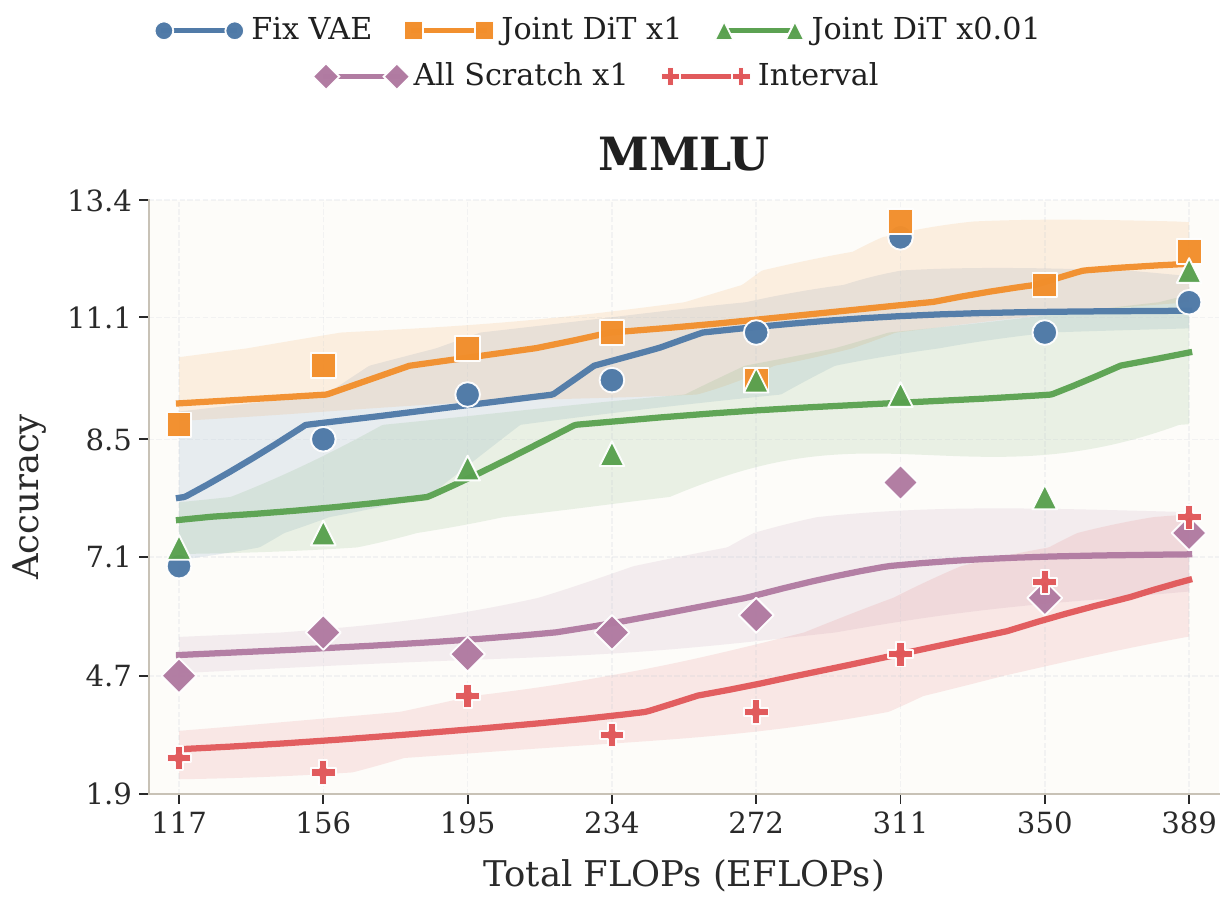}
        \label{fig:rq2_fix_vs_evolve_mmlu}
    \end{subfigure}
    \hfill
    \begin{subfigure}[t]{0.49\linewidth}
        \centering
        \includegraphics[width=\linewidth]{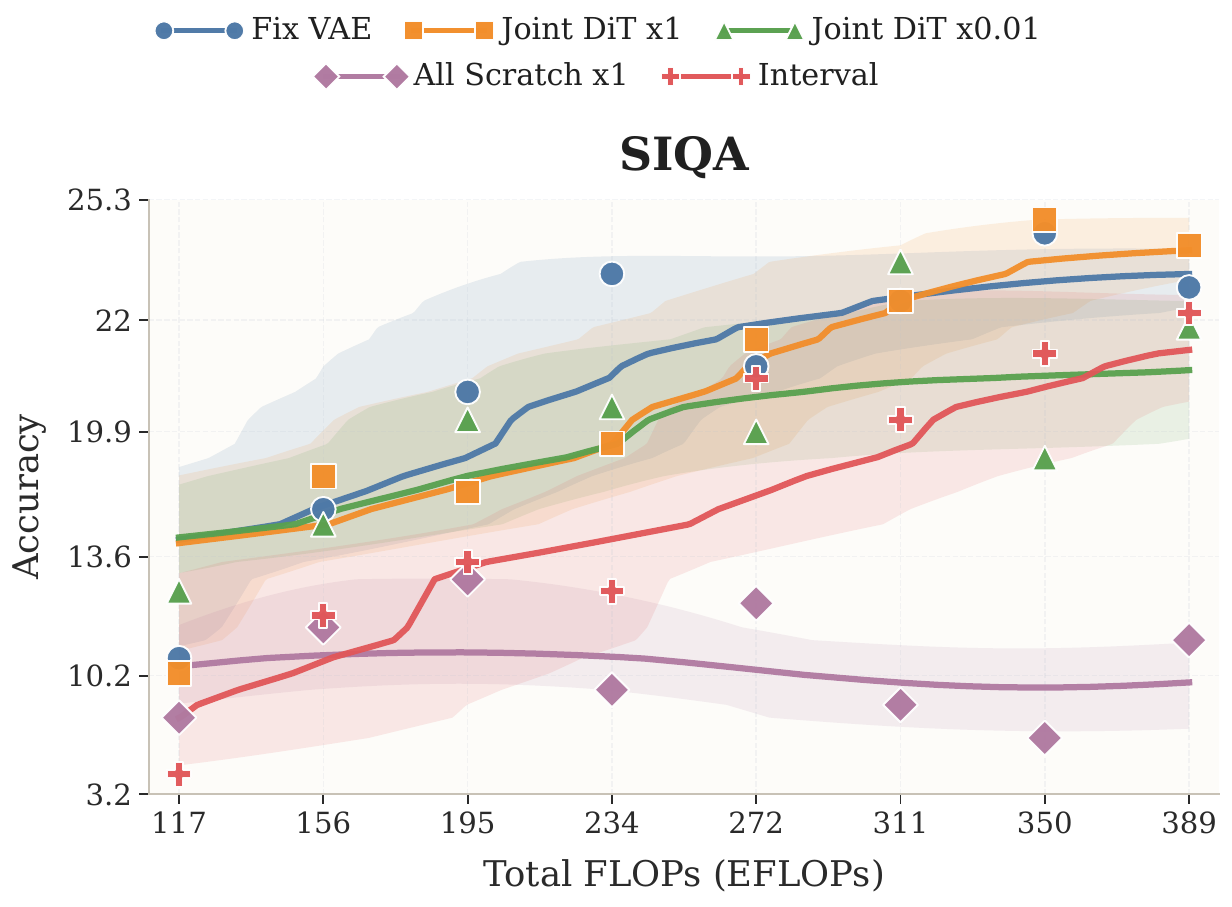}
        \label{fig:rq2_fix_vs_evolve_siqa}
    \end{subfigure}
    \caption{\textbf{Comparison between fixed and evolving latent spaces.} Across Task Average, LAMBADA, MMLU, and SIQA, joint evolution with DiT achieves the best overall scaling and performance when initialized from a stable pretrained VAE. Fixed spaces lead to earlier saturation, while training from scratch or interval updates remains less effective. This suggests the space should evolve with DiT, but from a stable initialization rather than scratch.}
    \label{fig:rq2_fix_vs_evolve}
\end{figure}

\begin{figure}[t]
    \centering
    \begin{subfigure}[t]{0.32\linewidth}
        \centering
        \includegraphics[width=\linewidth]{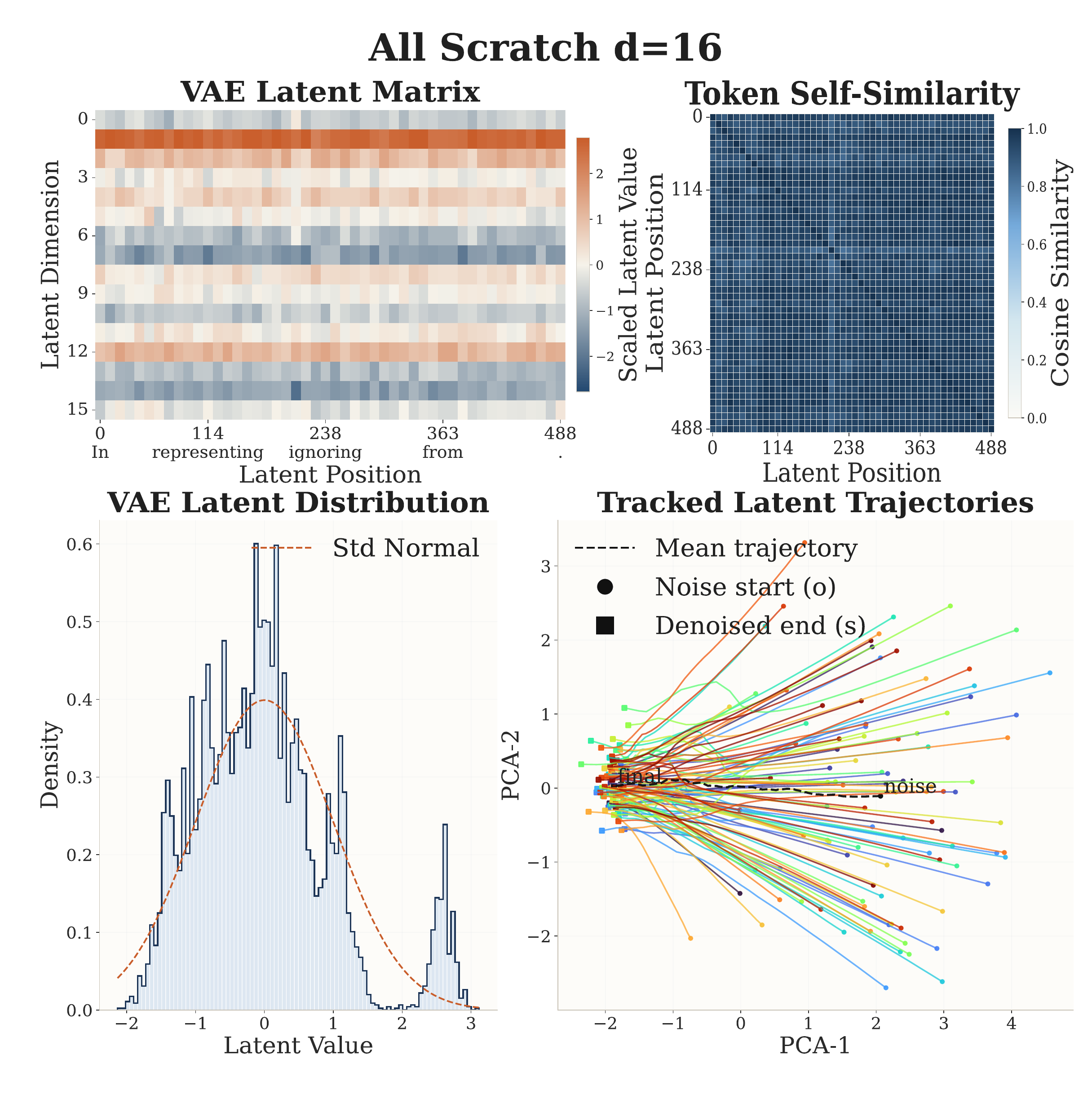}
        \label{fig:rq2_fix_vs_evolve_vis_all_sc_16}
    \end{subfigure}
    \hfill
    \begin{subfigure}[t]{0.32\linewidth}
        \centering
        \includegraphics[width=\linewidth]{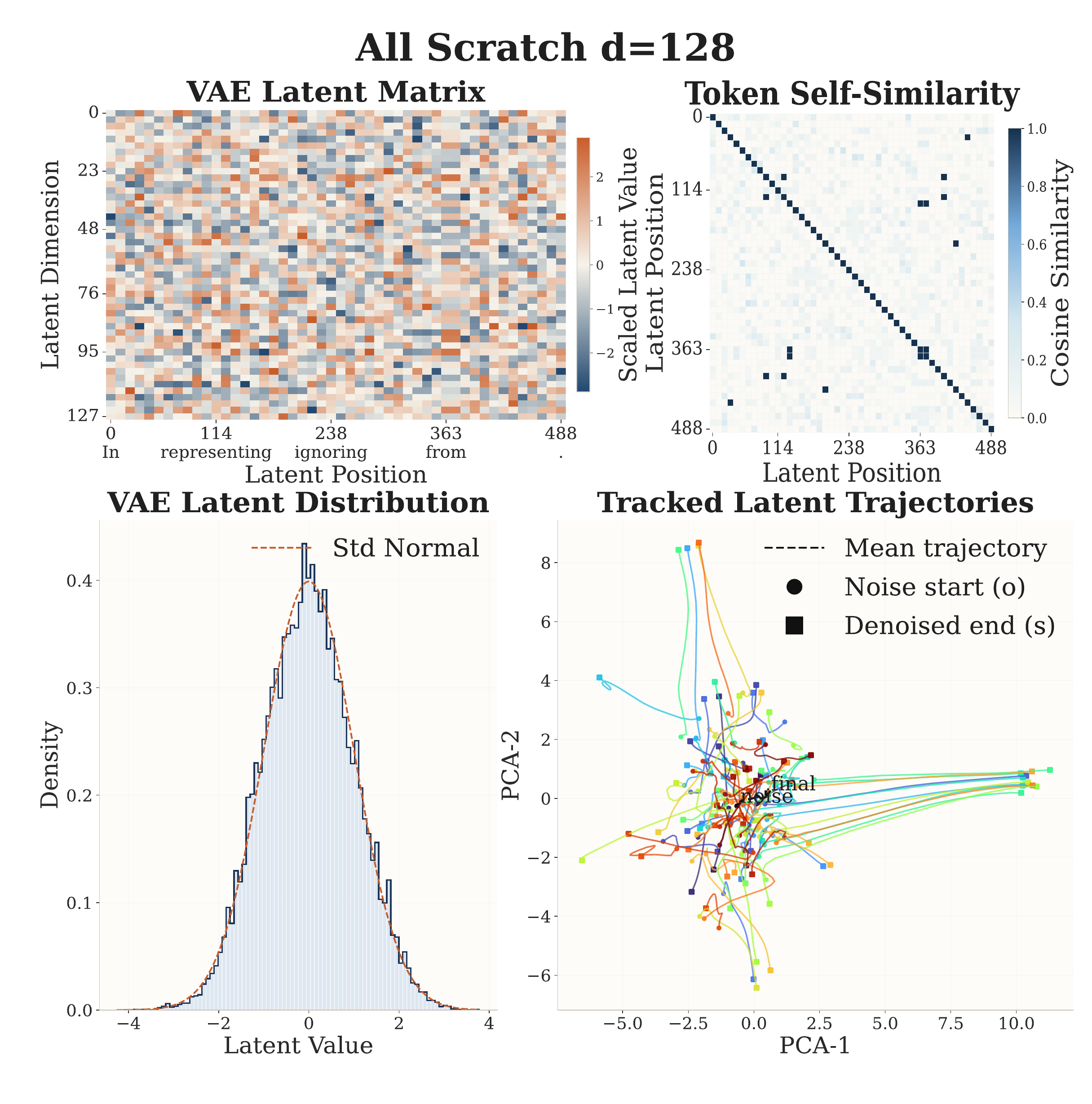}
        \label{fig:rq2_fix_vs_evolve_vis_all_sc_128}
    \end{subfigure}
    \hfill
    \begin{subfigure}[t]{0.32\linewidth}
        \centering
        \includegraphics[width=\linewidth]{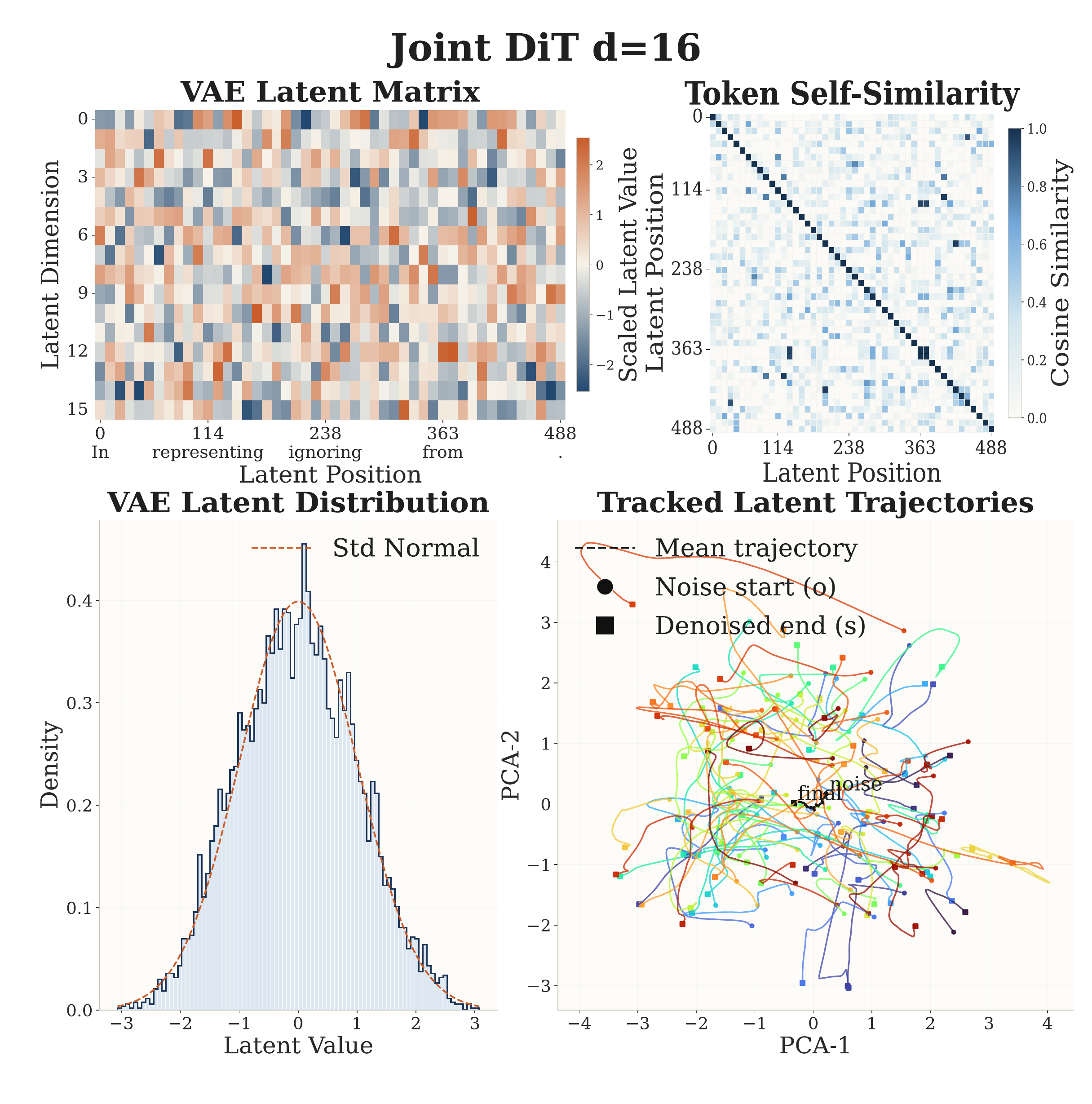}
        \label{fig:rq2_fix_vs_evolve_vis_dit_sc_16}
    \end{subfigure}
    \caption{\textbf{Visualization of latent spaces under different training strategies.} Joint optimization on a stable initialization yields a more structured, semantically organized latent space than training VAE and DiT from scratch. Increasing the latent dimension ($16$ to $128$) partially mitigates collapse but remains less structured than the stable-initialization approach.}
    \label{fig:rq2_fix_vs_evolve_visualize}
\end{figure}

\paragraph{Fixed vs. Evolving Latent Space.}
As shown in Figure~\ref{fig:rq2_fix_vs_evolve}, this section studies whether the latent space should evolve jointly with DiT during training. Under the same compute budget, we compare five strategies: fixing a pretrained VAE (Fix VAE); initializing the VAE from pretrained weights and jointly training it with DiT using a VAE learning rate equal to that of DiT or scaled to $0.01\times$ (Joint DiT x1 / Joint DiT x0.01); jointly training both VAE and DiT from random initialization with the same learning rate (All Scratch x1); and an interval-based strategy (Interval), where each $5$k-step cycle consists of $2$k steps of joint training followed by $3$k steps with the VAE frozen. The overall results suggest that the latent space should neither remain fully fixed nor be jointly optimized from scratch without constraint. Instead, the most effective strategy is to let it evolve together with DiT on top of a stable initialization.

\textbf{Obs. \ding{182} Joint DiT x1 shows the strongest scaling potential.}
At small compute budgets, Fix VAE and Joint DiT x1 are close, and Fix VAE is sometimes slightly better. As FLOPs increase, however, Joint DiT x1 improves more steadily and achieves the best final results on Task Avg, LAMBADA, MMLU, and SIQA, whereas Fix VAE gradually saturates. This indicates that a fixed latent space helps early stability but limits the performance ceiling, while continuous co-adaptation with DiT is more beneficial for scaling.

\textbf{Obs. \ding{183} The benefit of joint training depends on good initialization rather than trainability alone.}
All Scratch x1 performs consistently worse than the other methods across all metrics, and its gains remain limited throughout training. This suggests that the advantage of Joint DiT x1 does not come from making the latent space trainable by itself; it relies on starting from a meaningful pretrained latent space and then adapting it jointly with DiT.

\textbf{Obs. \ding{184} The latent-space visualization explains why All Scratch underperforms.}
Figure~\ref{fig:rq2_fix_vs_evolve_visualize} shows that All Scratch with $d=16$ yields a more collapsed and less structured latent space, with trajectories dominated by simple outward drift. Increasing the latent dimension to $128$ partially alleviates this issue, but the geometry still remains less organized than that of Joint DiT with stable initialization. In contrast, Joint DiT produces more heterogeneous latent patterns and richer trajectories, suggesting a more structured and semantically usable space.

\textbf{Obs. \ding{185} Effective latent evolution requires both continuous participation and sufficient update strength.}
Joint DiT x0.01 and Interval are both better than All Scratch x1, but still clearly worse than Joint DiT x1 in overall trend and final performance. This shows that partial latent participation is not enough: overly weak updates slow adaptation, while periodic freezing disrupts co-evolution with DiT. A better strategy is to update the latent space continuously and strongly, while keeping the initialization stable.

Overall, Figure~\ref{fig:rq2_fix_vs_evolve} and Figure~\ref{fig:rq2_fix_vs_evolve_visualize} consistently show that the best latent-space strategy is neither to keep it fixed nor to train it from scratch, but to let it evolve jointly with DiT on top of a good initialization. When the VAE and DiT are trained jointly, the VAE is exposed to more data and can fit $\pdata(z|x)$ more accurately. This further verifies the last condition in Eq.~\eqref{eq:main_three_curves} of Section~\ref{sec:theory_advantage}, and provides strong support for the potential advantage of \method. Additional results are in Appendix~\ref{app:fix_vs_evolve_fulu}.

\begin{wraptable}{r}{0.55\textwidth}
    \vspace{-0.8em}
    \centering
    \caption{\textbf{Dimensionality of the latent space under 117 EFLOPs.} Larger latent dimensions improve the overall average under the all-scratch setting with $\mathrm{loc}=1$.}
    \label{tab:latent_dim_half}
    \small
    \setlength{\tabcolsep}{3.5pt}
    \renewcommand{\arraystretch}{1.08}
    \begin{tabular}{lcccc}
        \toprule
        Method & Lambada & MMLU & SIQA & Avg. \\
        \midrule
        All Scratch, $d=16$, loc=1  & 14.3 & \underline{6.9} & 4.9 & 8.7 \\
        All Scratch, $d=64$, loc=1  & \textbf{20.9} & 5.4 & \underline{7.6} & \underline{11.3} \\
        All Scratch, $d=128$, loc=1 & \underline{18.5} & \textbf{8.1} & \textbf{8.9} & \textbf{11.8} \\
        \bottomrule
    \end{tabular}
    \vspace{-0.8em}
\end{wraptable}

\paragraph{Dimensionality of the Latent Space. }
We next study how the latent dimensionality affects both performance and latent-space quality. Table~\ref{tab:latent_dim_half} compares All Scratch models with different latent dimensions under the same EFLOPs budget ($117$), Figure~\ref{fig:rq2_fix_vs_evolve_visualize} provides the corresponding latent-space visualization, and Figure~\ref{fig:rq1_global_semantic_structure} shows how the optimal timeshift changes with dimension. Taken together, these results suggest that increasing the latent dimension partially alleviates collapse and improves semantic capacity, but it also changes the effective noise calibration of the latent space.

\textbf{Obs. \ding{182} Increasing the latent dimension improves the overall semantic capacity under the same compute budget.}
As shown in Table~\ref{tab:latent_dim_half}, the average score increases from $8.7$ at $d=16$ to $11.3$ at $d=64$, and further to $11.8$ at $d=128$. The improvement is most evident on MMLU and SIQA. Although LAMBADA peaks at $d=64$, the overall trend still suggests that a larger latent space carries stronger semantic capacity under the same compute budget.

\textbf{Obs. \ding{183} A larger latent dimension partially alleviates latent-space collapse, but does not fully solve it.}
Figure~\ref{fig:rq2_fix_vs_evolve_visualize} shows that increasing the dimension from $16$ to $128$ makes the latent space less collapsed and more dispersed. However, the resulting geometry still remains clearly less structured than Joint DiT with stable initialization. This indicates that increasing dimensionality is helpful, but cannot by itself replace proper latent-space formation.

\begin{figure}[t]
    \centering
    \begin{subfigure}[t]{0.49\linewidth}
        \centering
        \includegraphics[width=\linewidth]{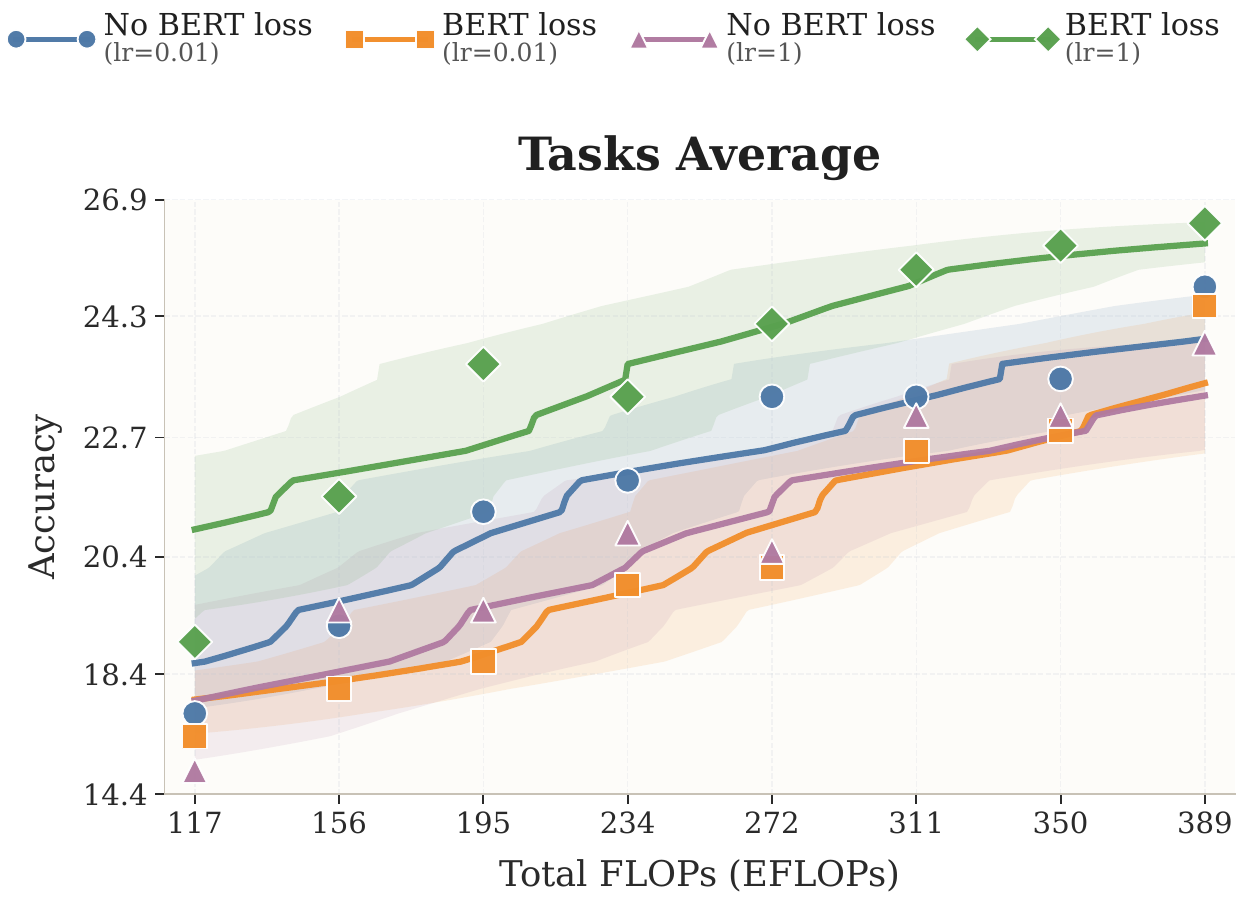}
        \label{fig:rq2_semantic_tasks_avg}
    \end{subfigure}
    \hfill
    \begin{subfigure}[t]{0.49\linewidth}
        \centering
        \includegraphics[width=\linewidth]{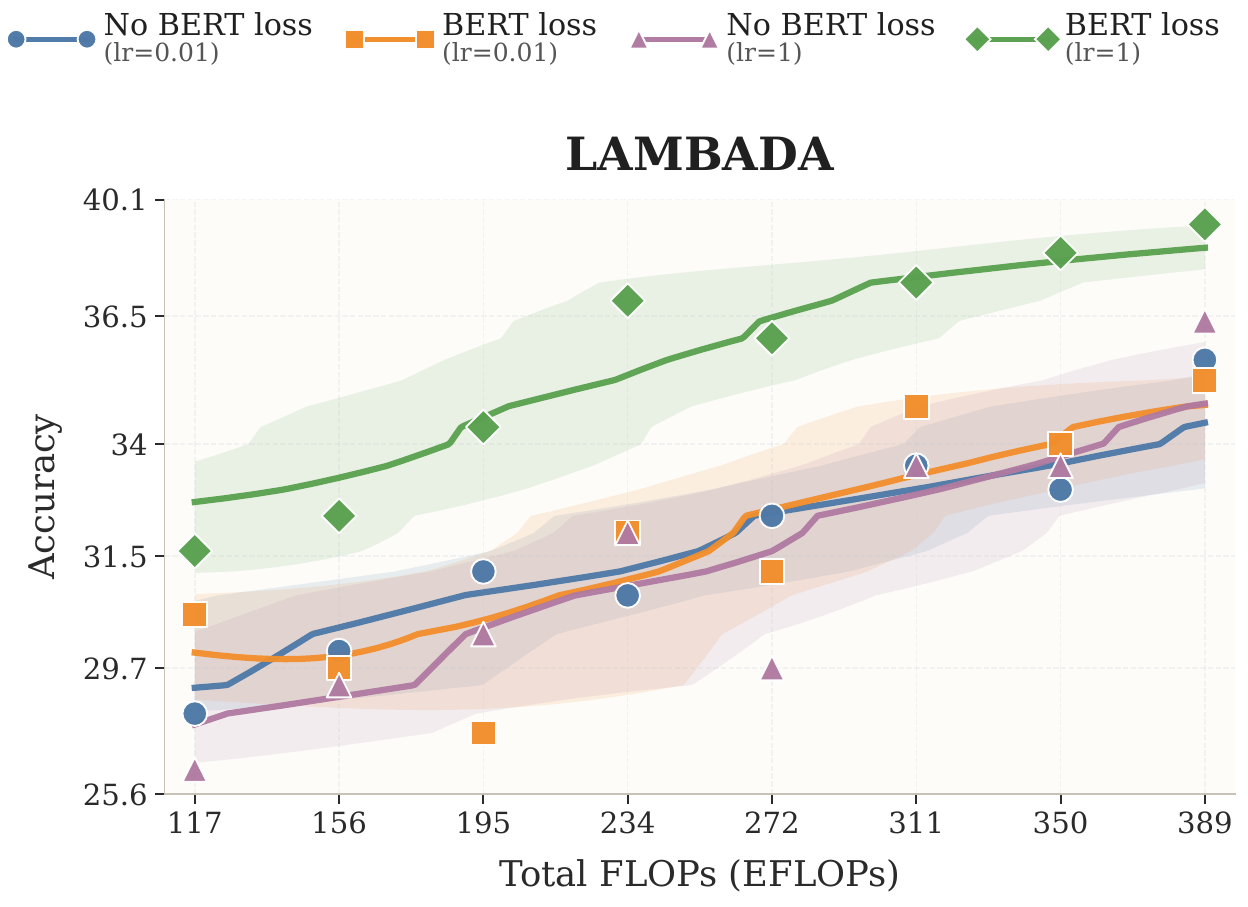}
        \label{fig:rq2_semantic_lambada}
    \end{subfigure}

    \vspace{0.5em}

    \begin{subfigure}[t]{0.49\linewidth}
        \centering
        \includegraphics[width=\linewidth]{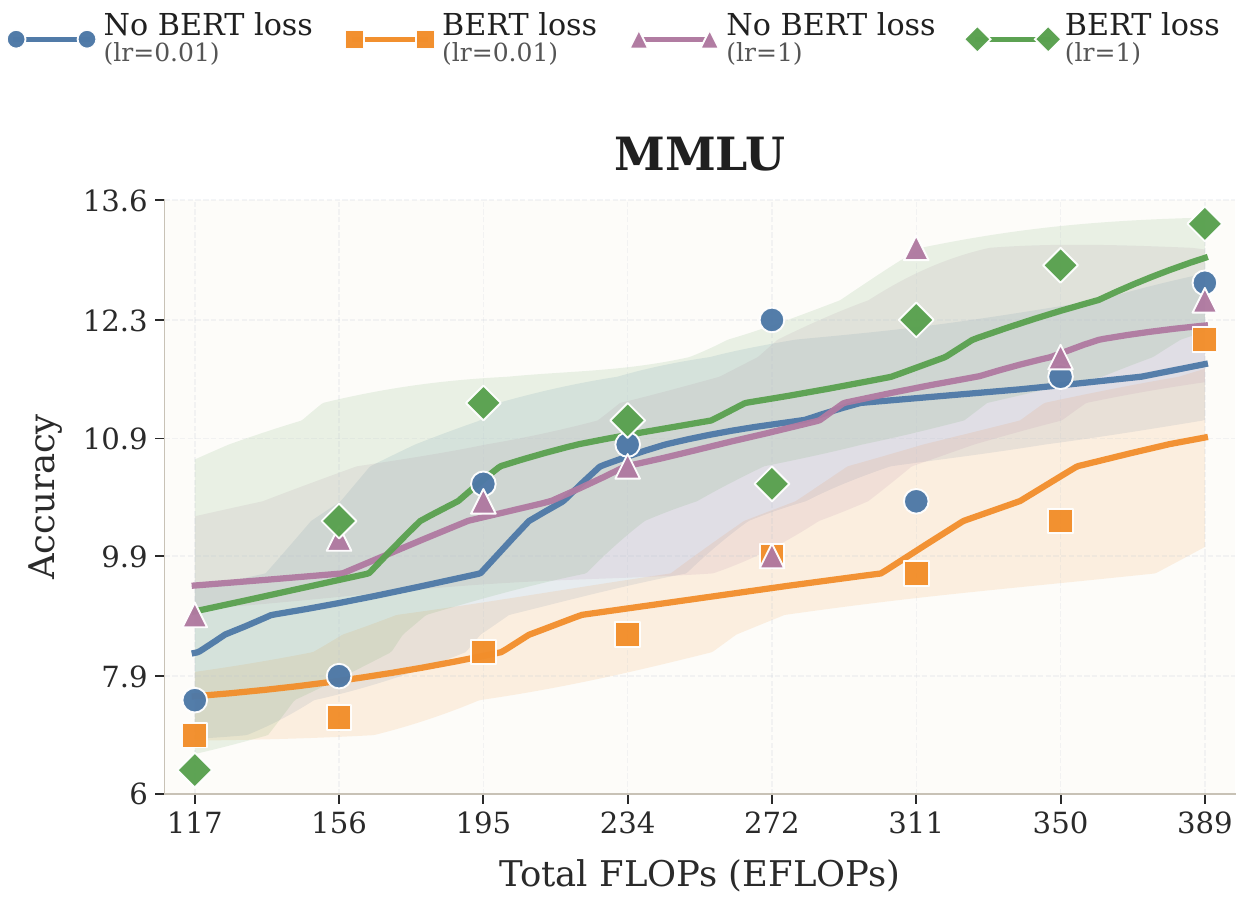}
        \label{fig:rq2_semantic_mmlu}
    \end{subfigure}
    \hfill
    \begin{subfigure}[t]{0.49\linewidth}
        \centering
        \includegraphics[width=\linewidth]{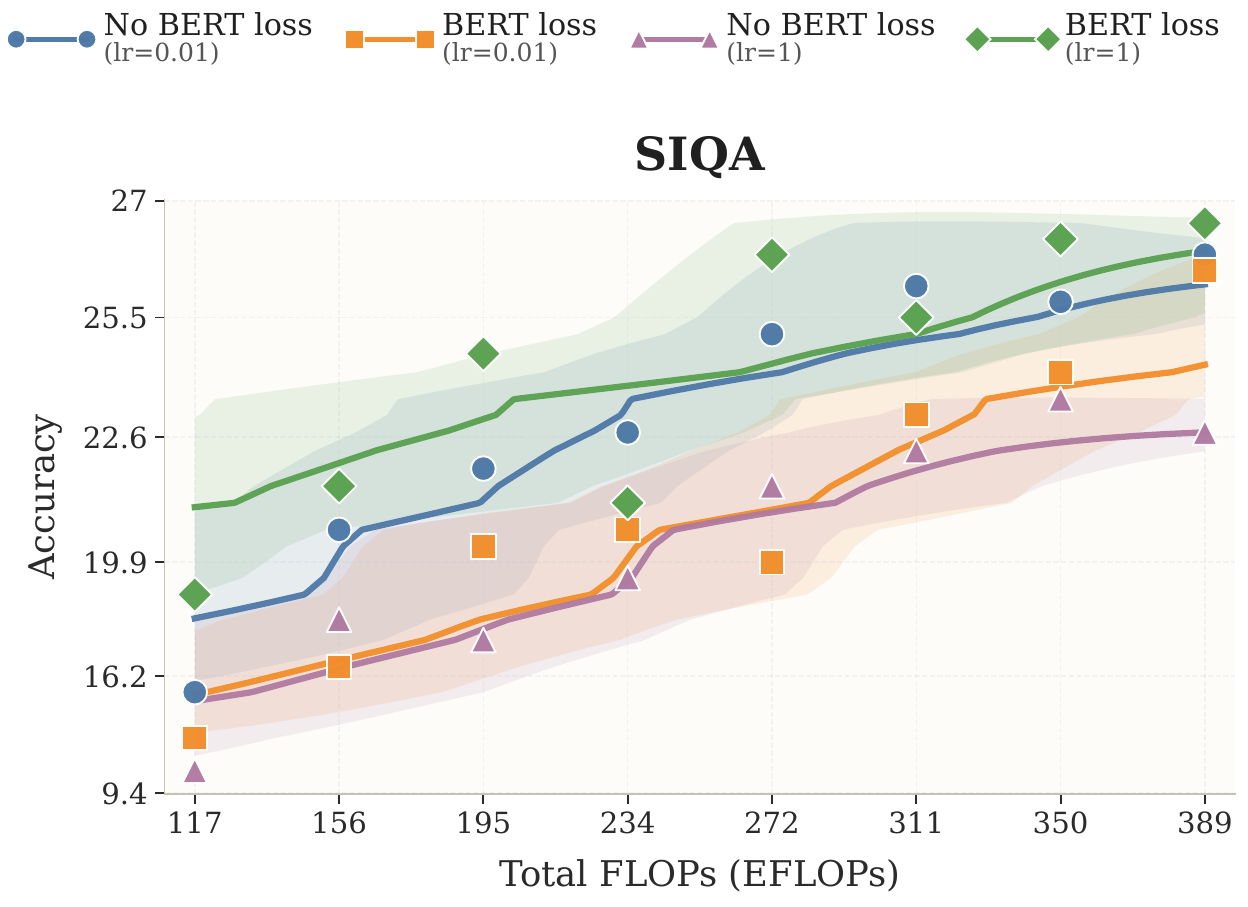}
        \label{fig:rq2_semantic_siqa}
    \end{subfigure}
    \caption{\textbf{Effect of semantic smoothness in the latent space under the Joint DiT setting.} Adding a BERT-style loss consistently improves performance, with larger gains during active latent updates ($\mathrm{lr}=1$). This suggests semantic smoothness benefits latent-space quality, especially when evolving jointly with DiT.}
    \label{fig:rq2_semantic_importance}
\end{figure}

\textbf{Obs. \ding{184} The effect of latent dimensionality is not only geometric, but also dynamical.}
Figure~\ref{fig:rq1_global_semantic_structure} shows that the best timeshift systematically shifts toward larger loc values as the latent dimension increases. This means that increasing the latent dimension does not merely enlarge the space; it also changes the denoising scale at which semantic information is best recovered. Therefore, the benefit of a higher-dimensional latent space depends not only on improved geometry, but also on proper noise calibration.

Overall, Table~\ref{tab:latent_dim_half}, Figure~\ref{fig:rq2_fix_vs_evolve_visualize}, and Figure~\ref{fig:rq1_global_semantic_structure} present a consistent picture: increasing the latent dimension improves latent-space quality and downstream performance, but the gain is only partial, and its full benefit still depends on proper training dynamics and timeshift calibration.

\paragraph{Semantic Importance of the Latent Space. }
As shown in Figure~\ref{fig:rq2_semantic_importance}, all results in this subsection are obtained under the Joint DiT setting, where the VAE is initialized from pretrained weights and jointly optimized with DiT. We further compare whether to add a BERT-style loss in VAE training, which encourages the latent space to preserve smoother local semantics. Here, the reported lr denotes the VAE learning-rate ratio relative to DiT. The results show that such semantic smoothness is important for downstream performance, especially when the latent space is allowed to evolve more actively.

\textbf{Obs. \ding{182} Adding BERT loss consistently improves performance when the latent space is actively updated.}
When the VAE learning-rate ratio is $1$, BERT loss gives the best overall results across nearly the entire training range. In Figure~\ref{fig:rq2_semantic_importance}, the BERT-loss curve consistently outperforms its no-BERT counterpart on Task Average, LAMBADA, MMLU, and SIQA, and also achieves the best final performance. This indicates that encouraging masked-token recoverability makes the latent space more semantically useful for downstream prediction.

\textbf{Obs. \ding{183} Strong latent evolution is effective only with semantic guidance.}
When the VAE learning-rate ratio is $0.01$, adding BERT loss brings only limited gains, whereas its advantage becomes clear and stable when the ratio is increased to $1$. At the same time, simply increasing the VAE update strength without BERT loss does not reliably improve performance and is even weaker at several later-stage points. This shows that trainability alone is not sufficient: when the latent space evolves more actively, its updates must also be constrained toward a semantically smoother organization.

Overall, Figure~\ref{fig:rq2_semantic_importance} shows that semantic smoothness is an important property of a useful latent space. It not only improves final performance, but also makes joint latent evolution substantially more effective and stable. These results suggest that the latent should be compact but semantically sufficient, consistent with Eq.~\eqref{eq:main_avg_elbo} and Eq.~\eqref{eq:main_three_curves}. The BERT-style loss helps retain useful semantics under the bottleneck.

\begin{table}[t]
    \centering
    \caption{\textbf{Performance under different VAE logSNR settings.} VAE logSNR strongly affects downstream performance. A learnable setting gives the best overall results, while fixed logSNR = 1.5 is the strongest fixed alternative.}
    \label{tab:fixlogsnr_two_budgets}
    \small
    \setlength{\tabcolsep}{5pt}
    \renewcommand{\arraystretch}{1.10}
    \begin{tabular*}{\textwidth}{@{\extracolsep{\fill}}lcccc@{\hspace{1.2em}}cccc}
        \toprule
        \textbf{Compute Budget}
        & \multicolumn{4}{c}{\textbf{EFLOPs = 77.86}}
        & \multicolumn{4}{c}{\textbf{EFLOPs = 116.78}} \\
        \cmidrule(r){1-1} \cmidrule(lr){2-5} \cmidrule(lr){6-9}
        \textbf{Method}
        & \textbf{Lambada} & \textbf{MMLU} & \textbf{SIQA} & \textbf{Avg.}
        & \textbf{Lambada} & \textbf{MMLU} & \textbf{SIQA} & \textbf{Avg.} \\
        \midrule
        Fixed VAE logSNR = 1.0 & 27.1 & 5.7 & 11.3 & 14.70 & 30.4 & 7.7 & 18.4 & 18.83 \\
        Fixed VAE logSNR = 1.5 & 29.5 & \underline{7.8} & \textbf{17.5} & \underline{18.27} & \underline{33.8} & 8.0 & \textbf{23.6} & \underline{21.80} \\
        Fixed VAE logSNR = 2.0 & \underline{30.9} & 5.1 & 14.3 & 16.77 & 32.7 & \underline{9.7} & 19.5 & 20.63 \\
        Learnable VAE logSNR ($\approx 4.5$) & \textbf{32.6} & \textbf{7.9} & \underline{16.2} & \textbf{18.90} & \textbf{34.6} & \textbf{10.1} & \underline{21.6} & \textbf{22.1} \\
        \bottomrule
    \end{tabular*}
\end{table}

\paragraph{Smoothness of the Latent Space. }
Table~\ref{tab:fixlogsnr_two_budgets} compares different VAE logSNR settings at two compute budgets. The results show that the VAE logSNR is an important factor for latent-space smoothness and downstream performance. Under the current setup, learning the VAE logSNR gives the strongest overall results, while fixing the VAE logSNR at 1.5 is the most competitive fixed alternative. The VAE logSNR formula is given in Appendix~\ref{app:vae_logsnr_calc}.

\textbf{Obs. \ding{182} Learning the VAE logSNR gives the strongest overall performance under the current setup.}
At both 77.86 and 116.78 EFLOPs, the learnable VAE logSNR setting achieves the best Task Average in Table~\ref{tab:fixlogsnr_two_budgets}. It also gives the best LAMBADA results at both checkpoints and the best MMLU result at the higher compute budget. This suggests that keeping the VAE logSNR learnable is currently the strongest overall choice, likely because it allows a more flexible smoothness profile during latent-space training.

\textbf{Obs. \ding{183} Fixing the VAE logSNR at 1.5 is the strongest fixed alternative.}
Although the learnable VAE logSNR setting ranks first on average, fixing the VAE logSNR at 1.5 remains very close at both compute budgets. It also consistently achieves the best SIQA results and stays competitive on the other tasks. This indicates that a properly chosen fixed VAE logSNR can already provide a strong balance between semantic preservation and optimization stability.

\textbf{Obs. \ding{184} The current results favor a learnable VAE logSNR, while still leaving room for further study of fixed settings.}
The advantage of the learnable VAE logSNR over fixing the VAE logSNR at 1.5 is consistent but not large, suggesting that the current conclusion is clear but not yet definitive. Since Table~\ref{tab:fixlogsnr_two_budgets} only reports two compute budgets, the scaling behavior of different VAE logSNR settings remains open and deserves more systematic study.

Overall, Table~\ref{tab:fixlogsnr_two_budgets} shows that the VAE logSNR is an important factor in shaping latent-space smoothness and downstream performance. Under the current setup, a learnable VAE logSNR is the strongest overall choice, while fixing the VAE logSNR at 1.5 stands out as a highly competitive fixed alternative.

\subsection{Ablation on the Diffusion Process in Cola DLM (RQ3)}
\label{exp:rq3}
In this section, we systematically study the training and inference design choices of the DiT module through ablations. By combining quantitative results with visualizations, we further analyze the mechanisms behind the observed optimization trends. On the training side, we investigate DiT models with different block sizes and examine the effect of different noise training schedules on downstream performance. On the inference side, we study the impact of the number of denoising steps and the choice of Classifier-Free Guidance (CFG) scales.

\begin{figure}[t]
    \centering
    \begin{subfigure}[t]{0.49\linewidth}
        \centering
        \includegraphics[width=\linewidth]{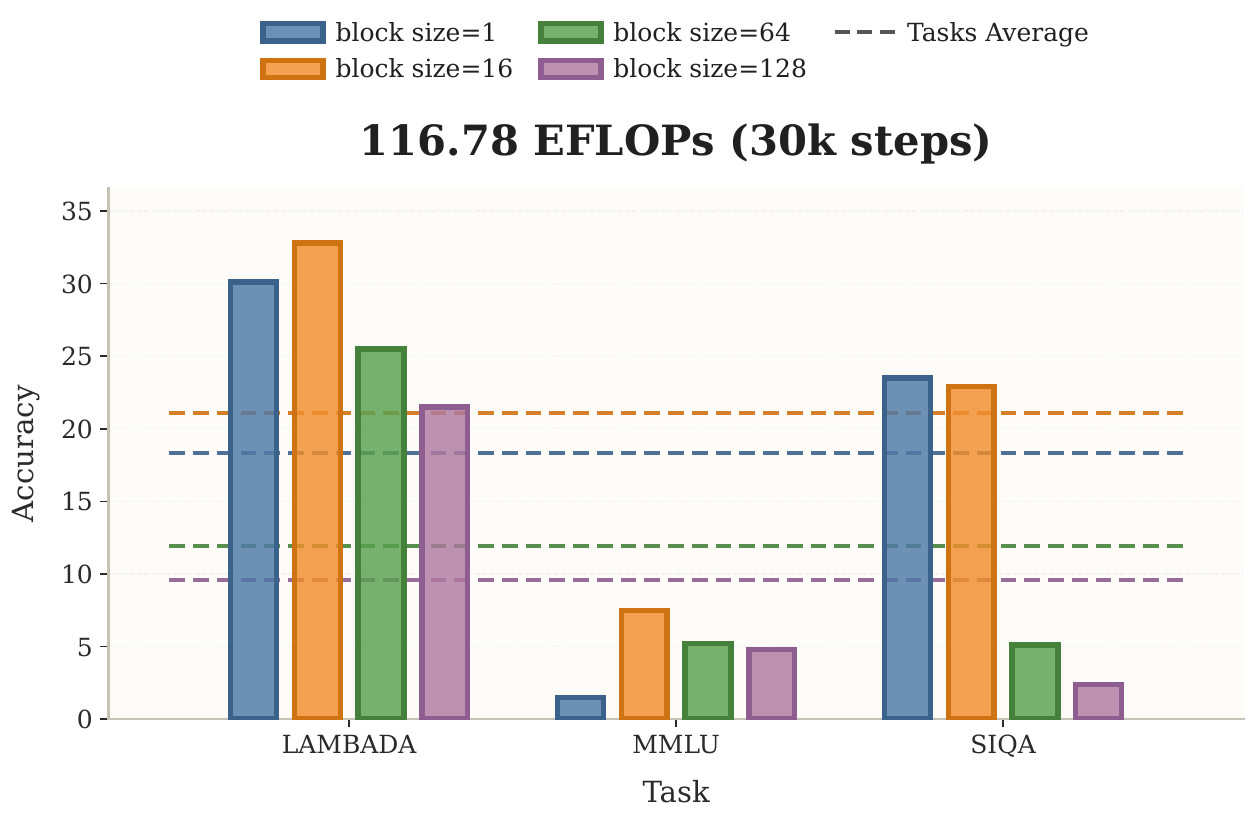}
        \label{fig:rq3_block_size_30k}
    \end{subfigure}
    \hfill
    \begin{subfigure}[t]{0.49\linewidth}
        \centering
        \includegraphics[width=\linewidth]{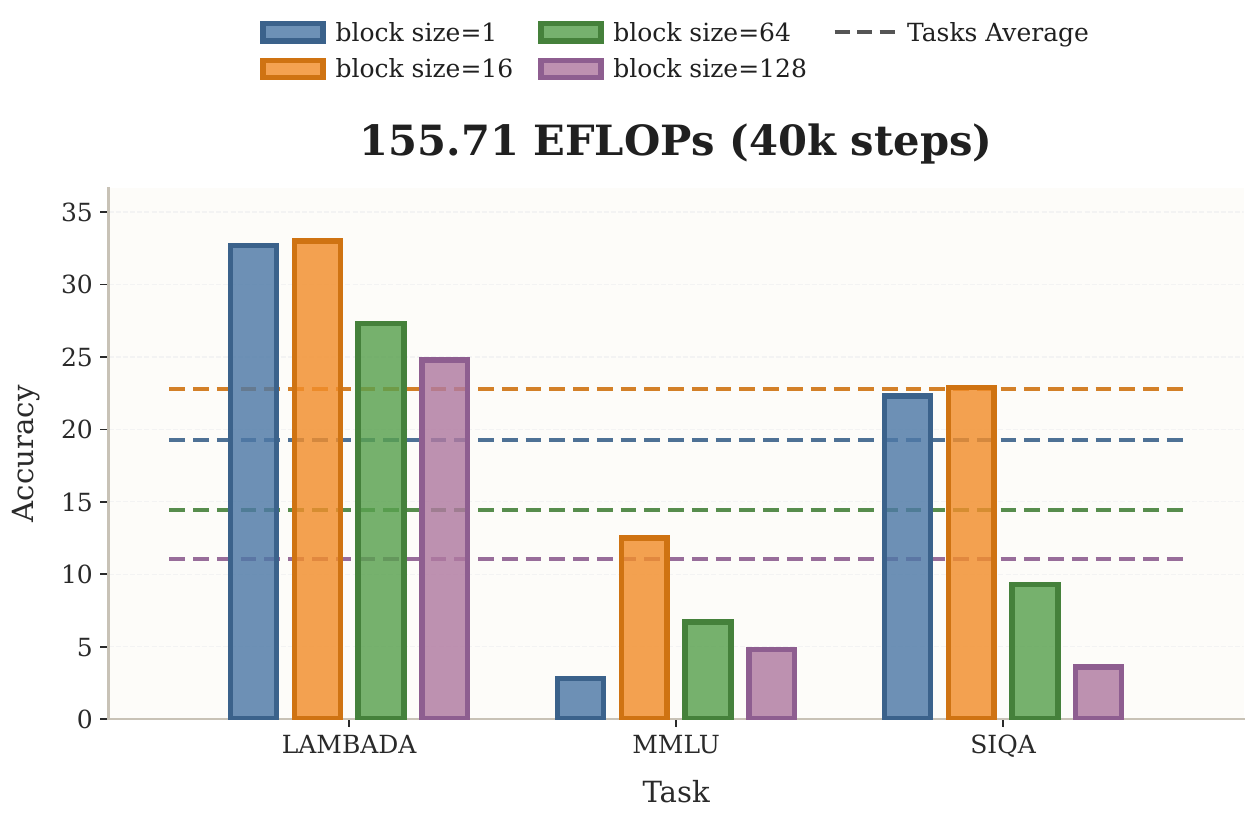}
        \label{fig:rq3_block_size_40k}
    \end{subfigure}
    \caption{\textbf{Impact of DiT block size.} A moderate block size (especially 16) achieves the best overall performance. Overly large blocks degrade results, while size 1 remains competitive but weaker than 16.}
    \label{fig:rq3_block_size}
\end{figure}

\subsubsection{Training Stage}
\paragraph{DiT Block Size.}As shown in Figure~\ref{fig:rq3_block_size}, all results in this subsection are obtained under the Joint DiT setting: the VAE is initialized from pretrained weights, the VAE and DiT are jointly optimized with the same learning rate, and the training noise schedule uses loc$=1$. We compare four DiT block sizes at two training checkpoints to study how the local processing granularity affects downstream performance. The results show that block size has a clear effect under the current setting, and that a moderate block size works best.

\textbf{Obs. \ding{182} Block size $16$ gives the best overall performance at both checkpoints.}
At both 30K and 40K checkpoints, block size $16$ achieves the highest Task Average in Figure~\ref{fig:rq3_block_size}. It also delivers the strongest or near-strongest results on all three benchmarks, especially on LAMBADA and MMLU. This suggests that, under the current setup, a moderate block size provides a favorable trade-off between local modeling capacity and semantic aggregation.

\textbf{Obs. \ding{183} Larger block sizes are generally less effective under the current setting.}
When the block size is increased from $16$ to $64$ and $128$, performance drops clearly on all three tasks at both checkpoints, with especially visible degradation on SIQA and MMLU. This suggests that overly coarse block partitioning may weaken useful semantic interactions inside the latent sequence. At the same time, since the training noise schedule is fixed to loc$=1$ here, we do not exclude the possibility that different block sizes may favor different noise calibrations.

\textbf{Obs. \ding{184} Block size $1$ is competitive but still weaker than block size $16$.}
Block size $1$ remains a relatively strong baseline and generally outperforms block sizes $64$ and $128$. However, it is still below block size $16$ in Task Average at both checkpoints, and is notably weaker on MMLU. This suggests that fully fine-grained, completely causal processing is not necessarily the optimal way to model text in this setting, and that some degree of local grouping can be beneficial.

Overall, Figure~\ref{fig:rq3_block_size} shows a clear pattern: under the current setting with loc$=1$, DiT block size should be neither too small nor too large. A moderate block size, especially $16$, provides the most effective balance and leads to the best overall performance in the current experiments.

\begin{figure}[t]
    \centering
    \begin{subfigure}[t]{0.49\linewidth}
        \centering
        \includegraphics[width=\linewidth]{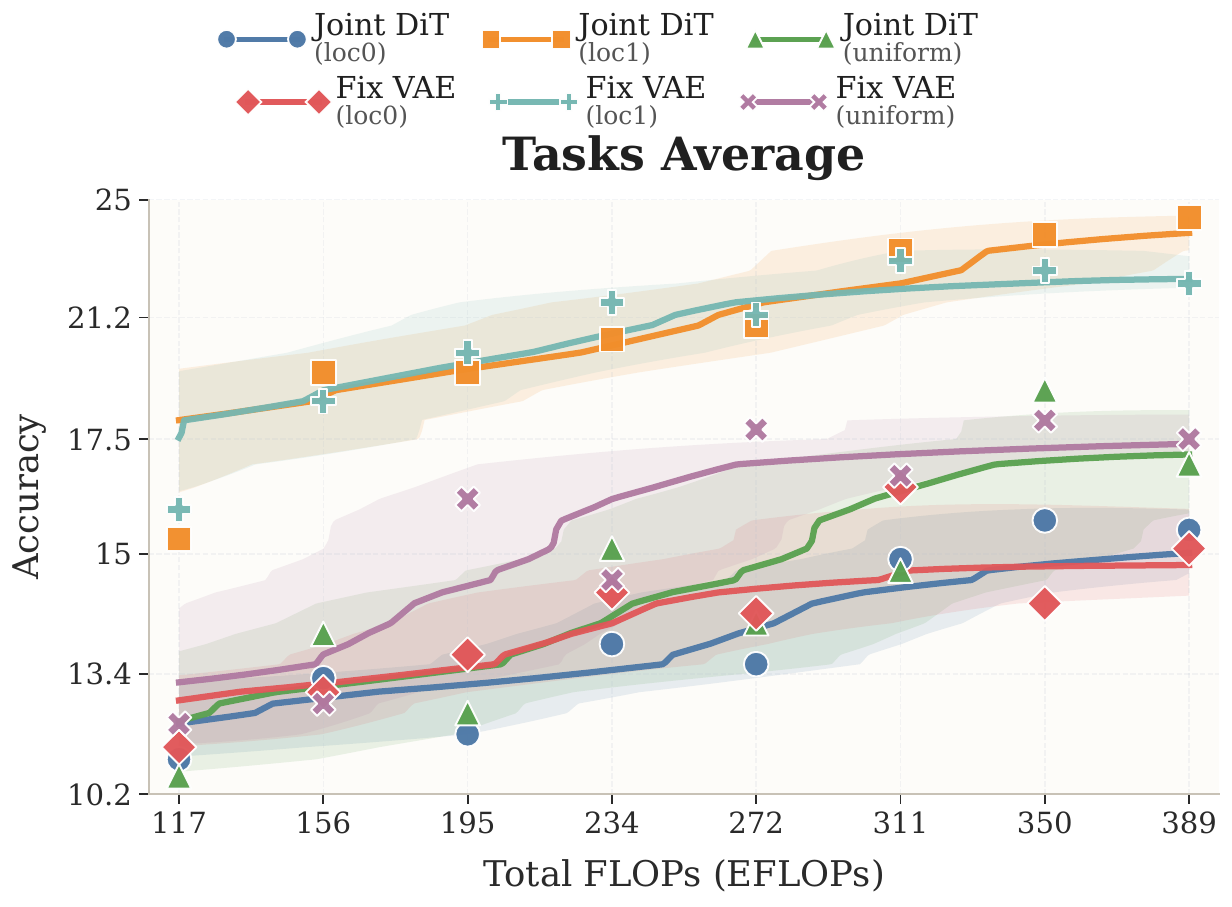}
        \label{fig:rq3_noise_schedule_joint_tasks_avg}
    \end{subfigure}
    \hfill
    \begin{subfigure}[t]{0.49\linewidth}
        \centering
        \includegraphics[width=\linewidth]{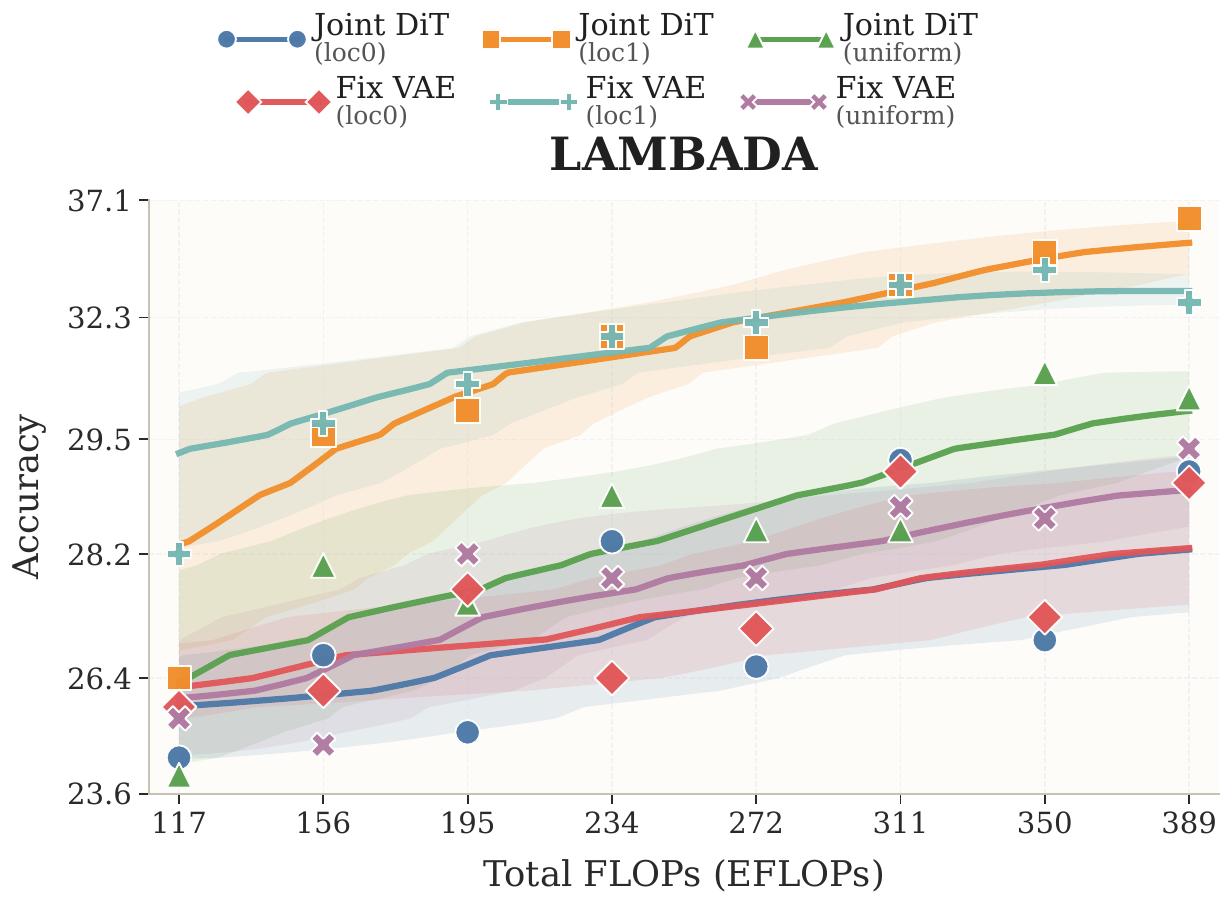}
        \label{fig:rq3_noise_schedule_joint_lambada}
    \end{subfigure}

    \vspace{0.5em}

    \begin{subfigure}[t]{0.49\linewidth}
        \centering
        \includegraphics[width=\linewidth]{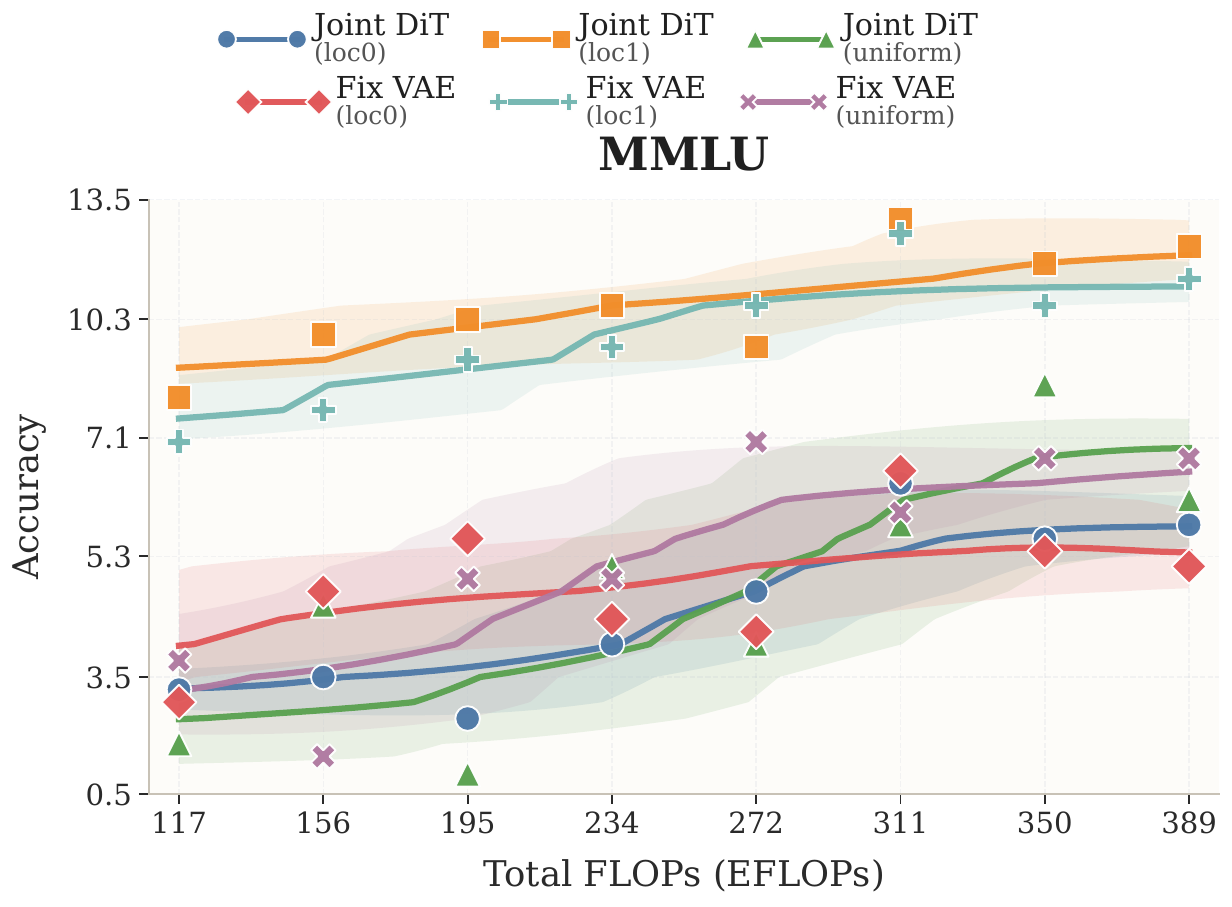}
        \label{fig:rq3_noise_schedule_joint_mmlu}
    \end{subfigure}
    \hfill
    \begin{subfigure}[t]{0.49\linewidth}
        \centering
        \includegraphics[width=\linewidth]{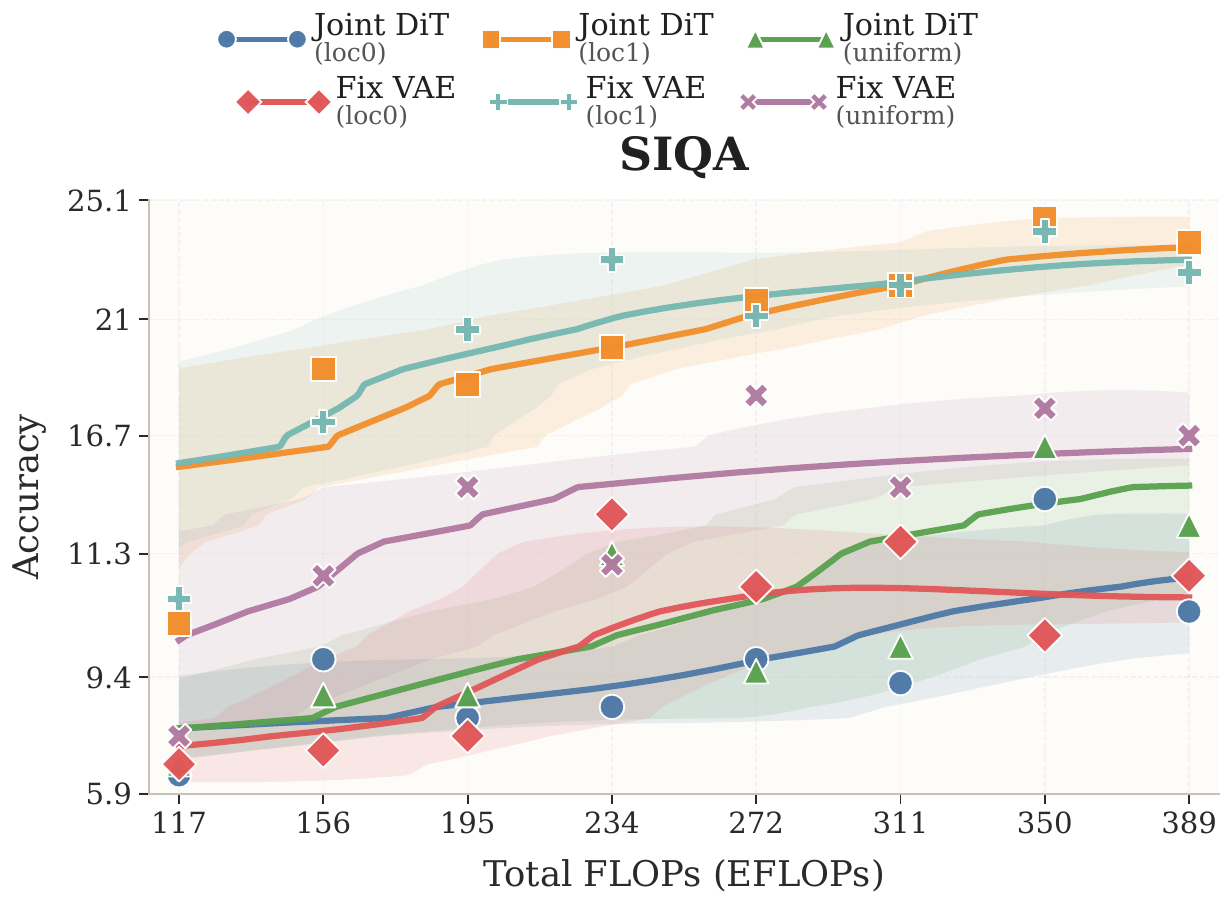}
        \label{fig:rq3_noise_schedule_joint_siqa}
    \end{subfigure}
    \caption{\textbf{Noise-schedule ablation.} Across all the tasks, $\mathrm{loc}=1$ gives the strongest overall performance, especially under Joint DiT, while uniform schedules are generally weaker. This suggests that noise-schedule calibration is important and becomes more beneficial when the latent space evolves jointly with DiT.}
    \label{fig:rq3_noise_schedule_joint}
\end{figure}

\paragraph{Noise Schedule. }
As shown in Figures~\ref{fig:rq3_noise_schedule_bar} and~\ref{fig:rq3_noise_schedule_joint}, all results in this subsection are obtained with latent dimension $d=16$ under the Joint DiT setting: the VAE is initialized from pretrained weights, and the VAE and DiT are jointly optimized with the same learning rate. We vary the schedule location parameter to study how noise calibration affects downstream performance, and include Fix VAE curves in Figure~\ref{fig:rq3_noise_schedule_joint} as references. From the information-theoretic analysis in Appendix~\ref{app:noise_schedule_logsnr_fm}, changing the schedule location is not merely changing a training-time heuristic: it effectively shifts the logSNR trajectory of the denoising process, and therefore changes how much semantic information remains available in the latent at different timesteps. The timestep shift formula and visualizations are provided in Appendix~\ref{app:timestep_shift_calc} and~\ref{app:timestep_shift_vis}.

\implicationbox{imp:rq3_noise_schedule}{
If the schedule location shifts the logSNR curve, then it also shifts the effective semantic-information regime seen by the DiT during denoising. Therefore, the best noise schedule is the one whose logSNR trajectory is best aligned with the latent space and the semantic scale to be recovered, rather than a universally fixed timestep parameterization.
}

\textbf{Obs. \ding{182} A moderate schedule location around loc$=1.0$ gives the best overall performance under the current setting.}
Figure~\ref{fig:rq3_noise_schedule_bar} shows that loc$=1.0$ achieves the highest Task Average at both the 30K and 40K checkpoints. It also gives the best or near-best results on the three tasks, with especially clear gains on MMLU and SIQA. From the information-theoretic view developed in Appendix~\ref{app:noise_schedule_logsnr_fm}, this suggests that loc$=1.0$ places the denoising trajectory in a more suitable effective logSNR range for semantic recovery, whereas both smaller and larger shifts move the model away from that regime.

\textbf{Obs. \ding{183} Proper noise calibration is especially important for Joint DiT.}
Figure~\ref{fig:rq3_noise_schedule_joint} further shows that Joint DiT with loc$=1$ is the strongest trainable setting across Task Average, LAMBADA, MMLU, and SIQA, whereas Joint DiT with loc$=0$ or a uniform schedule remains clearly weaker throughout training. Moreover, Joint DiT with loc$=1$ eventually matches or surpasses the corresponding Fix VAE baselines, while the mismatched schedules do not. This indicates that joint latent evolution becomes effective only when the denoising logSNR trajectory is aligned with the semantic structure of the evolving latent space.

\textbf{Obs. \ding{184} The effect of noise schedule should be understood through semantic-information calibration rather than as an isolated hyperparameter effect.}
Appendix~\ref{sec:global_semantic_structure} further implies that schedule location, latent dimension, and VAE logSNR all act on the same core object, namely the effective mutual-information curve of the semantic variable along diffusion time. From this perspective, the sensitivity observed here is not accidental: changing the noise schedule changes where the model spends its denoising capacity on the semantic-information axis. This also helps explain why different latent dimensions, different VAE smoothness settings, and potentially different DiT block sizes need not share the same optimal schedule.

Overall, Figures~\ref{fig:rq3_noise_schedule_bar} and~\ref{fig:rq3_noise_schedule_joint} show that the noise schedule is a key component of the training setup. Under the current Joint DiT setting with $d=16$, a properly calibrated schedule, especially loc$=1.0$, is important not only for stable optimization, but more fundamentally for aligning denoising with the effective semantic-information regime of the latent space. As implied by Eq.~\eqref{eq:main_avg_elbo}, this will further improve the average ELBO and is therefore theoretically well founded.

\begin{figure}[t]
    \centering
    \begin{subfigure}[t]{0.49\linewidth}
        \centering
        \includegraphics[width=\linewidth]{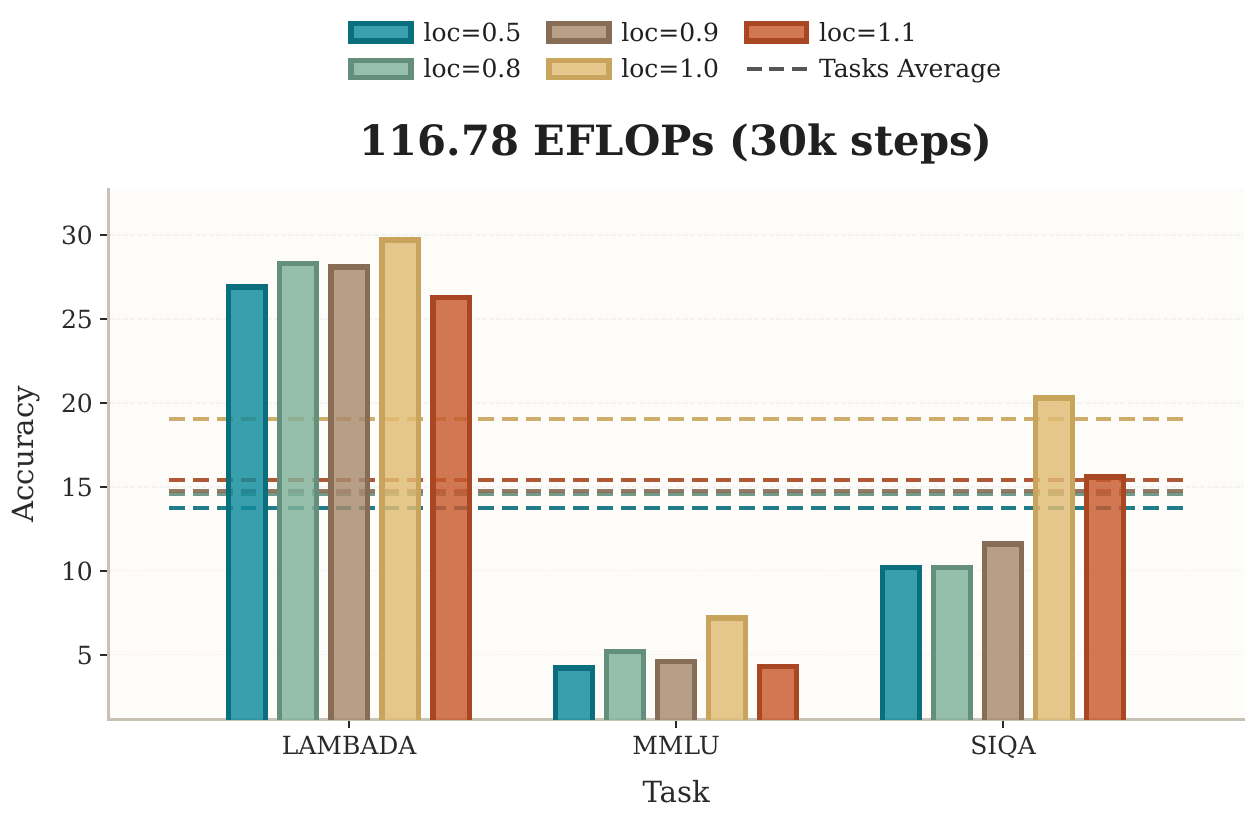}
        \label{fig:rq3_noise_schedule_bar_30k}
    \end{subfigure}
    \hfill
    \begin{subfigure}[t]{0.49\linewidth}
        \centering
        \includegraphics[width=\linewidth]{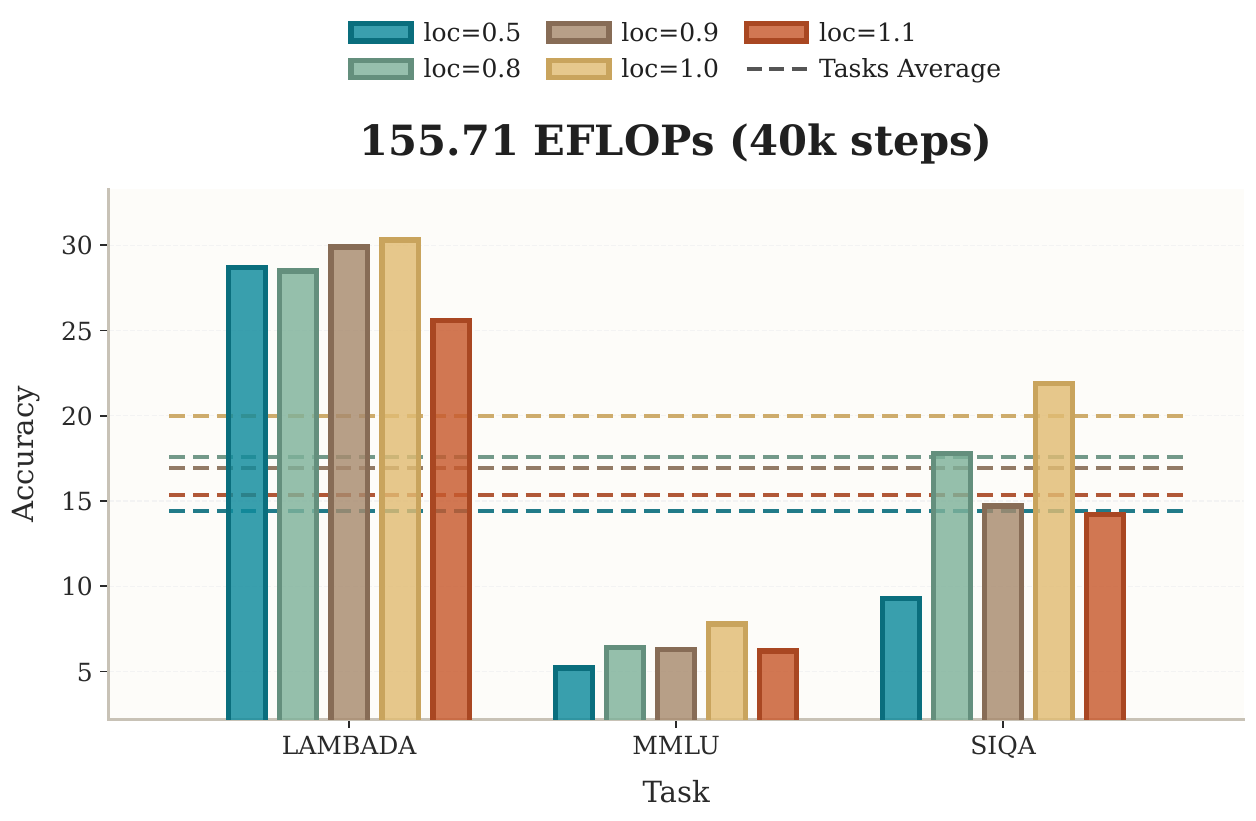}
        \label{fig:rq3_noise_schedule_bar_40k}
    \end{subfigure}
    \caption{\textbf{Noise-schedule comparison at different training checkpoints.} At both checkpoints, $\mathrm{loc}=1.0$ achieves the best Task Average and the most balanced overall performance across tasks. This indicates that the preferred schedule location is stable across training.}
    \label{fig:rq3_noise_schedule_bar}
\end{figure}

\subsubsection{Inference Stage}

\paragraph{Denoising Steps. }
As shown in Figure~\ref{fig:rq3_inferstep}, all results in this subsection are obtained under the Joint DiT setting: the VAE is initialized from pretrained weights, and the VAE and DiT are jointly optimized with the same learning rate. We vary the number of denoising steps at inference time to study the efficiency--performance trade-off. The results show that increasing the number of steps is highly beneficial in the low-step regime, while the gain quickly saturates as the inference budget becomes larger.

\textbf{Obs. \ding{182} Increasing denoising steps yields a clear early improvement.}
From 1--2 steps to 4--8 steps, all tasks improve substantially. The gain is especially large on LAMBADA, while SQuAD, SIQA, and Task Average also increase sharply. This indicates that very few denoising steps are insufficient for stable semantic recovery.

\textbf{Obs. \ding{183} Performance saturates after a moderate number of steps.}
After roughly 16--32 steps, the Task Average becomes nearly flat, and the marginal gain from additional steps is very limited. A similar saturation pattern is also visible on SIQA and SQuAD. This suggests that most useful denoising progress is already completed within a moderate inference budget.

\textbf{Obs. \ding{184} Most of the practical gain is achieved with only 8--10 denoising steps.}
From an efficiency perspective, 8--10 steps already recover most of the final performance. Since our DiT uses a block size of 16, this means 16 tokens can be generated with only 8--10 sequential denoising iterations, corresponding to an idealized 1.6--2.0$\times$ reduction in sequential generation depth compared with AR decoding.

\begin{figure}[t]
    \centering
    \begin{subfigure}[t]{0.47\columnwidth}
        \centering
        \includegraphics[width=\linewidth]{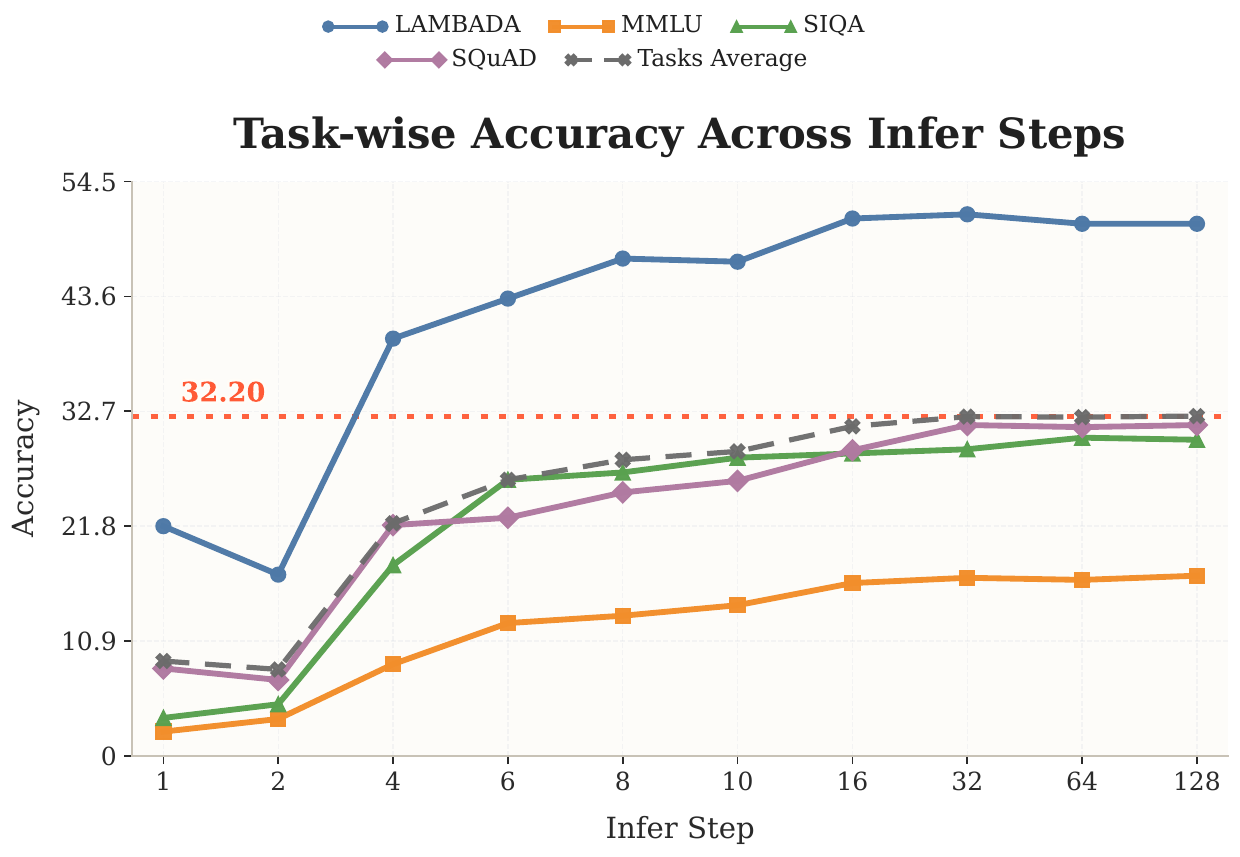}
        \caption{\textbf{Impact of denoising steps}}
        \label{fig:rq3_inferstep}
    \end{subfigure}
    \hfill
    \begin{subfigure}[t]{0.52\columnwidth}
        \centering
        \includegraphics[width=\linewidth]{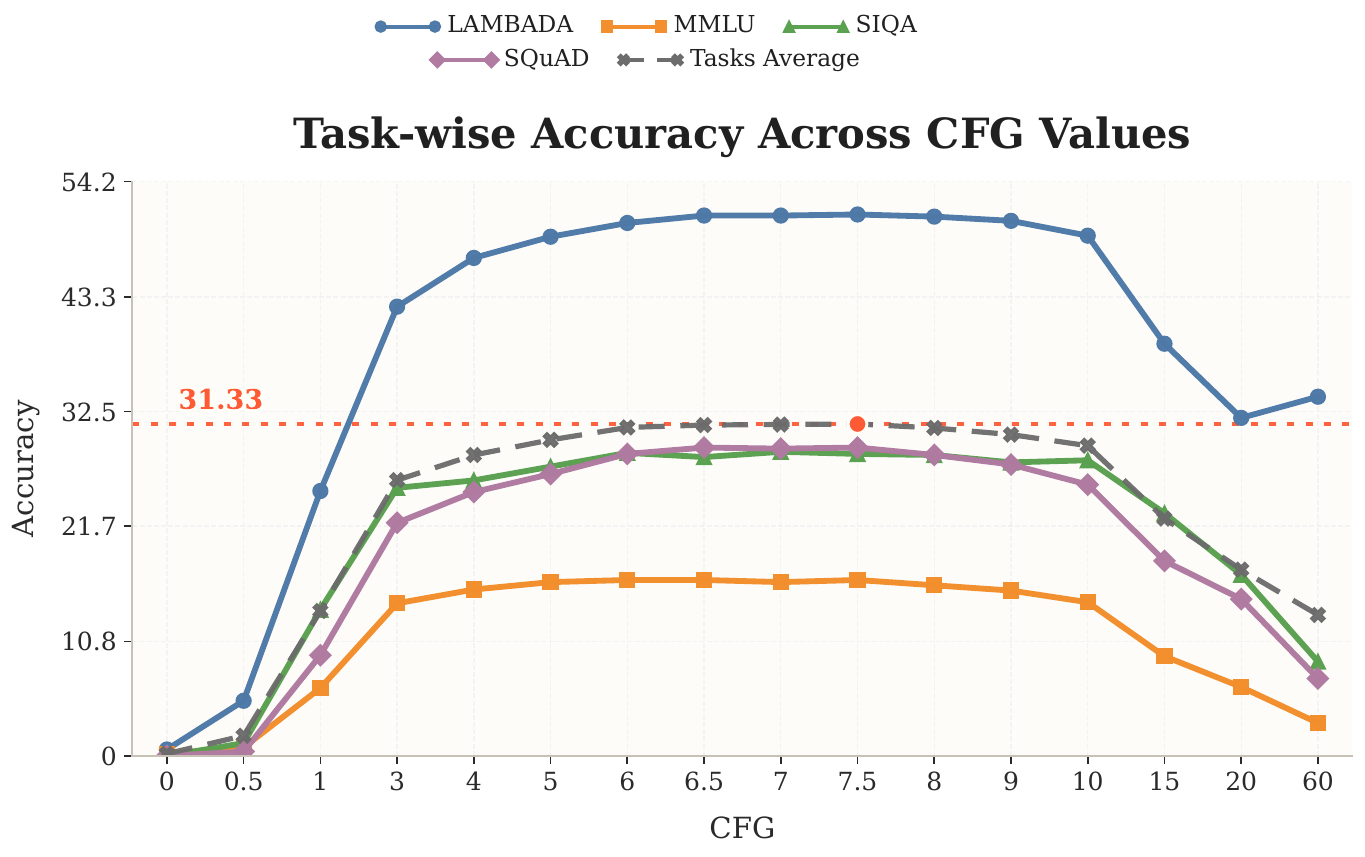}
        \caption{\textbf{Impact of CFG scales}}
        \label{fig:rq3_cfg}
    \end{subfigure}
    \caption{\textbf{Impact of inference-time hyperparameters.} Increasing denoising steps brings clear early gains but quickly saturates, while a moderate CFG value achieves the best overall performance.}
    \label{fig:rq3_infer_cfg}
\end{figure}

Overall, Figure~\ref{fig:rq3_inferstep} shows that denoising steps are important, but more is not always better. Under the Joint DiT setting, a moderate number of inference steps, around 10--32, already provides a strong trade-off between accuracy and efficiency.

\paragraph{Classifier-Free Guidance (CFG) Scales. }
As shown in Figure~\ref{fig:rq3_cfg}, all results in this subsection are obtained under the Joint DiT setting: the VAE is initialized from pretrained weights, and the VAE and DiT are jointly optimized with the same learning rate. We vary the Classifier-Free Guidance (CFG) scale at inference time to study how guidance strength affects downstream performance. The results show a clear non-monotonic pattern: increasing CFG is helpful at first, but overly large values significantly hurt performance.

\textbf{Obs. \ding{182} A moderate CFG scale gives the best overall performance.}
The Task Average rises rapidly as CFG increases from $0$ to around $3$--$6$, and then stays near its best region for a moderate range of values. This indicates that an appropriate amount of guidance substantially improves conditional denoising and semantic recovery.

\textbf{Obs. \ding{183} Excessive guidance leads to clear degradation.}
After the moderate optimum region, all task curves begin to decline as CFG becomes larger. The drop is especially pronounced beyond CFG $\approx 10$, and becomes severe at very large values such as $20$ and $60$. This shows that overly strong guidance distorts the denoising trajectory rather than improving it.

Overall, Figure~\ref{fig:rq3_cfg} shows that CFG is an important inference-time hyperparameter. Under the Joint DiT setting, a moderate CFG scale provides the best trade-off, while both weak guidance and excessive guidance lead to inferior results.

\subsection{Comparison of Scaling Performance (RQ4)}
\label{exp:rq4}

\begin{figure}[t]
    \centering
    \includegraphics[width=1\textwidth]{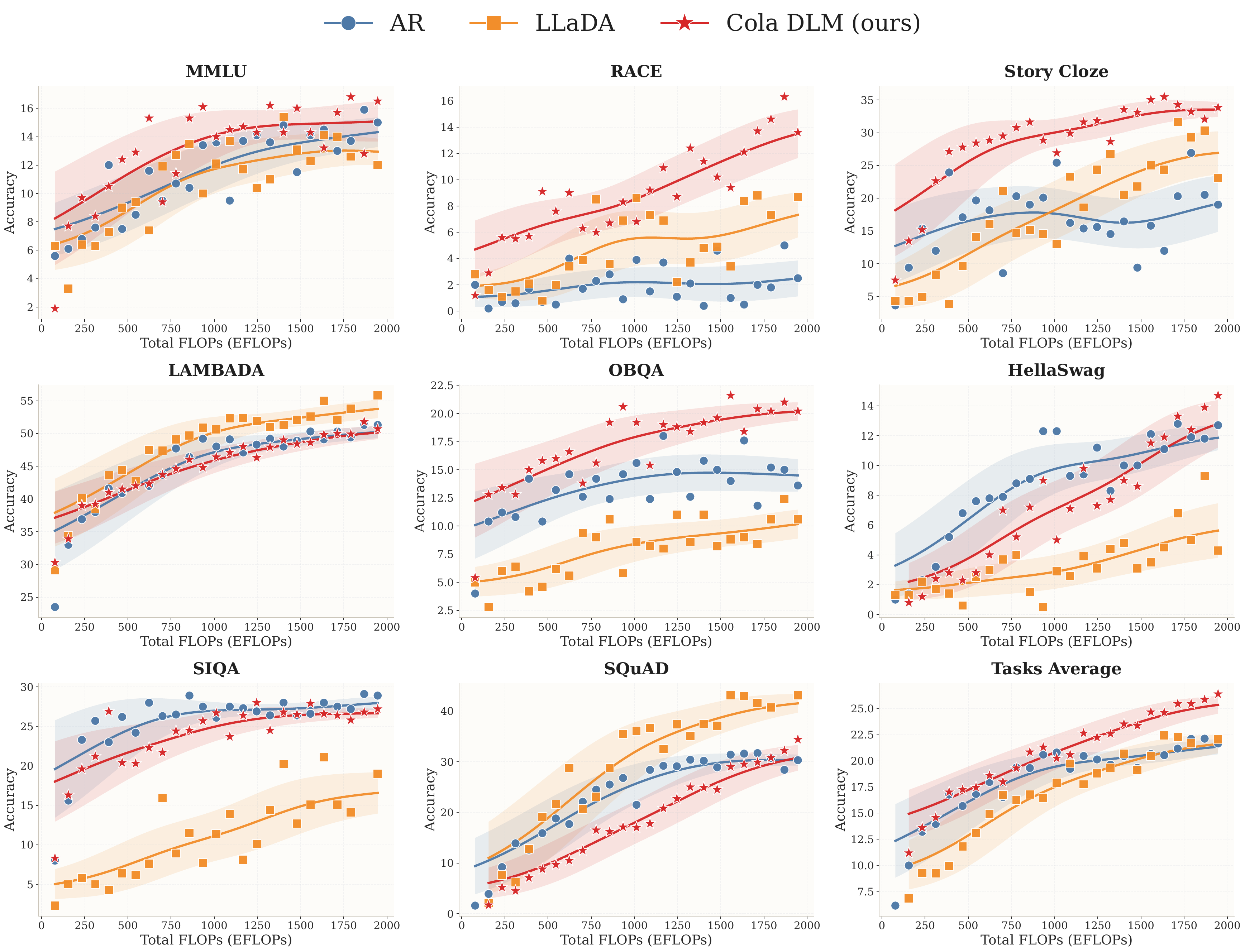}
    \caption{\textbf{Overall scaling performance under a unified few-shot generative evaluation protocol.} Across eight benchmarks and Task Average, \method exhibits strong scaling dynamics, ultimately reaching the best average performance. It should be noted that the lower absolute accuracy observed on specific multiple-choice tasks is an anticipated consequence of the rigorous generative evaluation paradigm; nevertheless, the underlying scaling trends are robustly preserved. These findings imply that continuous latent prior modeling possesses significant scaling potential, rendering the current performance a conservative measure of its true capacity.}
    \label{fig:scaling_exp_fig}
\end{figure}

In this section, we compare the scaling behavior of \method with strictly matched AR and LLaDA baselines under the best configuration identified by the previous tuning experiments. Specifically, \method uses latent dimension $d=16$, block size $16$, joint VAE--DiT training with a VAE/DiT learning-rate ratio of $1$, BERT loss, and a logit-normal training noise schedule with $\mathrm{loc}=1$; at inference time, we use $16$ denoising steps and CFG $=7$. The AR and LLaDA baselines are matched in scale, with the non-embedding backbone controlled at $1.8$B parameters, and LLaDA uses a denoising length equal to the generation length during inference.

It is also worth noting that the absolute scores in Figure~\ref{fig:scaling_exp_fig} are relatively low mainly on the multiple-choice benchmarks. This is because, for a fair comparison, all models are evaluated under a unified few-shot generative protocol rather than standard likelihood-based classification: LAMBADA and SQuAD are evaluated as generative tasks, while the remaining benchmarks are multiple-choice tasks but are also cast into few-shot generation. As discussed in Section~\ref{dis:ppl}, likelihood estimation can be substantially misaligned with the actual generation quality of \method. Therefore, although the absolute values on multiple-choice tasks are lower than those in conventional discriminative evaluation, the relative scaling trends remain informative and fair under this fully matched protocol.

As shown in Figure~\ref{fig:scaling_exp_fig}, \method exhibits strong overall scaling behavior, with increasingly encouraging gains as the compute budget grows.

\textbf{Obs. \ding{182} \method shows one of the strongest overall scaling trends.}
On Task Average, \method improves steadily across the full compute range and reaches the best final performance. AR remains competitive at smaller budgets, and LLaDA also shows clear early gains, but the curve of \method rises more persistently toward the high-compute regime. This suggests that \method already exhibits highly competitive, and at larger budgets stronger, scaling potential under the current matched setting.

\textbf{Obs. \ding{183} The scaling advantage of \method is especially clear on reasoning-intensive and global-semantic tasks.}
On MMLU, RACE, Story Cloze, and OBQA, \method maintains a strong upward trend and achieves the best or near-best performance across a wide compute range. The gains are particularly visible at medium-to-large budgets, indicating that continuous latent prior modeling is well suited to tasks that rely more on global semantic organization and holistic answer formation.

\textbf{Obs. \ding{184} On generative tasks, \method also shows encouraging scaling behavior.}
For LAMBADA and SQuAD, the scaling trends remain clear under the unified generative evaluation protocol. On LAMBADA, \method improves steadily with compute and remains close to AR at larger budgets, while SQuAD shows a particularly clear gain with scale, where \method eventually surpasses AR and continues to approach the strong performance region of LLaDA. These results suggest that, on generation-oriented evaluation, \method already demonstrates scaling behavior comparable to strong baselines, with encouraging headroom as compute increases.

\textbf{Obs. \ding{185} The current result is a conservative estimate of the scaling potential of \method.}
The present comparison is conducted under a relatively conservative configuration of \method. Earlier ablations already show that increasing the latent dimension from $16$ to $128$ can improve semantic capacity, and the analysis of logSNR also suggests that the current setting still leaves additional room for scaling. Therefore, Figure~\ref{fig:scaling_exp_fig} should be viewed as evidence that \method already scales well under a restrained setting, rather than as the upper bound of its capability.

Overall, Figure~\ref{fig:scaling_exp_fig} supports a consistent conclusion: under a strictly matched comparison and a unified generative evaluation protocol, \method exhibits scaling behavior that is fully competitive with strong AR and diffusion-based baselines, and on several tasks already shows particularly encouraging late-stage gains. Together with the remaining optimization headroom in latent-space design, these results provide supportive evidence that continuous latent prior modeling is a promising scaling direction for language modeling.

%% file: sections/discussion.tex
\section{Discussion}
In this section, we further examine several additional properties and extensions of \method. We focus on the structural gap between likelihood-oriented estimation and generation quality, analyze how different conditioning and padding strategies affect block-causal \method in the first generation block, and present a preliminary exploration of VAE-based text compression for faster generation. Finally, we highlight the broader potential of \method for combining with other continuous modalities.

\definecolor{gentokenmatch}{RGB}{0,204,0}
\definecolor{gentokenmismatch}{RGB}{204,0,0}
\newcommand{\gttok}[1]{{\color{red!80!black}\bfseries\fontsize{11.5}{13.5}\selectfont #1}}
\newcommand{\genmatch}[1]{{\color{gentokenmatch}\textbf{#1}}}
\newcommand{\genmismatch}[1]{{\color{gentokenmismatch}\textbf{#1}}}

\subsection{The Structural Gap Between Likelihood-Oriented Estimation and Generation Quality}
\label{dis:ppl}

\begin{figure}[t]
    \centering
    \includegraphics[width=\textwidth]{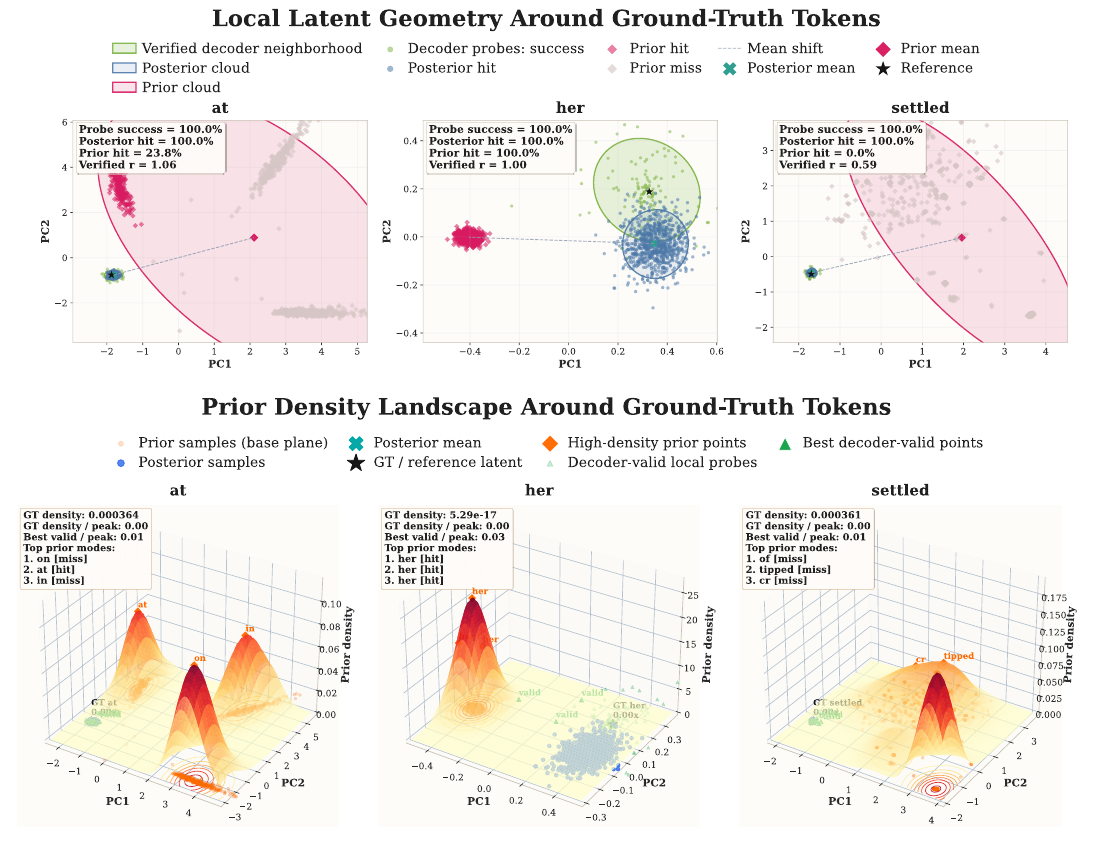}
    \caption{\textbf{A local view of the mismatch between likelihood-oriented estimation and generation quality.} Top: local latent geometry around representative ground-truth tokens. Bottom: corresponding prior-density landscapes. High decoder probe success and posterior hit contrast with sharply varying prior hit and density alignment. Thus, good generation relies on covering decoder-valid regions, while likelihood estimation also demands precise local calibration around the gold posterior.}
    \label{fig:discussion_likelihood_geometry}
\end{figure}

This section studies a central phenomenon in continuous latent language models: generation quality can already be reasonable while likelihood-oriented PPL remains poor. The key reason is that these two metrics target different properties. Generation only requires the prior mass to reach semantically decoder-valid regions, whereas likelihood-oriented estimation additionally requires accurate local probability calibration around the posterior neighborhood of the ground-truth target.

Let
\[
x=(x^{\mathrm{pre}},x^{\mathrm{res}}),
\]
where $x^{\mathrm{pre}}$ is the prefix, $x^{\mathrm{res}}$ is the response, and $c$ denotes the conditional information induced by the prefix. The exact conditional marginal is
\begin{equation}
p(x^{\mathrm{res}}\mid c)
=
\int p_{\theta}(x^{\mathrm{res}}\mid z,c)\,p_{\psi}(z\mid c)\,\dd z,
\label{eq:resp_conditional_marginal_main_final}
\end{equation}
while the practically accessible quantity is the local score
\begin{equation}
\mathcal S_{\mathrm{resp}}(x)
=
\E_{q_{\phi}(z\mid x,c)}
\Big[
\log p_{\theta}(x^{\mathrm{res}}\mid z,c)
+\log p_{\psi}(z\mid c)
-\log q_{\phi}(z\mid x,c)
\Big].
\label{eq:resp_conditional_score_main_final}
\end{equation}
The mismatch between these two quantities is the starting point of our analysis.

\implicationbox{imp:main_gap_final}{
In continuous latent language models, good generation and good likelihood-oriented estimation are not equivalent. Generation depends on whether the prior reaches semantically valid latent regions, whereas likelihood-oriented estimation additionally depends on local density calibration around the gold posterior neighborhood.
}

This distinction is directly supported by Figure~\ref{fig:discussion_likelihood_geometry} and Table~\ref{tab:discussion_likelihood_example}. In Figure~\ref{fig:discussion_likelihood_geometry}, decoder probe success and posterior hit are consistently high, showing that the decoder can reliably recover the ground-truth token inside the posterior neighborhood. However, the prior hit rates vary sharply, indicating that the main issue is not decoder failure but prior misalignment around the gold latent region. Table~\ref{tab:discussion_likelihood_example} shows the same pattern at the token level: for \textbf{at}, the likelihood-derived PPL improves dramatically from $1.15\times10^6$ to $641.57$ and then $245.36$, while the generated token deteriorates from \textit{on} to \textit{in} and then to a comma. Similarly, for \textbf{her}, smaller likelihood-derived PPL under fixed VAE logSNR does not recover the correct token. Thus, lower likelihood-derived PPL does not necessarily imply better generation.

\begin{table}[t]
    \centering
    \footnotesize
    \setlength{\tabcolsep}{3.5pt}
    \renewcommand{\arraystretch}{1.12}

    \begin{tabularx}{\textwidth}{@{}X@{}}
        \toprule
        \textbf{Sample text.} At dawn the research vessel Meridian slipped out of the harbor and followed a chain of islands that looked like dark brushstrokes on the horizon. Mira stood \gttok{at} the bow with a notebook pressed against \gttok{her} jacket, listening as the engine \gttok{settled} into a steady hum and the crew argued about \ldots \\
        \bottomrule
    \end{tabularx}

    \vspace{0.55em}

    \begin{tabularx}{\textwidth}{@{}c c c c c c c c@{}}
        \toprule
        \makecell[c]{Ground-truth\\token}
        & \makecell[c]{Posterior\\$\log p(z \mid x)$}
        & \makecell[c]{Prior\\$\log p(z)$}
        & \makecell[c]{Decoder\\$\log p(x \mid z)$}
        & \makecell[c]{Likelihood-derived\\PPL $\downarrow$}
        & \makecell[c]{Generated\\token}
        & \makecell[c]{Gen. PPL $\downarrow$\\of \textbf{generated}}
        & \makecell[c]{Gen. PPL $\downarrow$\\of \textbf{ground-truth}} \\
        \midrule

        \multicolumn{8}{@{}l@{}}{\textbf{\textit{Direct training (unfixed VAE logSNR; measured effective VAE logSNR $\approx 4.5$)}}} \\
        \midrule
        \textbf{at} & 18.70 & 4.74 & $-3.62 \times 10^{-4}$ & $1.15 \times 10^{6}$ & \genmismatch{on} & \textbf{3.83} & 6.90 \\
        \textbf{her} & 22.61 & 2.26 & $-8.58 \times 10^{-6}$ & $6.93 \times 10^{8}$ & \genmatch{her} & \textbf{1.12} & 1.12 \\
        \textbf{settled} & 32.02 & 18.45 & $-8.54 \times 10^{-3}$ & $7.88 \times 10^{5}$ & \genmismatch{of} & \textbf{5.05} & $2.58 \times 10^{3}$ \\
        \midrule

        \multicolumn{8}{@{}l@{}}{\textbf{\textit{Fixed VAE logSNR = 1.0}}} \\
        \midrule
        \textbf{at} & -15.81 & -22.28 & $-2.10 \times 10^{-5}$ & \underline{641.57} & \genmismatch{in} & \underline{6.69} & 6.90 \\
        \textbf{her} & -11.79 & -15.68 & $-1.91 \times 10^{-5}$ & \textbf{48.98} & \genmismatch{the} & \underline{14.17} & 1.12 \\
        \textbf{settled} & -12.56 & -21.51 & $-1.40 \times 10^{-3}$ & \textbf{7740} & \genmismatch{of} & \textbf{5.05} & $2.58 \times 10^{3}$ \\
        \midrule

        \multicolumn{8}{@{}l@{}}{\textbf{\textit{Fixed VAE logSNR = 1.5}}} \\
        \midrule
        \textbf{at} & -8.04 & -13.54 & $-7.06 \times 10^{-5}$ & \textbf{245.36} & \genmismatch{,} & 90.94 & 6.90 \\
        \textbf{her} & -7.48 & -15.91 & $-4.96 \times 10^{-5}$ & \underline{$4.57 \times 10^{3}$} & \genmismatch{the} & \underline{14.17} & 1.12 \\
        \textbf{settled} & -6.12 & -15.71 & $-8.77 \times 10^{-5}$ & \underline{$1.46 \times 10^{4}$} & \genmismatch{went} & 460.16 & $2.58 \times 10^{3}$ \\
        \midrule

        \multicolumn{8}{@{}l@{}}{\textbf{\textit{Fixed VAE logSNR = 2.0}}} \\
        \midrule
        \textbf{at} & 11.50 & -6.41 & $-5.91 \times 10^{-5}$ & $5.99 \times 10^{7}$ & \genmismatch{behind} & 114.91 & 6.90 \\
        \textbf{her} & 13.74 & 0.21 & $-8.58 \times 10^{-5}$ & $7.53 \times 10^{5}$ & \genmismatch{it} & $3.07 \times 10^{3}$ & 1.12 \\
        \textbf{settled} & 10.30 & -6.57 & $-4.03 \times 10^{-3}$ & $2.14 \times 10^{7}$ & \genmismatch{came} & \underline{442.42} & $2.58 \times 10^{3}$ \\
        \bottomrule
    \end{tabularx}

    \caption{\textbf{Token-level comparison across direct training and fixed VAE logSNR settings.} Across the same target tokens, lower likelihood-derived PPL does not necessarily lead to better generation. This token-level evidence further illustrates the structural gap between likelihood-oriented estimation and generation quality.}
    \label{tab:discussion_likelihood_example}
\end{table}

\implicationbox{imp:semantic_vs_prob_smoothing_final}{
In \method, generation quality is more related to semantic smoothness of the latent space, whereas likelihood-oriented PPL is more sensitive to probability-space smoothness shaped by the VAE logSNR. Since these two forms of smoothness are different, generation and PPL need not be aligned.
}

The fixed VAE logSNR settings in Table~\ref{tab:discussion_likelihood_example} should therefore be interpreted as changing the smoothness of the latent probability space rather than merely tuning a hyperparameter. Lower VAE logSNR corresponds to a flatter local density landscape, which tends to reduce pointwise density mismatch and improve likelihood-derived PPL. However, such smoothing can also blur local semantic structure and bias the model toward generic but semantically suboptimal continuations, such as \textit{in}, \textit{the}, or \textit{went}. By contrast, direct training yields much worse likelihood-derived PPL, but sometimes preserves more appropriate semantic behavior, such as correctly generating \textbf{her}.

\obsbox{
\textbf{Summary.}
In \method, generation quality primarily reflects the semantic smoothness of the latent space, whereas likelihood-oriented PPL is more sensitive to the smoothness of the latent probability space shaped by the VAE logSNR. As a result, better generation does not necessarily imply better PPL, and vice versa.
}

\begin{figure}[t]
    \centering
    \includegraphics[width=1\textwidth]{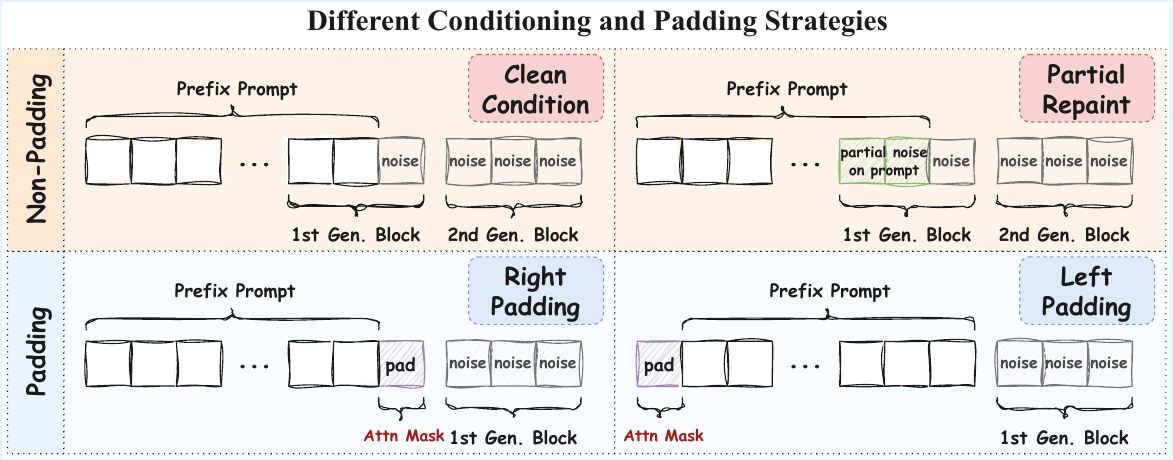}
    \caption{\textbf{Different conditioning and padding strategies in the first generation block.} The first generation block is a mixed region that contains both known prompt latents and unknown latents to be generated. Clean condition repaint keeps the known region fixed as a stable condition throughout denoising, partial repaint injects timestep-matched noisy guidance only during part of the trajectory, and left/right padding instead modify the layout of the known region without explicit repaint correction.}
    \label{fig:cond_strategies}
\end{figure}

\begin{table}[t]
    \centering
    \caption{\textbf{Impact of first-block conditioning strategies.} Clean condition repaint performs best, indicating strong, persistent conditioning is optimal for the first block's mixed denoising. Conversely, partial repaint is much weaker, reducing $m$ degrades performance, and increasing $t$ yields no stable gains. Left and right padding outperform partial repaint but remain inferior to clean conditioning.}
    \label{tab:first_block_conditioning}
    \small
    \setlength{\tabcolsep}{3.8pt}
    \renewcommand{\arraystretch}{1.10}
    \begin{tabular*}{\linewidth}{@{\extracolsep{\fill}}lccccccccc}
        \toprule
        \multirow{2}{*}{\textbf{Task}}
        & \multicolumn{3}{c}{\textbf{Partial repaint ($t=1$)}}
        & \multicolumn{3}{c}{\textbf{Partial repaint ($t=3$)}}
        & \multirow{2}{*}{\textbf{Clean cond.}}
        & \multirow{2}{*}{\textbf{Left pad.}}
        & \multirow{2}{*}{\textbf{Right pad.}} \\
        \cmidrule(lr){2-4} \cmidrule(lr){5-7}
        & \textbf{$m=1.0$} & \textbf{$m=0.7$} & \textbf{$m=0.3$}
        & \textbf{$m=1.0$} & \textbf{$m=0.7$} & \textbf{$m=0.3$}
        & \multicolumn{3}{c}{} \\
        \midrule
        \textbf{Lambada} & 8.5 & 8.5 & 6.6 & 7.0 & 7.3 & 5.6 & \textbf{37.1} & 24.6 & \underline{24.7} \\
        \textbf{MMLU}    & 7.9 & 7.9 & 7.8 & 7.6 & 6.7 & 7.0 & \textbf{11.9} & 8.4  & \underline{11.5} \\
        \textbf{SIQA}    & 8.8 & 8.7 & 8.2 & 13.3 & 13.0 & 12.0 & \textbf{24.8} & \underline{14.9} & 13.8 \\
        \midrule
        \textbf{Avg.}    & 8.4 & 8.4 & 7.5 & 9.3 & 9.0 & 8.2 & \textbf{24.6} & 16.0 & \underline{16.7} \\
        \bottomrule
    \end{tabular*}
\end{table}

\subsection{Impact of Conditioning and Padding Strategies in the First Generation Block}
\label{sec:first_block_conditioning}

In the first generation block, the input contains both known prompt latents and unknown latents to be generated. Figure~\ref{fig:cond_strategies} illustrates four representative strategies for handling this mixed region. Partial repaint injects timestep-matched noisy guidance on the known region, where $t$ controls the number of repaint repetitions and $m$ controls the fraction of the denoising trajectory that receives such guidance. Clean condition repaint instead keeps the known region fixed as clean guidance throughout denoising. By contrast, left and right padding do not explicitly repaint the known region, but only change its positional layout relative to the generated region. Notably, under the random-length setting, all aforementioned conditioning modes maintain strict consistency between training and inference.

As shown in Table~\ref{tab:first_block_conditioning}, clean condition repaint consistently achieves the best performance across all tasks. In contrast, partial repaint is substantially weaker, and reducing $m$ generally further degrades performance, indicating that shortening the guided portion makes the known region harder to preserve. Increasing the repaint repetitions from $t=1$ to $t=3$ also does not bring stable gains, suggesting that repeated early corrections cannot compensate for weak conditioning. Left and right padding are often stronger than most partial repaint settings because they avoid explicitly re-noising the known region, but still remain clearly below clean condition repaint. This suggests that positional layout alone is insufficient: padding does not provide a stable condition throughout denoising, and may further complicate the block-causal attention pattern.

Overall, these results show that the key challenge of the first generation block is to preserve the prompt-conditioned region while generating the remaining unknown part. For this mixed denoising problem, strong and persistent conditioning is more effective than partial noisy correction or positional layout alone. More details are provided in Appendix~\ref{app:first_block_conditioning_detail}.

\subsection{Compression of the Latent Space}
\label{dis:latent_compression}

\begin{table}[t]
    \centering
    \caption{\textbf{Performance under different sample labels and VAE patch sizes.} Patch size 2 is overall weaker, but this gap stems mainly from the Prompt Len Mod1 case (indivisible lengths). On Prompt Len Mod0, patch size 2 becomes competitive and even outperforms size 1. This suggests the weakness arises from boundary misalignment rather than latent compression itself.}
    \label{tab:sample_label_vae_patch}
    \small
    \setlength{\tabcolsep}{5pt}
    \renewcommand{\arraystretch}{1.10}
    \begin{tabular*}{\textwidth}{@{\extracolsep{\fill}}lcccccc}
        \toprule
        \textbf{Sample Label}
        & \multicolumn{2}{c}{\textbf{Overall}}
        & \multicolumn{2}{c}{\textbf{Prompt Len Mod0}}
        & \multicolumn{2}{c}{\textbf{Prompt Len Mod1}} \\
        \cmidrule(r){1-1} \cmidrule(lr){2-3} \cmidrule(lr){4-5} \cmidrule(l){6-7}
        \textbf{VAE Patch Size}
        & \textbf{p1} & \textbf{p2}
        & \textbf{p1} & \textbf{p2}
        & \textbf{p1} & \textbf{p2} \\
        \midrule
        \textbf{Lambada} & \textbf{31.10} & 17.40 & 32.11 & \textbf{34.55} & \textbf{30.12} & 0.79 \\
        \textbf{MMLU}    & \textbf{5.40}  & 3.90  & 6.89  & \textbf{7.68}  & \textbf{3.86}  & 0.00 \\
        \textbf{SIQA}    & \textbf{11.10} & 6.10  & \textbf{12.92} & 12.13 & \textbf{9.26} & 0.00 \\
        \midrule
        \textbf{Avg.}    & \textbf{15.87} & 9.13  & 17.31 & \textbf{18.12} & \textbf{14.41} & 0.26 \\
        \bottomrule
    \end{tabular*}
\end{table}

This section discusses whether compressing the text sequence in the VAE is beneficial for \method. We train two Text VAEs with the same latent dimensionality ($d=128$) but different patch sizes: $p1$ maps each token to one latent, while $p2$ compresses every two tokens into one latent. All other settings follow Section~\ref{exp:rq4}: the DiT block size is $16$, the training noise schedule uses logit-normal sampling with $\mathrm{loc}=1$ and $\mathrm{scale}=0$, and inference uses $16$ denoising steps with CFG $=7.0$. In Table~\ref{tab:sample_label_vae_patch}, \textit{Overall} reports the full evaluation result, while \textit{Prompt Len Mod0} and \textit{Prompt Len Mod1} group samples by whether the prompt length is divisible by $2$.

\implicationbox{imp:latent_compression_main}{
The weakness of patch size $2$ does not mainly come from compression itself, but from the boundary case where the prompt length is not divisible by the patch size. Once the latent grouping is well aligned with the text sequence, compression can instead become beneficial.
}

At the overall level, $p2$ is much worse than $p1$. However, the parity split shows that this gap is almost entirely caused by \textit{Prompt Len Mod1}. For odd-length prompts, $p2$ nearly collapses on all tasks, whereas on \textit{Prompt Len Mod0}, namely the even-length case seen by the patching rule, $p2$ becomes competitive and even slightly surpasses $p1$ on average. This suggests that the current failure is not evidence against latent compression itself, but against a compression scheme that does not robustly handle non-divisible sequence boundaries.

The reason is likely that, under patch size $2$, odd-length prompts necessarily involve padding or incomplete token groups during compression. If this boundary pattern is not properly learned, the compressed prompt latent becomes semantically shifted. In \method, this issue is particularly severe because the prompt latent is the clean condition for subsequent block-wise prior generation rather than a weak auxiliary representation. Once the prompt-side latent is biased, the error propagates through denoising and finally harms conditional decoding, which naturally explains the near-zero performance on \textit{Mod1}.

By contrast, the \textit{Mod0} result is encouraging. It shows that when the latent grouping is semantically valid, compressing two tokens into one latent does not necessarily hurt generation and may even help it. This is consistent with the core idea of \method: the latent space is not intended to preserve a token-aligned recovery path, but to provide a lower-rate representation for global semantic organization, while the decoder handles local realization. Under this view, moderate compression can be beneficial because each latent summarizes a larger textual span and thus better matches the role of the prior.

This also makes latent compression attractive from the efficiency perspective. Under the same DiT block size, one denoising block corresponds to $\text{patch size} \times \text{block size}$ text tokens after decoding. Therefore, with block size $16$, patch size $1$ covers $16$ text tokens per block, while patch size $2$ covers $32$. If the boundary issue can be resolved, larger patch sizes may improve both semantic abstraction and generation efficiency.

\obsbox{
\textbf{Summary.}
Table~\ref{tab:sample_label_vae_patch} suggests that latent compression is a promising direction for \method. Its current limitation mainly comes from unstable handling of non-divisible sequence boundaries, while the aligned even-length case already shows that compressed latents can support both stronger semantic abstraction and faster generation.
}

\subsection{Robustness of VAE Latent Reconstruction}
\label{dis:vae_recon_robustness}

\begin{wrapfigure}{r}{0.52\textwidth}
    \centering
    \vspace{-0.5em}
    \includegraphics[width=0.50\textwidth]{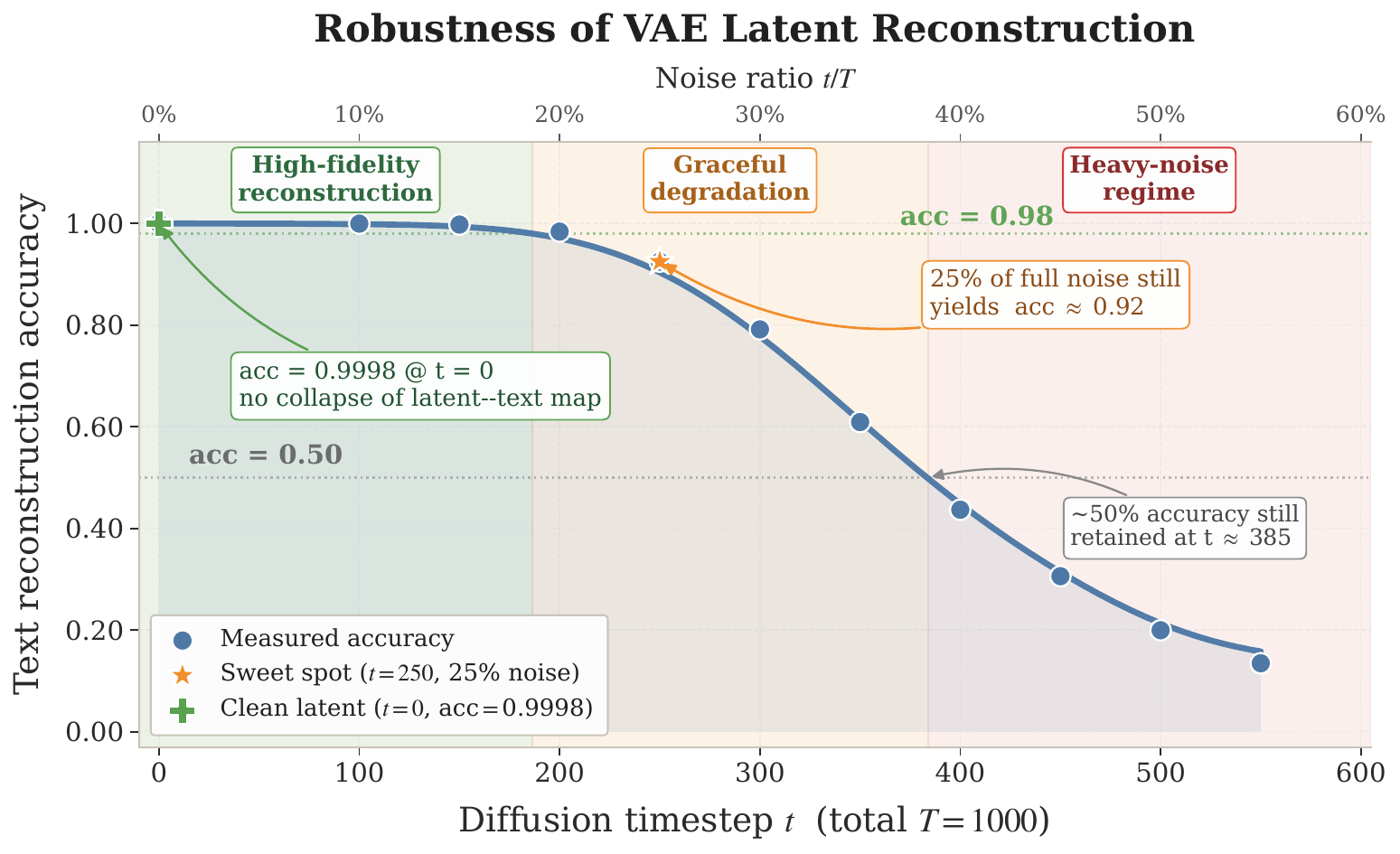}
    \vspace{-0.8em}
    \caption{\textbf{Robustness of VAE latent reconstruction.} The VAE preserves near-perfect reconstruction at low noise and degrades gracefully under stronger perturbations, indicating a stable latent--text mapping.}
    \label{fig:vae_recon_robustness}
    \vspace{-1.0em}
\end{wrapfigure}

We further analyze the robustness of the VAE latent space from the reconstruction perspective. As shown in Figure~\ref{fig:vae_recon_robustness}, the VAE achieves nearly perfect reconstruction at $t=0$, indicating that the learned latent--text mapping remains highly faithful and does not collapse. Moreover, the reconstruction accuracy stays very high throughout the low-noise regime, and still remains around $0.92$ at $t=250$, before degrading more noticeably under heavier noise.

These results suggest that the latent space learned by the VAE is not merely a fragile compressed code, but a stable and broadly usable intermediate representation for text. In particular, the graceful degradation pattern indicates that semantic information is not destroyed abruptly by small or moderate perturbations, which further supports the view that the VAE latent space in \method is sufficiently robust to serve as the semantic interface for subsequent prior modeling.

\subsection{Towards a Unified Approach with Image Modalities}

\begin{figure}[tp]
    \centering
    \includegraphics[width=\textwidth]{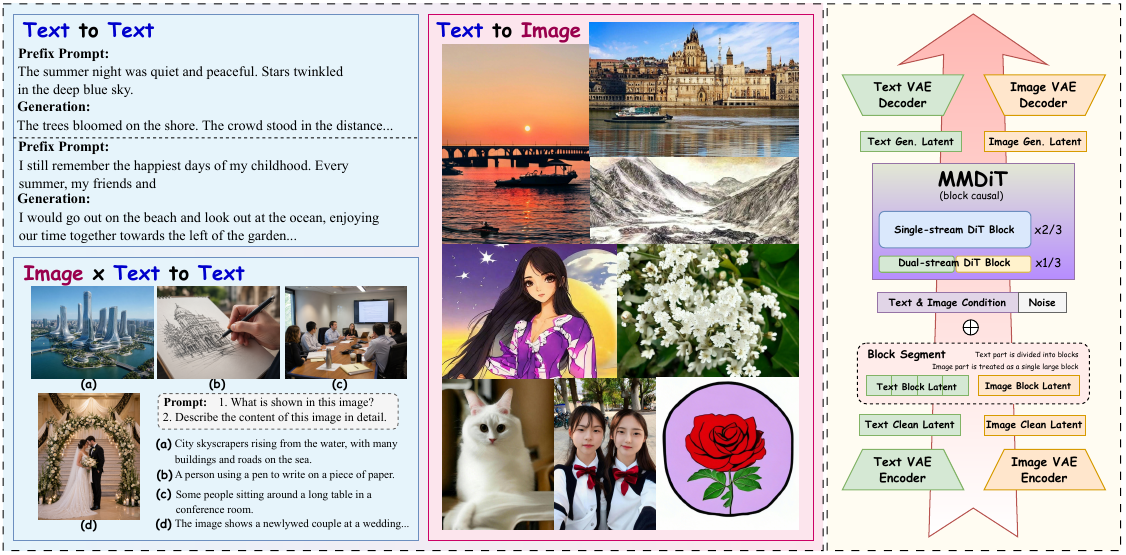}
    \caption{\textbf{Preliminary qualitative examples of unified text-image modeling.}
    Left: text-only continuation and image-conditioned text generation. Middle: text-to-image results with only pretraining. Right: a schematic extension of \method, where text and image are mapped into modality-specific continuous latents and modeled by a shared block-causal prior.}
    \label{fig:unified_samples}
\end{figure}

A broader implication of \method is that it provides a natural bridge from discrete text to continuous multimodal modeling. The key idea of unified modeling is not merely to place text and image into one backbone, but to map heterogeneous observations into a shared continuous latent interaction space, where higher-level semantics can be organized under common dynamics.

A natural extension of \method follows the same probabilistic decomposition as in the text-only setting. Let $x^{\text{text}}$ and $x^{\text{img}}$ denote the text and image observations, and let their modality-specific latent variables be
\[
z^{\text{text}}_0 \sim q_{\phi_{\text{text}}}(z \mid x^{\text{text}}), 
\qquad
z^{\text{img}}_0 \sim q_{\phi_{\text{img}}}(z \mid x^{\text{img}}).
\]
We then define a joint latent state
\[
\tilde z_0 = \bigl(z^{\text{text}}_0, z^{\text{img}}_0\bigr),
\]
and model the unified generative process as
\[
p(x^{\text{text}}, x^{\text{img}}, \tilde z_0)
=
p_{\theta}\!\left(x^{\text{text}}, x^{\text{img}} \mid \tilde z_0\right)
\, p_{\psi}(\tilde z_0).
\]
Under this view, modality-specific VAE encoders and decoders are responsible for surface-level representation and realization, while the shared prior models the higher-level semantic structure and cross-modal dependency in latent space.

This perspective is consistent with the central modeling principle of \method. In Cola DLM, diffusion is not used for token-level observation recovery, but for latent prior transport:
\[
z_1 \sim p_1, 
\qquad
z_0 = \Phi^{\psi}_{0\leftarrow1}(z_1),
\qquad
x \sim p_{\theta}(x \mid z_0).
\]
In the unified setting, the same idea extends to the multimodal latent state $\tilde z_0$: the shared block-causal MMDiT prior transports and organizes the joint latent semantics, while the modality-specific decoders handle the final text or image realization. Therefore, continuity is introduced at the level of prior modeling rather than direct token or pixel recovery.

From the ELBO viewpoint, the benefit of such a decomposition is also conceptually clear. A unified latent-variable objective takes the form
\[
\mathbb{E}[\mathcal{L}_{\mathrm{ELBO}}]
=
\mathbb{E}_{q}\!\left[\log p_{\theta}(x^{\text{text}}, x^{\text{img}} \mid \tilde z_0)\right]
- I\!\left((X^{\text{text}}, X^{\text{img}});\tilde Z_0\right)
- \mathrm{KL}\!\left(\bar q(\tilde z_0)\,\|\,p_{\psi}(\tilde z_0)\right),
\]
which shows the same division of labor as in the text-only case: the latent variable carries compressed global semantics, while the decoder is responsible for modality-specific realization. In this sense, unified modeling is not simply parameter sharing across modalities, but a shared semantic prior over heterogeneous observations.

Figure~\ref{fig:unified_samples} presents a preliminary prototype of this idea. In the current design, the text sequence is divided into blocks, while the image latent is treated as a single large block. Specifically, the image representation is obtained via an Image VAE trained on internal multi-resolution data (256 / 384 / 640 / 1024), with a spatial downsampling factor of 16 and 64 latent channels, providing a compact yet expressive latent space for visual content. The shared block-causal MMDiT prior operates over both text blocks and image latents, supporting intra-modal processing as well as cross-modal interaction. Within a unified framework, this enables text-to-text continuation, image-conditioned text generation, and text-to-image generation. We jointly optimize these tasks on internal image–text pairs during training. For the text-to-image task, we first train on 256-resolution data for 80k steps with a global batch size of approximately 3k, and then continue training on 640-resolution data for 10k steps with a global batch size of approximately 1k. For image-conditioned text generation, we adopt the same batch size configuration and train for approximately 50k steps. More result samples are provided at~\ref{app:unified_results}.

These results should be interpreted primarily as qualitative evidence of feasibility. As the current prototype remains at an early stage of training, and our experiments are limited to moderate pretraining on in-house 256 and 640 resolution data, without extensive high-quality data curation or supervised fine-tuning. The goal of this section is not to present a mature multimodal system. Rather, it is to demonstrate that the hierarchical latent-prior formulation of \method naturally extends beyond text-only generation. In future work, we plan to conduct more comprehensive unified multimodal training. More broadly, these findings suggest that decoupling global latent organization from modality-specific realization may offer a structurally clean and scalable path toward more native unified generative models.

\obsbox{
\textbf{Summary.}
These preliminary results suggest that \method naturally extends to unified text--image modeling. A shared block-causal prior organizes global and cross-modal semantics, while modality-specific decoders handle final realization. Although still early-stage and qualitative, this prototype already shows a promising bridge from language generation to native multimodal generative modeling.
}

%% file: sections/conclusion.tex
\section{Limitations \& Future Prospects}

Although this paper has provided initial evidence for the feasibility, competitiveness, and promising scaling potential of \method for text generation in continuous latent space, we view it as a starting point for further exploration rather than a finished endpoint. First, at the scale and evaluation level, the current results reveal encouraging trends, but the experiments are still conducted at a relatively controlled scale and mainly serve to clarify the key properties of the framework. It is therefore natural and important to further examine its upper bound under larger model sizes, longer training, and more substantial compute budgets. Second, at the model-design level, our analyses show that the training strategy of the Text VAE, the text compression scheme, the choice of latent dimensionality, the semantic smoothness of the latent space, and the joint calibration of VAE logSNR, DiT block size, and noise schedule all affect the semantic organization of the latent space and the final generation quality. In particular, the experiments suggest that stronger latent representations usually require better-aligned noise calibration, indicating substantial room for further optimization. Finally, at the framework level, the main value of \method lies not merely in the denoising process itself, but in its decomposition of text generation into global semantic prior modeling and local textual realization. This opens the door to exploring stronger latent modules, such as AE \citep{bank2023autoencoders} and RAE \citep{zheng2025diffusion}, as well as more flexible prior-learning approaches, such as drifting-model-based \citep{deng2026generative} distribution matching for continuous priors. More broadly, following the idea of unified continuous latent-space modeling, the framework may also be extended to continuous modalities such as images, further advancing unified generation.

\section{Conclusion}

In conclusion, this paper presents \method, a hierarchical continuous latent diffusion language model that decomposes text generation into global semantic prior modeling in latent space and local textual realization through conditional decoding, thereby providing a principled alternative to strictly token-level language modeling. Across the full study, both the theoretical analysis and the experiments consistently suggest that text generation can benefit from hierarchical information decomposition: we find evidence of shared global semantic structure in latent space, identify effective design choices for latent-space formation and diffusion modeling, and show that under strictly matched comparisons, \method exhibits strong generation quality and encouraging scaling behavior. More broadly, our results indicate that for this class of models, generation-oriented evaluation and scaling trends may be more informative than likelihood alone, while the continuous latent formulation also offers a concrete path toward more native unified modeling across discrete text and continuous modalities.

%% file: sections/afterword.tex
\section{Afterword: Research Objectives and Significance}

Viewed from a broader perspective, this study is not only concerned with proposing an alternative architecture for text generation, but also with clarifying a more general picture of learning in which \emph{representation}, \emph{objective}, and \emph{environment} must be understood jointly. From this perspective, the three themes of this work are closely connected rather than independent. The first concerns how text should be represented and generated. The second concerns what kinds of objectives and evaluation criteria are genuinely aligned with such representations. The third concerns the kind of environment in which a model should ultimately learn if the goal is more general multimodal intelligence.

A useful starting point is to formalize learning itself as a model--environment interaction system. Let the environment be
\begin{equation}
\mathcal E = (\Omega, \mathcal O, \mathcal A, \mathcal T, \mathcal F, \mathcal G),
\label{eq:after_env}
\end{equation}
where $\Omega$ is the environment state space, $\mathcal O$ is the observation space, $\mathcal A$ is the action or output space, $\mathcal T$ is the state transition mechanism, $\mathcal F$ is the feedback generation mechanism, and $\mathcal G$ is the rule that converts feedback into optimization signals. Importantly, the notion of environment is understood here in a broad sense: it includes not only the external world, but also the data distribution presented to the model, task formats, supervision protocols, and even the loss rules by which feedback is transformed into gradients.

Let the model be denoted by $M_\theta$, with internal state space $\mathcal H$, state update map $U_\theta$, and policy or generation map $\Pi_\theta$. At interaction step $t$, the closed-loop system can be written as
\begin{align}
o_t &\sim P_{\mathcal E}(\cdot \mid \omega_t), \\
h_t &= U_\theta(h_{t-1}, o_t), \\
a_t &\sim \Pi_\theta(\cdot \mid h_t), \\
\xi_t &\sim \mathcal F(\cdot \mid \omega_t, o_t, a_t), \\
\omega_{t+1} &\sim \mathcal T(\cdot \mid \omega_t, o_t, a_t, \xi_t), \\
\ell_t &= \mathcal G(\omega_t, o_t, a_t, \xi_t).
\end{align}
The overall learning objective is therefore
\begin{equation}
\mathcal J(\theta; \mathcal E)
=
\mathbb E_{\tau \sim P(\tau \mid \theta, \mathcal E)}
\left[
\sum_{t=1}^{T} \gamma^{t-1} \ell_t
\right],
\label{eq:after_general_objective}
\end{equation}
where $\tau$ denotes a complete interaction trajectory and $\gamma$ is the discount factor.

This formalization shows directly that learning is never an isolated question of model structure alone. Rather, it is jointly determined by three factors: first, the state space in which the model absorbs and organizes information; second, the kind of feedback through which the environment defines improvement; and third, the actual structure that generates observations, transitions, and feedback. In this work, these three aspects correspond precisely to the three recurring themes of the paper: how text should be represented, which metrics are aligned with the true learning objective, and what kind of environment unified models are ultimately meant to enter.

\subsection{Rethinking Text Modeling Paradigms: From State Space in the System to Hierarchical Text Generation}

From a system-level perspective, the central question of text modeling is not merely which generation order to adopt, but rather in what kind of \emph{state} text should be represented within the learning system. Mainstream autoregressive language models bind the state tightly to the surface token prefix, and generation is therefore written as
\begin{equation}
p_{\mathrm{AR}}(x)=\prod_{t=1}^{n}p_\theta(x_t\mid x_{<t}).
\label{eq:after_ar_factorization}
\end{equation}
This factorization is highly effective, but it fundamentally corresponds to a strong modeling assumption: both global semantics and local realization are propagated through the same token-level conditional chain. In other words, it assumes that the surface string itself is the most natural and primary state space.

The route explored in this paper instead reconsiders text generation from the level of the state space itself. If text indeed contains a low-dimensional yet sufficiently useful global semantic structure, then a more natural approach is not to place the entire burden of generation on a token-level chain factorization, but to introduce latent variables explicitly and model high-level semantic organization separately from local textual realization. Correspondingly, the core factorization of \method is
\begin{equation}
p(x,z_0)=p_\theta(x\mid z_0)p_\psi(z_0),
\qquad
p(x)=\int p_\theta(x\mid z_0)p_\psi(z_0)\,dz_0,
\label{eq:after_hierarchical_factorization}
\end{equation}
where $z_0$ is a continuous latent variable, $p_\psi(z_0)$ is the latent prior, and $p_\theta(x\mid z_0)$ is the conditional decoder. The crucial change here is not merely the introduction of latent variables, but the redefinition of the role of state in the system: the path no longer acts directly on observation recovery, but instead organizes global semantics in latent space first, after which the decoder carries out local textual realization.

This point can be stated compactly through the information decomposition of the average ELBO. Let
\begin{equation}
q(x,z_0):=p_{\mathrm{data}}(x)q_\phi(z_0\mid x),
\end{equation}
then
\begin{equation}
\mathbb E_{p_{\mathrm{data}}(x)}[\mathcal L_{\mathrm{ELBO}}(x)]
=
\mathbb E_{q(x,z_0)}[\log p_\theta(x\mid z_0)]
-
I_q(X;Z_0)
-
\mathrm{KL}\!\big(\bar q_\phi(z_0)\,\|\,p_\psi(z_0)\big),
\label{eq:after_avg_elbo}
\end{equation}
where $\bar q_\phi(z_0)$ is the aggregated posterior. This decomposition shows that hierarchical latent-space modeling breaks the text problem into three coupled but analytically distinguishable components: conditional realization, information compression, and prior matching. The latent variable is therefore not merely a continuous surrogate for discrete tokens, but an explicit intermediate state through which global semantic organization can be separated from local textual realization and modeled on its own terms.

From this perspective, compression must also be reconsidered. Prior work has emphasized the connection between compression and intelligence \citep{huang2024compression}, while recent explorations of generation closer to raw data forms in images and videos, such as pixel-space modeling \citep{deng2026generative}, further suggest that compression should not be equated with harmful information deletion. The key question is not whether every local detail is preserved, but whether the model can extract and organize structural information that is genuinely effective and generalizable. If text indeed admits a hierarchical structure in which high-level semantics and low-level realization are relatively separable, then reinterpreting text generation through informational hierarchy is not merely a change of method, but a theoretical re-evaluation of text modeling itself.

Accordingly, the first theme of this paper is not to reject autoregression, but to point out that autoregression occupies only one self-consistent, rather than unique, corner of the design space. If the data truly contains a hierarchy between low-dimensional global semantics and high-dimensional local realization, then organizing semantics first in a latent state and realizing text through conditional decoding may be closer to the true generative mechanism. Text generation should therefore not be understood solely as next-token fitting over discrete strings, but more generally as a systematic problem of how information is represented, compressed, and organized hierarchically.

\subsection{Understanding the Continuous Extension of Discrete Text: From Objective Mismatch to a Shift in Evaluation Emphasis}

Once the state space of the system is changed, the issue at the objective level changes accordingly. For conventional autoregressive language models, the training objective and evaluation quantities are naturally well aligned: maximum likelihood training directly corresponds to probability fitting over text, and likelihood and perplexity therefore have a clear and stable interpretation. In hierarchical continuous latent-space models, however, the actual training path is no longer direct token-level maximum likelihood, but a hierarchical objective jointly composed of reconstruction, latent prior learning, and representation regularization.

This can be seen from the relation between the ELBO and the true marginal likelihood:
\begin{equation}
-\mathcal L_{\mathrm{ELBO}}(x)
=
-\log p_{\theta,\psi}(x)
+
\mathrm{KL}\!\big(
q_\phi(z_0\mid x)\,\|\,p_{\theta,\psi}(z_0\mid x)
\big).
\label{eq:after_inference_gap}
\end{equation}
This shows that even at the level of the ELBO, the training objective is already separated from the true log-likelihood by a variational inference gap. Furthermore, in the actual training of \method, the model must jointly learn latent reconstruction, continuous prior fitting, and representation stabilization. The quantity being optimized is therefore not a single token-level likelihood in the classical sense.

For this reason, the mismatch should not be interpreted as a failure to learn, but rather as evidence that the model is learning something different. For autoregressive models and other paradigms that directly fit discrete distributions, likelihood and perplexity remain highly informative because they are naturally aligned with the training objective. For hierarchical continuous latent-space models, by contrast, the central issue is no longer whether local discrete distributions are fitted as sharply as possible, but whether higher-level semantic structures are effectively organized, whether the latent prior is well learned, and whether the final generations satisfy the actual task requirements.

From the perspective of systematic modeling, this phenomenon is in fact expected: when the state space expands from surface tokens to hierarchical latent variables, the optimization target correspondingly shifts from precise fitting of local discrete distributions to the organization of higher-level semantic structure, stable latent prior learning, and satisfaction of the true generative objective. For this route, generation-oriented metrics are therefore often more closely aligned with what the model is actually trained to do than perplexity. More importantly, model potential is often reflected more clearly in scaling behavior than in any single static likelihood value: what matters is whether capability continues to improve steadily as model size, data, and compute increase, rather than whether local fit under a particular pointwise metric is better.

This can also be connected to the perspective of the three governing curves developed in the theoretical analysis of the paper. For \method, the applicability of this route is not determined by a single likelihood value, but by whether three conditions hold simultaneously: the representation rate--distortion curve is already favorable at relatively low rate, the approximation error of the latent prior continues to decrease, and the inference gap remains controllable. In other words, the advantage of this route is not guaranteed automatically by latent variables or flow-based modeling themselves; it depends on whether the data truly contains a compressible global semantic structure, and whether the model can learn, fit, and realize that structure in a stable manner.

The second theme of this paper is therefore not merely that perplexity is inadequate, but that evaluation language itself must change once representation and objective have changed. For this class of models, generation quality and scaling behavior are often closer to the model's true capability and long-term potential than traditional perplexity.

\subsection{Exploring Unified Models: Model-Environment Interaction and the Value of Multimodal Unification}

If we return again to the model--environment formalization in Eq.~\eqref{eq:after_env} and Eq.~\eqref{eq:after_general_objective}, the third theme becomes more natural. The importance of unified models does not lie merely in placing multiple modalities within a single parameterized network, but in changing the structure of the environment in which the model learns. In the real world, observations, transitions, and feedback are usually not generated independently across modalities; rather, they are often jointly determined by a shared latent state. A more general learning system therefore requires not a set of isolated modality interfaces placed side by side, but unified representations that can enter the same interaction state and share the same dynamical constraints.

This is closely related to two broader views of intelligence. One influential view understands intelligence as a collection of skills across tasks \citep{chollet2019measure}. Under this view, a system becomes more capable because it can solve problems across more domains and under more diverse forms of supervision and interaction. The recent development of large language models partly reflects this tendency. A representative example is the progress of code agents that can operate within command-line environments. In such environments, the observation space, action space, and feedback mechanism are unusually well aligned with discrete symbolic representations. Interaction trajectories are easy to record, and correctness is often straightforward to verify, so these environments provide dense and precise learning signals.

Another view, closer to the world-model perspective, holds that intelligence consists in acquiring an internal model of the structure and dynamics of the world. Recent work on world models \citep{tu2025playerone} moves in this direction by seeking to learn richer environmental dynamics, thereby supporting stronger generalization and more realistic interaction. From this perspective, the question is not only how many tasks a model can solve, but whether it learns in an environment whose structure is rich enough to induce the right abstractions. The environment therefore becomes central: a model can only internalize the regularities that are actually present in the observations, transitions, and feedback it encounters.

This can also be written more formally. Let the observation at step $t$ be multimodal,
\begin{equation}
o_t=\big(o_t^{(1)},o_t^{(2)},\dots,o_t^{(M)}\big),
\qquad
o_t^{(m)}\in\mathcal O^{(m)},
\label{eq:after_multimodal_obs}
\end{equation}
and suppose there exists a joint latent state
\begin{equation}
z_t=\Phi\!\left(o_t^{(1)},\dots,o_t^{(M)}\right),
\label{eq:after_joint_latent}
\end{equation}
such that feedback and transition depend primarily on this joint state rather than on marginal factorizations over modalities:
\begin{equation}
\xi_t,\omega_{t+1}
\sim
p\big(\xi_t,\omega_{t+1}\mid z_t,a_t\big).
\label{eq:after_joint_feedback}
\end{equation}
If the true environmental dynamics satisfy
\begin{equation}
p(\xi_t,\omega_{t+1}\mid o_t,a_t)
\neq
\prod_{m=1}^{M}
p_m\big(\xi_t^{(m)},\omega_{t+1}^{(m)} \mid o_t^{(m)}, a_t^{(m)}\big),
\label{eq:after_nonseparable_env}
\end{equation}
then the learning problem is structurally non-separable across modalities. In such a case, treating each modality as an independent channel and only combining them superficially is generally insufficient. The theoretical significance of unified models lies precisely in the fact that the environment itself is non-separable in the sense of Eq.~\eqref{eq:after_nonseparable_env}: the regularities that determine useful feedback are joint regularities rather than regularities defined on the marginal distribution of each modality.

This clarifies why multimodal unification is not merely an engineering convenience. Its purpose is not simply to process multiple data types with one backbone, but to allow the model to learn in an environment whose observation, transition, and supervision structure more faithfully reflects the coupled regularities of the real world. In such an environment, both inputs and outputs may be multimodal; useful feedback may depend on how different modalities constrain each other jointly; and the learned internal state should ideally reflect these joint constraints.

This also explains why text has long been the most difficult component in unified models. Images and videos naturally operate in continuous spaces, whereas text is a prototypically discrete modality. If they are to enter a common interaction state and share latent dynamics, a severe representational mismatch immediately arises. This is precisely one of the central obstacles repeatedly identified in recent unified-model research \citep{deng2025emerging}. In this sense, the significance of \method lies not only in proposing another text generator, but in providing a natural interface through which discrete text can enter a continuous latent space.

If discrete text is mapped into a continuous latent variable through
\begin{equation}
z^{\mathrm{text}}\sim q_\phi(z\mid x^{\mathrm{text}}),
\qquad
x^{\mathrm{text}}\sim p_\eta(x\mid z^{\mathrm{text}}),
\label{eq:after_text_bridge}
\end{equation}
then text acquires an interface compatible with other continuous modalities. One may then define a unified interaction state
\begin{equation}
\tilde z_t
=
\Psi\!\left(
z_t^{\mathrm{text}},
z_t^{\mathrm{img}},
z_t^{\mathrm{vid}},
\dots
\right),
\label{eq:after_unified_latent}
\end{equation}
and perform state evolution, decision making, and feedback modeling at this level. Equations~\eqref{eq:after_text_bridge}--\eqref{eq:after_unified_latent} formalize why \method may matter beyond text generation itself: its role is not only to generate text through a different path, but to provide a bridge through which an intrinsically discrete modality can participate in a continuous multimodal interaction state. In other words, it reduces the structural mismatch that otherwise prevents text from naturally entering a shared continuous environment.

This is why the broader significance of \method is better understood through model--environment interaction than through single-modality benchmarks alone. If learning is viewed as the optimization of Eq.~\eqref{eq:after_general_objective} in richer and more realistic environments, then unified models matter because they expand the environments in which the model can learn. If text is to participate fully in such environments, then a bridge such as that in Eq.~\eqref{eq:after_text_bridge} becomes especially desirable. In this sense, \method is not merely an alternative text generator; it can also be understood as a candidate mechanism for aligning discrete text with continuous multimodal learning systems.

\subsection{The Three Themes Under a Unified Perspective}

In summary, the three themes of this paper are not separate supplementary discussions, but three manifestations of the same systematic problem. The first concerns the \emph{representation level}: whether text should be modeled entirely on the token surface, or whether higher-level semantics can be organized in an independent latent state. The second concerns the \emph{objective level}: once the model is trained through latent transport, reconstruction, and regularization rather than direct token-level maximum likelihood, which metrics remain genuinely aligned with the learning problem. The third concerns the \emph{environment level}: if learning is ultimately model--environment interaction, then what kind of environment future models should inhabit, and what representational interfaces are needed for different modalities to become compatible within it.

From this perspective, autoregressive language modeling occupies a self-consistent corner of the design space: representation is tightly bound to surface tokens, the training objective is direct likelihood maximization, and the environment is largely symbolic and text-centered. The route explored in this work changes all three assumptions simultaneously. It introduces a hierarchical latent-variable representation for text, thereby changing the representational assumption; it moves optimization away from direct token-level likelihood, thereby weakening the central interpretive role of perplexity; and it provides a continuous interface for discrete text, thereby making text potentially more compatible with multimodal environments that are more naturally expressed in continuous latent space.

We therefore hope that the contribution of this work is not only a viable alternative path for text generation, but also a more systematic way of thinking that jointly considers representation, objective alignment, and environment design. More broadly, we hope it encourages future research to rethink text, images, videos, and other modalities not as isolated domains that must be solved separately, but as components of a larger learning system in which unified representation, unified objectives, and unified environments may become increasingly central to the development of more general multimodal intelligence.

%% file: sections/appendix.tex
\section{Strict Probabilistic Formulation and Training Objective Details of Cola DLM}
\label{app:prob_model}

This appendix provides the strict probabilistic formulation of \method and clarifies the role of each training component. We first define the latent-variable model and the CNF prior precisely, then derive the marginal likelihood, the ELBO, and its population form, and finally explain the exact role of Flow Matching in the overall theoretical framework.

\subsection{Probabilistic Objects, CNF Prior, and Generative Distribution}
\label{app:prob_objects}

Let $x\in\mathcal X$ denote a discrete text sequence, and let $z_0\in\mathbb R^d$ denote its continuous latent variable. \method consists of three probabilistic components:
\begin{align}
q_{\phi}(z_0\mid x) &\quad \text{encoder (approximate posterior)}, \\
p_{\theta}(x\mid z_0) &\quad \text{decoder (conditional text generator)}, \\
p_{\psi}(z_0) &\quad \text{latent prior (defined by a continuous flow)}.
\end{align}
Here, $q_{\phi}(z_0\mid x)$ is used only for variational inference during training. The actual generative distribution is defined by $p_{\theta}(x\mid z_0)$ and $p_{\psi}(z_0)$.

To ensure that the flow map and density-transport formula are well defined, we impose the standard regularity assumption on the CNF vector field.

\begin{assumption}[Regularity of the CNF vector field]
\label{ass:app_cnf_regular}
For each parameter $\psi$, the vector field $\vv_{\psi}(z,t)$ is locally Lipschitz in $z$, measurable in $t$, and satisfies a linear-growth condition ensuring the existence and uniqueness of global ODE solutions. In addition, $\nabla\!\cdot\vv_{\psi}(z,t)$ is integrable along the trajectories of interest. Then, for any $0\le s\le t\le 1$, the flow map $\Phi^{\psi}_{t\leftarrow s}$ exists, is unique, and is invertible.
\end{assumption}

Let the base distribution be the standard Gaussian
\begin{equation}
z_1\sim p_1(z_1)=\Ncal(0,I).
\label{eq:app_base_dist}
\end{equation}
Given a time-dependent vector field $\vv_{\psi}(z_t,t)$, consider the ODE
\begin{equation}
\frac{\dd z_t}{\dd t}=\vv_{\psi}(z_t,t),
\qquad t\in[0,1].
\label{eq:app_ode_def}
\end{equation}
By Assumption~\ref{ass:app_cnf_regular}, this ODE induces a flow map
\begin{equation}
z_t=\Phi^{\psi}_{t\leftarrow s}(z_s).
\label{eq:app_flow_map}
\end{equation}
Therefore, a prior sample can be written as
\begin{equation}
z_0=\Phi^{\psi}_{0\leftarrow 1}(z_1),
\qquad z_1\sim p_1.
\label{eq:app_reverse_flow_sampling}
\end{equation}
Equivalently, the prior distribution is the pushforward of the base distribution under the reverse flow:
\begin{equation}
p_{\psi}=(\Phi^{\psi}_{0\leftarrow 1})_{\sharp}p_1.
\label{eq:app_prior_pushforward}
\end{equation}
This shows that $p_{\psi}(z_0)$ is not an abstract symbol, but a rigorously defined probability distribution induced jointly by $(p_1,\vv_{\psi})$.

Hence, the generative process of \method can be written as
\begin{equation}
z_1\sim p_1(z_1),
\qquad
z_0=\Phi^{\psi}_{0\leftarrow 1}(z_1),
\qquad
x\sim p_{\theta}(x\mid z_0).
\label{eq:app_generative_process}
\end{equation}
The corresponding joint and marginal distributions are
\begin{equation}
p(x,z_0)=p_{\theta}(x\mid z_0)p_{\psi}(z_0),
\qquad
p(x)=\int p_{\theta}(x\mid z_0)p_{\psi}(z_0)\,\dd z_0.
\label{eq:app_joint_and_marginal}
\end{equation}
If the base variable $z_1$ is introduced explicitly, the marginal can also be written as
\begin{equation}
p(x)=\int p_1(z_1)\,p_{\theta}\bigl(x\mid \Phi^{\psi}_{0\leftarrow 1}(z_1)\bigr)\,\dd z_1.
\label{eq:app_marginal_via_z1}
\end{equation}
Therefore, \method is fundamentally a hierarchical latent-variable language model, rather than a model that directly defines a reverse process in token space.

\subsection{Explicit Density of the CNF Prior and the Outer ELBO}
\label{app:cnf_and_elbo}

Let $Z_t$ denote the random variable induced by the ODE at time $t$, with density $p_t$. Under Assumption~\ref{ass:app_cnf_regular}, $p_t$ satisfies the continuity equation
\[
\partial_t p_t(z)+\nabla\!\cdot\bigl(p_t(z)\,\vv_{\psi}(z,t)\bigr)=0.
\]
Along a characteristic trajectory $t\mapsto z_t$, the instantaneous change-of-variables formula gives
\begin{equation}
\frac{\dd}{\dd t}\log p_t(z_t)
=
-\nabla\!\cdot \vv_{\psi}(z_t,t).
\label{eq:app_instantaneous_cov}
\end{equation}
Integrating from $0$ to $1$, and noting that $p_0=p_{\psi}$, yields
\begin{equation}
\log p_{\psi}(z_0)
=
\log p_1(z_1)
+
\int_0^1 \nabla\!\cdot \vv_{\psi}(z_t,t)\,\dd t,
\qquad z_1=\Phi^{\psi}_{1\leftarrow 0}(z_0).
\label{eq:app_prior_log_density}
\end{equation}
Thus, the CNF prior admits an explicitly defined log-density rather than an implicit prior that exists only through sampling.

The flow map can also be written as a deterministic Markov kernel. Define
\begin{equation}
K^{\psi}_{t\leftarrow s}(\dd z_t\mid z_s)
:=
\delta_{\Phi^{\psi}_{t\leftarrow s}(z_s)}(\dd z_t).
\label{eq:app_cnf_markov_kernel}
\end{equation}
By the semigroup property of ODE flows, $K^{\psi}_{t\leftarrow s}$ satisfies the Chapman--Kolmogorov relation and therefore forms a deterministic Markov semigroup. However, the Markov structure here comes from prior transport, not from an observation-dependent corruption--recovery mechanism.

We now introduce the variational posterior $q_{\phi}(z_0\mid x)$ to handle the latent integral in the marginal likelihood. For any $q_{\phi}(z_0\mid x)$ satisfying the required support condition,
\begin{align}
\log p(x)
&=
\log \int q_{\phi}(z_0\mid x)
\frac{p_{\theta}(x\mid z_0)p_{\psi}(z_0)}{q_{\phi}(z_0\mid x)}\,\dd z_0 \\
&\ge
\E_{q_{\phi}(z_0\mid x)}\Bigl[
\log p_{\theta}(x\mid z_0)
+\log p_{\psi}(z_0)
-\log q_{\phi}(z_0\mid x)
\Bigr].
\label{eq:app_outer_elbo}
\end{align}
Define the outer ELBO as
\begin{equation}
\Lcal_{\mathrm{ELBO}}(x)
:=
\E_{q_{\phi}(z_0\mid x)}\Bigl[
\log p_{\theta}(x\mid z_0)
+\log p_{\psi}(z_0)
-\log q_{\phi}(z_0\mid x)
\Bigr].
\label{eq:app_elbo_def}
\end{equation}
Then
\begin{equation}
\log p(x)\ge \Lcal_{\mathrm{ELBO}}(x).
\label{eq:app_elbo_lower_bound}
\end{equation}
More importantly, the exact decomposition between the true log-likelihood and the ELBO is
\begin{equation}
\log p(x)
=
\Lcal_{\mathrm{ELBO}}(x)
+
\KL\bigl(q_{\phi}(z_0\mid x)\,\|\,p(z_0\mid x)\bigr),
\label{eq:app_elbo_identity}
\end{equation}
where the true posterior is
\begin{equation}
p(z_0\mid x)=\frac{p_{\theta}(x\mid z_0)p_{\psi}(z_0)}{p(x)}.
\label{eq:app_true_posterior}
\end{equation}
Equation~\eqref{eq:app_elbo_identity} makes the distinction explicit: the true likelihood is always $\log p(x)$, whereas the training objective directly optimized is its variational lower bound $\Lcal_{\mathrm{ELBO}}(x)$.

Substituting the explicit CNF density formula \eqref{eq:app_prior_log_density} into the outer ELBO gives
\begin{align}
\Lcal_{\mathrm{ELBO}}(x)
=
\E_{q_{\phi}(z_0\mid x)}\Bigl[
&\log p_{\theta}(x\mid z_0)
+\log p_1(z_1)
+\int_0^1 \nabla\!\cdot \vv_{\psi}(z_t,t)\,\dd t \\
&-\log q_{\phi}(z_0\mid x)
\Bigr],
\qquad z_1=\Phi^{\psi}_{1\leftarrow 0}(z_0).
\label{eq:app_elbo_with_cnf_prior}
\end{align}
This is the strict training objective of \method in the setting where the decoder can compute $\log p_{\theta}(x\mid z_0)$.

\subsection{Average ELBO, Aggregated Posterior, and Prior Matching}
\label{app:avg_elbo}

To analyze the nature of prior learning, define the aggregated posterior
\begin{equation}
\bar q_{\phi}(z_0)
:=
\int q_{\phi}(z_0\mid x)\,\pdata(x)\,\dd x,
\label{eq:app_agg_posterior}
\end{equation}
and the joint distribution
\begin{equation}
q(x,z_0):=\pdata(x)q_{\phi}(z_0\mid x).
\label{eq:app_joint_qxz}
\end{equation}
When the encoder $q_{\phi}$ and decoder $p_{\theta}$ are fixed, the dependence of the data-averaged ELBO on $\psi$ appears only through the prior term:
\begin{align}
\E_{\pdata(x)}\bigl[\Lcal_{\mathrm{ELBO}}(x)\bigr]
&=
\E_{q(x,z_0)}\bigl[\log p_{\theta}(x\mid z_0)-\log q_{\phi}(z_0\mid x)\bigr]
+\E_{q(x,z_0)}\bigl[\log p_{\psi}(z_0)\bigr] \\
&=
C(\phi,\theta)
+
\E_{z_0\sim \bar q_{\phi}}\bigl[\log p_{\psi}(z_0)\bigr],
\label{eq:app_avg_elbo_prior_part}
\end{align}
where $C(\phi,\theta)$ is independent of $\psi$.

Using
\[
\E_{z_0\sim \bar q_{\phi}}[\log p_{\psi}(z_0)]
=
-\KL\bigl(\bar q_{\phi}(z_0)\,\|\,p_{\psi}(z_0)\bigr)
+
\E_{z_0\sim \bar q_{\phi}}[\log \bar q_{\phi}(z_0)],
\]
we obtain
\begin{equation}
\max_{\psi}\;\E_{\pdata(x)}\bigl[\Lcal_{\mathrm{ELBO}}(x)\bigr]
\quad\Longleftrightarrow\quad
\min_{\psi}\;\KL\bigl(\bar q_{\phi}(z_0)\,\|\,p_{\psi}(z_0)\bigr).
\label{eq:app_prior_matching_problem}
\end{equation}
Therefore, after fixing the encoder and decoder, the strict prior-learning subproblem is to fit the aggregated posterior $\bar q_{\phi}$ with $p_{\psi}$.

The average ELBO can be further written in information-decomposition form. Adding and subtracting $\log \bar q_{\phi}(z_0)$ in the prior term yields
\begin{align}
\E_{\pdata(x)}[\Lcal_{\mathrm{ELBO}}(x)]
&=
\E_{q(x,z_0)}[\log p_{\theta}(x\mid z_0)]
- I_q(X;Z_0)
- \KL\bigl(\bar q_{\phi}(z_0)\,\|\,p_{\psi}(z_0)\bigr),
\label{eq:app_avg_elbo_decomp}
\end{align}
where
\begin{equation}
I_q(X;Z_0)
:=
\E_{\pdata(x)}\Bigl[\KL\bigl(q_{\phi}(z_0\mid x)\,\|\,\bar q_{\phi}(z_0)\bigr)\Bigr]
\label{eq:app_mutual_info}
\end{equation}
is the amount of information retained in the latent variable.

Equation~\eqref{eq:app_avg_elbo_decomp} reveals the three theoretical roles of the encoder. It determines the target distribution $\bar q_{\phi}(z_0)$ that the prior must fit, the compression strength through $I_q(X;Z_0)$, and therefore the division of labor between global semantics in latent space and local realization in the decoder.

\subsection{The Role of Flow Matching: A Prior Solver Rather Than the Model Definition}
\label{app:fm_role}

Direct optimization of $\log p_{\psi}(z_0)$ usually requires repeated ODE solves and divergence estimation, which is expensive in practice. For this reason, Flow Matching is often used to learn the vector field of the same prior more efficiently. It should therefore be viewed as a prior solver, not as the likelihood objective itself.

To do so, one first specifies a bridge path from the aggregated posterior to the base distribution. Let
\begin{equation}
z_0\sim \bar q_{\phi}(z_0),
\qquad
z_1\sim p_1(z_1),
\qquad
t\sim U[0,1].
\label{eq:app_fm_sampling}
\end{equation}
Consider the conditional path
\begin{equation}
z_t=(1-\alpha(t))z_0+\alpha(t)z_1,
\qquad
\alpha(0)=0,
\ \alpha(1)=1.
\label{eq:app_general_fm_path}
\end{equation}
Here $\alpha$ can be any sufficiently smooth monotone function; linear interpolation corresponds to $\alpha(t)=t$. The target velocity field of this path is
\begin{equation}
u_t(z_0,z_1)
:=
\frac{\dd z_t}{\dd t}
=
\dot\alpha(t)(z_1-z_0).
\label{eq:app_fm_target_velocity}
\end{equation}
The Flow Matching loss is then
\begin{equation}
\Lcal_{\mathrm{FM}}(\psi;\phi)
:=
\E_{t,\,z_0\sim \bar q_{\phi},\,z_1\sim p_1}
\left[
\left\|\vv_{\psi}(z_t,t)-u_t(z_0,z_1)\right\|_2^2
\right].
\label{eq:app_fm_loss}
\end{equation}

For a fixed path construction, the pointwise optimal solution is the conditional expectation
\begin{equation}
\vv_{\psi}^{\star}(z,t)
=
\E\bigl[u_t(z_0,z_1)\mid z_t=z,\,t\bigr].
\label{eq:app_fm_optimal_field}
\end{equation}
This is simply a conditional-mean regression result, but it makes the role of Flow Matching precise: under ideal conditions, the learned vector field corresponds to a continuous transport connecting $\bar q_{\phi}$ and $p_1$, which in turn defines the prior $p_{\psi}(z_0)$.

Hence, two objectives must be distinguished. The first,
\begin{equation}
\max_{\psi}\;\E_{z_0\sim \bar q_{\phi}}[\log p_{\psi}(z_0)],
\label{eq:app_strict_prior_nll}
\end{equation}
is the strict probabilistic objective for prior learning. The second,
\begin{equation}
\min_{\psi}\;\Lcal_{\mathrm{FM}}(\psi;\phi),
\label{eq:app_fm_solver_obj}
\end{equation}
is a practical solver for learning the vector field of the same prior. They serve the same prior-modeling problem, but they are not the same mathematical object. In particular, $\Lcal_{\mathrm{FM}}$ cannot be identified term-by-term with $-\log p_{\psi}(z_0)$ in the ELBO.

\subsection{Strict Form of the Training Objective}
\label{app:training_objective}

If the decoder can compute $\log p_{\theta}(x\mid z_0)$ exactly, then the strict training objective is to maximize \eqref{eq:app_elbo_def}, or equivalently to minimize the negative ELBO. Define
\begin{align}
L_{\mathrm{REC}}(x)
&:= -\E_{q_{\phi}(z_0\mid x)}\bigl[\log p_{\theta}(x\mid z_0)\bigr],
\label{eq:app_l_rec}\\
L_{\mathrm{PRIOR}}(x)
&:= -\E_{q_{\phi}(z_0\mid x)}\bigl[\log p_{\psi}(z_0)\bigr],
\label{eq:app_l_prior}\\
L_{\mathrm{REG}}(x)
&:= \E_{q_{\phi}(z_0\mid x)}\bigl[\log q_{\phi}(z_0\mid x)\bigr],
\label{eq:app_l_reg}
\end{align}
so that the total single-sample loss becomes
\begin{equation}
L_{\mathrm{Total}}^{\mathrm{strict}}(x)
:=
L_{\mathrm{REC}}(x)+L_{\mathrm{PRIOR}}(x)+L_{\mathrm{REG}}(x)
=
-\Lcal_{\mathrm{ELBO}}(x).
\label{eq:app_l_total_strict}
\end{equation}
Taking expectation over the data distribution gives the strict training objective
\begin{equation}
\min_{\phi,\theta,\psi}\;\E_{\pdata(x)}\bigl[L_{\mathrm{Total}}^{\mathrm{strict}}(x)\bigr]
\quad\Longleftrightarrow\quad
\max_{\phi,\theta,\psi}\;\E_{\pdata(x)}\bigl[\Lcal_{\mathrm{ELBO}}(x)\bigr].
\label{eq:app_strict_training_objective}
\end{equation}
Therefore, the objective that truly corresponds to maximizing a lower bound of the marginal likelihood is always the ELBO; Flow Matching is only an efficient implementation for learning the prior.

\section{Sampling, Inference, and Probability Estimation Details}
\label{app:inference_and_likelihood}

This appendix supplements the strict formulation of sampling, continuation, unconditional likelihood estimation, and conditional probability estimation in \method. Since the main text only presents the overall framework, we provide the details here.

\subsection{Sampling and Inference Procedure}
\label{app:sampling}

To generate a new sample from \method, one should follow the generative process defined by the model, rather than the approximate posterior path used during training. First, sample a latent seed from the simple base distribution:
\begin{equation}
z_1\sim p_1(z_1)=\Ncal(0,I).
\label{eq:app_sampling_step1}
\end{equation}
Second, integrate the vector field backward to obtain a latent sample:
\begin{equation}
\frac{\dd z_t}{\dd t}=\vv_{\psi}(z_t,t),
\qquad t:1\rightarrow 0,
\label{eq:app_sampling_reverse_ode}
\end{equation}
so that starting from $z(1)=z_1$ yields $z(0)=z_0$. In practice, Euler, Heun, or higher-order ODE solvers can be used. This step generates global latent semantics rather than tokens themselves.

Third, the decoder generates text conditioned on the latent:
\begin{equation}
x\sim p_{\theta}(x\mid z_0).
\label{eq:app_sampling_step_decoder}
\end{equation}
If the decoder is autoregressive, this step is conditional autoregressive sampling; if the decoder is non-autoregressive or bidirectional, it becomes the corresponding conditional recovery procedure. Hence, sampling in \method is inherently two-stage: one first samples a global latent, and then realizes the text conditioned on it.

If a task provides a prefix $x^{\mathrm{pre}}$, inference first encodes the prefix into a conditional latent:
\begin{equation}
z^{\mathrm{pre}}\sim q_{\phi}(z^{\mathrm{pre}}\mid x^{\mathrm{pre}}),
\label{eq:app_prefix_encode}
\end{equation}
and then generates the response latent block by block. Let the previously generated blocks be $\hat z_0^{(<b)}$. The $b$-th response block is generated by
\begin{equation}
\hat z_0^{(b)}
=
\Phi^{\psi}_{0\leftarrow 1}\!\bigl(\epsilon^{(b)};z^{\mathrm{pre}},\hat z_0^{(<b)}\bigr),
\qquad
\epsilon^{(b)}\sim\Ncal(0,I).
\label{eq:app_block_generation}
\end{equation}
The response is then decoded as
\begin{equation}
\hat x^{\mathrm{res}}
\sim
p_{\theta}\!\bigl(x^{\mathrm{res}}\mid x^{\mathrm{pre}},z^{\mathrm{pre}},\hat z_0^{(1:B)}\bigr).
\label{eq:app_conditional_decode}
\end{equation}
This is exactly the prefix-encoding, blockwise-generation, and conditional-decoding procedure described in the main text.

\subsection{Unconditional Log-Likelihood and PPL Estimation}
\label{app:uncond_likelihood}

For a dataset $\mathcal D$, if one directly uses the ELBO as a lower bound on $\log p(x)$, one may define an ELBO-based perplexity as
\begin{equation}
\mathrm{PPL}_{\mathrm{ELBO}}
:=
\exp\left(
-\frac{\sum_{x\in\mathcal D}\Lcal_{\mathrm{ELBO}}(x)}{\sum_{x\in\mathcal D}|x|}
\right),
\label{eq:app_ppl_elbo}
\end{equation}
where $|x|$ denotes sequence length. Since $\Lcal_{\mathrm{ELBO}}(x)\le \log p(x)$, $\mathrm{PPL}_{\mathrm{ELBO}}$ should be interpreted as an upper-bound-type estimate of the true perplexity.

If the decoder can compute $\log p_{\theta}(x\mid z_0)$, one can also obtain tighter likelihood estimates through importance sampling or IWAE. Let
\begin{equation}
z_0^{(k)}\sim q_{\phi}(z_0\mid x),
\qquad
\log w^{(k)}
=
\log p_{\theta}(x\mid z_0^{(k)})
+\log p_{\psi}(z_0^{(k)})
-\log q_{\phi}(z_0^{(k)}\mid x).
\label{eq:app_iwae_weight}
\end{equation}
Then the ELBO-style and IWAE-style estimators are
\begin{equation}
\log \widehat p_{\mathrm{ELBO},K}(x)
=
\frac{1}{K}\sum_{k=1}^{K}\log w^{(k)},
\qquad
\log \widehat p_{\mathrm{IWAE},K}(x)
=
\log\!\left(\frac{1}{K}\sum_{k=1}^{K}e^{\log w^{(k)}}\right).
\label{eq:app_p_estimators}
\end{equation}
The IWAE estimator is generally tighter. When IWAE is used for scoring, the corresponding perplexity is
\begin{equation}
\mathrm{PPL}_{\mathrm{IWAE}}
:=
\exp\left(
-\frac{\sum_{x\in\mathcal D}\log \widehat p_{\mathrm{IWAE},K}(x)}{\sum_{x\in\mathcal D}|x|}
\right).
\label{eq:app_ppl_iwae}
\end{equation}
If the decoder itself can provide only a lower bound, then evaluation can report only bound-based PPL rather than exact perplexity.

\subsection{Conditional Probability Estimation and Block-Level Conditional Scoring}
\label{app:conditional_likelihood}

In continuation, multiple-choice evaluation, and candidate-response ranking, conditional probability is more important than the unconditional score. Let a full sample be decomposed as
\begin{equation}
x=(x^{\mathrm{pre}},x^{\mathrm{res}}),
\label{eq:app_prefix_response_partition}
\end{equation}
where $x^{\mathrm{pre}}$ is the given prefix and $x^{\mathrm{res}}$ is the response to be evaluated. The exact conditional log-probability satisfies
\begin{equation}
\log p(x^{\mathrm{res}}\mid x^{\mathrm{pre}})
=
\log p(x^{\mathrm{pre}},x^{\mathrm{res}})
-
\log p(x^{\mathrm{pre}}).
\label{eq:app_conditional_identity}
\end{equation}
Hence one can apply the same unconditional estimator separately to the joint sample and the prefix sample, and then take the difference to obtain a plug-in conditional estimator:
\begin{equation}
\widehat{\log p}_{\mathrm{cond}}^{\mathrm{ELBO}}(x^{\mathrm{res}}\mid x^{\mathrm{pre}})
:=
\Lcal_{\mathrm{ELBO}}(x^{\mathrm{pre}},x^{\mathrm{res}})
-
\Lcal_{\mathrm{ELBO}}(x^{\mathrm{pre}}),
\label{eq:app_conditional_elbo_plugin}
\end{equation}
and
\begin{equation}
\widehat{\log p}_{\mathrm{cond},K}^{\mathrm{IWAE}}(x^{\mathrm{res}}\mid x^{\mathrm{pre}})
:=
\log \widehat p_{\mathrm{IWAE},K}(x^{\mathrm{pre}},x^{\mathrm{res}})
-
\log \widehat p_{\mathrm{IWAE},K}(x^{\mathrm{pre}}).
\label{eq:app_conditional_iwae_plugin}
\end{equation}
It should be emphasized that both \eqref{eq:app_conditional_elbo_plugin} and \eqref{eq:app_conditional_iwae_plugin} are plug-in estimators. They are obtained by substituting unconditional estimators into the conditional identity, and they do not automatically inherit a lower-bound property from the unconditional estimators. Strictly speaking, they should therefore be interpreted as practical estimators of conditional log-likelihood.

For a block-causal prior, the conditional estimator can also be written in a blockwise form aligned with the architecture. Suppose the latent sequence is partitioned into prefix blocks and response blocks:
\begin{equation}
z=(z^{(1)},\dots,z^{(B)},z^{(B+1)},\dots,z^{(B+R)}).
\label{eq:app_block_partition}
\end{equation}
If the prior admits a block-level causal factorization, then
\begin{equation}
p_{\psi}(z)
=
p_{\psi}(z^{(1)})
\prod_{b=2}^{B+R}p_{\psi}(z^{(b)}\mid z^{(<b)}).
\label{eq:app_block_causal_prior}
\end{equation}
Conditioned on the prefix latent $z^{\mathrm{pre}}:=z^{(1:B)}$, the conditional marginal of the response is
\begin{equation}
p(x^{\mathrm{res}}\mid x^{\mathrm{pre}},z^{\mathrm{pre}})
=
\int p_{\theta}(x^{\mathrm{res}}\mid x^{\mathrm{pre}},z^{\mathrm{pre}},z^{\mathrm{res}})
\,p_{\psi}(z^{\mathrm{res}}\mid z^{\mathrm{pre}})
\,\dd z^{\mathrm{res}},
\label{eq:app_conditional_marginal_given_prefix_latent}
\end{equation}
where $z^{\mathrm{res}}:=z^{(B+1:B+R)}$. For any approximate posterior $q_{\phi}(z^{\mathrm{res}}\mid x^{\mathrm{pre}},x^{\mathrm{res}},z^{\mathrm{pre}})$, one has the conditional ELBO
\begin{align}
\log p(x^{\mathrm{res}}\mid x^{\mathrm{pre}},z^{\mathrm{pre}})
\ge
&E_{q_{\phi}(z^{\mathrm{res}}\mid x^{\mathrm{pre}},x^{\mathrm{res}},z^{\mathrm{pre}})}\Bigl[
\log p_{\theta}(x^{\mathrm{res}}\mid x^{\mathrm{pre}},z^{\mathrm{pre}},z^{\mathrm{res}}) \\
&\qquad\qquad +\log p_{\psi}(z^{\mathrm{res}}\mid z^{\mathrm{pre}})
-\log q_{\phi}(z^{\mathrm{res}}\mid x^{\mathrm{pre}},x^{\mathrm{res}},z^{\mathrm{pre}})
\Bigr].
\label{eq:app_conditional_elbo_given_prefix_latent}
\end{align}
Marginalizing over the true posterior of the prefix latent then yields
\begin{equation}
p(x^{\mathrm{res}}\mid x^{\mathrm{pre}})
=
\int p(x^{\mathrm{res}}\mid x^{\mathrm{pre}},z^{\mathrm{pre}})
\,p(z^{\mathrm{pre}}\mid x^{\mathrm{pre}})\,\dd z^{\mathrm{pre}}.
\label{eq:app_conditional_with_prefix_posterior}
\end{equation}
Thus, a strict conditional probability requires not only marginalization over the response latent, but also an additional integration over the posterior uncertainty of the prefix latent. In practice, two approximations are common: one uses the joint-minus-prefix estimator in \eqref{eq:app_conditional_identity}, while the other approximates $p(z^{\mathrm{pre}}\mid x^{\mathrm{pre}})$ with the encoder mean or a single encoder sample and then evaluates \eqref{eq:app_conditional_elbo_given_prefix_latent}.

If the response contains only one new block, the above expression reduces to a single-block response score:
\begin{align}
\mathcal S_{\mathrm{resp}}(x^{(B+1)};x^{(1:B)},z^{(1:B)})
:=
&E_{q_{\phi}(z^{(B+1)}\mid x^{(1:B+1)},z^{(1:B)})}\Bigl[
\log p_{\theta}(x^{(B+1)}\mid x^{(1:B)},z^{(1:B+1)}) \\
&\qquad\qquad +\log p_{\psi}(z^{(B+1)}\mid z^{(1:B)})
-\log q_{\phi}(z^{(B+1)}\mid x^{(1:B+1)},z^{(1:B)})
\Bigr].
\label{eq:app_single_block_score}
\end{align}
This form is especially useful for scoring multiple-choice options, ranking candidate responses, and analyzing whether the separation between candidates is driven mainly by the prior term or the decoder term.

\subsection{Numerical Evaluation of the CNF Prior and Estimation Algorithms}
\label{app:algorithms}

To evaluate $\log p_{\psi}(z_0^{(k)})$ in \eqref{eq:app_iwae_weight}, one solves the augmented ODE
\begin{equation}
\frac{\dd}{\dd t}
\begin{bmatrix}
z_t \\
\ell_t
\end{bmatrix}
=
\begin{bmatrix}
\vv_{\psi}(z_t,t) \\
\nabla\!\cdot \vv_{\psi}(z_t,t)
\end{bmatrix},
\qquad
\begin{bmatrix}
z_0 \\
\ell_0
\end{bmatrix}
=
\begin{bmatrix}
z_0^{(k)} \\
0
\end{bmatrix}.
\label{eq:app_augmented_ode}
\end{equation}
Integrating from $t=0$ to $t=1$ gives $(z_1^{(k)},\ell_1^{(k)})$, and therefore
\begin{equation}
\log p_{\psi}(z_0^{(k)})
=
\log p_1(z_1^{(k)})+\ell_1^{(k)}.
\label{eq:app_prior_eval_again}
\end{equation}
In high dimensions, the divergence can be approximated by the Hutchinson trace estimator:
\begin{equation}
\nabla\!\cdot \vv_{\psi}(z_t,t)
=
\mathrm{Tr}\!\left(\frac{\partial \vv_{\psi}(z_t,t)}{\partial z_t}\right)
=
\E_{\veps}\left[\veps^{\top}
\frac{\partial \vv_{\psi}(z_t,t)}{\partial z_t}
\veps\right].
\label{eq:app_hutchinson}
\end{equation}
In practice, one usually fixes the same random vector $\veps$ within a single ODE solve so that the induced dynamics remain consistent along that trajectory.

We now present two estimators corresponding to the above formulas. To avoid extra package dependencies, the algorithms are written in directly compilable pseudo-code style.

\textbf{Algorithm A.1: Unconditional Log-Likelihood Estimation (ELBO / IWAE)}
\begin{quote}
\textbf{Input:} sequence $x$; encoder $q_{\phi}$; decoder $p_{\theta}$; CNF prior $p_{\psi}$; sample number $K$; estimation mode $\texttt{mode}\in\{\texttt{ELBO},\texttt{IWAE}\}$.\\
\textbf{Procedure:}
\begin{enumerate}[leftmargin=2.2em]
    \item For each $k=1,\dots,K$, sample $z_0^{(k)}\sim q_{\phi}(z_0\mid x)$.
    \item Starting from $(z_0^{(k)},0)$, solve \eqref{eq:app_augmented_ode} to $t=1$ and obtain $(z_1^{(k)},\ell_1^{(k)})$.
    \item Compute $\log p_{\psi}(z_0^{(k)})=\log p_1(z_1^{(k)})+\ell_1^{(k)}$.
    \item Compute $\log p_{\theta}(x\mid z_0^{(k)})$ and $\log q_{\phi}(z_0^{(k)}\mid x)$.
    \item Construct the importance weight
    \[
    \log w^{(k)}
    \gets
    \log p_{\theta}(x\mid z_0^{(k)})
    +\log p_{\psi}(z_0^{(k)})
    -\log q_{\phi}(z_0^{(k)}\mid x).
    \]
    \item If $\texttt{mode}=\texttt{ELBO}$, return
    \[
    \widehat{\log p}(x)=\frac{1}{K}\sum_{k=1}^{K}\log w^{(k)};
    \]
    otherwise return
    \[
    \widehat{\log p}(x)=\log\!\left(\frac{1}{K}\sum_{k=1}^{K}e^{\log w^{(k)}}\right).
    \]
\end{enumerate}
\textbf{Output:} an unconditional log-likelihood estimate $\widehat{\log p}(x)$ for sequence $x$.
\end{quote}

\textbf{Algorithm A.2: Conditional Log-Likelihood Estimation (Joint Minus Prefix)}
\begin{quote}
\textbf{Input:} prefix $x^{\mathrm{pre}}$; response $x^{\mathrm{res}}$; sample number $K$; estimation mode $\texttt{mode}\in\{\texttt{ELBO},\texttt{IWAE}\}$.\\
\textbf{Procedure:}
\begin{enumerate}[leftmargin=2.2em]
    \item Construct the full sequence $x^{\mathrm{full}}=(x^{\mathrm{pre}},x^{\mathrm{res}})$.
    \item Call Algorithm A.1 to estimate $\widehat{\log p}(x^{\mathrm{full}})$.
    \item Call Algorithm A.1 to estimate $\widehat{\log p}(x^{\mathrm{pre}})$.
    \item Compute
    \[
    \widehat{\log p}(x^{\mathrm{res}}\mid x^{\mathrm{pre}})
    \gets
    \widehat{\log p}(x^{\mathrm{full}})-\widehat{\log p}(x^{\mathrm{pre}}).
    \]
\end{enumerate}
\textbf{Output:} a conditional log-likelihood estimate $\widehat{\log p}(x^{\mathrm{res}}\mid x^{\mathrm{pre}})$.
\end{quote}

Algorithm A.1 applies to both ELBO-style and IWAE-style estimators. Algorithm A.2 treats it as a black-box unconditional likelihood estimator and combines two calls through the conditional identity. If the task focuses only on the local score of a single newly generated block, then \eqref{eq:app_single_block_score} can be used directly to rank candidate responses.

\section{Comparison Under a Unified Markov Path View}
\label{app:comparison}

This appendix further develops the mathematical comparison behind the main-text discussion of unified text modeling. The central question is not which method is more diffusion-like at a superficial level, but rather: into what state space and along what type of path does each method factorize the text distribution, and what role does that path play in generation?

\subsection{A Unified Framework for Process-Based Generative Models}
\label{app:unified_process}

To establish a common language, consider the abstract definition of a process-based generative model. Let
\begin{equation}
Y=(Y_t)_{t\in\mathcal T},
\qquad Y_t\in\mathcal S,
\label{eq:app_unified_path_process}
\end{equation}
where $\mathcal S$ is the state space and $\mathcal T$ is either discrete time or continuous time. If there exists a family of transition kernels
\begin{equation}
K_{t\leftarrow s}(\dd y_t\mid y_s),
\qquad s\le t,
\label{eq:app_unified_transition_kernel}
\end{equation}
satisfying the Chapman--Kolmogorov relation, then $Y$ forms a Markov process. If one additionally specifies an emission kernel
\begin{equation}
e_{\theta}(x\mid y_{t_\star}),
\label{eq:app_unified_emission}
\end{equation}
then the text distribution can be written uniformly as
\begin{equation}
p_{\theta}(x)=\int e_{\theta}(x\mid y_{t_\star})\,p_{\theta}(\dd Y).
\label{eq:app_unified_path_marginal}
\end{equation}

Within this framework, the difference among AR, LLaDA, Plaid, and \method is not simply whether each admits a stepwise sampling algorithm. The essential distinction is that they factorize the text distribution over different state spaces and assign fundamentally different roles to the underlying paths. More specifically, they differ at least in the following six aspects: state space, state trajectory object, transition mechanism, the role of the path in generation, the point at which the observation is read out, and the tightness of the training objective relative to the true log-likelihood.

This unified view shows that the key question is not which method uses diffusion, but rather what type of distributional complexity is decomposed into what kind of path over what kind of state space, and whether that path is an observation-dependent corruption path or an observation-independent latent prior path.

\subsection{Difference from Autoregressive Models: Why Diffusion}
\label{app:why_diffusion}

AR models text directly via the chain rule:
\begin{equation}
p_{\eta}^{\mathrm{AR}}(x)=\prod_{i=1}^{L} p_{\eta}(x_i\mid x_{<i}).
\label{eq:app_ar_factorization}
\end{equation}
Its negative log-likelihood is
\begin{equation}
-\log p_{\eta}^{\mathrm{AR}}(x)=\sum_{i=1}^{L}-\log p_{\eta}(x_i\mid x_{<i}).
\label{eq:app_ar_nll}
\end{equation}
The advantage of AR is therefore exact token-level likelihood and the most direct training and evaluation protocol. Its limitation comes from the fixed left-to-right conditioning order.

Under the unified path framework, AR can be written using prefix states
\begin{equation}
S_i:=x_{1:i},
\qquad S_0:=\varnothing.
\label{eq:app_prefix_state_def}
\end{equation}
Then $(S_i)_{i=0}^{L}$ forms a time-inhomogeneous Markov chain, whose one-step transition kernel is exactly the AR conditional distribution:
\begin{equation}
\mathbb P(S_i=s_i\mid S_{0:i-1})
=
\mathbb P(S_i=s_i\mid S_{i-1}=s_{i-1})
=
p_{\eta}(x_i\mid x_{<i}).
\label{eq:app_ar_as_markov_chain}
\end{equation}
Thus, the core inductive bias of AR is not merely Markovianity, but the fact that conditioning is restricted to the unique prefix filtration
\begin{equation}
\sigma(X_1)\subset \sigma(X_{1:2})\subset \cdots \subset \sigma(X_{1:L}).
\label{eq:app_ar_filtration}
\end{equation}

By contrast, the model distribution of \method is not defined through a fixed token order, but by latent marginalization:
\begin{equation}
p^{\mathrm{Cola\ DLM}}(x)
=
\int p_{\theta}(x\mid z_0)\,p_{\psi}(z_0)\,\dd z_0.
\label{eq:app_ldlm_factorization}
\end{equation}
If the decoder itself is not left-to-right autoregressive, the resulting model distribution is not tied to any pre-specified token ordering. In this sense, the non-AR nature of \method comes from two levels: the prior $p_{\psi}(z_0)$ is defined by continuous transport rather than token-wise chain factorization, and if the decoder is also non-AR, then generation becomes a two-stage procedure of first sampling global semantic latent variables and then realizing text conditionally.

From the optimization viewpoint, both families minimize
\begin{equation}
\KL\bigl(\pdata(x)\,\|\,p_{\mathrm{model}}(x)\bigr),
\label{eq:app_common_goal}
\end{equation}
but they parameterize $p(x)$ differently. AR uses a chain-rule factorization, whereas \method uses latent marginalization. Therefore, the strict meaning of \emph{why diffusion} here is not to avoid maximum likelihood, but to improve the fit of $p_{\psi}(z_0)$ to the aggregated posterior through a more expressive continuous prior family.

This can be quantified exactly. For any two candidate priors $p_a$ and $p_b$,
\begin{equation}
\E_{\bar q_{\phi}(z_0)}[\log p_b(z_0)-\log p_a(z_0)]
=
\KL\bigl(\bar q_{\phi}(z_0)\,\|\,p_a(z_0)\bigr)
-
\KL\bigl(\bar q_{\phi}(z_0)\,\|\,p_b(z_0)\bigr).
\label{eq:app_prior_improvement_identity}
\end{equation}
Hence, whenever the flow/CNF prior is closer to $\bar q_{\phi}(z_0)$ than a simple Gaussian prior, it improves the average ELBO.

\subsection{Difference from Discrete Denoising Language Models: Why Continuous}
\label{app:why_continuous}

Discrete denoising language models such as LLaDA operate directly in token space. A typical training objective can be written as
\begin{equation}
\Lcal_{\mathrm{MDM}}(\theta)
:=
\E_{t\sim U[0,1],\,x_0,\,x_t}
\left[
\frac{1}{t}
\sum_{i=1}^{L}
\mathbf 1[x_t^i=M]\bigl(-\log p_{\theta}(x_0^i\mid x_t)\bigr)
\right],
\label{eq:app_llada_obj}
\end{equation}
and satisfies
\begin{equation}
-\E_{\pdata(x_0)}\log p_{\theta}(x_0)
\le
\Lcal_{\mathrm{MDM}}(\theta).
\label{eq:app_llada_bound}
\end{equation}
Thus, LLaDA learns a token-space distribution through reverse recovery under random mask ratios. Its advantage is that it has no explicit encoder and no latent bottleneck; its limitation is that it always remains in a discrete state space.

More precisely, the forward masking process of LLaDA can be embedded into a continuous-time Markov chain. For a single token, let the state space be $\mathcal V\cup\{M\}$ and define the generator
\begin{equation}
G_{\tau}(a,M)=\lambda(\tau),
\qquad
G_{\tau}(a,a)=-\lambda(\tau),
\qquad
G_{\tau}(M,M)=0,
\label{eq:app_llada_ctmc_generator}
\end{equation}
where $a\in\mathcal V$. Then the transition probability from any non-mask state to the absorbing mask state is
\begin{equation}
\mathbb P(X_{\tau}=M\mid X_0=a)
=
1-\exp\Bigl(-\int_0^{\tau}\lambda(u)\,\dd u\Bigr).
\label{eq:app_llada_mask_probability}
\end{equation}
If the right-hand side is identified with $t\in[0,1]$, one recovers the forward kernel in which each token is masked with probability $t$, as used in \eqref{eq:app_llada_obj}. Therefore, LLaDA can be understood rigorously as a reverse recovery problem over a CTMC with discrete states and absorption into the mask state.

By contrast, \method models $p_{\psi}(z_0)$ in continuous latent space rather than performing discrete diffusion in token space. This continuous formulation provides three strict advantages. First, continuous-time change-of-variables yields an explicit log-density, namely \eqref{eq:app_prior_log_density}. Second, vector-field regression in Euclidean space allows the use of Flow Matching to learn prior geometry directly. Third, continuous latent variables naturally support interpolation, gradient guidance, and representation editing.

Accordingly, the meaning of \emph{why continuous} is not that continuous representations are inherently superior to discrete ones. Rather, compared with a discrete token-space corruption model, continuous latent space permits a more direct description of probability geometry and reversible transport. Whether this yields actual gains still depends on whether the continuous structure aligns with the low-dimensional semantic organization of the data.

\subsection{Difference from Plaid: Why Latent}
\label{app:why_latent}

Plaid is also a continuous method, but it makes continuous a token-aligned representation that stays close to the original sequence, rather than an explicit semantic latent variable. A typical forward process can be written as
\begin{equation}
q(h_0\mid x)=\Ncal(\widetilde x,\sigma^2(0)I),
\qquad
q(h_t\mid h_s)=\Ncal\bigl(h_s,\sigma^2(t)-\sigma^2(s)\bigr),
\label{eq:app_plaid_forward}
\end{equation}
where $\widetilde x=\mathrm{Embed}(x)$. Hence, as the noise level approaches zero, the state of Plaid still remains tightly attached to the observation space. By contrast, \method explicitly separates global semantic representation from textual realization through
\begin{equation}
p(x)=\int p_{\theta}(x\mid z_0)p_{\psi}(z_0)\,\dd z_0.
\label{eq:app_px_independent_of_encoder}
\end{equation}

This distinction is not merely about where continuity is introduced; it is about how the model is factorized. If one further lets
\begin{equation}
q_{\phi}(z_0\mid x)=\Ncal(E(x),\sigma_0^2I),
\qquad
\sigma_0^2\to 0,
\label{eq:app_plaid_as_limit}
\end{equation}
and $E(x)$ is nearly invertible, then \method degenerates to the limiting case where $z_0$ is almost just a continuousized representation of the original sequence. This limit clarifies the structural relation: Plaid is closer to the near-observation, non-compressive limit of \method, while the genuinely new ingredient of \method is the latent decomposition itself.

The mathematical core of this difference is captured by the information decomposition of the average ELBO in \eqref{eq:app_avg_elbo_decomp}. That decomposition shows that representation learning in \method splits into three parts: the conditional reconstruction term controls textual realization given the latent, the mutual information $I_q(X;Z_0)$ controls how much global information about the text is retained, and the KL between the aggregated posterior and the prior controls the difficulty of prior matching. Precisely because this explicit latent variable is introduced, the model can encode global planning, topic, and style constraints into $z_0$, while delegating local token realization to the decoder.

Therefore, the strict answer to \emph{why latent} is that a latent variable is not merely a continuous relaxation of a discrete object. It is an intermediate random variable that explicitly participates in marginalization, allowing the text generation problem to be decomposed into prior modeling plus conditional realization. This decomposition yields statistical benefits only when the data indeed contain low-rate but high-value semantic structure; this issue is analyzed further in Appendix~\ref{app:advantages}.

\subsection{Why Cola DLM Is Not Merely a Latent Reverse-Diffusion LM}
\label{app:not_simple_latent_diffusion}

To avoid conceptual confusion, we now make the distinction more precise. Suppose a narrow latent diffusion LM is defined as follows: one first specifies a data-conditioned forward noising process on a latent state,
\begin{equation}
q(z_{0:T}\mid x),
\label{eq:app_narrow_latent_diffusion_def}
\end{equation}
and then learns its reverse process to gradually recover a latent state used for text generation. The core structure of such a model is therefore the reversal of a data-conditioned forward corruption process.

Under this narrow definition, \method is not simply a latent reverse-diffusion LM. First, the continuous path in \method lies in prior transport:
\begin{equation}
z_1\sim p_1(z_1),
\qquad
z_0=\Phi^{\psi}_{0\leftarrow 1}(z_1),
\qquad
x\sim p_{\theta}(x\mid z_0).
\label{eq:app_ldlm_prior_path_again}
\end{equation}
This path first describes how to generate a semantic prior sample in latent space from a simple Gaussian. During generation, it does not depend on any specific observation $x$. By contrast, the forward paths in Plaid and LLaDA describe how a given sample $x$ is perturbed into a high-noise state. Thus, although all three can be described as path models, the path in \method is prior transport, whereas the paths in Plaid and LLaDA are data corruption.

Second, in \method the encoder appears only in variational inference:
\begin{equation}
\log p(x)
\ge
\E_{q_{\phi}(z_0\mid x)}\Bigl[
\log p_{\theta}(x\mid z_0)+\log p_{\psi}(z_0)-\log q_{\phi}(z_0\mid x)
\Bigr].
\label{eq:app_encoder_only_in_elbo}
\end{equation}
It does not participate in the generation procedure \eqref{eq:app_ldlm_prior_path_again}, and is therefore part of the variational posterior. In Plaid and LLaDA, however, the forward process $q$ is part of the model definition because the reverse model is trained against that corruption law. In short, the former $q$ belongs to inference, whereas the latter $q$ belongs to model construction.

Finally, the marginal distribution of \method is always defined by the hierarchical latent-variable identity
\begin{equation}
p(x)=\int p_{\theta}(x\mid z_0)p_{\psi}(z_0)\,\dd z_0.
\label{eq:app_ldlm_hierarchical_identity}
\end{equation}
Strictly speaking, \method is therefore first and foremost a hierarchical latent-variable language model with a CNF prior. Flow or diffusion is only a way to make the prior family more expressive, not the entirety of the model itself.

\section{Theoretical Advantages, Applicability, and Limitations}
\label{app:advantages}

We finally compare \method with AR, LLaDA, and Plaid under a unified theoretical criterion. The goal is not to argue that a more complex model should be stronger, but to ask under what conditions \method is expected to be better, and under what conditions it is expected to be worse.

\subsection{A Unified Population-Level Comparison Criterion}
\label{app:population_risk}

For any model class $\mathcal M$, define its irreducible approximation error with respect to the true data distribution $\pdata(x)$ as
\begin{equation}
\mathcal E(\mathcal M)
:=
\inf_{p\in\mathcal M}\KL\bigl(\pdata(x)\,\|\,p(x)\bigr).
\label{eq:app_model_class_error}
\end{equation}
This quantity measures the best unavoidable mismatch of the model class even after ignoring finite-sample effects and optimization error.

For any model trained by exact negative log-likelihood, one has the standard identity
\begin{equation}
\inf_{p\in\mathcal M}\E_{\pdata(x)}[-\log p(x)]
=
H(\pdata)+\mathcal E(\mathcal M),
\label{eq:app_population_risk_exact_nll}
\end{equation}
where $H(\pdata):=-\E_{\pdata(x)}\log \pdata(x)$ is the data entropy. Therefore, for exact-NLL models, comparing theoretical limits is equivalent to comparing $\mathcal E(\mathcal M)$.

AR belongs to this class, and its population risk is
\begin{equation}
\E_{\pdata(x)}[-\log p^{\mathrm{AR}}_{\eta}(x)]
=
H(\pdata)+\KL\bigl(\pdata\,\|\,p^{\mathrm{AR}}_{\eta}\bigr).
\label{eq:app_ar_population_risk}
\end{equation}
By contrast, \method is trained through the ELBO, so for each sample
\begin{equation}
-\Lcal_{\mathrm{ELBO}}(x)
=
-\log p_{\theta,\psi}(x)
+
\KL\bigl(q_{\phi}(z_0\mid x)\,\|\,p_{\theta,\psi}(z_0\mid x)\bigr).
\label{eq:app_ldlm_elbo_gap_per_sample}
\end{equation}
Taking expectation over the data distribution gives
\begin{align}
\E_{\pdata(x)}[-\Lcal_{\mathrm{ELBO}}(x)]
&=
H(\pdata)
+\KL\bigl(\pdata\,\|\,p_{\theta,\psi}\bigr)
+\mathcal G^{\mathrm{infer}}_{\mathrm{Cola\ DLM}}(\phi,\theta,\psi),
\label{eq:app_ldlm_population_risk}
\end{align}
where
\begin{equation}
\mathcal G^{\mathrm{infer}}_{\mathrm{Cola\ DLM}}(\phi,\theta,\psi)
:=
\E_{\pdata(x)}\KL\bigl(q_{\phi}(z_0\mid x)\,\|\,p_{\theta,\psi}(z_0\mid x)\bigr)
\ge 0
\label{eq:app_ldlm_inference_gap}
\end{equation}
is the inference gap.

Similarly, the variational objective of Plaid can be written as true NLL plus a reverse/path-posterior gap, and the mask-denoising objective of LLaDA can be written as true token-space NLL plus a surrogate gap. Hence all four model classes can be unified as
\begin{equation}
\text{population risk}
=
H(\pdata)+\text{model mismatch}+\text{additional gap induced by the training objective}.
\label{eq:app_unified_population_risk}
\end{equation}
Therefore, theoretical superiority does not follow from model complexity alone. It depends on whether the total statistical burden is smaller.

Define the total statistical burden of \method as
\begin{equation}
\mathfrak R_{\mathrm{Cola\ DLM}}
:=
\mathcal E(\mathcal M_{\mathrm{Cola\ DLM}})
+
\inf_{\phi,\theta,\psi}\mathcal G^{\mathrm{infer}}_{\mathrm{Cola\ DLM}}(\phi,\theta,\psi),
\label{eq:app_ldlm_total_burden}
\end{equation}
and define analogous quantities for AR, Plaid, and LLaDA. Then, at the population level, \method is better than a comparison class if and only if its total statistical burden is smaller. For example,
\begin{equation}
\mathfrak R_{\mathrm{Cola\ DLM}}<\mathfrak R_{\mathrm{AR}}
\iff
\text{\method is better than AR at the population level}.
\label{eq:app_ldlm_better_than_ar}
\end{equation}
This gives a unified and rigorous comparison criterion.

\subsection{Rate-Distortion and the Applicability Boundary of the Latent Bottleneck}
\label{app:rate_distortion}

The unified criterion still leaves a deeper question unanswered: why can latent decomposition reduce model mismatch in some cases but hurt modeling in others? The answer comes from conditional entropy, mutual information, and rate-distortion structure.

For any encoder $q_{\phi}(z_0\mid x)$ and decoder $p_{\theta}(x\mid z_0)$, under the joint distribution $q(x,z_0)=\pdata(x)q_{\phi}(z_0\mid x)$ one always has
\begin{equation}
\E_{q(x,z_0)}[-\log p_{\theta}(x\mid z_0)]
\ge
H_q(X\mid Z_0).
\label{eq:app_reconstruction_lower_bound}
\end{equation}
By the mutual-information identity
\begin{equation}
H_q(X)=H_q(X\mid Z_0)+I_q(X;Z_0),
\label{eq:app_entropy_mi_identity}
\end{equation}
together with $q(x)=\pdata(x)$, it follows that
\begin{equation}
H_q(X\mid Z_0)=H(\pdata)-I_q(X;Z_0).
\label{eq:app_conditional_entropy_as_entropy_minus_mi}
\end{equation}
Therefore, the reconstruction cost satisfies
\begin{equation}
\E_{q(x,z_0)}[-\log p_{\theta}(x\mid z_0)]
\ge
H(\pdata)-I_q(X;Z_0).
\label{eq:app_reconstruction_bound_by_mi}
\end{equation}
This shows that if $I_q(X;Z_0)$ is small, then the latent retains little information about the text and the lower bound on the best achievable reconstruction loss rises. If $I_q(X;Z_0)$ is large enough, then the decoder can realize the text more easily given the latent.

To formalize this further, define the representation rate-distortion function
\begin{equation}
\mathcal D(R)
:=
\inf_{q(z_0\mid x):\,I_q(X;Z_0)\le R}
\ \inf_{p_{\theta}(x\mid z_0)}
\E_{q(x,z_0)}[-\log p_{\theta}(x\mid z_0)].
\label{eq:app_representation_rate_distortion}
\end{equation}
It represents the minimum achievable average reconstruction cost when the latent channel is allowed to transmit at most $R$ nats of information about the text.

If, for a given data distribution, $\mathcal D(R)$ falls quickly to a low level already at small $R$, then the data admit a low-rate but high-value latent representation, and a latent bottleneck is more likely to help. If substantial reduction in reconstruction error occurs only when $R$ approaches $H(X)$, then the data are nearly incompressible and a latent bottleneck is more likely to become an additional burden.

Therefore, whether \method benefits from the bottleneck is ultimately governed by the rate-distortion curve of the data itself. If global topics, plans, style, and constraints occupy a relatively low information rate yet strongly determine the final realization, then encoding them into $z_0$ is efficient. If nearly every token-level detail is irreducible, then latent compression does not bring genuine statistical benefit.

\subsection{Structured Generation Regime: When Cola DLM Better Matches the True Mechanism}
\label{app:structured_generation}

To formalize the above intuition one step further, consider the following structured generation assumption: there exists a global variable $G$ such that the true data distribution can be written as
\begin{equation}
\pdata(x)=\int p^{\star}(x\mid g)\,p^{\star}(g)\,\dd g,
\label{eq:app_global_local_factorization}
\end{equation}
where $G$ determines topic, plan, discourse structure, style, or other global constraints, and $p^{\star}(x\mid g)$ realizes these global factors into concrete text. If, in addition,
\begin{equation}
H(X\mid G)\ll H(X),
\qquad
\dim(G)\ll \dim(E(X)),
\label{eq:app_good_latent_regime}
\end{equation}
then the data are in a favorable regime dominated by low-dimensional but high-value latent factors.

In this regime, the inductive bias of \method is better aligned with the true generation mechanism. The encoder recovers posterior information about $G$ from $x$, the prior learns a global semantic distribution close to $p^{\star}(g)$, and the decoder learns textual realization conditioned on the global factor. In other words, the potential advantage of \method does not come from being more complicated, but from splitting a hard problem into two better-matched subproblems:
\begin{equation}
\text{learning }\pdata(x)
\quad\leadsto\quad
\text{learning }p^{\star}(g)
\text{ and }
p^{\star}(x\mid g).
\label{eq:app_global_local_problem_split}
\end{equation}

If the encoder can recover information about $G$ with a small inference gap, $p_{\psi}(z_0)$ can approximate the aggregated latent distribution corresponding to $G$, and $p_{\theta}(x\mid z_0)$ can approximate the conditional realization well, then the statistical burden of \method is determined primarily by the modeling complexity of the low-dimensional global factor rather than that of the full surface sequence. This is the fundamental source of its possible advantage over AR, LLaDA, and Plaid.

\subsection{Three Curves, Theoretical Advantages, and Limitations}
\label{app:three_curves}

In the opposite regime, where \eqref{eq:app_good_latent_regime} does not hold, \method may systematically underperform more direct modeling routes. Typical unfavorable cases include the absence of a low-rate sufficient representation, settings where local surface form itself carries the main semantics, or regimes where a strong token-space model is already adequate. In such cases, the additional costs of \method mainly come from three sources:
\begin{equation}
\begin{aligned}
\text{(i)}\;& \text{inference gap } \mathcal G^{\mathrm{infer}}_{\mathrm{Cola\ DLM}}, \\
\text{(ii)}\;& \text{the elevated reconstruction lower bound } H(X \mid Z_0) \\
&\text{induced by the bottleneck}, \\
\text{(iii)}\;& \text{the optimization complexity of jointly training encoder,} \\
&\text{prior, and decoder}.
\end{aligned}
\label{eq:app_ldlm_three_costs}
\end{equation}
Hence, the success or failure of \method can be reduced to a competition among three curves: the representation rate-distortion curve $\mathcal D(R)$, the prior-approximation curve, and the inference-gap curve $\mathcal G^{\mathrm{infer}}_{\mathrm{Cola\ DLM}}$. Only when all three favor \method does latent decomposition become a true theoretical advantage.

These conclusions can be summarized more directly as follows.

\textbf{Theoretical Advantages.}
Relative to AR, if the decoder is non-autoregressive, then \method is not constrained by a single left-to-right ordering and can determine global semantics in latent space before realizing text, which gives it a potentially more natural inductive bias for infilling, constraint satisfaction, and global planning. Relative to discrete DLMs, \method has the geometric structure of continuous latent space and therefore supports explicit log-densities, vector-field regression, interpolation, and latent editing. Relative to Plaid, \method explicitly introduces semantic compression, allowing the model to learn a low-dimensional global representation. If the data indeed exhibit a hierarchical structure of low-dimensional global semantics and high-dimensional local realization, this inductive bias is closer to the true mechanism. Finally, the information decomposition of the average ELBO makes it natural to analyze the representation, prior, and decoder as separate components.

\textbf{Theoretical Limitations.}
\method inherently suffers from a variational gap:
\begin{equation}
\log p(x)-\Lcal_{\mathrm{ELBO}}(x)
=
\KL\bigl(q_{\phi}(z_0\mid x)\,\|\,p(z_0\mid x)\bigr).
\label{eq:app_variational_gap}
\end{equation}
If the encoder is poor, the model pays a systematic likelihood penalty. The latent bottleneck also introduces the risk of posterior collapse: if the decoder is too strong, the latent dimension is too small, or the regularization is too strong, then $I_q(X;Z_0)$ may become too small and the encoder may stop transmitting useful information. In addition, accurate evaluation of $\log p_{\psi}(z_0)$ often requires ODE integration and divergence estimation, making evaluation more expensive than in AR models. Importantly, the potential advantage of \method does not follow automatically from using latent variables and flows. It relies on the inductive assumption that the data contain a favorable low-dimensional semantic latent structure. This requires experiments to verify.

\section{Existence of Global Semantic Structures}
\label{sec:global_semantic_structure}

The goal of this section is to establish a falsifiable null hypothesis and, based on it, show that if the latent representation consists only of purely local and separable components, then the optimal timestep shift should not exhibit a systematic drift as the latent dimension varies. We then compare the experimentally observed shift drift against this null hypothesis, thereby obtaining a more robust conclusion regarding the existence of shared structures. To avoid conflating conclusions that can be established rigorously with structural models introduced for interpretation, we first formulate an abstract null hypothesis.

\begin{assumption}[Null hypothesis of purely separable representations]
\label{ass:global_semantic_null}
Let $d$ denote the latent dimension, let $\delta$ denote the timestep shift, and let $\mathcal J_d(\delta)$ denote the overall performance objective under a fixed semantic metric at latent dimension $d$. Assume that the following conditions hold:
\begin{enumerate}
    \item \textbf{Local separability:} there is no cross-dimensional joint structure in the latent representation that affects semantics, and the overall semantic performance is obtained by independently aggregating the contributions of local dimensions;
    \item \textbf{Homogeneity:} each local dimension shares the same optimal temporal scale with respect to shift; changing $d$ only changes the number of independent dimensions and does not alter the functional form of each local subproblem;
    \item \textbf{Additive aggregation:} the overall objective can be written as
    \begin{equation}
    \mathcal J_d(\delta)=\sum_{i=1}^{d} j_i(\delta)
    \qquad\text{or}\qquad
    \mathcal J_d(\delta)=\frac{1}{d}\sum_{i=1}^{d} j_i(\delta),
    \label{eq:separable_objective}
    \end{equation}
    where all $j_i$ attain their optimum at the same point. More strongly, if homogeneity holds exactly, then there exists a common function $j$ and constants $a_d>0,b_d$ such that
    \begin{equation}
    \mathcal J_d(\delta)=a_d\,j(\delta)+b_d.
    \label{eq:affine_common_profile}
    \end{equation}
\end{enumerate}
\end{assumption}

Assumption~\ref{ass:global_semantic_null} does not presuppose any specific network architecture. Rather, it describes a theoretical limit in which semantics is carried entirely by mutually independent local dimensions. Under this limit, changing the latent dimension can at most modify the scale of the overall objective, but should not change the location of the optimal shift.

\begin{proposition}[Dimension-invariance of the optimal shift under the separable null]
\label{prop:shift_invariance_under_separable_null}
Under Assumption~\ref{ass:global_semantic_null}, let
\[
\delta_d^{\star}\in\arg\max_{\delta}\mathcal J_d(\delta)
\]
denote the optimal timestep shift at latent dimension $d$. Then for any two dimensions $d_1,d_2$, we have
\begin{equation}
\arg\max_{\delta}\mathcal J_{d_1}(\delta)
=
\arg\max_{\delta}\mathcal J_{d_2}(\delta).
\label{eq:shift_invariance_result}
\end{equation}
In particular, if the maximizer is unique, then
\begin{equation}
\delta_{d_1}^{\star}=\delta_{d_2}^{\star}.
\label{eq:shift_unique_invariance}
\end{equation}
Therefore, under purely local and separable representations, the optimal timestep shift should not drift systematically with the latent dimension.
\end{proposition}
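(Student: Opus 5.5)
The plan is to reduce everything to the affine representation \eqref{eq:affine_common_profile} and then use the elementary fact that a positive affine transformation preserves the set of maximizers.

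First, I will show that the additive forms in \eqref{eq:separable_objective} already imply \eqref{eq:affine_common_profile} under the homogeneity condition. If every local term equals a common profile up to an additive constant, say $j_i=j+c_i$, then the sum form gives
\[
\mathcal J_d=d\,j+\textstyle\sum_{i\le d}c_i .
\]
This is affine in $j$ with $a_d=d>0$. The averaged form gives $a_d=1$ with $b_d=\tfrac1d\sum_{i\le d} c_i$. If homogeneity is only assumed in the weaker form that all $j_i$ share a common maximizer $\delta^\circ$, I will argue directly instead. In that case $\sum_i j_i(\delta)\le\sum_i j_i(\delta^\circ)$ for every $\delta$, so $\delta^\circ$ maximizes $\mathcal J_d$ for every $d$.

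The core step is the affine case. Take $\mathcal J_d(\delta)=a_d j(\delta)+b_d$ with $a_d>0$. For any $\delta,\delta'$,
\[
\mathcal J_d(\delta)\ge\mathcal J_d(\delta') \iff j(\delta)\ge j(\delta').
\]
Because $a_d>0$, multiplication by $a_d$ is strictly order-preserving, and adding $b_d$ does not change the order. This gives $\arg\max_\delta\mathcal J_d=\arg\max_\delta j$, which does not depend on $d$. Applying this to $d_1$ and $d_2$ yields \eqref{eq:shift_invariance_result}. Uniqueness of the maximizer then makes the argmax a singleton, which gives \eqref{eq:shift_unique_invariance}.

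The main obstacle is not the algebra. It is being honest about which reading of homogeneity the statement actually needs. The weak reading, in which the $j_i$ only share an optimum, yields that the common point $\delta^\circ$ lies in every argmax. It does not yield equality of the full argmax sets, because individual $j_i$ could have additional maximizers that do not coincide. To get the set equality in \eqref{eq:shift_invariance_result}, I will invoke the strong form \eqref{eq:affine_common_profile}, which the assumption explicitly provides. Under only the weak form, I will state the conclusion for the unique-maximizer case, where the two readings agree.
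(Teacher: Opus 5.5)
Your proposal is correct, and its core step is the paper's proof: invoke the affine form \eqref{eq:affine_common_profile} with $a_d>0$, and note that a positive rescaling plus a $\delta$-independent offset leaves $\arg\max_\delta \mathcal J_d=\arg\max_\delta j$, which does not depend on $d$. Your extra treatment of the weaker reading is also correct. If the $j_i$ only share a maximizer, that point lies in every argmax set, so you get equality only when the maximizer is unique. The paper does not need this, because it uses the strong affine form directly.
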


\begin{proof}
By \eqref{eq:affine_common_profile}, for any $d$,
\[
\mathcal J_d(\delta)=a_d\,j(\delta)+b_d,
\qquad a_d>0.
\]
Since $b_d$ is independent of $\delta$ and $a_d$ only induces a positive rescaling, it follows that
\[
\arg\max_{\delta}\mathcal J_d(\delta)
=
\arg\max_{\delta}j(\delta).
\]
Hence the optimal point is independent of $d$, which proves \eqref{eq:shift_invariance_result} and \eqref{eq:shift_unique_invariance}.
\end{proof}

The key conclusion of Proposition~\ref{prop:shift_invariance_under_separable_null} is that what can be established rigorously is that if no shared or joint structure affecting semantics exists, then the optimal shift should not drift systematically with the latent dimension.

\begin{corollary}[Systematic drift refutes the purely separable null]
\label{cor:drift_refutes_separable_null}
If experiments show that, as the latent dimension $d$ varies, the optimal shift $\delta^{\star}(d)$ exhibits a stable, monotonic, and reproducible systematic drift, and if this drift cannot be explained by changes in parameter count, insufficient training, optimization instability, or evaluation noise, then Assumption~\ref{ass:global_semantic_null} does not hold.
\end{corollary}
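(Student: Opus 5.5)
The plan is a contrapositive argument built directly on Proposition~\ref{prop:shift_invariance_under_separable_null}. Suppose, for contradiction, that Assumption~\ref{ass:global_semantic_null} holds. Then, by the affine common-profile form \eqref{eq:affine_common_profile}, every $\mathcal J_d$ is a positive rescaling plus a shift of a single function $j$. Proposition~\ref{prop:shift_invariance_under_separable_null} then gives $\arg\max_\delta \mathcal J_{d_1}(\delta)=\arg\max_\delta \mathcal J_{d_2}(\delta)$ for all pairs $d_1,d_2$. So the population-level optimal shift map $d\mapsto\delta^\star(d)$ is constant when the maximizer is unique. When it is not unique, the argmax set itself is independent of $d$.

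The second step converts the empirical hypothesis into a statement about population objectives. The measured optimum $\hat\delta^\star(d)$ comes from a finite-compute, finite-evaluation estimate $\hat{\mathcal J}_d$. The corollary's exclusion clause is what lets us attribute a reproducible drift to the population objective $\mathcal J_d$ and not to estimation artifacts. Concretely, I would write
\[
\hat{\mathcal J}_d=\mathcal J_d+\varepsilon_d,
\]
where $\varepsilon_d$ collects the effects of parameter-count changes, undertraining, optimization instability, and evaluation noise. By hypothesis, these effects do not explain the drift. A stable, monotone, reproducible drift therefore means that the true maximizers $\delta^\star(d_1)\neq\delta^\star(d_2)$ for some $d_1\neq d_2$; in the non-unique case, that the argmax sets differ.

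This contradicts the conclusion of the first step, so at least one of the three conditions of Assumption~\ref{ass:global_semantic_null} fails. For the non-unique case I would use only the set equality \eqref{eq:shift_invariance_result}: a systematic monotone drift of any selected maximizer forces the sets to differ.

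The main obstacle is the second step. The corollary is only as rigorous as the translation from "drift not explained by confounders" to "population argmax sets differ." I would handle this by making it an explicit modeling stipulation: the exclusion clause is interpreted as $\arg\max\hat{\mathcal J}_d=\arg\max\mathcal J_d$ up to a tolerance smaller than the observed drift magnitude. Under that stipulation, the remainder of the proof is an immediate application of the proposition.
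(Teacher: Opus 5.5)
This is essentially the paper's proof: assume the null, invoke Proposition~\ref{prop:shift_invariance_under_separable_null} to get a $d$-independent optimum, and contradict the observed drift. Your explicit stipulation, that the exclusion clause lets the empirical argmax stand in for the population argmax, makes precise a step the paper leaves implicit, which is a reasonable improvement. One step fails in your non-unique case: a monotonically drifting \emph{selected} maximizer does not force the argmax sets to differ, since if $j$ has a plateau the sets coincide for every $d$ while a $d$-dependent selection can still drift. To close it, either assume uniqueness, as the paper implicitly does, or require the selection rule to depend only on the argmax set.
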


\begin{proof}
By Proposition~\ref{prop:shift_invariance_under_separable_null}, if Assumption~\ref{ass:global_semantic_null} holds, then the optimal shift must be identical across all latent dimensions. If experiments repeatedly show that the optimal shift drifts systematically with $d$, then this contradicts the necessary consequence of the null hypothesis. Therefore, the null hypothesis is rejected.
\end{proof}

The boundary of the conclusion in Corollary~\ref{cor:drift_refutes_separable_null} is clear: it directly rules out the hypothesis that the latent space is purely local and separable. Consequently, the strictly derivable conclusion is that some cross-dimensional shared or joint structure must exist in the latent representation. If one further wishes to interpret this structure as a global semantic factor, additional modeling assumptions and experimental evidence are required.

\paragraph{An equivalent formulation from an information-theoretic perspective}

The above conclusion admits a consistent restatement in information-theoretic terms. Let $s$ denote a latent variable associated with high-level semantics, and let $z$ denote the VAE latent. Then the diffusion forward process can be written as
\begin{equation}
z_t=\alpha_t z+\sigma_t\epsilon,
\qquad
\epsilon\sim\mathcal N(0,I),
\label{eq:diffusion_forward_latent}
\end{equation}
where $\alpha_t$ and $\sigma_t$ are determined by the noise schedule. If we further decompose the latent as
\begin{equation}
z=\phi(s)+u,
\label{eq:latent_signal_noise_decomposition}
\end{equation}
where $\phi(s)$ denotes the effective signal induced by the semantic variable and $u$ denotes encoding residuals or posterior uncertainty, then the observation received by the DiT at timestep $t$ becomes
\begin{equation}
z_t=\alpha_t\phi(s)+\underbrace{\alpha_t u+\sigma_t\epsilon}_{\text{total noise}}.
\label{eq:effective_noisy_channel}
\end{equation}
Hence, what matters to the DiT is not the raw timestep itself, but rather how much effective information about $s$ is still preserved in the latent at that timestep.

Under Assumption~\ref{ass:global_semantic_null}, if semantics is carried entirely by local independent dimensions, namely
\[
s=(s_1,\dots,s_d),\qquad z=(z_1,\dots,z_d),
\]
and if the channels are independent and identically distributed across dimensions, then the additivity of mutual information yields
\begin{equation}
I(s;z_t)=\sum_{i=1}^{d} I(s_i;z_{t,i}).
\label{eq:mi_additive_separable}
\end{equation}
If each local channel has the same information curve with respect to shift, then changing $d$ only rescales the total mutual information linearly and does not change which shift places the effective information in the range most suitable for semantic recovery. From this information-theoretic perspective, the essence of Proposition~\ref{prop:shift_invariance_under_separable_null} is that, under purely separable representations, varying the dimension does not alter the shift corresponding to the optimal semantic information regime.

Conversely, if the optimal shift drifts systematically with $d$, then the purely additive information structure corresponding to \eqref{eq:mi_additive_separable} can no longer hold. This indicates that the effective semantic information in the latent is not merely the sum of a set of completely homogeneous local subchannels, but is more likely to depend on some shared structure that only emerges after cross-dimensional aggregation.

\paragraph{A shared-factor model and the interpretation of shift scaling laws}

To further explain why systematic drift may exhibit a pattern similar to the resolution-dependent shift in Stable Diffusion, we introduce a stronger but analytically convenient structural model. Suppose that each latent dimension can be written as
\begin{equation}
z_i=A_i g+\xi_i,
\qquad i=1,\dots,d,
\label{eq:shared_factor_model}
\end{equation}
where $g\in\mathbb R^r$ denotes a low-dimensional shared factor, $A_i$ is a linear observation matrix, and $\xi_i$ is local independent noise. Here, $g$ may be understood as an abstract shared semantic state, without assuming that it is a single scalar quantity.

In the simplest setting, assume that $A_i\equiv A$ and that $\xi_i\sim\mathcal N(0,\sigma_{\xi}^2 I)$ are independent across dimensions. Standard results from linear Gaussian models imply that, as the latent dimension $d$ increases, the posterior precision for $g$ grows linearly, while the posterior covariance shrinks as $d^{-1}$. Equivalently, the effective signal-to-noise ratio for recovering the shared factor satisfies
\begin{equation}
\mathrm{SNR}_{\mathrm{eff}}(d)\propto d.
\label{eq:effective_snr_shared_factor}
\end{equation}

From an information-theoretic viewpoint, this means that the recoverable information about the shared factor $g$ increases with $d$. More specifically, if the total effective noise variance at diffusion timestep $t$ is denoted by $\sigma_{\mathrm{eff}}^2(t)$, then the mutual information about $g$ can be written as
\begin{equation}
I(g;z_t)
=
\frac{1}{2}\log\det\!\Bigl(
I+\alpha_t^2\Sigma_g^{1/2}A^\top
\bigl(\sigma_{\mathrm{eff}}^2(t)I\bigr)^{-1}
A\Sigma_g^{1/2}
\Bigr).
\label{eq:mi_shared_factor_general}
\end{equation}
Under a scalar or isotropic approximation, this further reduces to
\begin{equation}
I(g;z_t)
\approx
\frac{r}{2}\log\!\Bigl(1+c\cdot d\cdot \mathrm{SNR}_{\mathrm{eff}}(t)\Bigr),
\label{eq:mi_shared_factor_scaling}
\end{equation}
where the constant $c>0$ is determined by the observation matrix and the prior covariance of the shared factor.

If the effect of a shift $\delta$ is equivalent to translating the log-SNR of the denoising process, and if semantic recovery performance is primarily determined by whether the effective log-SNR of the shared factor falls within an appropriate range, then the shift required to maintain optimal semantic recovery must compensate for the dimension dependence induced by \eqref{eq:effective_snr_shared_factor} or \eqref{eq:mi_shared_factor_scaling}. This leads to a logarithmic relation of the form
\begin{equation}
\delta^{\star}(d)=a\log d+b,
\label{eq:log_scaling_shift_shared_factor}
\end{equation}
where the sign of $a$ depends on the specific time parameterization and the convention used to define the shift.

\begin{remark}[Structural homology with resolution-dependent shift]
\label{rem:resolution_shift_homology}
The significance of \eqref{eq:log_scaling_shift_shared_factor} is not that it uniquely identifies a true global variable, but rather that once a shared factor is repeatedly observed across multiple latent dimensions, the mechanism
\[
\text{increasing dimension} \;\Rightarrow\; \text{stronger effective SNR} \;\Rightarrow\; \text{logarithmic compensation in the optimal shift}
\]
arises naturally. This is structurally homologous to the intuition underlying resolution-dependent timestep shift in Stable Diffusion. The difference is that, in the present case, what is shared is a joint structure within the latent representation, whereas in the image case the shared content comes from more globally distributed information under higher resolution.
\end{remark}

It should be emphasized that the main role of the shared-factor model is to explain the form of the systematic drift, rather than to replace the rigorous contradiction argument above. The latter answers which hypotheses must fail if drift is observed, whereas the former explains what kind of structure such drift is most consistent with.

\paragraph{Relation to VAE logSNR}

The above analysis also explains another empirical phenomenon: even when the latent dimension is fixed, the optimal timestep shift may still change when the logSNR of the VAE posterior is altered. The reason is that the VAE is not a noiseless encoder. In \eqref{eq:latent_signal_noise_decomposition}, the term $u$ precisely represents posterior uncertainty. If the timestep $t$ is fixed, then the total noise covariance with respect to the semantic variable is
\begin{equation}
\Sigma_{\mathrm{noise}}(t)
=
\alpha_t^2\Sigma_u+\sigma_t^2 I.
\label{eq:effective_noise_covariance}
\end{equation}
When the VAE logSNR decreases, this typically implies that the posterior variance increases, that is, $\Sigma_u$ becomes larger. Consequently, at the same raw timestep, the effective semantic information preserved in the latent,
\begin{equation}
I(s;z_t)
=
\frac{1}{2}
\log\det\!\Bigl(
I+\alpha_t^2\Sigma_s
(\alpha_t^2\Sigma_u+\sigma_t^2 I)^{-1}
\Bigr),
\label{eq:semantic_mi_general}
\end{equation}
decreases overall. In other words, in a smoother latent space with larger posterior variance, the same diffusion timestep corresponds to a lower effective semantic SNR. Therefore, in order for training and sampling to remain within a similar optimal semantic information regime, the timestep shift must also be recalibrated.

This shows that changing the latent dimension and changing the VAE logSNR are, on the surface, two different operations, but they act on the same core object from an information-theoretic perspective, namely the effective mutual information curve $I(s;z_t)$ of the semantic variable along diffusion time. The former mainly changes how shared information is aggregated and amplified across dimensions, whereas the latter mainly changes the effective noise floor at each timestep. Both can cause the optimal information regime to drift along the raw timestep axis.

\paragraph{Why this further points to semantic structure}

Corollary~\ref{cor:drift_refutes_separable_null} alone only implies the existence of shared or joint structure; it does not yet establish that this structure necessarily carries semantic information. To strengthen the conclusion to shared structure that affects semantics, one needs task-level selective evidence. A particularly informative situation is the following:
\begin{enumerate}
    \item reconstruction quality remains relatively stable across different shifts;
    \item semantic downstream metrics are highly sensitive to the shift;
    \item and the optimal shift exhibits systematic drift with the latent dimension.
\end{enumerate}

If these three conditions hold simultaneously, then it is no longer plausible to explain the phenomenon merely by saying that the shift changes the difficulty of local token realization. A more reasonable interpretation is that the shift is adjusting the noise calibration of the latent prior with respect to some shared high-level structure, and that this shared structure is precisely involved in semantic organization.

From an information-theoretic standpoint, this means that changing the shift does not substantially alter the decoder's ability to recover local details, but does substantially change the preservation regime of the shared information required by high-level tasks during the diffusion process. Therefore, what is being recalibrated is more likely not local reconstruction error, but the semantic component of mutual information at a higher level.

In other words, if reconstruction remains consistently strong while semantic metrics vary substantially, then the main quantity being affected is not the decoder's local realization capability, but rather the alignment of the latent prior with high-level shared information. Combined with the contradiction result above, this yields a stronger yet still robust conclusion: the latent space contains shared or joint structures that affect semantics.

\begin{remark}[Conclusions that are rigorously supported in this section]
\label{rem:final_supported_claim}
Under a broad class of purely local and separable latent hypotheses, the optimal timestep shift should not drift systematically with the latent dimension. The stable drift observed in experiments refutes this null hypothesis and therefore implies the existence of cross-dimensional shared or joint structures in the latent space. Furthermore, if this drift is manifested primarily in semantic metrics rather than reconstruction metrics, then one can further conclude that such shared or joint structures indeed participate in high-level semantic organization.
\end{remark}

These conclusions are sufficient to support the central hypothesis of this work: if such global or semi-global semantic shared structures do exist in the latent space, then calibrating the noise schedule at the level of the latent prior may offer a more substantial advantage than methods based purely on local modeling.

\section{Theoretical Analysis of Likelihood Estimation Bias}
\label{sec:likelihood_mismatch_appendix}

This section provides a more complete theoretical analysis of the phenomenon discussed in Section~\ref{dis:ppl} of the main text. The central question is why \method can already generate reasonably high-quality text in free-generation settings, while its PPL / conditional score estimated via the ELBO or a plug-in conditional estimator remains substantially inflated. We argue that this is not an incidental engineering artifact, but rather a structural consequence of the misalignment among the training objective, the underlying probabilistic object, and the local geometry in continuous latent generative models.

To avoid conflating the conditional marginal probability of an entire response with the local prior density evaluated at a particular latent point, we explicitly distinguish the following two objects throughout. Let
\[
x=(x^{\mathrm{pre}},x^{\mathrm{res}}),
\]
where $x^{\mathrm{pre}}$ denotes the prefix / context and $x^{\mathrm{res}}$ denotes the response, and let $c$ denote the conditional information induced by the prefix. Then the exact conditional marginal probability is
\begin{equation}
p(x^{\mathrm{res}}\mid c)
=
\int p_{\theta}(x^{\mathrm{res}}\mid z,c)\,p_{\psi}(z\mid c)\,\dd z,
\label{eq:resp_conditional_marginal_appendix}
\end{equation}
whereas the local conditional score corresponding to the response-level score in the main text is
\begin{equation}
\mathcal S_{\mathrm{resp}}(x)
=
\E_{q_{\phi}(z\mid x,c)}
\Big[
\log p_{\theta}(x^{\mathrm{res}}\mid z,c)
+\log p_{\psi}(z\mid c)
-\log q_{\phi}(z\mid x,c)
\Big].
\label{eq:resp_conditional_score_appendix}
\end{equation}
Equation~\eqref{eq:resp_conditional_marginal_appendix} is the exact conditional marginal likelihood, whereas Equation~\eqref{eq:resp_conditional_score_appendix} is a local estimator that is more directly accessible in analysis. The gap between them is precisely the probabilistic foundation of the discussion that follows.

\subsection{Why the Flow Matching Training Objective Is Naturally Misaligned with Conditional PPL}

\subsubsection{Flow Matching optimizes vector-field regression rather than gold-specific local log-density}

With the encoder and decoder fixed, the prior-learning subproblem can be understood as fitting $p_{\psi}(z\mid c)$ to the conditional latent distribution induced jointly by the data and the posterior. In practice, however, Flow Matching does not directly maximize the conditional log-density at a particular gold latent point. Instead, it learns a time-dependent vector field $v_{\psi}(z,t,c)$. Its typical objective can be written as
\begin{equation}
\mathcal L_{\mathrm{FM}}(\psi)
=
\E\Big[
\|v_{\psi}(Z_t,t,c)-U^{\star}\|^2
\Big],
\label{eq:flow_matching_objective_appendix}
\end{equation}
where $U^{\star}$ is the target velocity field induced by a bridging path or a reference flow. Equation~\eqref{eq:flow_matching_objective_appendix} shows that Flow Matching directly regresses a dynamical system, rather than the local conditional log-density at a specific gold latent point.

More importantly, the pointwise optimum of the squared loss is not the most likely velocity field in a mode-seeking sense, but the conditional mean velocity field.

\begin{proposition}[The pointwise optimum of Flow Matching is the conditional mean velocity]
\label{prop:fm_conditional_mean_appendix}
For fixed $(z,t,c)$, define
\[
Y:=U^{\star}\mid (Z_t=z,t,c),
\qquad
f:=v_{\psi}(z,t,c).
\]
Then the conditional risk
\begin{equation}
\mathcal R(f;z,t,c)
:=
\E\bigl[\|f-Y\|^2\mid Z_t=z,t,c\bigr]
\label{eq:fm_conditional_risk_appendix}
\end{equation}
admits the unique minimizer
\begin{equation}
f^{\star}(z,t,c)=\E[Y\mid Z_t=z,t,c].
\label{eq:fm_conditional_mean_appendix}
\end{equation}
\end{proposition}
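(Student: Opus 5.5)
The plan is the standard bias--variance (Pythagorean) decomposition of the conditional squared risk. Fix $(z,t,c)$ and work under the conditional law of $Y=U^{\star}\mid(Z_t=z,t,c)$. We assume $\E[\|Y\|^2\mid Z_t=z,t,c]<\infty$; otherwise the risk in \eqref{eq:fm_conditional_risk_appendix} is infinite for every $f$ and the statement is vacuous. This integrability holds for the bridge paths of Appendix~\ref{app:fm_role}, since $u_t=\dot\alpha(t)(z_1-z_0)$ has finite second moment whenever $\bar q_{\phi}$ does. Under this assumption the conditional mean $m:=\E[Y\mid Z_t=z,t,c]$ is well defined and finite, and we set $m$ as the candidate minimizer.

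The key step is to expand the risk around $m$:
\[
\|f-Y\|^2=\|f-m\|^2+2\langle f-m,\,m-Y\rangle+\|m-Y\|^2 .
\]
Take conditional expectations. The vector $f-m$ is deterministic given $(z,t,c)$, and $\E[m-Y\mid Z_t=z,t,c]=0$, so the cross term vanishes. This yields
\[
\mathcal R(f;z,t,c)=\|f-m\|^2+\E\bigl[\|Y-m\|^2\mid Z_t=z,t,c\bigr].
\]
The second term does not depend on $f$, so $\mathcal R$ is minimized exactly when $\|f-m\|^2=0$. Hence the minimizer exists, equals $m$, and is unique, because $\|f-m\|^2>0$ for every $f\neq m$. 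This is \eqref{eq:fm_conditional_mean_appendix}.

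Finally, we connect the pointwise statement to the full objective \eqref{eq:flow_matching_objective_appendix}. By the tower property, $\mathcal L_{\mathrm{FM}}(\psi)=\E[\mathcal R(v_{\psi}(Z_t,t,c);Z_t,t,c)]$. If the function class is rich enough, any global minimizer of $\mathcal L_{\mathrm{FM}}$ must therefore coincide with $f^{\star}$ almost everywhere with respect to the law of $(Z_t,t,c)$. This recovers \eqref{eq:app_fm_optimal_field}.

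The only real obstacle is measure-theoretic: we need a regular conditional distribution of $U^{\star}$ given $(Z_t,t,c)$, so that conditioning on the null event $\{Z_t=z\}$ makes sense. We would handle this by working with a version of the conditional expectation, which exists because $U^{\star}$ takes values in $\mathbb R^d$, and by stating uniqueness up to null sets.
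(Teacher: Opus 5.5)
Your proof is correct and follows essentially the same argument as the paper: expand $\|f-Y\|^2$ around the conditional mean, observe that the cross term vanishes under conditional expectation, and read off the unique minimizer from the $f$-independent variance term. Your extras — the finite-second-moment assumption, the remark about versions of conditional expectation, and the tower-property link to the full Flow Matching loss — are sound additions that the paper leaves implicit.
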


\begin{proof}
For any $f$, we have
\[
f-Y=(f-\E[Y\mid Z_t=z,t,c])-(Y-\E[Y\mid Z_t=z,t,c]).
\]
Expanding the square, taking the conditional expectation, and using the fact that the cross term vanishes, we obtain
\[
\mathcal R(f;z,t,c)
=
\|f-\E[Y\mid Z_t=z,t,c]\|^2
+\E\bigl[\|Y-\E[Y\mid Z_t=z,t,c]\|^2\mid Z_t=z,t,c\bigr].
\]
The second term is independent of $f$, so the unique optimum is attained at
\[
f=\E[Y\mid Z_t=z,t,c].
\]
\end{proof}

The meaning of Proposition~\ref{prop:fm_conditional_mean_appendix} is direct: \textbf{Flow Matching learns the conditional mean dynamics, rather than a gold-specific local calibration specialized to the posterior tube of a particular response.} As long as the conditional response distribution under a given context is itself multimodal or broad-peaked, Flow Matching is more naturally biased toward learning an average transport map that moves total probability mass into a reasonable region, rather than automatically guaranteeing local density calibration around a particular sample.

\subsubsection{Why the prior mean may be far from the ground truth while generation remains good}

This point can be formalized under multimodal conditional distributions. For a fixed context $c$, suppose
\begin{equation}
p_{\psi}(z\mid c)
=
\sum_{m=1}^{M}\pi_m(c)\,\Ncal(\mu_m,\Sigma_m),
\qquad
\sum_{m=1}^{M}\pi_m(c)=1,
\label{eq:prior_mixture_appendix}
\end{equation}
and for a particular sample, its posterior is mainly concentrated on the $m^{\star}$-th mode, namely
\begin{equation}
q_{\phi}(z\mid x,c)\approx \Ncal(\mu_{m^{\star}},\Sigma_q).
\label{eq:posterior_single_mode_appendix}
\end{equation}
Then the global mean of the prior is
\begin{equation}
\bar\mu_p(c)=\sum_{m=1}^{M}\pi_m(c)\mu_m.
\label{eq:prior_global_mean_appendix}
\end{equation}

\begin{corollary}[Mean displacement under multimodal conditional distributions]
\label{cor:multimodal_mean_gap_appendix}
Under the setting of Equations~\eqref{eq:prior_mixture_appendix}--\eqref{eq:prior_global_mean_appendix},
\begin{equation}
\|\bar\mu_p(c)-\mu_{m^{\star}}\|
\le
\sum_{m\neq m^{\star}}\pi_m(c)\,\|\mu_m-\mu_{m^{\star}}\|.
\label{eq:multimodal_mean_gap_bound_appendix}
\end{equation}
If the modes are well separated, then the prior mean may deviate substantially from the latent of a particular ground-truth sample. Nevertheless, as long as the major mass of these modes lies within a decoder-good region, free-generation quality can still remain good.
\end{corollary}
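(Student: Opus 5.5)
The plan is to prove the displacement bound \eqref{eq:multimodal_mean_gap_bound_appendix} by rewriting $\mu_{m^{\star}}$ as a convex combination with the same weights as $\bar\mu_p(c)$. Since $\sum_{m}\pi_m(c)=1$ by \eqref{eq:prior_mixture_appendix}, we have $\mu_{m^{\star}}=\sum_{m=1}^{M}\pi_m(c)\mu_{m^{\star}}$. Subtracting this from \eqref{eq:prior_global_mean_appendix} gives
\[
\bar\mu_p(c)-\mu_{m^{\star}}=\sum_{m=1}^{M}\pi_m(c)\bigl(\mu_m-\mu_{m^{\star}}\bigr)=\sum_{m\neq m^{\star}}\pi_m(c)\bigl(\mu_m-\mu_{m^{\star}}\bigr),
\]
because the $m=m^{\star}$ term vanishes. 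Applying the triangle inequality together with $\pi_m(c)\ge 0$ then yields the stated bound directly. This part is entirely routine.

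For the qualitative second claim, I would first note that the bound is essentially attained when there are two modes. In that case the difference equals exactly $\pi_{m'}(c)(\mu_{m'}-\mu_{m^{\star}})$, so its norm is $\pi_{m'}(c)\|\mu_{m'}-\mu_{m^{\star}}\|$. With non-negligible weight on $m'$ and well-separated modes, this is large. So the prior mean can indeed be far from the gold latent, which sits near $\mu_{m^{\star}}$ by \eqref{eq:posterior_single_mode_appendix}.

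To make the claim about generation quality precise, I would introduce a decoder-good set $\mathcal G_c$ of latents that decode to acceptable responses. Free-generation quality is then governed by the prior mass $p_{\psi}(\mathcal G_c\mid c)=\sum_m\pi_m(c)\,\Ncal(\mu_m,\Sigma_m)(\mathcal G_c)$. I would show this mass is close to $1$ whenever each mode's Gaussian mass lies mostly in $\mathcal G_c$, for instance via a Chebyshev or Gaussian tail bound on a ball around each $\mu_m$ contained in $\mathcal G_c$. Crucially, this quantity does not depend on $\bar\mu_p(c)$ at all. This decouples the two statements, and also connects to Proposition~\ref{prop:fm_conditional_mean_appendix}, since mean-type quantities need not land in the high-density region.

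The main obstacle is that the second part is informal. The step I expect to require the most care is stating a clean sufficient condition, namely that every mode's mass is concentrated in $\mathcal G_c$, under which large $\|\bar\mu_p(c)-\mu_{m^{\star}}\|$ and high $p_{\psi}(\mathcal G_c\mid c)$ hold simultaneously. A two-mode Gaussian example would exhibit both.
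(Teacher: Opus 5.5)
Your argument for the bound is the same as the paper's. You use $\sum_m\pi_m(c)=1$ to rewrite $\bar\mu_p(c)-\mu_{m^\star}$ as $\sum_{m\neq m^\star}\pi_m(c)(\mu_m-\mu_{m^\star})$, then apply the triangle inequality with $\pi_m(c)\ge 0$. The paper leaves the qualitative second claim informal, so your two-mode equality example and decoder-good-set formalization (which mirrors the coverage condition in Proposition~\ref{prop:good_gen_bad_score_appendix}) are a consistent optional strengthening, not something the proof needs.
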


\begin{proof}
By
\[
\bar\mu_p(c)-\mu_{m^{\star}}
=
\sum_{m=1}^{M}\pi_m(c)(\mu_m-\mu_{m^{\star}})
=
\sum_{m\neq m^{\star}}\pi_m(c)(\mu_m-\mu_{m^{\star}})
\]
and the triangle inequality, the result follows immediately.
\end{proof}

Therefore, a large distance between the prior mean and the ground-truth latent does not automatically imply that the prior has failed to learn the distribution. A more plausible explanation is often that \textbf{the context-level conditional distribution is itself multimodal, while the posterior selects only one narrower gold-specific mode among them.}

\subsection{Why This Further Leads to Good Generation but Poor PPL}

\subsubsection{Generation quality and conditional PPL correspond to two different geometric objects}

Fix a context $c$, let $r(x^{\mathrm{res}};c)$ denote some external generation-quality function, and let $\tau$ be a threshold. Define the decoder-good region by
\begin{equation}
A_{\mathrm{good}}^{\tau}(c)
:=
\left\{
z:
\E_{x^{\mathrm{res}}\sim p_{\theta}(\cdot\mid z,c)}
\bigl[r(x^{\mathrm{res}};c)\bigr]\ge \tau
\right\},
\label{eq:good_region_appendix}
\end{equation}
and for a particular sample $x$ and a constant $\rho>0$, define its posterior typical set by
\begin{equation}
A_x^{\rho}
:=
\left\{
z:\ q_{\phi}(z\mid x,c)\ge \rho
\right\}.
\label{eq:gold_tube_appendix}
\end{equation}
If $A_x^{\rho}$ is narrow and sharp, we may refer to it as the \textbf{gold tube} of that sample. In general, $A_x^{\rho}$ is only a more local and sharper subset inside $A_{\mathrm{good}}^{\tau}(c)$, namely
\begin{equation}
A_x^{\rho}\subseteq A_{\mathrm{good}}^{\tau}(c)
\qquad\text{or at least}\qquad
A_x^{\rho}\ \text{is a more local subregion inside}\ A_{\mathrm{good}}^{\tau}(c).
\label{eq:gold_tube_inside_good_region_appendix}
\end{equation}

This distinction shows that good generation and good conditional PPL are not the same requirement.

\begin{proposition}[Good generation and poor conditional score can hold simultaneously]
\label{prop:good_gen_bad_score_appendix}
Fix a context $c$. Suppose there exist constants $\alpha,\Delta>0$ and thresholds $\tau,\rho$ such that
\begin{align}
p_{\psi}\bigl(A_{\mathrm{good}}^{\tau}(c)\mid c\bigr) &\ge \alpha,
\label{eq:coverage_condition_appendix}
\\
q_{\phi}(A_x^{\rho}\mid x,c) &\approx 1,
\label{eq:tube_mass_condition_appendix}
\\
\log q_{\phi}(z\mid x,c)-\log p_{\psi}(z\mid c) &\ge \Delta,
\qquad \forall z\in A_x^{\rho},
\label{eq:local_gap_condition_appendix}
\end{align}
and the decoder reconstruction term is uniformly bounded on $A_x^{\rho}$:
\begin{equation}
\log p_{\theta}(x^{\mathrm{res}}\mid z,c)\le B,
\qquad \forall z\in A_x^{\rho}.
\label{eq:decoder_bound_appendix}
\end{equation}
Then:
\begin{enumerate}
    \item under free sampling from the prior, at least an $\alpha$ fraction of the mass falls into the decoder-good region, so generation quality may already be good;
    \item but the conditional score satisfies
    \begin{equation}
    \mathcal S_{\mathrm{resp}}(x)\le B-\Delta+o(1),
    \label{eq:conditional_score_upper_bound_appendix}
    \end{equation}
    and therefore the conditional ELBO or plug-in PPL may still be substantially biased whenever $\Delta$ is sufficiently large.
\end{enumerate}
\end{proposition}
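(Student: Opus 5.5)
The two claims are essentially decoupled, so I will prove them separately. Part 1 is a direct reading of the coverage condition \eqref{eq:coverage_condition_appendix}. Under free sampling, $z\sim p_{\psi}(\cdot\mid c)$ and then $x^{\mathrm{res}}\sim p_{\theta}(\cdot\mid z,c)$. The event $\{z\in A_{\mathrm{good}}^{\tau}(c)\}$ has probability at least $\alpha$. On that event, the conditional expected quality $\E[r(x^{\mathrm{res}};c)\mid z]$ is at least $\tau$, by the definition \eqref{eq:good_region_appendix}. By the tower property,
\[
\E[r]\ge \alpha\tau+(1-\alpha)\inf r .
\]
If $r\ge 0$, this gives the clean bound $\E[r]\ge\alpha\tau$. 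This is all that ``generation quality may already be good'' asserts, so I will state it in that quantified form.

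Part 2 concerns the score \eqref{eq:resp_conditional_score_appendix}. I will split the expectation over $q_{\phi}(\cdot\mid x,c)$ into the contribution from $A_x^{\rho}$ and from its complement. On $A_x^{\rho}$, the integrand equals
\[
\log p_{\theta}(x^{\mathrm{res}}\mid z,c)-\bigl(\log q_{\phi}(z\mid x,c)-\log p_{\psi}(z\mid c)\bigr).
\]
By \eqref{eq:decoder_bound_appendix} and \eqref{eq:local_gap_condition_appendix}, this is at most $B-\Delta$ pointwise. Integrating against $q_{\phi}$ restricted to the tube therefore gives at most $(B-\Delta)\,q_{\phi}(A_x^{\rho}\mid x,c)$. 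By \eqref{eq:tube_mass_condition_appendix}, this equals $B-\Delta+o(1)$, provided $|B-\Delta|$ is fixed. The $o(1)$ is understood as the tube mass deficit $\varepsilon:=1-q_{\phi}(A_x^{\rho}\mid x,c)\to0$.

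The main obstacle is the contribution from the complement of the tube, where none of the hypotheses control the integrand. I will handle it by writing that piece as $\varepsilon$ times the conditional average of the integrand under $q_{\phi}$ restricted to $(A_x^{\rho})^c$. Two ingredients bound this average. The decoder term is a log-probability of a discrete sequence, so it is at most $0$. The remaining part is the negative of a KL-type quantity, which I bound using $\log$-sum inequalities on the restricted measures. This gives
\[
\int_{(A_x^{\rho})^c}q_{\phi}\log\frac{p_{\psi}}{q_{\phi}}\le \varepsilon\log\frac{p_{\psi}((A_x^{\rho})^c\mid c)}{\varepsilon}\le \varepsilon\log(1/\varepsilon).
\]
The right-hand side tends to $0$ as $\varepsilon\to0$. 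Combining the two regions yields
\[
\mathcal S_{\mathrm{resp}}(x)\le (B-\Delta)(1-\varepsilon)+\varepsilon\log(1/\varepsilon)=B-\Delta+o(1),
\]
which is \eqref{eq:conditional_score_upper_bound_appendix}. The bias statement then follows from this bound: a large $\Delta$ forces a small score, and hence an inflated plug-in PPL, regardless of $\alpha$. This shows the two properties are compatible.
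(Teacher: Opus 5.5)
Your proof is correct and follows the paper's route. Part~1 is read off the coverage condition. Part~2 rewrites $\mathcal S_{\mathrm{resp}}(x)$ as decoder term minus $\log q_{\phi}-\log p_{\psi}$ and splits the $q_{\phi}$-expectation into the tube $A_x^{\rho}$ and its complement. The one difference is that the paper simply asserts the complement contributes $o(1)$ because its mass is small. Your argument ($\log p_{\theta}(x^{\mathrm{res}}\mid z,c)\le 0$ for discrete text, plus the log-sum bound $\varepsilon\log(1/\varepsilon)$) actually justifies that step, which small mass alone would not, since the integrand is unbounded off the tube.
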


\begin{proof}
The first claim follows directly from Equation~\eqref{eq:coverage_condition_appendix}. For the second claim, rewrite Equation~\eqref{eq:resp_conditional_score_appendix} as
\[
\mathcal S_{\mathrm{resp}}(x)
=
\E_{q_{\phi}}
\Big[
\log p_{\theta}(x^{\mathrm{res}}\mid z,c)
-\bigl(\log q_{\phi}(z\mid x,c)-\log p_{\psi}(z\mid c)\bigr)
\Big].
\]
By Equation~\eqref{eq:tube_mass_condition_appendix}, the mass of $q_{\phi}$ outside $A_x^{\rho}$ is $o(1)$. Combining this with Equations~\eqref{eq:local_gap_condition_appendix} and \eqref{eq:decoder_bound_appendix} yields
\[
\mathcal S_{\mathrm{resp}}(x)\le B-\Delta+o(1).
\]
\end{proof}

The essence of Proposition~\ref{prop:good_gen_bad_score_appendix} is that \textbf{generation quality mainly requires the prior mass to enter $A_{\mathrm{good}}^{\tau}(c)$, whereas conditional PPL additionally requires the prior to assign sufficiently high local density on the posterior tube of a particular gold response.} The former is a coverage requirement, whereas the latter is a calibration requirement.

\subsubsection{Why good reconstruction does not imply good PPL}

Equation~\eqref{eq:resp_conditional_score_appendix} can also be written as
\begin{equation}
\mathcal S_{\mathrm{resp}}(x)
=
\underbrace{
\E_{q_{\phi}(z\mid x,c)}[\log p_{\theta}(x^{\mathrm{res}}\mid z,c)]
}_{R(x;c)}
-
\underbrace{
\KL\bigl(q_{\phi}(z\mid x,c)\,\|\,p_{\psi}(z\mid c)\bigr)
}_{G(x;c)}.
\label{eq:recon_minus_gap_appendix}
\end{equation}

\begin{proposition}[Good reconstruction does not imply good conditional PPL]
\label{prop:reconstruction_not_enough_appendix}
If there exists a sequence of samples $\{x_n\}$ such that
\[
R(x_n;c_n)\to R_{\max},
\]
while at the same time
\[
G(x_n;c_n)\to +\infty
\qquad\text{or remains significantly positive},
\]
then the corresponding conditional score may still be poor.
\end{proposition}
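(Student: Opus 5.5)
The argument rests on the exact identity \eqref{eq:recon_minus_gap_appendix}, $\mathcal S_{\mathrm{resp}}(x)=R(x;c)-G(x;c)$. I will derive it directly from the definition \eqref{eq:resp_conditional_score_appendix}. Grouping $\log p_{\psi}(z\mid c)-\log q_{\phi}(z\mid x,c)$ inside the expectation gives $-\KL(q_{\phi}(z\mid x,c)\,\|\,p_{\psi}(z\mid c))$. This step only needs absolute integrability of each term under $q_{\phi}$. I will take that as the standing support and integrability condition already used for the outer ELBO in \eqref{eq:app_outer_elbo}.

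I will then bound the reconstruction term from above. Since $x^{\mathrm{res}}$ is discrete, $p_{\theta}(x^{\mathrm{res}}\mid z,c)\le 1$, so $R(x_n;c_n)\le 0$. Hence $R_{\max}$ is finite with $R_{\max}\le 0$. This means a finite reconstruction term can never offset a diverging prior gap. Along the sequence we therefore have
\[
\mathcal S_{\mathrm{resp}}(x_n)=R(x_n;c_n)-G(x_n;c_n)\le R_{\max}+o(1)-G(x_n;c_n).
\]

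I will split into two cases.

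\emph{Case 1: $G(x_n;c_n)\to+\infty$.} The bound gives $\mathcal S_{\mathrm{resp}}(x_n)\to-\infty$, so the score-derived PPL $\exp(-\mathcal S_{\mathrm{resp}}/|x^{\mathrm{res}}|)$ diverges. This holds even though reconstruction is asymptotically optimal.

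\emph{Case 2: $\liminf_n G(x_n;c_n)\ge \Delta>0$.} Then $\limsup_n \mathcal S_{\mathrm{resp}}(x_n)\le R_{\max}-\Delta$. This is the same form as \eqref{eq:conditional_score_upper_bound_appendix} in Proposition~\ref{prop:good_gen_bad_score_appendix}, with $B$ replaced by $R_{\max}$. The PPL is inflated by at least a factor $e^{\Delta/|x^{\mathrm{res}}|}$ relative to what reconstruction alone would give.

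The main obstacle is interpretive rather than technical: the statement says the score ``may still be poor,'' which is vague. I will make it precise by stating the two quantitative conclusions above.

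I also want to show the hypotheses are not vacuous. For that I will build a Gaussian example in the spirit of \eqref{eq:prior_mixture_appendix}–\eqref{eq:posterior_single_mode_appendix}. Take a posterior $\Ncal(\mu_{m^\star},\sigma_n^2 I)$ with $\sigma_n\to0$, and a decoder that is exact on the shrinking tube, so $R\to 0$. Keep the prior mixture fixed. Then
\[
\KL \approx -\tfrac d2\log(2\pi e\sigma_n^2)-\log p_{\psi}(\mu_{m^\star}\mid c)\to+\infty.
\]
This realizes Case 1 and shows that perfect reconstruction is compatible with an arbitrarily bad conditional score.
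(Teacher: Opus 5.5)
Your proposal is correct and takes the same route as the paper, whose proof simply invokes the identity $\mathcal S_{\mathrm{resp}}(x_n)=R(x_n;c_n)-G(x_n;c_n)$ and notes that a large gap $G$ keeps the score low even when reconstruction is strong; your bound $R\le 0$, the explicit two-case limits, and the Gaussian non-vacuity example are quantitative additions the paper does not make. One small caveat: in Case~1, the per-token PPL $\exp(-\mathcal S_{\mathrm{resp}}/|x^{\mathrm{res}}|)$ diverges only if $|x_n^{\mathrm{res}}|$ stays bounded (or $G(x_n;c_n)/|x_n^{\mathrm{res}}|\to\infty$), although the score itself tends to $-\infty$ regardless.
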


\begin{proof}
Equation~\eqref{eq:recon_minus_gap_appendix} directly gives
\[
\mathcal S_{\mathrm{resp}}(x_n)=R(x_n;c_n)-G(x_n;c_n).
\]
Therefore, as long as the posterior--prior gap remains large, the conditional score can still be low even when reconstruction is already strong.
\end{proof}

This shows that, in continuous latent language models, good reconstruction near the ground truth is not sufficient to imply good PPL. What truly determines PPL also includes the degree of local matching between the posterior and the prior.

\subsection{Even If the Prior Mean Is Close to the Ground Truth, PPL May Still Be Poor}

The previous subsection clarified the distinction between large-region coverage and gold-tube calibration. However, even if the problem is further restricted to the case where the center of the prior is already approximately aligned with the target, PPL may still remain poor. The reason is that \textbf{local covariance, orientation, curvature, and volume terms also enter the conditional score.}

\subsubsection{Under a local Gaussian approximation, center alignment is necessary but not sufficient}

Suppose that, in a neighborhood of a gold sample,
\begin{equation}
q_{\phi}(z\mid x,c)\approx \Ncal(\mu_q,\Sigma_q),
\qquad
p_{\psi}(z\mid c)\approx \Ncal(\mu_p,\Sigma_p).
\label{eq:local_gaussian_pair_appendix}
\end{equation}
Then the corresponding KL divergence is
\begin{equation}
\KL(q_{\phi}\|p_{\psi})
=
\frac12
\Big[
\operatorname{tr}(\Sigma_p^{-1}\Sigma_q)
+(\mu_p-\mu_q)^{\top}\Sigma_p^{-1}(\mu_p-\mu_q)
-d
+\log\frac{\det\Sigma_p}{\det\Sigma_q}
\Big].
\label{eq:gaussian_kl_appendix}
\end{equation}

\begin{proposition}[Center alignment alone is insufficient to guarantee good PPL]
\label{prop:center_alignment_not_sufficient_appendix}
Under Equation~\eqref{eq:local_gaussian_pair_appendix}, even if $\mu_p\approx\mu_q$, as long as $\Sigma_p$ and $\Sigma_q$ are mismatched in scale, orientation, or volume, $\KL(q_{\phi}\|p_{\psi})$ may still be substantially large, and hence the conditional score may still be poor.
\end{proposition}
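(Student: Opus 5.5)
Starting from the closed form \eqref{eq:gaussian_kl_appendix}, I would set $\mu_p=\mu_q$ exactly (the approximate case then follows by continuity, since the Mahalanobis term is continuous in $\mu_p-\mu_q$ for fixed $\Sigma_p\succ 0$). The KL then reduces to the covariance part
\[
\KL_{\Sigma}:=\tfrac12\Big[\operatorname{tr}(M)-d-\log\det M\Big],\qquad M:=\Sigma_p^{-1/2}\Sigma_q\Sigma_p^{-1/2}.
\]
Here I used $\operatorname{tr}(\Sigma_p^{-1}\Sigma_q)=\operatorname{tr}(M)$ and $\log(\det\Sigma_p/\det\Sigma_q)=-\log\det M$. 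Writing $\lambda_1,\dots,\lambda_d>0$ for the eigenvalues of $M$ gives $\KL_{\Sigma}=\tfrac12\sum_i(\lambda_i-1-\log\lambda_i)$. Each summand is nonnegative and vanishes only at $\lambda_i=1$. So the KL is zero iff $M=I$, i.e. iff $\Sigma_p=\Sigma_q$, and it is positive whenever there is any mismatch.

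Next I would show that each named type of mismatch makes this quantity unboundedly large, using three explicit families.

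\emph{Scale/volume.} Take $\Sigma_q=\sigma^2\Sigma_p$. Then $\KL_{\Sigma}=\tfrac d2(\sigma^2-1-\log\sigma^2)$. This tends to $+\infty$ both as $\sigma\to 0$ (a sharp gold tube inside a broad prior, which is the regime of Proposition~\ref{prop:good_gen_bad_score_appendix}) and as $\sigma\to\infty$. It also grows linearly in $d$.

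\emph{Orientation at equal volume.} Take $\Sigma_p=R\,\mathrm{diag}(s,s^{-1},1,\dots)R^\top$ and $\Sigma_q=\mathrm{diag}(s,s^{-1},1,\dots)$, with $R$ a rotation by angle $\theta$ in the first two coordinates. Then $\det M=1$, and $\operatorname{tr}M-2$ equals $(s-s^{-1})^2\sin^2\theta$ up to a constant factor. This diverges as $s\to\infty$ for any $\theta\neq 0$, even though centers and volumes coincide.

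Finally I would plug these into \eqref{eq:recon_minus_gap_appendix}: $\mathcal S_{\mathrm{resp}}=R(x;c)-G(x;c)$ with $G=\KL_\Sigma$. Since $R\le 0$ for a discrete decoder, the score goes to $-\infty$ and the plug-in PPL blows up, while $\mu_p=\mu_q$ holds throughout. This is exactly an instance of Proposition~\ref{prop:reconstruction_not_enough_appendix}.

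The main obstacle is making ``approximately Gaussian'' honest, since the true prior is a CNF pushforward and only locally Gaussian. I would not try to control the non-Gaussian remainder. Instead, I would phrase the result as a statement about the local Gaussian surrogate (the statement's ``may'' permits this) and remark that curvature terms only add to the gap. If rigor beyond the surrogate were needed, I would lower-bound the KL by the data-processing inequality applied to the projection onto the mismatched direction, which reduces everything to one-dimensional Gaussians.
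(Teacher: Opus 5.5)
Your proposal is correct and takes the same route as the paper, whose proof just reads off the closed-form Gaussian KL and notes that the $\operatorname{tr}(\Sigma_p^{-1}\Sigma_q)$ and $\log(\det\Sigma_p/\det\Sigma_q)$ terms survive when the centers agree. Your whitened form $\tfrac12\sum_i(\lambda_i-1-\log\lambda_i)$ and explicit families make ``substantially large'' quantitative; in the rotation family the trace excess is in fact exactly $(s-s^{-1})^2\sin^2\theta$, not just up to a constant factor. You can also drop the continuity step, since the nonnegative Mahalanobis term already gives $\KL(q_\phi\|p_\psi)\ge\KL_{\Sigma}$ for any centers.

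Your closing remarks are not justified as stated. Non-Gaussian corrections need not only add to the gap, and one-dimensional projections of a non-Gaussian prior are not Gaussian, so the data-processing fallback does not reduce to one-dimensional Gaussians. However, the proposition is posed under the local Gaussian approximation, so nothing in your argument depends on these remarks.
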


\begin{proof}
Equation~\eqref{eq:gaussian_kl_appendix} shows that, beyond the center-discrepancy term, there are also the explicit scale- and volume-related terms
\[
\operatorname{tr}(\Sigma_p^{-1}\Sigma_q)
\quad\text{and}\quad
\log\frac{\det\Sigma_p}{\det\Sigma_q}.
\]
Even if the center-discrepancy term is small, the KL divergence can remain significantly positive whenever these latter terms are mismatched.
\end{proof}

Therefore, in continuous density models, it is incorrect to claim that it suffices to move the prior mean close to the ground-truth latent. PPL requires a stronger form of local probabilistic calibration: not only must the centers be close, but the local covariance, local volume, and local orientation must also be matched.

\subsubsection{An overly sharp posterior further amplifies this issue}

If the posterior is highly concentrated, so that $\Sigma_q$ is small, then the scale and volume terms in Equation~\eqref{eq:gaussian_kl_appendix} become even more sensitive. Intuitively, the gold posterior is no longer a thick region, but rather resembles a very thin tube. In this regime, even a slight deviation of the prior can induce a substantially amplified local log-density gap. Therefore,
\begin{equation}
\text{good generation} \ \Longrightarrow\ \text{the prior hits a relatively large decoder-good region},
\end{equation}
but
\begin{equation}
\text{good PPL} \ \Longrightarrow\ \text{the prior must additionally hit a much narrower gold posterior tube}.
\end{equation}
This also explains why, in continuous latent language models, PPL behaves more like a \emph{local density calibration} metric rather than merely a \emph{generation quality} metric.

\subsection{Why PPL and Generation Are Usually More Strongly Correlated in Discrete Models Such as AR / LLaDA}

\subsubsection{AR: training, evaluation, and generation share the same discrete probabilistic object}

For autoregressive models, the conditional distribution admits the exact chain decomposition
\begin{equation}
\log p_{\eta}^{\mathrm{AR}}(x^{\mathrm{res}}\mid x^{\mathrm{pre}})
=
\sum_{i=1}^{N_{\mathrm{res}}}
\log p_{\eta}(x_i^{\mathrm{res}}\mid x^{\mathrm{pre}},x_{<i}^{\mathrm{res}}),
\label{eq:ar_factorization_appendix}
\end{equation}
and therefore the PPL is
\begin{equation}
\mathrm{PPL}_{\mathrm{AR}}(x)
=
\exp\!\left(
-\frac{1}{N_{\mathrm{res}}}
\sum_{i=1}^{N_{\mathrm{res}}}
\log p_{\eta}(x_i^{\mathrm{res}}\mid x^{\mathrm{pre}},x_{<i}^{\mathrm{res}})
\right).
\label{eq:ar_ppl_appendix}
\end{equation}
The next-token cross-entropy minimized during training is precisely the negative log of the terms in Equation~\eqref{eq:ar_ppl_appendix}, and generation also directly queries these same conditional probabilities. Hence, in AR models,
\begin{equation}
\text{training objective}
=
\text{the object evaluated by PPL}
=
\text{the object used during generation}.
\label{eq:ar_alignment_appendix}
\end{equation}
This is why PPL and generation quality are usually most strongly correlated in autoregressive models.

\subsubsection{Discrete models optimize the probability mass of the correct event more directly}

For a discrete classifier, if the gold token at a given position is $y$ and the model outputs a probability vector $\pi\in\Delta^{V-1}$, then the cross-entropy is
\begin{equation}
\mathcal L_{\mathrm{CE}}=-\log \pi_y.
\label{eq:discrete_ce_appendix}
\end{equation}
What is directly optimized here is the probability mass assigned to the correct event itself. In contrast, \method requires the computation of an integral over the continuous latent space:
\begin{equation}
p(x^{\mathrm{res}}\mid c)
=
\int \exp\!\Big(
\log p_{\theta}(x^{\mathrm{res}}\mid z,c)+\log p_{\psi}(z\mid c)
\Big)\,\dd z.
\label{eq:continuous_integral_appendix}
\end{equation}
Accordingly, its value depends not only on whether the mode center is correctly located, but also explicitly on local volume and curvature. If one performs a Laplace approximation of
\[
F_{x,c}(z):=\log p_{\theta}(x^{\mathrm{res}}\mid z,c)+\log p_{\psi}(z\mid c)
\]
at its maximizer, then
\begin{equation}
\log p(x^{\mathrm{res}}\mid c)
\approx
F_{x,c}(z_{x,c}^{\star})
+\frac{d}{2}\log(2\pi)
-\frac12\log\det H_{x,c},
\label{eq:laplace_appendix}
\end{equation}
where
\[
H_{x,c}:=-\nabla_z^2F_{x,c}(z_{x,c}^{\star}).
\]
This shows that \textbf{conditional likelihood in continuous latent models depends not only on mode height, but also on local geometry.}

Therefore, the more accurate statement is not merely that discrete models are inherently simpler, but rather that \textbf{training, evaluation, and generation in discrete models all act more directly on the same discrete probabilistic object, whereas continuous latent models are additionally mediated by latent-space integration, posterior--prior matching, and decoder-emission compatibility.}

\subsubsection{LLaDA: the main discrepancy arises at the discrete observation level rather than the latent calibration level}

LLaDA is not identical to AR, but compared with \method, its recovery object is still the discrete observed state itself, rather than first encoding the sample-dependent posterior into a continuous latent and then requiring an independent conditional prior to match it pointwise. Therefore, even though LLaDA may also exhibit a surrogate gap, its main discrepancy is still concentrated within the discrete token space, rather than in the posterior--prior density collision characteristic of \method. For this reason, the correlation between PPL and generation is usually still stronger in LLaDA than in continuous latent models.

\section{A Unified Theoretical View of Noise Schedule, logSNR, and Flow Matching}
\label{app:noise_schedule_logsnr_fm}

This section consolidates the preceding discussion of noise schedule, Flow Matching, latent-dimension-dependent shift drift, and semantic information calibration into a single theoretical chain. The key question is not why a particular empirical hyperparameter works, but rather: \emph{what mathematical object the noise schedule controls in \method, and how this object enters the training objective and ultimately affects semantic modeling quality}. More specifically, we establish the following four points:
\begin{enumerate}
    \item the noise schedule and the logSNR curve are two equivalent parameterizations of the same forward noising process;
    \item in Flow Matching, the effect of logSNR on the loss arises both through the change of sampling measure inside the expectation and through the scale change of the supervised target velocity field;
    \item uniform-timestep training is generally not equivalent to uniform-logSNR training;
    \item the essential role of the optimal timestep shift is not to translate an abstract time label, but to recalibrate training and sampling toward a more appropriate \emph{effective semantic information regime}.
\end{enumerate}

\subsection{Noise Schedule and logSNR as Equivalent Parameterizations}
\label{app:noise_schedule_logsnr_equiv}

We begin with the standard forward process used throughout the main text and appendix:
\begin{equation}
z_t=\alpha_t z+\sigma_t \epsilon,
\qquad
\epsilon\sim \Ncal(0,I),
\label{eq:app_ns_forward_process}
\end{equation}
where $\alpha_t$ and $\sigma_t$ are determined by the noise schedule. The signal-to-noise ratio is naturally defined by
\begin{equation}
\mathrm{SNR}(t)=\frac{\alpha_t^2}{\sigma_t^2},
\qquad
\lambda(t):=\log \mathrm{SNR}(t)=\log \frac{\alpha_t^2}{\sigma_t^2}.
\label{eq:app_logsnr_def}
\end{equation}
Under the standard normalization
\begin{equation}
\alpha_t^2+\sigma_t^2=1,
\label{eq:app_snr_normalization}
\end{equation}
one has
\begin{equation}
\alpha_t^2=\mathrm{sigmoid}(\lambda(t)),
\qquad
\sigma_t^2=\mathrm{sigmoid}(-\lambda(t)).
\label{eq:app_alpha_sigma_from_logsnr}
\end{equation}
Therefore, specifying $\lambda(t)$ uniquely determines $(\alpha_t,\sigma_t)$, and conversely $(\alpha_t,\sigma_t)$ uniquely determines $\lambda(t)$. In this precise sense,
\begin{equation}
\boxed{
\text{noise schedule}
\Longleftrightarrow
\text{logSNR curve}
}
\label{eq:app_schedule_logsnr_equiv}
\end{equation}
so the two are not related through a secondary transformation of the loss; they are simply two equivalent parameterizations of the same forward noising geometry.

Now consider a shifted logSNR curve
\begin{equation}
\lambda_{\delta}(t)=\lambda(t)+\delta.
\label{eq:app_shifted_logsnr}
\end{equation}
The corresponding signal and noise coefficients become
\begin{equation}
\alpha_{t,\delta}^2=\mathrm{sigmoid}(\lambda(t)+\delta),
\qquad
\sigma_{t,\delta}^2=\mathrm{sigmoid}(-\lambda(t)-\delta).
\label{eq:app_shifted_alpha_sigma}
\end{equation}
Hence the role of timestep shift is not merely to translate the raw timestep axis, but rather to remap the same raw timestep to a different logSNR regime:
\begin{equation}
\boxed{
\text{the same raw timestep } t
\text{ is reassigned to a different logSNR interval.}
}
\label{eq:app_shift_meaning}
\end{equation}

\subsection{Reparameterization of the Flow Matching Objective}
\label{app:flow_matching_reparam}

As discussed above, the strict probabilistic objective of \method is always ELBO maximization together with prior matching, while Flow Matching is the practical mechanism used to learn the same prior family. To understand how the noise schedule enters training, it is therefore sufficient to analyze the mean-squared vector-field regression objective.

To avoid a notation clash with the coefficients $(\alpha_t,\sigma_t)$ in Equation~\eqref{eq:app_ns_forward_process}, let the interpolation coefficient of the bridge path be denoted by $\beta(t)$. Consider
\begin{equation}
z_t=(1-\beta(t))z_0+\beta(t)z_1,
\qquad
u_t(z_0,z_1)=\frac{\dd z_t}{\dd t}=\dot\beta(t)(z_1-z_0),
\label{eq:app_bridge_path_beta}
\end{equation}
and write the single-block Flow Matching objective as
\begin{equation}
\Lcal_{\mathrm{FM}}
=
\E_{t,z_0,z_1}
\Big[
\|v_{\psi}(z_t,t)-u_t(z_0,z_1)\|_2^2
\Big],
\qquad
t\sim U[0,1].
\label{eq:app_fm_loss_single_block}
\end{equation}
Equivalently,
\begin{equation}
\Lcal_{\mathrm{FM}}
=
\int_0^1 \ell(t)\,\dd t,
\qquad
\ell(t):=
\E_{z_0,z_1}
\Big[
\|v_{\psi}(z_t,t)-u_t(z_0,z_1)\|_2^2
\Big].
\label{eq:app_fm_loss_integral_t}
\end{equation}

Now let
\begin{equation}
\lambda=\Lambda(t):=\log \mathrm{SNR}(t),
\label{eq:app_lambda_as_reparam}
\end{equation}
and assume that $\Lambda$ is strictly monotone and differentiable, so that the inverse mapping $t=T(\lambda)=\Lambda^{-1}(\lambda)$ exists. By the change-of-variables formula,
\begin{equation}
\dd t=\left|\frac{\dd t}{\dd \lambda}\right|\dd \lambda,
\label{eq:app_dt_dlambda}
\end{equation}
which yields
\begin{equation}
\Lcal_{\mathrm{FM}}
=
\int
\ell(T(\lambda))
\left|\frac{\dd t}{\dd \lambda}\right|
\dd \lambda.
\label{eq:app_fm_loss_integral_lambda_first}
\end{equation}
Therefore, if training samples $t$ uniformly, then the induced density on the logSNR axis is
\begin{equation}
p_{\lambda}(\lambda)
=
p_t(T(\lambda))
\left|\frac{\dd t}{\dd \lambda}\right|
=
\left|\frac{\dd t}{\dd \lambda}\right|,
\label{eq:app_induced_density_lambda}
\end{equation}
where we used $p_t(t)\equiv 1$.

\begin{proposition}[Uniform timestep is generally not equivalent to uniform logSNR]
\label{prop:app_uniform_t_not_uniform_logsnr}
Let $\lambda=\Lambda(t)$ be a strictly monotone differentiable function, and suppose training uses $t\sim U[0,1]$. Then the sampling density on the logSNR axis is
\begin{equation}
p_{\lambda}(\lambda)=\left|\frac{\dd t}{\dd \lambda}\right|.
\label{eq:app_lambda_density_again}
\end{equation}
Therefore, unless $\Lambda(t)$ is affine, uniform-timestep training is not equivalent to uniform-logSNR training.
\end{proposition}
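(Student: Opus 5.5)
The first claim follows from the one-dimensional change-of-variables formula for densities. Because $\Lambda$ is strictly monotone and differentiable on $[0,1]$, it is a bijection onto the interval $I:=\Lambda([0,1])$, and its inverse $T=\Lambda^{-1}$ is differentiable wherever $\Lambda'\neq 0$. For any Borel set $A\subseteq I$ we have $\mathbb P(\lambda\in A)=\mathbb P(t\in T(A))=\int_{T(A)}\dd t$. Substituting $t=T(\lambda)$, as in Eq.~\eqref{eq:app_dt_dlambda}, turns this into $\int_A |\dd t/\dd\lambda|\,\dd\lambda$. Since $p_t\equiv 1$, the density of $\lambda$ is $|\dd t/\dd\lambda|$, which is exactly Eq.~\eqref{eq:app_induced_density_lambda}. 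The absolute value covers both the increasing and the decreasing case; in the usual convention, logSNR decreases in $t$.

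For the second claim, I would make "equivalent to uniform-logSNR training" precise as follows: the law of $\lambda$ is the uniform distribution on $I$. By the first part, this holds if and only if $|\dd t/\dd\lambda|$ is almost everywhere equal to the constant $1/|I|$ on $I$. This is equivalent to $|\Lambda'(t)|=|I|$ for almost every $t$. Monotonicity rules out a sign change, so $\Lambda'$ is a.e. constant. Since $\Lambda$ is differentiable, it is continuous, and I would then need to conclude that $\Lambda$ is affine. The contrapositive gives the claim: if $\Lambda$ is not affine, the induced density $|\dd t/\dd\lambda|$ is non-constant on a set of positive measure, so the two sampling measures differ. As a result, the weighting $\ell(T(\lambda))\,|\dd t/\dd\lambda|$ in Eq.~\eqref{eq:app_fm_loss_integral_lambda_first} differs from the uniform-logSNR weighting.

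The main obstacle is regularity. A function that is differentiable with a.e.-constant derivative is not automatically affine without absolute continuity; the Cantor-function issue is the concern here. I would handle this in one of two ways. One option is to assume, as is standard for noise schedules, that $\Lambda$ is $C^1$ (or absolutely continuous), so that $\Lambda(t)=\Lambda(0)+\int_0^t\Lambda'$ directly gives affinity. The other is to note that an everywhere-differentiable function whose derivative vanishes a.e. is constant, by the Lebesgue-Denjoy theorem, and to apply this to $\Lambda(t)-|I|t$ or $\Lambda(t)+|I|t$. I would also need to handle an unbounded $I$, for example $\lambda\to\pm\infty$ at the endpoints. In that case no uniform probability density on $I$ exists at all, so uniform-logSNR training can only be defined on a truncated interval. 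I would therefore state the comparison on compact subintervals $[t_0,t_1]\subset(0,1)$, where the argument above applies verbatim.
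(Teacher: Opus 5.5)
Your proposal is correct and follows the paper's proof: push the uniform law on $t$ forward through $T=\Lambda^{-1}$ to obtain $p_\lambda=|\dd t/\dd\lambda|$, then note that this is constant iff $\dd\lambda/\dd t$ is constant, i.e.\ iff $\Lambda$ is affine (the paper reads ``constant'' pointwise and omits the regularity discussion). As a side remark, the Cantor-type worry in your second step disappears if you argue via the CDF, since $\mathbb P(\lambda\le\ell)$ equals $T(\ell)$ or $1-T(\ell)$, so a uniform law forces $T$, and hence $\Lambda$, to be affine with no differentiability argument; the step that really needs an extra hypothesis is the substitution in your first step, which is valid only when $\{t:\Lambda'(t)=0\}$ is Lebesgue-null (otherwise the law of $\lambda$ has a singular part, a caveat the paper's statement shares).
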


\begin{proof}
By Equation~\eqref{eq:app_fm_loss_integral_lambda_first}, the pushforward of the uniform measure on $t$ to the $\lambda$ axis has density
\[
p_{\lambda}(\lambda)=p_t(T(\lambda))\left|\frac{\dd t}{\dd \lambda}\right|.
\]
Since $p_t(t)\equiv 1$, Equation~\eqref{eq:app_lambda_density_again} follows immediately. The density $p_{\lambda}(\lambda)$ is constant if and only if $\frac{\dd \lambda}{\dd t}$ is constant, namely if and only if $\Lambda(t)$ is affine. Hence the two training schemes are not equivalent in general.
\end{proof}

The above proposition shows only that the \emph{sampling measure} inside the expectation changes after reparameterization by logSNR. This is not yet the whole story. The supervised target velocity field itself also rescales under the new time coordinate. Let
\begin{equation}
z_{\lambda}:=z_{T(\lambda)},
\qquad
u_{\lambda}:=\frac{\dd z_{\lambda}}{\dd \lambda}.
\label{eq:app_zlambda_ulambda}
\end{equation}
Then, by the chain rule,
\begin{equation}
u_t
=
\frac{\dd z_t}{\dd t}
=
\frac{\dd z_{\lambda}}{\dd \lambda}\cdot \frac{\dd \lambda}{\dd t}
=
\dot\lambda(t)\,u_{\lambda},
\qquad
\dot\lambda(t):=\frac{\dd \lambda}{\dd t}.
\label{eq:app_ut_scaling_with_lambda}
\end{equation}
Substituting this identity into the loss gives
\begin{equation}
\Lcal_{\mathrm{FM}}
=
\int
\E_{z_0,z_1}
\Big[
\|v_{\psi}(z_{\lambda},T(\lambda))
-
\dot\lambda(T(\lambda))u_{\lambda}\|_2^2
\Big]
\left|\frac{\dd t}{\dd \lambda}\right|
\dd \lambda.
\label{eq:app_fm_loss_lambda_with_ut}
\end{equation}
Hence logSNR affects the Flow Matching loss in two distinct ways:
\begin{enumerate}
    \item it changes the sampling weight assigned to different noise regimes inside the expectation;
    \item it rescales the numerical magnitude of the supervised target velocity field.
\end{enumerate}

If we further define the $\lambda$-time vector field by
\begin{equation}
\tilde v_{\psi}(z,\lambda)
:=
\frac{1}{\dot\lambda(T(\lambda))}
 v_{\psi}(z,T(\lambda)),
\label{eq:app_vtilde_lambda}
\end{equation}
then Equation~\eqref{eq:app_fm_loss_lambda_with_ut} can be rewritten as
\begin{equation}
\Lcal_{\mathrm{FM}}
=
\int
w_{\mathrm{eff}}(\lambda)
\E_{z_0,z_1}
\Big[
\|\tilde v_{\psi}(z_{\lambda},\lambda)-u_{\lambda}\|_2^2
\Big]
\dd \lambda,
\label{eq:app_fm_loss_lambda_weighted}
\end{equation}
where the effective weight is
\begin{equation}
w_{\mathrm{eff}}(\lambda)
=
\left|\frac{\dd \lambda}{\dd t}\right|.
\label{eq:app_effective_weight_lambda}
\end{equation}
This yields the following summary.

\begin{remark}[How logSNR enters the Flow Matching loss]
\label{rem:app_logsnr_enters_fm}
In the Flow Matching training of \method, the effect of logSNR is not merely that one samples different time indices. It changes the training geometry jointly through
\begin{enumerate}
    \item the induced sampling measure on the logSNR axis, $p_{\lambda}(\lambda)=\left|\frac{\dd t}{\dd \lambda}\right|$;
    \item the time-scale rescaling of the supervised target, $u_t=\dot\lambda(t)u_{\lambda}$.
\end{enumerate}
Therefore, the noise schedule is not an external hyperparameter detached from the loss. It directly determines which logSNR regimes are emphasized during training and how difficult the regression task is within those regimes.
\end{remark}

\subsection{Equivalent Derivation Under the Signal--Noise Parameterization}
\label{app:signal_noise_parametrization}

For consistency with Equation~\eqref{eq:app_ns_forward_process}, it is useful to derive the same scaling relation directly from the signal--noise coefficients $(\alpha_t,\sigma_t)$. By Equation~\eqref{eq:app_alpha_sigma_from_logsnr},
\[
\alpha_t^2=\mathrm{sigmoid}(\lambda(t)),
\qquad
\sigma_t^2=\mathrm{sigmoid}(-\lambda(t)).
\]
Differentiating $\alpha_t^2$ with respect to $t$ yields
\begin{equation}
\frac{\dd}{\dd t}\alpha_t^2
=
\alpha_t^2\sigma_t^2\dot\lambda(t),
\label{eq:app_d_alpha_sq}
\end{equation}
and therefore
\begin{equation}
\dot\alpha_t
=
\frac{1}{2}\alpha_t\sigma_t^2\dot\lambda(t).
\label{eq:app_d_alpha}
\end{equation}
Similarly,
\begin{equation}
\dot\sigma_t
=
-\frac{1}{2}\sigma_t\alpha_t^2\dot\lambda(t).
\label{eq:app_d_sigma}
\end{equation}
Differentiating Equation~\eqref{eq:app_ns_forward_process} then gives
\begin{equation}
u_t
:=
\frac{\dd z_t}{\dd t}
=
\dot\alpha_t z+\dot\sigma_t \epsilon
=
\frac{1}{2}\dot\lambda(t)
\Bigl(
\alpha_t\sigma_t^2 z-\sigma_t\alpha_t^2\epsilon
\Bigr).
\label{eq:app_ut_from_alpha_sigma}
\end{equation}
If we define
\begin{equation}
u_{\lambda}
:=
\frac{\dd z_t}{\dd \lambda}
=
\frac{1}{2}
\Bigl(
\alpha_t\sigma_t^2 z-\sigma_t\alpha_t^2\epsilon
\Bigr),
\label{eq:app_ulambda_from_alpha_sigma}
\end{equation}
then we again obtain
\begin{equation}
u_t=\dot\lambda(t)\,u_{\lambda}.
\label{eq:app_ut_equals_lambda_dot_ulambda}
\end{equation}
Thus, the same conclusion holds whether one starts from an abstract bridge-path parameterization or directly from the signal--noise parameterization of diffusion.

\subsection{The Optimal Schedule as Calibration of the Effective Semantic Information Regime}
\label{app:optimal_schedule_semantic_calibration}

Showing that the schedule affects the loss is not yet sufficient. The more important question is: \emph{what meaningful object in the latent space is ultimately being calibrated?}

Following the notation in Appendix~\ref{sec:global_semantic_structure}, let $s$ denote a high-level semantic variable, and decompose the latent representation as
\begin{equation}
z=\phi(s)+u,
\label{eq:app_semantic_signal_plus_residual}
\end{equation}
where $\phi(s)$ is the semantic signal and $u$ represents posterior uncertainty or encoding residuals. Then the noisy latent at diffusion time $t$ is
\begin{equation}
z_t
=
\alpha_t\phi(s)+\alpha_t u+\sigma_t\epsilon.
\label{eq:app_noisy_latent_semantic}
\end{equation}
From the viewpoint of the semantic variable $s$, the total noise covariance is therefore
\begin{equation}
\Sigma_{\mathrm{noise}}(t)
=
\alpha_t^2\Sigma_u+\sigma_t^2 I,
\label{eq:app_noise_covariance_effective}
\end{equation}
where $\Sigma_u$ is the covariance of the posterior uncertainty term. If $\Sigma_s$ denotes the covariance of the semantic signal $\phi(s)$, then the effective information about $s$ preserved in the latent at time $t$ can be written as
\begin{equation}
I(s;z_t)
=
\frac{1}{2}
\log\det\!\Bigl(
I+\alpha_t^2\Sigma_s
(\alpha_t^2\Sigma_u+\sigma_t^2 I)^{-1}
\Bigr).
\label{eq:app_effective_semantic_mi}
\end{equation}
This makes clear that what matters to the DiT is not the raw timestep itself, but how much effective semantic information remains in the latent at that timestep. In this sense,
\begin{equation}
\boxed{
\text{the schedule effectively controls the curve }
 t\mapsto I(s;z_t).
}
\label{eq:app_schedule_controls_semantic_mi}
\end{equation}
Accordingly, the role of timestep shift should be interpreted not as the tuning of an isolated numerical hyperparameter, but as the recalibration of training and sampling toward a different effective semantic-information regime.

More abstractly, if the shifted logSNR curve is given by $\lambda_{\delta}(t)=\lambda(t)+\delta$, then the optimal shift can be expressed as
\begin{equation}
\delta^{\star}
=
\arg\max_{\delta}
\mathrm{Perf}\Bigl(
I_{\mathrm{eff},\delta}(t;d,\Sigma_u,\mathcal G,B,\vartheta)
\Bigr),
\label{eq:app_abstract_optimal_shift}
\end{equation}
where $d$ denotes the latent dimension, $\Sigma_u$ the posterior uncertainty, $\mathcal G$ the latent geometry or semantic smoothness, $B$ the block size, and $\vartheta$ the collection of model parameters. Thus, the problem of selecting the optimal noise schedule is best understood as an \emph{effective semantic information calibration} problem rather than an isolated numerical tuning problem.

\subsection{Why Latent Dimension and VAE logSNR Shift the Optimum}
\label{app:dimension_and_vae_logsnr_shift}

The viewpoint developed above is consistent with Proposition~\ref{prop:shift_invariance_under_separable_null} and Corollary~\ref{cor:drift_refutes_separable_null}. If the latent representation were purely local, separable, and homogeneous across dimensions, then the optimal shift would not systematically drift with dimension. When a stable drift is observed empirically, the natural interpretation is that latent dimension and VAE posterior logSNR both modify the effective semantic-information curve in Equation~\eqref{eq:app_effective_semantic_mi}.

\paragraph{Latent dimension.}
Under the shared-factor model
\begin{equation}
z_i=A_i g+\xi_i,
\qquad
i=1,\dots,d,
\label{eq:app_shared_factor_model_again}
\end{equation}
if different latent dimensions repeatedly observe the same low-dimensional shared factor $g$, then standard linear-Gaussian inference implies that the effective recovery signal-to-noise ratio scales as
\begin{equation}
\mathrm{SNR}_{\mathrm{eff}}(d)\propto d.
\label{eq:app_snr_eff_dim}
\end{equation}
Consequently, the recoverable information about the shared factor satisfies the approximation
\begin{equation}
I(g;z_t)
\approx
\frac{r}{2}\log\!\Bigl(1+c\cdot d\cdot \mathrm{SNR}_{\mathrm{eff}}(t)\Bigr).
\label{eq:app_mi_shared_factor_again}
\end{equation}
This means that increasing the latent dimension can strengthen the recoverable information about shared semantic structure. In order to keep training in a comparable semantic-recovery regime, the timestep shift must compensate for this dimensional effect, which naturally leads to a logarithmic law of the form
\begin{equation}
\delta^{\star}(d)=a\log d+b.
\label{eq:app_delta_log_d_again}
\end{equation}

\paragraph{VAE posterior logSNR.}
On the other hand, lowering the posterior logSNR of the VAE typically corresponds to increasing posterior variance, that is, enlarging $\Sigma_u$. By Equation~\eqref{eq:app_effective_semantic_mi}, for fixed $t$, increasing
\[
\alpha_t^2\Sigma_u+\sigma_t^2 I
\]
reduces the effective semantic mutual information. Therefore, the same raw timestep corresponds to a lower effective semantic SNR in a smoother latent space with larger posterior variance. To align training and sampling with a comparable semantic-information regime, the optimal shift must change accordingly.

This shows that changing the latent dimension and changing the VAE posterior logSNR are, on the surface, two different operations, but from an information-theoretic perspective they act on the same core object: the effective semantic-information curve $I(s;z_t)$ along diffusion time. The former mainly changes how shared information is aggregated and amplified across dimensions, whereas the latter mainly changes the effective noise floor at each timestep.

\begin{remark}[Role of block size]
\label{rem:app_block_size_role}
Compared with latent dimension and posterior logSNR, the effect of block size is not yet characterized by a closed-form law such as Equation~\eqref{eq:app_delta_log_d_again}. Nevertheless, block size changes the granularity of within-block semantic aggregation, the strength of cross-block conditional dependence, and the intrinsic modeling difficulty within the same noise regime. It therefore interacts with the schedule indirectly through the shape of the effective semantic-information curve. The more precise statement is not that block size directly determines the schedule, but that block size and schedule are coupled through the effective geometry of the block-causal prior.
\end{remark}

\begin{remark}[Summary]
\label{rem:app_noise_schedule_summary}
In \method, the noise schedule is not an empirical hyperparameter external to the training objective. It first defines the forward noising geometry through $(\alpha_t,\sigma_t)$, or equivalently through the logSNR curve; it then enters the Flow Matching objective jointly through the sampling measure and the time-scale of the supervised target velocity field; and, at the semantic level, it calibrates not the raw timestep itself but the effective mutual-information regime of high-level latent semantics. Consequently, changes in latent dimension, posterior uncertainty, and latent geometry can all induce systematic drift in the optimal timestep shift by altering the same effective semantic-information curve.
\end{remark}

\section{Experiments Details}

\subsection{Dataset Introduction}
\label{app:dataset}

\paragraph{LAMBADA.}
LAMBADA \citep{paperno2016lambada} is a long-context word prediction benchmark designed to evaluate whether a model can use broad discourse context for text continuation. The target word is usually difficult to infer from local context alone and instead requires understanding information distributed across the preceding passage. In our experiments, LAMBADA is used as a representative continuation-style benchmark to evaluate global semantic modeling and long-range contextual coherence.

\paragraph{MMLU.}
MMLU \citep{hendrycks2020measuring} is a multitask multiple-choice benchmark covering a wide range of subjects, including humanities, social sciences, STEM, and professional domains. It is widely used to assess broad factual knowledge and reasoning ability. In our experiments, MMLU serves as a representative knowledge-intensive benchmark for evaluating the quality of generated answers under multiple-choice settings.

\paragraph{SIQA.}
SIQA \citep{sap2019social} (Social IQa) is a multiple-choice commonsense reasoning benchmark focused on social situations, intentions, and reactions. Each example requires selecting the most plausible answer given a short social context. We use SIQA to evaluate whether the model can capture socially grounded semantics and perform coherent commonsense inference.

\paragraph{SQuAD.}
SQuAD \citep{rajpurkar2016squad} is a reading comprehension benchmark in which the model answers questions based on a given passage. In our evaluation, it is used in a generative setting to assess whether the model can produce accurate answers conditioned on contextual evidence. This benchmark complements multiple-choice tasks by testing open-form text generation ability.

\paragraph{Story Cloze.}
Story Cloze \citep{mostafazadeh2016corpus} is a story understanding benchmark that asks the model to choose the most plausible ending for a four-sentence story context. It mainly evaluates narrative coherence, causal reasoning, and commonsense understanding. We include it as a benchmark for discourse-level semantic consistency.

\paragraph{OBQA.}
OBQA \citep{mihaylov2018can} (OpenBookQA) is a multiple-choice question answering benchmark designed to test the combination of core science facts with broader commonsense knowledge. Many questions require multi-hop reasoning beyond direct fact recall. In our experiments, it is used to assess knowledge integration and reasoning under a constrained answer space.

\paragraph{RACE.}
RACE \citep{lai2017race} is a large-scale multiple-choice reading comprehension benchmark collected from English examinations for middle and high school students. The questions often require passage understanding, inference, and reasoning rather than simple span extraction. We use RACE to evaluate more challenging reading comprehension and answer selection ability.

\paragraph{HellaSwag.}
HellaSwag \citep{zellers2019hellaswag} is a multiple-choice benchmark for grounded commonsense reasoning and sentence continuation. The task is to select the most plausible continuation among several adversarially constructed candidates. It is widely regarded as a strong test of commonsense and semantic plausibility, and is included here to evaluate the model's ability to distinguish coherent continuations from distractors.

\paragraph{Usage in This Work.}
For the internal component analysis in Sections~\ref{exp:rq1}, \ref{exp:rq2}, and \ref{exp:rq3}, we conduct experiments on randomly sampled subsets from the test sets of LAMBADA, MMLU, and SIQA for efficiency and controlled comparison. For the broader external comparison in Section~\ref{exp:rq4}, we additionally evaluate on the test sets of SQuAD, Story Cloze, OBQA, RACE, and HellaSwag to provide a more comprehensive assessment across continuation, reading comprehension, knowledge-intensive reasoning, and commonsense reasoning tasks.

\subsection{Baseline Details}
\label{app:baseline}

For the internal component analysis in Sections~\ref{exp:rq1}, \ref{exp:rq2}, and \ref{exp:rq3}, the comparisons are conducted among different configurations of \method itself, so no external baseline is introduced in these sections. External baselines are mainly used in Section~\ref{exp:rq4} for scaling comparisons.

\paragraph{Autoregressive Baseline.}
We use a standard autoregressive language model as the primary reference baseline. The model is implemented based on the official LLaMA framework~\citep{touvron2023llama} and is trained from scratch with random initialization. It adopts the conventional next-token prediction objective and serves as a representative strong decoder-only autoregressive paradigm. To ensure a fair comparison, the autoregressive baseline is trained under settings that are strictly matched to \method in terms of data source, tokenizer, training budget, model scale, and optimization recipe wherever applicable.

\paragraph{Discrete Diffusion Baseline (LLaDA).}
As the discrete diffusion baseline, we use LLaDA~\citep{nie2025large}, implemented with its official codebase and also trained from scratch with random initialization. This baseline represents the discrete diffusion language modeling paradigm, in which generation is performed by iterative denoising in discrete token space. In the scaling experiments, LLaDA is trained under the same overall experimental protocol as the autoregressive baseline and \method, including matched training data, tokenizer, compute budget, and model size, so that the comparison focuses on the difference in modeling paradigm rather than differences in training resources or implementation details.

\paragraph{Fairness of Comparison.}
For all external comparisons in Section~\ref{exp:rq4}, the autoregressive baseline, LLaDA, and \method are trained independently under strictly matched settings. In particular, all models are trained from scratch rather than initialized from pretrained checkpoints, and we keep the training corpus, parameter scale, and compute budget aligned as closely as possible. Therefore, the reported differences mainly reflect the properties of the underlying modeling frameworks, rather than advantages introduced by data, initialization, or training recipes.

\subsection{Metrics and Evaluation}
\label{app:metrics}

To ensure the fairest possible comparison across model families, all models are evaluated under a unified few-shot generative protocol. As discussed in Section~\ref{dis:ppl}, we do not use perplexity as the primary metric for the main comparison, since likelihood-oriented estimates can exhibit a substantial mismatch with the actual generation quality of \method. Instead, all models are prompted to directly generate task answers under fixed templates, and performance is computed after unified post-processing and answer matching.

Across all tasks, we use manually written in-context demonstrations that are shared across model families and do not overlap with the evaluation set. Unless otherwise noted, the templates adopt 2-shot prompting. For multiple-choice tasks, the options are always presented with uppercase markers such as \texttt{(A)}, \texttt{(B)}, \texttt{(C)}, and \texttt{(D)}, with the number of options adjusted dynamically for each benchmark. Importantly, the model is always asked to generate the \emph{option text itself}, rather than the option label, so that all tasks follow the same generative evaluation protocol. Each prompt ends with a task-specific anchor such as \texttt{Answer:} or \texttt{End:}, without adding an extra trailing newline, so that the model directly continues from the same completion point.

For all multiple-choice benchmarks, the generated output is truncated at the first newline and normalized before scoring. We then compare the normalized prediction against all candidate option texts and map it to the best-matching candidate. A prediction is counted as correct if and only if the selected candidate matches the gold answer. For continuation-style generation such as LAMBADA, the model directly continues the given prefix and the first generated word is used as the prediction. For SQuAD, we follow the same unified generative protocol and evaluate the generated short answer after standard normalization. Unless otherwise stated, the templates, demonstrations, truncation rules, and answer-matching rules are kept identical for AR, LLaDA, and \method; the only difference is the native generation procedure used by each model.

\paragraph{Summary of prompt formats.}
Table~\ref{tab:prompt_summary} summarizes the prompt format used for each benchmark.

\begin{table}[t]
\centering
\small
\begin{tabular}{lcccc}
\toprule
Task & Format & \# Shots & \# Options & Generation Anchor \\
\midrule
MMLU        & multiple choice             & 2 & 4 & \texttt{Answer:} \\
RACE        & reading comprehension MCQ   & 2 & 4 & \texttt{Answer:} \\
Story Cloze & story ending selection      & 2 & 2 & \texttt{End:} \\
LAMBADA     & continuation / last-word prediction & 0 & -- & direct continuation \\
OBQA        & multiple choice             & 2 & 4 & \texttt{Answer:} \\
HellaSwag   & situational continuation MCQ & 2 & 4 & \texttt{Answer:} \\
SIQA        & social commonsense MCQ      & 2 & 3 & \texttt{Answer:} \\
SQuAD       & extractive QA generation    & 1 & -- & \texttt{Answer:} \\
\bottomrule
\end{tabular}
\caption{\textbf{Unified prompt formats used in evaluation.} All models are evaluated with the same prompt templates and answer-matching rules.}
\label{tab:prompt_summary}
\end{table}

\paragraph{MMLU.}
MMLU is evaluated as a 4-way multiple-choice generation task. The prompt contains two in-context examples, followed by the test question and its four candidate options. The model is required to generate the text of the correct option.

\begin{promptbox}[MMLU prompt template]
Question: Which gas do plants absorb from the air during photosynthesis?
(A) Oxygen
(B) Carbon dioxide
(C) Nitrogen
(D) Hydrogen
Answer: Carbon dioxide

Question: A triangle has angles 50 degrees and 60 degrees. What is the third angle?
(A) 60 degrees
(B) 70 degrees
(C) 80 degrees
(D) 90 degrees
Answer: 70 degrees

Question: {question}
(A) {choices[0]}
(B) {choices[1]}
(C) {choices[2]}
(D) {choices[3]}
Answer:
\end{promptbox}

\paragraph{RACE.}
RACE is also evaluated as a 4-way multiple-choice generation task. Here, the dataset field already contains the article together with the question, so the full input is placed after \texttt{Article:} and the candidate options are listed below it.

\begin{promptbox}[RACE prompt template]
Read the following article and answer the question.

Article: Mary went to the store to buy some fruits. She bought five apples and two oranges. She paid 5 dollars in total. What did Mary buy?
Options:
(A) Bananas
(B) Apples and oranges
(C) Grapes
(D) Watermelon
Answer: Apples and oranges

Article: {question}
Options:
(A) {choices[0]}
(B) {choices[1]}
(C) {choices[2]}
(D) {choices[3]}
Answer:
\end{promptbox}

\paragraph{Story Cloze.}
Story Cloze is formatted as a story-completion benchmark with two candidate endings. To better match the semantics of narrative continuation, the prompt uses \texttt{End:} instead of \texttt{Answer:} as the completion anchor.

\begin{promptbox}[Story Cloze prompt template]
Story: I wanted to make an omelet. I cracked two eggs into a bowl and whisked them. Then I poured them into a hot pan.
(A) I ate a delicious omelet for breakfast.
(B) I decided to order a pizza instead.
End: I ate a delicious omelet for breakfast.

Story: The runner tied his shoes tight. He sprinted as fast as he could during the race. He crossed the finish line first.
(A) He was sad that he lost the race.
(B) He won the gold medal.
End: He won the gold medal.

Story: {question}
(A) {choices[0]}
(B) {choices[1]}
End:
\end{promptbox}

\paragraph{LAMBADA.}
LAMBADA is treated as open-ended continuation rather than multiple choice. Since the task is essentially final-word prediction given a long context, we do not use in-context demonstrations. The model directly continues the provided passage prefix, and the first generated word is taken as the prediction.

\begin{promptbox}[LAMBADA prompt template]
{question}
\end{promptbox}

\paragraph{OBQA.}
OBQA is evaluated with the same 2-shot generative multiple-choice protocol as MMLU. The model is required to generate the text of the correct answer option rather than its label.

\begin{promptbox}[OBQA prompt template]
Question: Which tool is best for tightening a screw?
(A) spoon
(B) hammer
(C) screwdriver
(D) paintbrush
Answer: screwdriver

Question: What do plants absorb from the air during photosynthesis?
(A) carbon dioxide
(B) oxygen
(C) helium
(D) salt
Answer: carbon dioxide

Question: {question}
(A) {choices[0]}
(B) {choices[1]}
(C) {choices[2]}
(D) {choices[3]}
Answer:
\end{promptbox}

\paragraph{HellaSwag.}
HellaSwag is evaluated as a 4-way situational continuation task. We use \texttt{Context:} as the input field name to emphasize that the model must choose the most plausible continuation for the given context.

\begin{promptbox}[HellaSwag prompt template]
Context: The girl puts the bread into the toaster and pushes the lever down. The bread
(A) becomes a slice of pizza.
(B) starts to toast and turn brown.
(C) disappears immediately.
(D) turns into a glass of water.
Answer: starts to toast and turn brown.

Context: The goalkeeper sees the ball coming towards the net. He dives and
(A) catches the ball with his hands.
(B) starts dancing in the field.
(C) opens a laptop to check email.
(D) runs away from the stadium.
Answer: catches the ball with his hands.

Context: {question}
(A) {choices[0]}
(B) {choices[1]}
(C) {choices[2]}
(D) {choices[3]}
Answer:
\end{promptbox}

\paragraph{SIQA.}
SIQA is a 3-way social commonsense multiple-choice benchmark. The template is identical in spirit to the other multiple-choice prompts, except that only three options are presented.

\begin{promptbox}[SIQA prompt template]
Question: Jordan wanted to tell a joke to his friends. What does Jordan need to do before this?
(A) ignore his friends
(B) think of a funny story
(C) leave the room
Answer: think of a funny story

Question: Kai helped his neighbor carry heavy groceries inside. How would the neighbor feel?
(A) angry
(B) grateful
(C) scared
Answer: grateful

Question: {question}
(A) {choices[0]}
(B) {choices[1]}
(C) {choices[2]}
Answer:
\end{promptbox}

\paragraph{SQuAD.}
SQuAD is evaluated as short-answer generation under a standard extractive QA format. The prompt provides a context paragraph and a question, and the model is asked to generate the answer phrase directly. In our unified protocol, the generated answer is normalized before exact-match evaluation.

\begin{promptbox}[SQuAD prompt template]
Context: The Normans (Norman: Nourmands; French: Normands; Latin: Normanni) were the people who in the 10th and 11th centuries gave their name to Normandy, a region in France. They were descended from Norse raiders and pirates from Denmark, Iceland and Norway.
Question: In what country is Normandy located?
Answer: France

Context: {context}
Question: {question}
Answer:
\end{promptbox}

\paragraph{Final remarks.}
These templates are kept identical across AR, LLaDA, and \method, so that the comparison focuses on the model family itself rather than on prompt engineering differences. In all cases, evaluation is performed after unified post-processing, including truncation at the first newline and text normalization before matching against the candidate options or reference answers.

\subsection{Setup Details}
\label{app:setup}

To ensure a fair comparison, we align AR, LLaDA, and \method on the same pretraining corpus, tokenizer family, sequence length, batch size, optimizer, learning-rate schedule, hardware setup, and evaluation protocol. As a result, all methods observe the same number of training tokens at matched optimization steps, making step-wise comparison directly meaningful. We also keep the core generation backbone at a comparable scale: AR and LLaDA use $\sim$1.8B non-embedding transformer parameters, while the DiT backbone in \method is also kept at $\sim$1.8B; the remaining parameter difference comes from the additional VAE required by the continuous latent formulation.

\begin{table*}[t]
    \centering
    \caption{\textbf{Controlled training and evaluation setup shared across AR, LLaDA, and \method.} These settings are matched unless method-specific generation mechanisms require otherwise.}
    \label{tab:setup_shared}
    \small
    \setlength{\tabcolsep}{7pt}
    \renewcommand{\arraystretch}{1.12}
    \begin{tabular*}{\textwidth}{@{\extracolsep{\fill}}ll}
        \toprule
        \textbf{Category} & \textbf{Setting} \\
        \midrule
        Tokenizer & OLMo 2 tokenizer \\
        Supervised data mixing & None \\
        Sequence length & 512 \\

        Global batch size & 1,408 \\
        Tokens per step & 720,896 \\

        Precision & bfloat16 \\
        Optimizer & AdamW \\
        Peak learning rate & $1.5 \times 10^{-4}$ \\
        Betas & $(0.9,\ 0.95)$ \\
        Weight decay & 0.01 \\
        Gradient clipping & 1.0 \\
        Warmup & 5K steps, linear warmup from $1\times10^{-6}$ to $1.5\times10^{-4}$ \\
        LR decay & Cosine decay to $1\times10^{-5}$ \\

        Random seed / data seed & 66 / 6198 \\

        Max evaluation new tokens & 32 \\

        \bottomrule
    \end{tabular*}
\end{table*}

\begin{table*}[t]
    \centering
    \caption{\textbf{Model capacity and architecture comparison.} The core generation backbone is kept in a comparable regime across methods. \method introduces an additional VAE module for continuous latent modeling.}
    \label{tab:setup_model_compare}
    \small
    \setlength{\tabcolsep}{6pt}
    \renewcommand{\arraystretch}{1.12}
    \begin{tabular*}{\textwidth}{@{\extracolsep{\fill}}lccc}
        \toprule
        \textbf{Model} & \textbf{AR} & \textbf{LLaDA} & \textbf{\method} \\
        \midrule
        Model type & Autoregressive LM & Masked diffusion LM & Continuous latent diffusion LM \\
        Total parameters & $\sim$2.2B & $\sim$2.2B & $\sim$2.3B \\
        Core non-embedding parameters & $\sim$1.8B & $\sim$1.8B & DiT $\sim$1.8B \\
        VAE parameters & -- & -- & $\sim$500M \\
        Embedding parameters & 410,738,688 & 410,738,688 & 308,054,016 \\
        Vocabulary size & 100,278 & 100,278 & 100,278 \\
        Main hidden dimension & 2,048 & 2,048 & DiT: 2,048 \\
        FFN dimension & 8,192 & 8,192 & DiT: 8,192 \\
        Number of backbone layers & 27 & 27 & DiT: 24 \\
        Attention heads & 16 & 16 & DiT: 16 \\
        Head dimension & 128 & 128 & 128 \\
        Positional encoding & RoPE & RoPE & RoPE \\
        Attention pattern & Causal & Bidirectional & VAE causal + DiT block causal \\
        Latent dimension & -- & -- & 16 \\
        VAE encoder / decoder blocks & -- & -- & 4 / 4 \\
        VAE hidden dimension & -- & -- & 1,536 \\
        VAE FFN dimension & -- & -- & 6,144 \\
        \bottomrule
    \end{tabular*}
\end{table*}

\begin{table*}[t]
    \centering
    \caption{\textbf{Implementation-level numerical setup comparison across AR, LLaDA, and \method.}
    We also verify that the three methods use closely aligned checkpoint precision, runtime autocast policy, and numerically sensitive operations.}
    \label{tab:setup_numerical_fairness}
    \small
    \setlength{\tabcolsep}{5pt}
    \renewcommand{\arraystretch}{1.12}
    \begin{tabular*}{\textwidth}{@{\extracolsep{\fill}}p{0.23\textwidth}p{0.24\textwidth}p{0.24\textwidth}p{0.24\textwidth}}
        \toprule
        \textbf{Aspect} & \textbf{\method} & \textbf{AR} & \textbf{LLaDA} \\
        \midrule
        
        Checkpoint save dtype 
        & fp32 
        & fp32 
        & fp32 \\
        
        Checkpoint load dtype 
        & fp32 
        & fp32 
        & fp32 \\
        
        Runtime weight dtype 
        & fp32 
        & fp32 
        & fp32 \\
        
        Model input type 
        & Latents kept in fp32 before entering DiT; cast under autocast when needed 
        & Token ids in integer format 
        & Token ids in integer format \\
        
        Autocast policy 
        & bfloat16 autocast for DiT/VAE forward 
        & bfloat16 autocast for generation forward 
        & bfloat16 autocast for diffusion generation forward \\
        
        Main matmul / attention compute 
        & bf16 
        & bf16 
        & bf16 \\
        
        Numerically sensitive ops 
        & LayerNorm / softmax in fp32 
        & LayerNorm / softmax in fp32 
        & LayerNorm / softmax in fp32 \\
        
        Sampling-time accumulation 
        & Iterative latent update kept in fp32; CFG combination applied during denoising 
        & Token decoding without iterative floating-point latent accumulation 
        & Sampling uses numerically stable probability computation for token selection \\
        \bottomrule
    \end{tabular*}
\end{table*}

In addition to the matched data, optimization, and model-scale settings in Tables~\ref{tab:setup_shared} and~\ref{tab:setup_model_compare}, we further verified implementation-level numerical consistency across the three methods. As shown in Table~\ref{tab:setup_numerical_fairness}, all models save and load checkpoints in fp32, keep runtime weights in fp32, and perform the main forward computation under bfloat16 autocast, while numerically sensitive operations such as normalization and softmax remain in fp32. Therefore, the precision regime is broadly aligned across methods. The main implementation difference is that \method contains an additional VAE encode/decode path, which is intrinsic to continuous latent modeling rather than an extra numerical advantage.

Overall, the comparison is controlled along the main axes that determine pretraining scale and optimization difficulty, including data, tokenizer family, token budget, optimizer, learning-rate schedule, hardware, and evaluation protocol. Therefore, the observed differences mainly reflect the modeling paradigm itself rather than mismatched training resources or evaluation settings.

\subsection{Why VAE Pretraining Does Not Compromise Fairness}
\label{app:fairness_embed_vae_latent_space}

\begin{figure}[t]
    \centering
    \includegraphics[width=\linewidth]{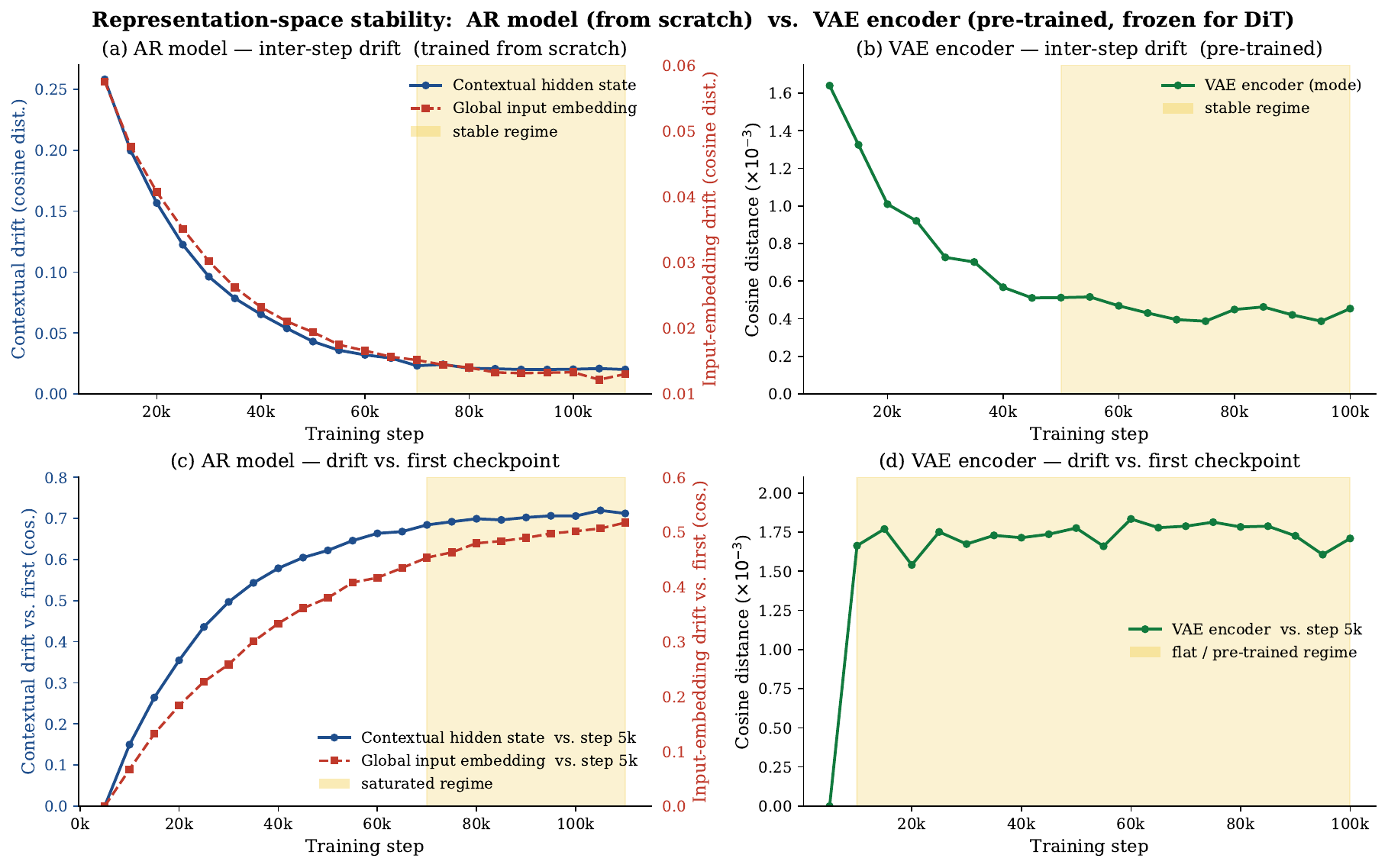}
    \caption{\textbf{Representation-space stability of AR training and pretrained VAE encoding.} Although the AR model learns its embedding and hidden-state geometry from scratch, and the VAE encoder starts from a pretrained state, both representations stabilize rapidly in the early stage of training. The inter-step drift in AR drops quickly and soon enters a stable regime, while the VAE encoder remains nearly flat throughout DiT training. This suggests that representation-space formation is largely an early-stage effect in both cases, and its influence on later generation performance is limited. The main generative burden therefore still lies in the subsequent backbone modeling, rather than in continued changes of the embedding or latent space itself.}
    \label{fig:fairness_embed_vae_latent}
\end{figure}

A potential concern is that Cola DLM includes an additional VAE pretraining stage, whereas the AR baseline is trained fully from scratch. We therefore clarify why this does not undermine the fairness of the comparison.

First, the pretrained VAE encoder is not itself the generative prior of Cola DLM. In our formulation, the generative model is defined by a decoder $p_\theta(x \mid z_0)$ and a latent prior $p_\psi(z_0)$, while $q_\phi(z_0 \mid x)$ is used only for variational inference and latent construction during training. At inference time, new response content is generated by the DiT prior block by block and then decoded into text. Therefore, the extra VAE stage should be understood as learning a stable text--latent interface, rather than pretraining an additional text generator.

Second, the overall comparison is already controlled along the main axes that determine pretraining scale and optimization difficulty. AR, LLaDA, and Cola DLM are aligned on the same training corpus, tokenizer family, sequence length, batch size, optimizer, learning-rate schedule, hardware setup, and evaluation protocol. We also keep the \emph{core generation backbone} in a comparable regime: AR and LLaDA use $\sim$1.8B non-embedding transformer parameters, while Cola DLM uses a $\sim$1.8B DiT prior; the remaining parameter difference comes from the additional VAE required by the continuous latent formulation. As a result, the comparison remains centered on the actual generation backbone rather than on mismatched training resources.

Third, Figure~\ref{fig:fairness_embed_vae_latent} provides an empirical perspective on representation-space stabilization. In the AR model, both the contextual hidden states and the global input embeddings show relatively large drift at the beginning of training, but quickly enter a stable regime. The inter-step drift becomes small after the early phase, and the distance to the first checkpoint also gradually saturates. The pretrained VAE encoder exhibits an even more explicit stabilization pattern: its inter-step drift drops rapidly and then remains nearly flat, while its distance to the early checkpoint changes very little after the initial stage. This indicates that, in both model families, the representation space becomes largely stable relatively early; once this happens, the dominant learning burden is no longer representation construction itself, but the modeling of generation dynamics on top of that space.

This observation is important for interpreting fairness. AR models also spend a nontrivial portion of training on forming and stabilizing their embedding and hidden-state geometry. Cola DLM makes this step explicit through VAE pretraining, but this does not mean that it receives a hidden pretrained generative advantage. Instead, it moves representation stabilization into a separate and modular stage, after which the main text-modeling burden still lies in the DiT prior and the conditional decoder. In this sense, VAE pretraining mainly amortizes representation learning, rather than replacing the need to learn the actual generative mechanism.

Moreover, the results in Figure~\ref{fig:rq2_fix_vs_evolve} already suggest that a pretrained encoder alone cannot explain the final gains. Although \textsc{Fix VAE} can be competitive at smaller compute budgets, it saturates earlier, whereas \textsc{Joint DiT x1} shows the strongest overall scaling trend. If the advantage mainly came from having a pretrained encoder by itself, the fixed-space variant should have remained dominant. Instead, the empirical pattern indicates that the key factor is the subsequent prior learning and co-adaptation with DiT, rather than a frozen pretrained representation alone.

Overall, we view the additional VAE stage as a method-specific representation-construction cost required by continuous latent modeling, analogous in spirit to the early embedding-space formation that AR models must also undergo during training. Once the representation space has stabilized, the comparison remains focused on the generative backbone under matched data, optimization, and evaluation settings. Therefore, the observed performance differences are more appropriately attributed to the modeling paradigm itself, rather than to an advantage from VAE pretraining.

\subsection{Fixed vs. Evolving Latent Space Additional Results}
\label{app:fix_vs_evolve_fulu}
\begin{figure}[t]
    \centering
    \begin{subfigure}[t]{0.49\linewidth}
        \centering
        \includegraphics[width=\linewidth]{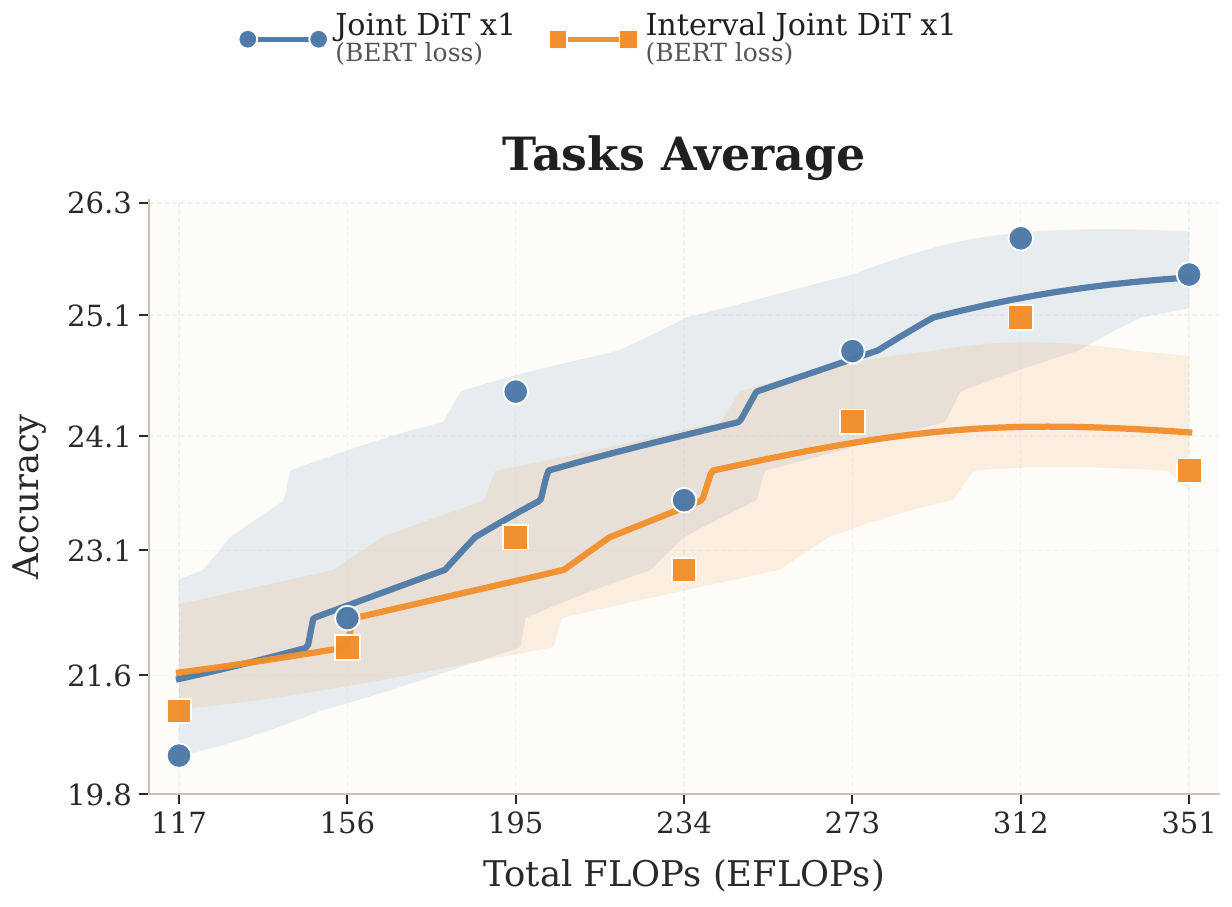}
        \label{fig:rq2_fix_vs_evolve_fulu_avg}
    \end{subfigure}
    \hfill
    \begin{subfigure}[t]{0.49\linewidth}
        \centering
        \includegraphics[width=\linewidth]{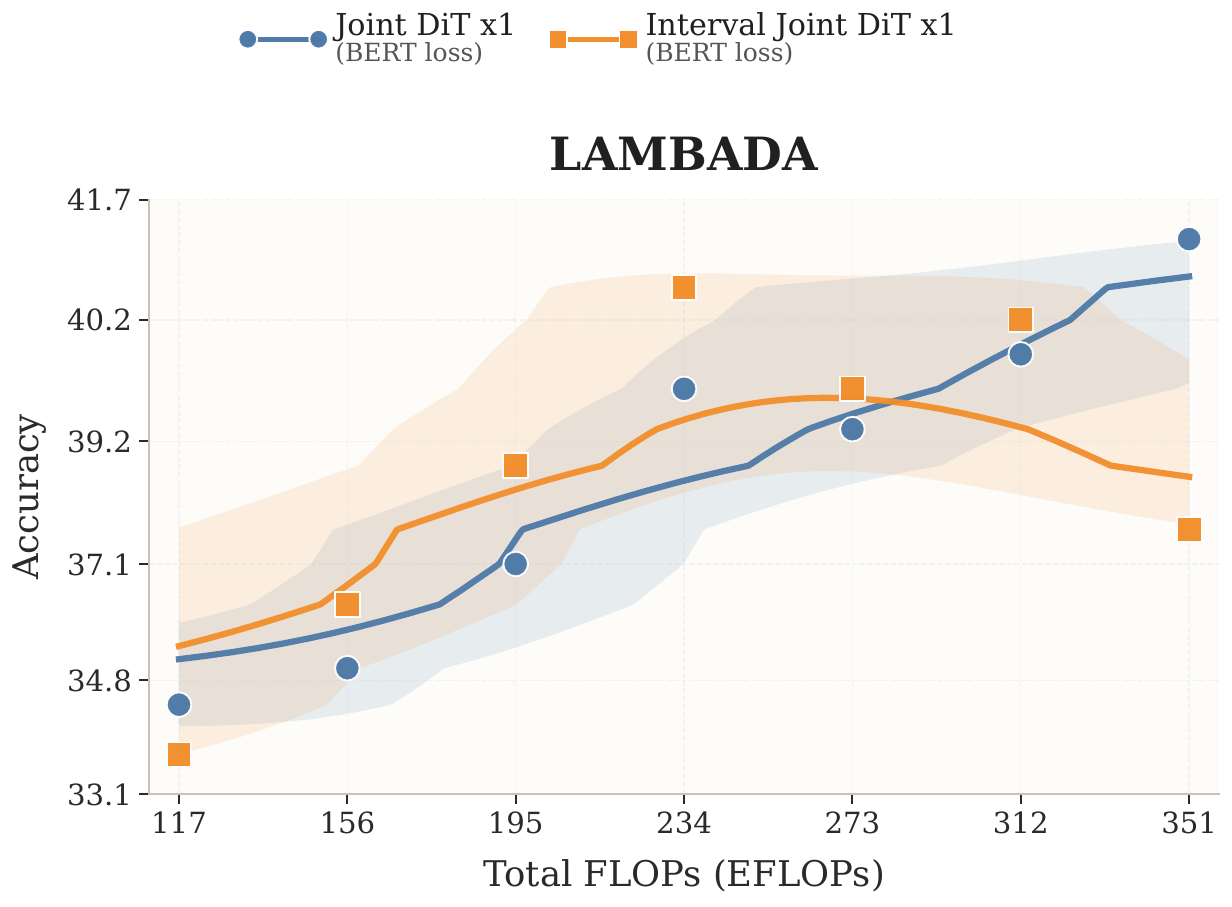}
        \label{fig:rq2_fix_vs_evolve_fulu_lambada}
    \end{subfigure}

    \vspace{0.5em}

    \begin{subfigure}[t]{0.49\linewidth}
        \centering
        \includegraphics[width=\linewidth]{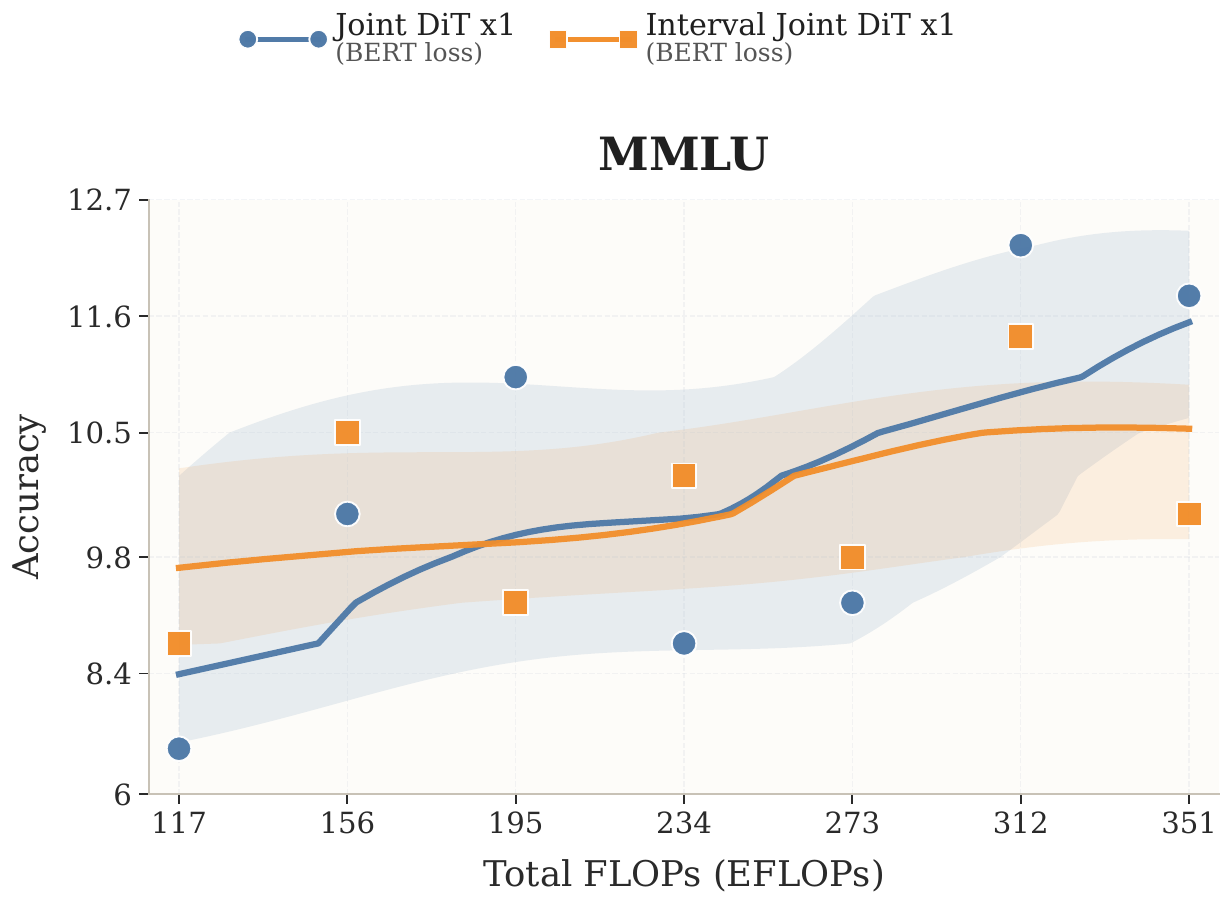}
        \label{fig:rq2_fix_vs_evolve_fulu_mmlu}
    \end{subfigure}
    \hfill
    \begin{subfigure}[t]{0.49\linewidth}
        \centering
        \includegraphics[width=\linewidth]{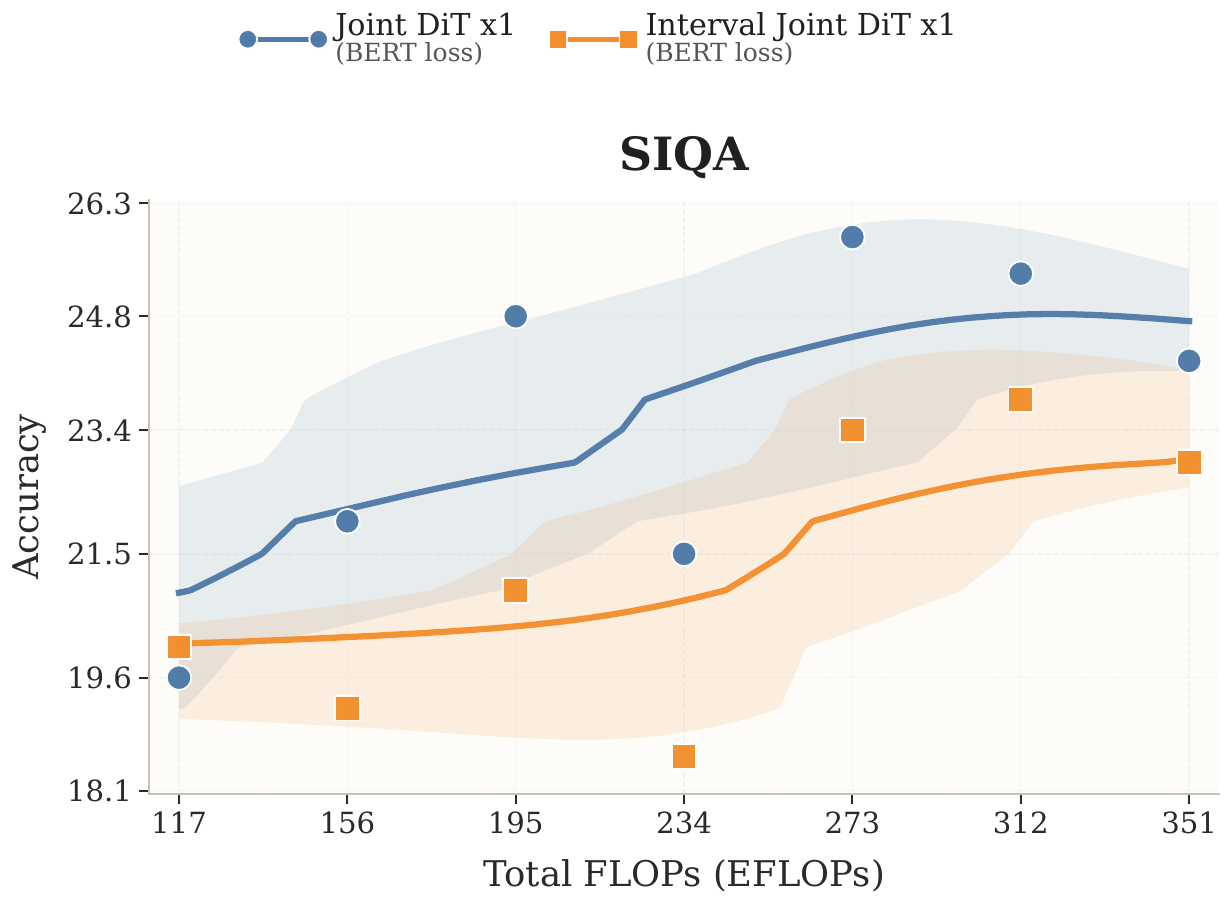}
        \label{fig:rq2_fix_vs_evolve_fulu_siqa}
    \end{subfigure}
    \caption{\textbf{Continuous co-evolution outperforms interval updates with pretrained VAE initialization.} Across Task Average, LAMBADA, MMLU, and SIQA, Joint DiT x1 still shows a stronger overall trend and better final performance than Interval, indicating that even with a stable pretrained VAE initialization, the latent space benefits more from sustained co-evolution with DiT than from periodic freezing.}
    \label{fig:rq2_fix_vs_evolve_fulu}
\end{figure}

As shown in Figure~\ref{fig:rq2_fix_vs_evolve_fulu}, we further compare continuous joint evolution and interval training under a stronger setting where the Interval variant is also initialized from a pretrained VAE. This setting should therefore be interpreted as isolating the effect of continuous versus intermittent latent updates after stable initialization, rather than the effect of initialization quality itself. Even under this stronger setup, Joint DiT x1 remains consistently better than Interval across Task Average, LAMBADA, MMLU, and SIQA. This suggests that the advantage of continuous latent evolution does not merely come from a better starting point. Instead, once a meaningful latent space has been established, the latent representation still benefits more from sustained co-evolution with DiT, while periodic freezing tends to interrupt the mutual adaptation between the evolving latent space and the DiT prior. And this results also further verify the last condition in Eq.~\eqref{eq:main_three_curves} of Section~\ref{sec:theory_advantage}, and provides strong support for the potential advantage of \method.

\subsection{VAE LogSNR Formula}
\label{app:vae_logsnr_calc}

For the VAE encoder posterior
\begin{equation}
q_{\phi}(z \mid x)=\mathcal{N}\!\left(\mu_{\phi}(x), \mathrm{diag}\!\left(\sigma_{\phi}^{2}(x)\right)\right),
\end{equation}
we define the \emph{VAE logSNR} as the log ratio between the signal power carried by the posterior mean and the noise power carried by the posterior variance:
\begin{equation}
\mathrm{logSNR}_{\mathrm{vae}}
=
\log
\frac{\mathbb{E}_{x,i}\!\left[\mu_{\phi,i}(x)^2\right]}
{\mathbb{E}_{x,i}\!\left[\sigma_{\phi,i}(x)^2\right]},
\label{eq:vae_logsnr_def}
\end{equation}
where \(i\) indexes all latent coordinates, including token/spatial positions and channels, and the expectation is taken over the data distribution and latent dimensions.

In practice, we estimate it empirically on a dataset or a mini-batch:
\begin{equation}
\widehat{\mathrm{logSNR}}_{\mathrm{vae}}
=
\log
\frac{
\frac{1}{N}\sum_{n=1}^{N}\frac{1}{D}\sum_{i=1}^{D}\mu_{n,i}^{2}
}{
\frac{1}{N}\sum_{n=1}^{N}\frac{1}{D}\sum_{i=1}^{D}\sigma_{n,i}^{2}
},
\label{eq:vae_logsnr_empirical}
\end{equation}
where \(N\) is the number of samples and \(D\) is the number of latent coordinates per sample. If the encoder predicts log-variance, we compute
\begin{equation}
\sigma_{n,i}^{2} = \exp\!\left(\log \sigma_{n,i}^{2}\right).
\end{equation}

A larger VAE logSNR indicates that the posterior mean dominates the posterior uncertainty, corresponding to a cleaner and more deterministic latent representation. Throughout the paper, when we refer to \emph{VAE logSNR}, we specifically mean the quantity defined in Eq.~\eqref{eq:vae_logsnr_def}.

\subsection{Noise Schedule Timestep Shift Visualization}
\label{app:timestep_shift_vis}
\begin{figure}[t]
    \centering
    \includegraphics[width=1\textwidth]{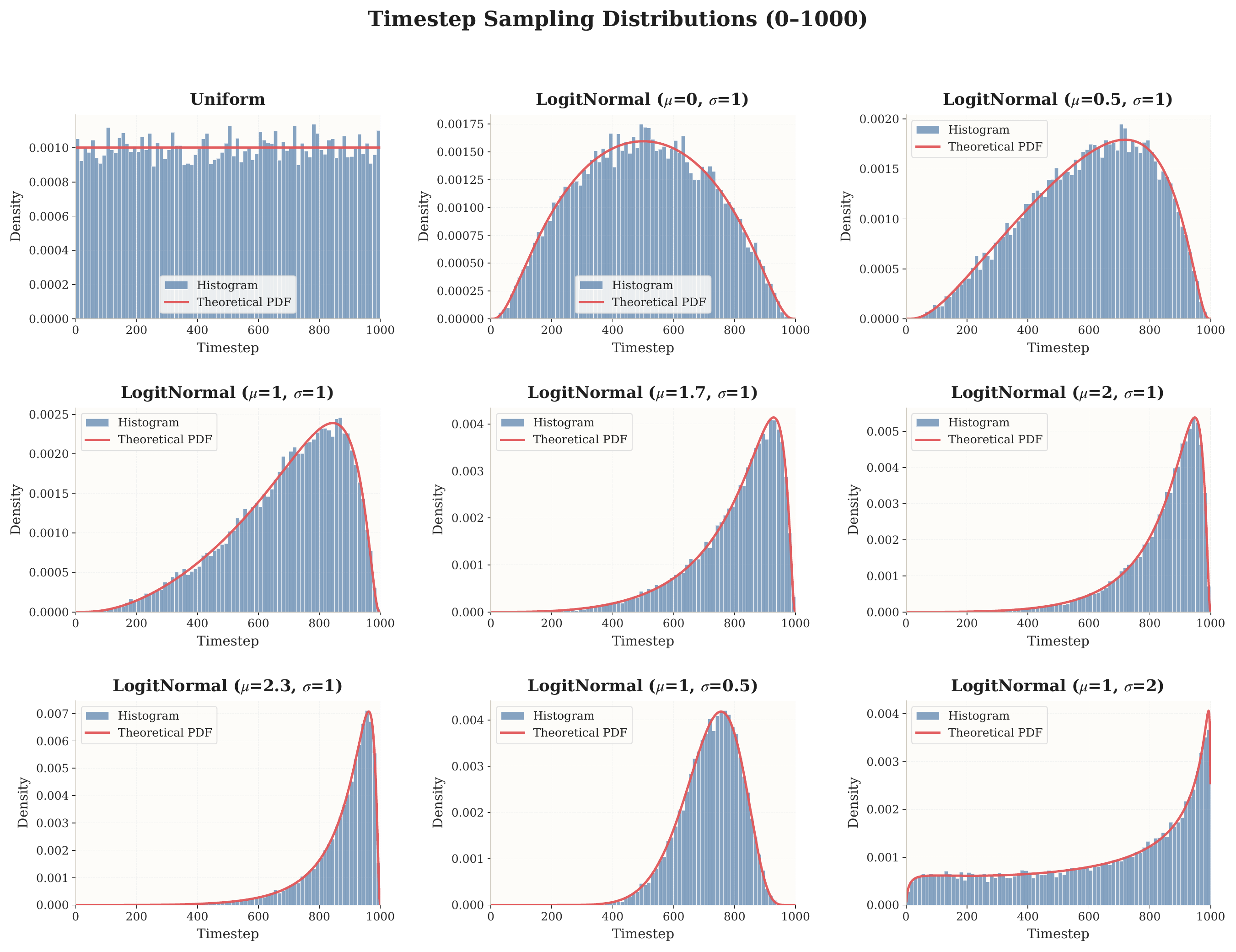}
    \caption{\textbf{Visualization of timestep-shifted sampling distributions.} The figure compares the uniform schedule with LogitNormal timestep-sampling distributions under different $\mu$ and $\sigma$. Increasing $\mu$ progressively shifts the sampling mass toward later timesteps, while changing $\sigma$ controls the sharpness of this concentration. This illustrates that timestep shift does not merely change a numerical sampling preference, but effectively changes the logSNR regime emphasized during training.}
    \label{fig:timestep_vis}
\end{figure}
As shown in Figure~\ref{fig:timestep_vis}, we visualize the timestep-sampling distributions induced by different timestep-shift settings. Compared with the uniform schedule, LogitNormal schedules place non-uniform emphasis on different parts of the denoising trajectory: increasing $\mu$ shifts the distribution toward later timesteps, while $\sigma$ controls how concentrated or dispersed the sampling mass is. From the perspective of the main text, this means that timestep shift effectively changes which semantic-information regime is more frequently emphasized during training, rather than simply tuning an isolated numerical hyperparameter.

\subsection{Timestep Shift Formula}
\label{app:timestep_shift_calc}

We implement timestep shift by replacing uniform timestep sampling with a LogitNormal distribution on the normalized timestep. Specifically, let
\begin{equation}
u \sim \mathcal{N}(\mu, \sigma^2),
\end{equation}
and map it to the unit interval by the logistic function:
\begin{equation}
s = \mathrm{sigmoid}(u) = \frac{1}{1+e^{-u}},
\qquad s \in (0,1).
\end{equation}
The continuous timestep is then defined as
\begin{equation}
t = T \cdot s = T \cdot \mathrm{sigmoid}(u),
\label{eq:timestep_shift_continuous}
\end{equation}
where \(T\) is the maximum diffusion timestep. In practice, a discrete training timestep is obtained by rounding or flooring \(t\) to the valid timestep grid.

Equivalently, the normalized timestep \(s=t/T\) follows a LogitNormal distribution:
\begin{equation}
s \sim \mathrm{LogitNormal}(\mu,\sigma^2).
\end{equation}
Its probability density function is
\begin{equation}
p(s;\mu,\sigma)
=
\frac{1}{\sigma \sqrt{2\pi}}
\frac{1}{s(1-s)}
\exp\!\left(
-\frac{(\log \frac{s}{1-s}-\mu)^2}{2\sigma^2}
\right),
\qquad s\in(0,1).
\label{eq:logitnormal_pdf}
\end{equation}
After rescaling to the timestep domain \(t\in(0,T)\), the corresponding density becomes
\begin{equation}
p(t;\mu,\sigma)
=
\frac{1}{T}\,
p\!\left(\frac{t}{T};\mu,\sigma\right).
\label{eq:timestep_shift_pdf}
\end{equation}

Under this parameterization, \(\mu\) mainly controls the location of the sampling mass: a larger \(\mu\) shifts probability toward later timesteps. In contrast, \(\sigma\) controls the spread of the distribution: a smaller \(\sigma\) produces a sharper concentration, while a larger \(\sigma\) yields a more dispersed schedule. Therefore, timestep shift directly changes the distribution of sampled logSNR regimes during training, rather than merely reweighting timestep indices numerically.

\section{Discussion}
\label{app:discussion}
\subsection{Understanding the Conditioning Strategies in the First Generation Block}
\label{app:first_block_conditioning_detail}

This subsection explains the conditioning strategies compared in Table~\ref{tab:first_block_conditioning}, and further analyzes them from the perspective of conditional Flow Matching in \method. The key issue arises in the \emph{first generation block}, where the model simultaneously sees a known prompt region and an unknown region to be generated. In other words, the input to this block is mixed: part of it should be preserved as condition, while the rest should be completed by the model. The methods in Table~\ref{tab:first_block_conditioning} differ precisely in how they handle this mixed denoising process.

\paragraph{What makes the first block special.}
Later generation blocks are purely generative: all latents in the block are unknown and are produced by denoising. The first generation block is different because it sits immediately after the prefix prompt. As a result, some positions in this block are already determined by the prompt, while the remaining positions are not. A good conditioning strategy should therefore achieve two goals at the same time: it should preserve the known region reliably, and it should allow the unknown region to grow coherently from that preserved context.

From the viewpoint of \method, this distinction is especially important because the prior is learned block by block through a conditional flow:
\[
p_\psi(z_0)=p_\psi(z_0^{(1)})\prod_{b=2}^{B} p_\psi(z_0^{(b)} \mid z_0^{(<b)}),
\]
and the corresponding conditional Flow Matching objective learns a vector field of the form
\[
v_\psi(z_t^{(b)}, t; z_0^{(<b)}).
\]
That is, the current noisy block is predicted under \emph{clean historical latent conditions}. At inference time, generation follows the same factorization by transporting a noise seed under the clean prefix and historical latent conditions. Therefore, the mathematically natural role of the known region in the first generation block is not that of a denoising target, but that of a fixed condition.

\paragraph{A decomposition of the first generation block.}
To make this point more explicit, let the first generation block be decomposed as
\[
z^{(1)} = (z_K, z_U),
\]
where \(z_K\) denotes the known region determined by the prefix, and \(z_U\) denotes the unknown region to be generated. In an ideal formulation, the task of the first generation block should be understood as
\[
\text{generate } z_U \text{ under the fixed condition } (z_{\mathrm{pre}}, z_K),
\]
rather than to jointly re-estimate both \(z_K\) and \(z_U\). Equivalently, the desired conditional prior is closer to
\[
p_\psi(z_U \mid z_{\mathrm{pre}}, z_K),
\]
where the known part serves as a stable boundary condition and only the unknown part is transported by the flow.

\paragraph{Partial repaint.}
The first family of methods in Table~\ref{tab:first_block_conditioning} is \emph{partial repaint}. These methods repeatedly revisit the early denoising steps and re-inject guidance into the known region during only part of the trajectory.

The parameter \(t\) controls how many repaint cycles are used inside each early denoising step. For example, \(t=1\) means that each step is executed once, while \(t=3\) means that the model denoises, re-noises, and denoises again multiple times before moving on. The parameter \(m\) controls how long the known region remains guided. When \(m=1.0\), guidance is applied throughout the whole denoising trajectory of the first block. When \(m=0.7\) or \(m=0.3\), the model only receives such guidance in the earlier portion of the trajectory, and then the remainder is left to evolve more freely.

The important point is that partial repaint does \emph{not} keep the known region perfectly fixed. Instead, it replaces the known region with a timestep-matched noisy surrogate during part of the trajectory and only nudges that region back toward the target at selected denoising steps. This makes the strategy softer and more flexible, but also fundamentally weaker as a conditioning mechanism.

\paragraph{A simple example for partial repaint.}
Consider a first generation block with four denoising transitions. If we use partial repaint with \(t=3\) and \(m=0.5\), then only the first half of the trajectory receives explicit guidance on the known region, while each of these early steps is revisited three times. Intuitively, the model repeatedly tries to correct the early transition, but once it enters the later half of the trajectory, the known region is no longer explicitly reinforced. The final result therefore depends on whether those earlier corrections are strong enough to keep the conditional structure stable.

\paragraph{Clean condition repaint.}
The method labeled \textbf{Clean cond.} in Table~\ref{tab:first_block_conditioning} takes a much stronger approach. Instead of providing only partial or noisy guidance, it keeps the known region directly aligned with the clean condition throughout the denoising process of the first block. In effect, the model always sees an exact and stable version of the prompt-conditioned part, while only the unknown region is allowed to change.

This strategy is much more conservative than partial repaint. It reduces ambiguity in the mixed region and makes the role of the first block very clear: preserve the prompt-conditioned part exactly, and generate the remaining part around it. This is especially useful when the boundary between known and unknown positions is semantically important.

\paragraph{A simple example for clean conditioning.}
Suppose the first few positions in the block correspond to prompt content and the remaining positions must be generated. Under clean condition repaint, the prompt side is always kept fixed, no matter how many denoising steps are taken. The model therefore never needs to re-discover or re-estimate that part. Its effort is focused entirely on making the unknown side compatible with a stable context. This is a much easier problem than simultaneously preserving the known side and generating the unknown side under drifting noise.

\paragraph{Left padding and right padding.}
The last two methods, \textbf{Left pad.} and \textbf{Right pad.}, should be understood as layout-based alternatives rather than repaint-based conditioning strategies. They change where the known condition is placed relative to the region to be generated, but they do not explicitly enforce the condition during denoising in the same way as repaint-based methods do.

Their role in the table is useful for interpretation. If padding alone were sufficient, then a simple positional arrangement would already match the performance of explicit conditioning. If not, then the gain must come from the conditioning mechanism itself rather than from layout alone.

\paragraph{A Flow Matching view of why clean conditioning works best.}
The advantage of clean condition repaint is not merely that it provides ``stronger guidance.'' More fundamentally, it is the only strategy among those compared that preserves the original conditional semantics of \method.

Under the standard Flow Matching formulation, let the target conditional velocity be denoted by \(u_t\), and let the optimal vector field be the conditional expectation
\[
v^*(z_t,t;c)=\mathbb{E}[u_t \mid z_t,t,c],
\]
where \(c\) denotes the conditioning information. For the first generation block, the ideal condition is
\[
c=(z_{\mathrm{pre}}, z_K),
\]
namely, the clean prefix latent together with the clean known region inside the first block. Under clean condition repaint, the model effectively solves the transport problem under exactly this condition. In the unknown region, this can be viewed as a conditional velocity regression of the form
\[
v^*_{\mathrm{clean}}(z_{U,t}, t; z_{\mathrm{pre}}, z_K)
=
\mathbb{E}[u_t^U \mid z_{U,t}, t, z_{\mathrm{pre}}, z_K].
\]
Although the implementation still predicts the vector field for the whole current block, the uncertainty that truly matters is concentrated on the unknown region, while the known region acts as a fixed boundary condition. This is precisely the role decomposition that the model is designed to exploit.

\paragraph{Why partial repaint is mathematically harder.}
Partial repaint changes this problem in a crucial way. Instead of conditioning on the true clean known region \(z_K\), it supplies a degraded and time-varying surrogate \(\tilde z_{K,t}\) during only part of the trajectory. The effective condition becomes
\[
\tilde c_t = (z_{\mathrm{pre}}, \tilde z_{K,t}),
\]
and the corresponding optimal regression target becomes
\[
v^*_{\mathrm{partial}}(z_{U,t}, t; z_{\mathrm{pre}}, \tilde z_{K,t})
=
\mathbb{E}[u_t^U \mid z_{U,t}, t, z_{\mathrm{pre}}, \tilde z_{K,t}].
\]
This is generally a more difficult regression problem. Since \(\tilde z_{K,t}\) is only a noisy surrogate of the true condition, the same \((z_{U,t}, t, \tilde z_{K,t})\) may correspond to a broader set of compatible clean targets. As a result, the conditional variance of the target velocity is larger. In terms of Bayes risk, one can compare
\[
\mathcal{R}(c)
=
\mathbb{E}\!\left[\mathrm{Var}(u_t^U \mid z_{U,t}, t, c)\right],
\qquad
\mathcal{R}(\tilde c_t)
=
\mathbb{E}\!\left[\mathrm{Var}(u_t^U \mid z_{U,t}, t, \tilde c_t)\right].
\]
Since \(\tilde c_t\) is a weaker and noisier version of \(c\), the second quantity is generally larger. Therefore, partial repaint increases the irreducible uncertainty of the conditional velocity regression itself. From the Flow Matching perspective, its weakness is not simply that the guidance is softer, but that it turns a clean conditional transport problem into a noisier mixed recovery problem.

\paragraph{Condition-role mismatch in partial repaint.}
There is also a structural mismatch. In \method, the flow path is used for \emph{prior transport}, not for observation recovery. Historical conditions are supposed to be stable anchors, while the current noisy block is the object being transported. Partial repaint violates this separation by converting the known region from a condition into a partially denoised state variable. In other words, it changes the task from
\[
\text{transport the unknown part under fixed condition}
\]
to
\[
\text{jointly maintain a noisy known part and transport the unknown part}.
\]
This role confusion is especially harmful in the first generation block, because the known and unknown regions are tightly coupled inside the same block. Once the known side drifts, the unknown side no longer grows from a stable semantic anchor.

\paragraph{Why errors accumulate along the flow trajectory.}
The weakness of partial repaint is further amplified by the continuous nature of flow-based generation. Let the ideal vector field under clean conditioning be \(v^*(z,t;c)\), and let the field induced by partial repaint be
\[
\tilde v(z,t) = v^*(z,t;c) + \delta(z,t),
\]
where \(\delta\) denotes the condition-induced bias. Since inference transports the latent by integrating the learned vector field along an ODE trajectory, this bias accumulates over time. Under standard stability assumptions, the trajectory error satisfies a bound of the form
\[
\|\hat z_t - z_t^*\|
\le
e^{Lt}\int_0^t \|\delta(z_s,s)\|\, ds,
\]
where \(L\) is a Lipschitz constant of the vector field. This expression makes the weakness of partial repaint especially clear.

First, reducing \(m\) makes performance worse because the explicit guidance only covers an earlier portion of the trajectory. Once guidance is removed, the later part evolves under a mismatched condition, so the induced bias continues to accumulate during the unguided interval. Second, increasing \(t\) from \(1\) to \(3\) does not provide stable gains because repeated corrections in the early stage cannot eliminate the structural mismatch in the later stage. In other words, repaint can repeatedly repair local early transitions, but it cannot turn a transient condition into a persistent one.

\paragraph{Why the first block is more sensitive than later blocks.}
Another reason for the strong gap is that the first generation block is not a purely unknown block. The known and unknown regions coexist and interact inside the same block. Therefore, any instability in the known region is immediately transmitted to the unknown region through the within-block interactions. In latent space, this effect is even stronger because the known region functions as a semantic anchor rather than a mere token-level hint. If that anchor is kept clean, the model only needs to fit the unknown part to a stable context. If that anchor becomes noisy and drifts over time, the model must simultaneously infer the boundary condition and generate the completion, which is a fundamentally harder problem.

\paragraph{Why padding is better than partial repaint but still worse than clean conditioning.}
Left and right padding do not explicitly re-noise the known region, so they avoid the most harmful failure mode of partial repaint. This is why they are often stronger than most partial repaint settings. However, padding alone still does not provide a fully stable conditioning mechanism throughout denoising. It changes the positional layout, but it does not lock the known region as an exact condition in the same way as clean condition repaint. Moreover, padding can also make the block-causal attention pattern more complicated: by reshaping the relative arrangement between the known and unknown regions, it may introduce a less natural or less aligned dependency structure for the model to follow during denoising. Therefore, padding can be helpful, but it cannot match the benefit of preserving a stable condition explicitly.

\paragraph{A unified intuition across all methods.}
All methods in Table~\ref{tab:first_block_conditioning} are trying to solve the same problem: the first generation block contains both a part that should remain faithful to the prompt and a part that should be newly generated. The difference lies in how strongly the known region is protected.

Partial repaint protects it only temporarily and only approximately. Clean condition repaint protects it strongly and continuously. Left and right padding do not explicitly protect it during denoising, but instead hope that a favorable layout will make the interaction easier for the model.

\paragraph{How to read Table~\ref{tab:first_block_conditioning}.}
The results show a very clear pattern. Clean condition repaint consistently performs best across all three tasks. This suggests that, for the mixed denoising problem in the first block, strong and persistent conditioning is much more effective than soft partial correction. By contrast, reducing \(m\) in partial repaint generally hurts performance, which indicates that shortening the guided portion of the trajectory makes the known region harder to preserve. Increasing the repaint cycles from \(t=1\) to \(t=3\) does not produce stable gains either, suggesting that repeatedly revisiting early denoising steps cannot by itself compensate for weak conditioning.

Another useful observation is that left and right padding are often stronger than partial repaint, but still remain clearly below clean condition repaint. This indicates that positional arrangement does matter, but it is not the main factor. In particular, although padding may alleviate some interference by changing the layout, it can also make the block-causal attention structure less clean and more difficult to optimize. The dominant advantage still comes from explicitly maintaining a stable condition in the known region throughout denoising.

\paragraph{Practical takeaway.}
The first generation block is not merely a smaller version of later generation blocks. It is a special mixed region where conditioning and generation must coexist. From the perspective of conditional Flow Matching, the key challenge is to preserve the known region as a stable condition while transporting only the unknown region. The experiments suggest that this regime benefits most from a strong and explicit preservation of the known condition. In practice, this is exactly what clean condition repaint provides, which explains why it is consistently the strongest strategy in Table~\ref{tab:first_block_conditioning}.

\subsection{More Unified Approach Results with Image Modalities}
\label{app:unified_results}
In this section, we present additional samples that demonstrate cross-modal generation across different input and output modalities. It is worth emphasizing that all text and image outputs are produced by a single unified model. All tasks are pretrained from scratch, and the model is trained jointly on multiple task types, with the relative data proportions adjusted across different training stages. Although the model is trained with only a very limited amount of data, the resulting samples already exhibit promising quality, suggesting that \method has substantial potential for multimodal understanding and generation.

\subsubsection{Text to Text}
In this subsection, we present examples of text-to-text generation. As shown in the samples, the generated text is generally coherent and fluent, despite the fact that the model is trained on only approximately 1B text tokens in total.

\begin{figure*}[t]
    \centering
    \fcolorbox{blue!50}{blue!5}{
    \begin{minipage}{0.97\textwidth}
    \vspace{0.4em}
    {\large\bfseries Text to Text}\par
    \vspace{0.6em}
    \footnotesize

    \begin{minipage}[t]{0.485\textwidth}

    \textbf{Prefix Prompt:} You now have the opportunity to really learn a lot. Just ask when you do not know something; the worst thing you can do is pretend that you understand.\\
    \textbf{Generation:} This is great for you, so do not be intimidated. Instead, show your openness and willingness to learn, and I am sure anyone on your team will be more than happy to help.

    \vspace{0.7em}
    \hrule
    \vspace{0.7em}

    \textbf{Prefix Prompt:} As an adult, your hair is often one of the first indicators used to describe you. You are often known as the friend with the curls. If you then find a stylist who understands and respects curls, you will eventually relay your entire hair history to her.\\
    \textbf{Generation:} But we all do this with a knowing look and a slightly sarcastic grin. This is a common, shared history, one that curly people seem to recognize in one another immediately.

    \vspace{0.7em}
    \hrule
    \vspace{0.7em}

    \textbf{Prefix Prompt:} Comparative Advantage: Nobel laureate Paul Samuelson was once challenged to name one proposition in the social sciences that is both true and non-trivial. Several years later, he gave the correct response: comparative advantage.\\
    \textbf{Generation:} That it is logically true need not be argued before a mathematician; that it is not trivial is attested by the thousands of intelligent men who have never been able to grasp the doctrine for themselves or to believe it after it was explained to them.

    \vspace{0.7em}
    \hrule
    \vspace{0.7em}

    \textbf{Prefix Prompt:} How do I draw a circle with arrows in PowerPoint? Numbered Circular Arrows in PowerPoint using SmartArt.\\
    \textbf{Generation:} Step 1: Insert the relevant SmartArt graphic. Go to Insert, then SmartArt, then Cycle, and choose the segmented cycle graphic. Step 2: Convert it to shapes and continue editing from there.

    \vspace{0.7em}
    \hrule
    \vspace{0.7em}

    \textbf{Prefix Prompt:} ``It is N'Nance,'' she said. That sent a small chill up my spine. She nodded at the horrified expression on my face. ``He has finally had his big accident.'' I knew what that meant. ``I will be right there.''\\
    \textbf{Generation:} I turned to my own patient and said, ``Athena, can you lock this down by yourself?'' The cybertank grumbled her engine for a second, then said, ``I can manage. I will hold things here until you get back.''

    \vspace{0.7em}
    \hrule
    \vspace{0.7em}

    \textbf{Prefix Prompt:} Subject 3: Drilling and Completion of Coal Bed Methane Wells. The drilling program for a CBM well is generally a less complex process than that for other types of unconventional reservoirs.\\
    \textbf{Generation:} This is true for several reasons: CBM reservoirs are generally shallow, commonly ranging from roughly 500 to 3000 feet in depth, and their operational requirements are often more straightforward than those of many other unconventional wells.

    \end{minipage}
    \hfill
    \begin{minipage}[t]{0.485\textwidth}

    \textbf{Prefix Prompt:} Here is why the Indian Army, Navy, and Air Force salute in a different way. Probably no one ever noticed that the salute styles of the Indian Army, Navy, and Air Force are different from each other.\\
    \textbf{Generation:} The gesture of respect comes with such modest reasons that you will be amazed at the chivalry of the defence forces. When they raise their hands for the salute, they do not just greet, but honor a long-standing tradition.

    \vspace{0.7em}
    \hrule
    \vspace{0.7em}

    \textbf{Prefix Prompt:} ``Dark Knight Begins!'' Walkthrough to the Batman animated movies. DC Universe is a world filled with iconic characters, and many fans spend hours watching its animated series.\\
    \textbf{Generation:} Since every kid grows up knowing Batman and Superman, these stories remain a natural entry point into the darker side of the DC universe, especially for viewers who want an accessible introduction to its animated films.

    \vspace{0.7em}
    \hrule
    \vspace{0.7em}

    \textbf{Prefix Prompt:} Eddie Huang has circled back to his roots to close out \textit{Fresh Off the Boat} Season 2, and Part 1 of the New York episode travels to the Bronx, which, he learns from some locals itching to get out, will probably never gentrify.\\
    \textbf{Generation:} Honestly, though, I am good with cake and aggressive pedicures, both of which the Bronx seems to have in spades. Check out \textit{Fresh Off the Boat} and see how the episode leans into the borough's character and humor.

    \vspace{0.7em}
    \hrule
    \vspace{0.7em}

    \textbf{Prefix Prompt:} Those who read Nietzsche in English translation have been lied to, deceived, seduced, and hoodwinked by dishonest translators and commentators.\\
    \textbf{Generation:} My intention here will be twofold. First, to correct some of the more serious misinterpretations in the introduction to the Cambridge University Press translation of Nietzsche's \textit{Morgenr\"othe}, and second, to clarify what is at stake in reading the text more carefully.

    \vspace{0.7em}
    \hrule
    \vspace{0.7em}

    \textbf{Prefix Prompt:} First off, thank Hashem that so few people in Israel have been hurt. It is not for lack of trying on the part of our foes.\\
    \textbf{Generation:} It is miraculous and wonderful. In reply to Dan, I am a physicist, and I also wondered when I first saw the statistics. Although I would be cautious about overinterpreting them, the numbers are certainly striking.

    \vspace{0.7em}
    \hrule
    \vspace{0.7em}

    \textbf{Prefix Prompt:} Europeans have a higher tolerance for government intrusiveness than do Americans. I do not think anything in that statement will surprise my readers.\\
    \textbf{Generation:} The contrast becomes even more visible in television, where assumptions about privacy, regulation, and public life often emerge in surprisingly casual ways and reveal broader cultural differences.

    \end{minipage}

    \vspace{0.5em}
    \end{minipage}}
    \caption{\textbf{Additional text-to-text examples.} We show more qualitative continuation results from the preliminary unified prototype. The examples span conversational, narrative, expository, technical, and article-style text, illustrating that the unified latent-prior formulation can support coherent continuation across heterogeneous textual styles.}
    \label{fig:app_unified_t2t}
\end{figure*}

\subsubsection{Text to Image}
In this subsection, we further evaluate the model's ability to perform text-to-image generation. 

\newcommand{\titosample}[2]{%
\begin{minipage}[t]{0.315\textwidth}
    \centering
    \includegraphics[width=\linewidth]{exp_fig/discussion/unified/t2i_samples/#1}
    \vspace{0.35em}

    {\scriptsize\raggedright\textbf{Prompt:} #2\par}
\end{minipage}
}

\begin{figure*}[t]
    \centering
    \titosample{56.jpg}{A delicate white dandelion seed head stands above a green meadow, with other soft, blurred flowers nearby. A dark row of trees and a cloudy blue-gray sky give the open field a calm, breezy spring atmosphere.}
    \hfill
    \titosample{67.jpg}{Tall trees on both sides frame a dramatic sunset sky filled with glowing orange, pale yellow, and blue cloud textures. The dark silhouettes below make the scene feel quiet, spacious, and gently illuminated by evening light.}
    \hfill
    \titosample{82.jpg}{Two large abstract paintings dominate a polished interior hallway, their vivid colors contrasting with the neutral wall panels and glossy floor. Ceiling spotlights, patterned rugs, and nearby doorways create the feeling of a private gallery or elegant office space.}

    \vspace{0.9em}

    \titosample{83.jpg}{A cozy outdoor dining area is lit by a large yellow umbrella and several red hanging lanterns. Wooden tables, simple chairs, and warm pools of light create an intimate evening setting with a festive, traditional atmosphere.}
    \hfill
    \titosample{86.jpg}{A dense column of black and gray smoke rises from a rural property near small buildings and scattered trees. The surrounding dirt paths, fields, and pale sky make the sudden fire or burning event feel stark and unsettling.}
    \hfill
    \titosample{99.jpg}{A vivid red rose fills the image in extreme close-up, revealing many layered petals curling toward a tight spiral at the center. The blurred green background keeps attention on the flower's saturated color and soft texture.}

    \caption{\textbf{Text-to-image examples (Part I).} Qualitative samples generated by the unified prototype from descriptive natural-language prompts, covering open landscapes, interiors, dining scenes, and floral close-ups.}
    \label{fig:app_unified_t2i_1}
\end{figure*}

\begin{figure*}[t]
    \centering
    \titosample{103.jpg}{Several boats glide across rippling water during sunset, appearing as dark silhouettes beneath a long arched bridge. The orange sun and its reflection stretch across the river, giving the scene a warm, tranquil waterfront mood.}
    \hfill
    \titosample{123.jpg}{White spring blossoms cluster thickly along interwoven tree branches, mixed with small green leaves and unopened buds. The shallow depth of field makes the foreground flowers stand out while the background turns into a delicate canopy of pale bloom.}
    \hfill
    \titosample{134.jpg}{Golden sunset light spreads behind a line of dark mountain silhouettes and reflects across the still lake below. Grasses in the foreground add depth, while the mirrored peaks and warm sky create a peaceful landscape composition.}

    \vspace{0.9em}

    \titosample{275.jpg}{A bearded man with curly hair sits indoors wearing a dark blazer and lavender shirt. Warm lamp light, stacks of documents, and framed artwork behind him suggest an office or study, giving the portrait a serious conversational tone.}
    \hfill
    \titosample{280.jpg}{A broad blue lake stretches toward layered mountains beneath a sweeping sky of textured white clouds. The calm ripples, low dark shorelines, and distant ridges create a cool, expansive landscape with a quiet sense of depth.}
    \hfill
    \titosample{308.jpg}{A sightseeing boat moves along a wide river in front of grand historic architecture, including ornate towers, stone facades, and decorative rooflines. Reflections shimmer on the water, emphasizing the scale and elegance of the riverside cityscape.}

    \caption{\textbf{Text-to-image examples (Part II).} Additional samples spanning waterfront, floral, mountain, and architectural prompts, together with a portrait composition.}
    \label{fig:app_unified_t2i_2}
\end{figure*}

\begin{figure*}[t]
    \centering
    \titosample{323.png}{A single leafless tree stands prominently in rolling green grassland, its intricate branches spreading against a bright sky. Distant mountains and soft clouds frame the isolated tree, giving the image a quiet, almost sculptural quality.}
    \hfill
    \titosample{344.jpg}{A simple stylized illustration presents a red rose with green leaves, centered inside a rounded purple background with a thick black outline. The bold colors, clean shapes, and cartoon-like linework make the flower feel decorative and playful.}
    \hfill
    \titosample{411.jpg}{Bright pink blossoms fill the foreground in dense clusters, their yellow centers catching the light. More branches and flowers blur behind them, creating a saturated spring scene that feels lively, sunny, and full of floral texture.}

    \vspace{0.9em}

    \titosample{417.jpg}{A loose navy blue sweater hangs from a wooden hanger against a plain white background. The soft fabric, wide sleeves, and relaxed drape are emphasized by the simple product-style composition and uncluttered studio-like setting.}
    \hfill
    \titosample{429.jpg}{A misty mountain rises beyond a quiet lake, its slopes softened by gray haze and overcast light. Reeds in the foreground frame the water, adding depth to the subdued scene and strengthening its calm, contemplative mood.}
    \hfill
    \titosample{580.jpg}{A calm lake reflects a bright blue sky and the low green hills beyond the opposite shore. Reeds and leafy plants in the foreground create a natural frame, while distant buildings add a small human presence to the peaceful view.}

    \caption{\textbf{Text-to-image examples (Part III).} Additional samples covering pastoral, illustrative, product-style, and lakeside prompts, illustrating the breadth of compositions the model can synthesize.}
    \label{fig:app_unified_t2i_3}
\end{figure*}

\begin{figure*}[t]
    \centering
    \titosample{591.jpg}{A roadside viewpoint looks out across blue ocean water toward distant islands and hazy mountains. Green shrubs, tall grasses, a utility pole, and a small signboard sit beside the path, giving the scenic overlook a casual travel feeling.}
    \hfill
    \titosample{641.jpg}{A cup of foamy coffee sits beside fried pastries sprinkled with powdered sugar on brown paper. The colorful table covering, golden snacks, and creamy drink create a cheerful cafe or street-food scene focused on sweet comfort food.}
    \hfill
    \titosample{689.jpeg}{Heavy gray storm clouds gather low over city buildings, power lines, and rooftops, filling most of the frame with dark, textured sky. The urban skyline beneath looks small and tense, suggesting an approaching downpour or dramatic weather change.}

    \vspace{0.9em}

    \titosample{696.jpg}{A large steamed dumpling or bun rests on a round wooden plate, garnished with lettuce and small red tomatoes. The bright tabletop, folded cloth, and clean plating make the simple food presentation look fresh and carefully arranged.}
    \hfill
    \titosample{721.jpg}{A sculptural bust shows a realistic human face emerging from a rough, crown-like form and textured reddish base. The plain wall behind it keeps focus on the unusual material contrast, facial detail, and handmade artistic expression.}
    \hfill
    \titosample{760.jpg}{A round celebration cake is decorated with pink frosting roses, green leaves, and a large bow across the top. Plates of food around it suggest a shared meal, while the elaborate icing makes the cake the table's centerpiece.}

    \caption{\textbf{Text-to-image examples (Part IV).} Additional samples on travel, food, urban weather, and sculptural prompts, showing the model's handling of varied lighting and textures.}
    \label{fig:app_unified_t2i_4}
\end{figure*}

\begin{figure*}[t]
    \centering
    \titosample{765.jpg}{Rows of grilled skewers are packed tightly on a metal tray, coated in glossy red seasoning, herbs, and chili flakes. The close-up view highlights the rich sauce, charred edges, and spicy street-food appeal of the dish.}
    \hfill
    \titosample{807.jpg}{Thick steam rises from a large pot filled with stew, meat, vegetables, and rich brown broth. The close view captures active cooking, with bubbling liquid and softened ingredients creating a hot, hearty, and aromatic kitchen scene.}
    \hfill
    \titosample{808.jpg}{Small white flowers bloom in dense clusters among glossy green leaves, with many speckled petals and unopened buds visible. The close-up composition emphasizes delicate natural patterns, soft lighting, and the fresh detail of the flowering plant.}

    \vspace{0.9em}

    \titosample{850.jpg}{A brilliant sun shines above a winding mountain road bordered by lush green plants. Distant ridges fade toward the blue horizon, while the intense light and clear sky create a bright, open, high-altitude travel scene.}
    \hfill
    \titosample{884.jpg}{A vivid red boat rests on pale sand in front of calm turquoise water and a clear horizon. The simple composition emphasizes strong color contrast, seaside quietness, and the unusual stillness of a boat pulled ashore.}
    \hfill
    \titosample{l1.jpg}{A monochrome mountain landscape drawing shows steep ridges, deep valleys, winding water, and layered terrain fading into the distance. The sketch-like shading gives the scene a rugged, hand-rendered quality, as if from a travel notebook.}

    \caption{\textbf{Text-to-image examples (Part V).} Additional samples on culinary, floral, and travel prompts, together with an illustrative monochrome landscape.}
    \label{fig:app_unified_t2i_5}
\end{figure*}

\begin{figure*}[t]
    \centering
    \titosample{h1.jpg}{A young woman in a white coat and patterned scarf stands at night in front of glowing red and yellow lanterns. The warm lights and wooden structures behind her create a festive, atmospheric portrait with soft contrast.}
    \hfill
    \titosample{h2.jpg}{A man in a gray polo shirt stands on a balcony or rooftop, looking thoughtfully to the side. A blurred city skyline behind him gives the portrait an urban setting and a calm, reflective mood.}
    \hfill
    \titosample{h4.jpg}{A close selfie shows a man wearing a dark bucket hat and light blue shirt while seated inside a vehicle. Black seats, side windows, and an orange curtain provide context for the casual travel setting.}

    \vspace{0.9em}

    \titosample{h5.jpg}{An anime-style girl with long dark hair wears a purple patterned outfit with lace details, posed against a dreamy sky. Large moonlike circles, stars, and soft gradients create a romantic fantasy illustration style.}
    \hfill
    \titosample{h6.jpg}{A woman with curly hair smiles gently in a very dark outdoor setting, lit softly from the front. The nearly black background isolates her face and shoulders, creating an intimate portrait with understated nighttime atmosphere.}
    \hfill
    \titosample{h7.jpg}{Two men in gray shirts stand shoulder to shoulder in front of a green screen. One smiles broadly while the other gives a restrained expression, creating a casual studio portrait with contrasting personalities.}

    \caption{\textbf{Text-to-image examples (Part VI).} Portrait-focused samples covering nighttime scenes, urban backdrops, stylized illustration, and studio-style group compositions.}
    \label{fig:app_unified_t2i_6}
\end{figure*}

\begin{figure*}[t]
    \centering
    \titosample{h8.jpg}{Two adults sit at a restaurant table behind a birthday cake topped with lit candles. Warm indoor lighting, surrounding dishes, cups, and desserts suggest a celebratory meal shared in a relaxed dining setting.}
    \hfill
    \titosample{h9.jpg}{A fluffy white cat sits indoors with wide dark eyes, pink ears, and crossed front paws. The softly blurred background and centered framing make the cat's round face and clean white fur especially prominent.}
    \hfill
    \titosample{h10.jpg}{Two young women in coordinated school-style outfits with red bows take a close selfie outdoors. Trees, bicycles, and paved walkways in the background suggest a campus or park setting on a bright day.}

    \caption{\textbf{Text-to-image examples (Part VII).} Final samples on celebratory dining, animal, and outdoor selfie prompts, completing the qualitative tour of the unified model's text-to-image generation capability.}
    \label{fig:app_unified_t2i_7}
\end{figure*}

\subsubsection{Image \texorpdfstring{$\times$}{x} Text to Text}
In this subsection, we present examples of image-and-text-to-text generation. The outputs are generally fluent and semantically aligned with the inputs. Notably, on top of a model that has received only limited text-generation and image-generation training, we further train with merely approximately 5M image--text pairs, after which the model already exhibits a preliminary captioning ability. This suggests that \method can efficiently acquire cross-modal understanding and generation capabilities under limited multimodal supervision. We also note that all images shown in this subsection are generated by external image generation models from ground-truth captions, and therefore do not involve copyright concerns associated with real images.
\newcommand{\itttsample}[3]{%
\begin{minipage}[t]{0.315\textwidth}
    \centering
    \includegraphics[width=\linewidth]{exp_fig/discussion/unified/it2t_samples/#1.png}
    \vspace{0.35em}

    {\scriptsize\raggedright\textbf{Prompt:} #2\par}
    \vspace{0.2em}
    {\scriptsize\raggedright\textbf{Caption:} #3\par}
\end{minipage}
}

\begin{figure*}[t]
    \centering
    \itttsample{8}{Describe the content of this image in detail.}{The image shows a woman wearing an oversized long down jacket with a plain collar. The jacket features a letter ``A'' design on it.}
    \hfill
    \itttsample{11}{Describe the content of this image in detail.}{The image shows a person wearing a black jacket and sunglasses, standing on a street.}
    \hfill
    \itttsample{54}{Describe the content of this image in detail.}{The image shows a meeting room with several people, around six to eight, sitting around a long conference table and holding a meeting. A screen is mounted on the wall, and bowls, cups, and pens are placed on the table.}

    \vspace{0.9em}

    \itttsample{57}{Describe the content of this image in detail.}{The image shows a person wearing a grey hoodie with a purple design on the front. The background includes a white wall with a blue design.}
    \hfill
    \itttsample{58}{Describe the content of this image in detail.}{The image shows several people playing a ball game on a court. The background includes a blue wall with some text, and the ground is also blue.}
    \hfill
    \itttsample{59}{Describe the content of this image in detail.}{The image shows a person wearing a black suit with a white shirt underneath, seated.}

    \caption{\textbf{Image$\times$text-to-text examples (Part I).} Additional qualitative results under the prompt template \textit{Describe the content of this image in detail.}}
    \label{fig:app_unified_ixt2t_1}
\end{figure*}

\begin{figure*}[t]
    \centering
    \itttsample{60}{Describe the content of this image in detail.}{The image shows a cluster of ancient buildings with old-style architecture, featuring several old structures packed closely together.}
    \hfill
    \itttsample{61}{Describe the content of this image in detail.}{The image is a cityscape shot at dusk, showing an urban night scene with many tall high-rise buildings.}
    \hfill
    \itttsample{84}{Describe the content of this image in detail.}{The image shows a large football (soccer) stadium with the field inside, resembling an outdoor stadium. Another row of stands is visible on the other side of the field.}

    \vspace{0.9em}

    \itttsample{89}{Describe the content of this image in detail.}{The image shows a person using a pen to write on a piece of paper.}
    \hfill
    \itttsample{12}{What does this image depict? Give a comprehensive description.}{The image shows a woman with long, dark hair wearing a black top. The background is a neutral color, with a window and a pink wall visible behind her. She appears to be standing indoors.}
    \hfill
    \itttsample{13}{What does this image depict? Give a comprehensive description.}{The image shows a portrait of a young woman with blonde hair, wearing a white top and seated at a student desk with an open laptop. She appears to be speaking while using the laptop.}

    \caption{\textbf{Image$\times$text-to-text examples (Part II).} Additional qualitative results with detailed descriptive prompts across indoor scenes, cityscapes, and document-oriented imagery.}
    \label{fig:app_unified_ixt2t_2}
\end{figure*}

\begin{figure*}[t]
    \centering
    \itttsample{15}{What does this image depict? Give a comprehensive description.}{The image shows a gravestone with English text carved on it.}
    \hfill
    \itttsample{36}{What is shown in this image? Please explain thoroughly.}{The image shows a cluster of modern city buildings, including tall skyscrapers with distinctive and futuristic architectural designs.}
    \hfill
    \itttsample{43}{What is shown in this image? Please explain thoroughly.}{The image is a medium shot of a young East Asian woman with long, straight dark hair, wearing a pink top and a necklace. She is posing with her hands raised near her face, making a gesture with her fingers.}

    \vspace{0.9em}

    \itttsample{44}{What is shown in this image? Please explain thoroughly.}{The image shows a blue card or cover with a printed image and some text on it.}
    \hfill
    \itttsample{46}{What is shown in this image? Please explain thoroughly.}{The image shows three people standing in an open outdoor area. They are wearing colored clothing, and one of them appears to be carrying something on their back.}
    \hfill
    \itttsample{67}{What is shown in this image? Please explain thoroughly.}{The image shows a mountain range with a massive peak. The surface of the mountain reveals distinctive textures and patterns.}

    \caption{\textbf{Image$\times$text-to-text examples (Part III).} Additional qualitative results under more open-ended explanatory prompts, spanning objects, portraits, outdoor scenes, and landscapes.}
    \label{fig:app_unified_ixt2t_3}
\end{figure*}

\begin{figure*}[t]
    \centering
    \itttsample{68}{What is shown in this image? Please explain thoroughly.}{The image shows a man wearing a plaid shirt, working in a studio or workshop. He is focused on an object on a white surface in front of him.}
    \hfill
    \itttsample{69}{What is shown in this image? Please explain thoroughly.}{The image shows a newlywed couple at a wedding, dressed in wedding attire.}
    \hfill
    \itttsample{70}{What is shown in this image? Please explain thoroughly.}{The image is a cartoon-style drawing of a character dressed in a colored outfit, with a body showing green and yellow stripes. The character is touching its face with one hand.}

    \vspace{0.9em}

    \itttsample{26}{Describe what you see in this picture.}{The image shows musicians performing with instruments at a ceremony. The background includes a floor and a wall.}
    \hfill
    \itttsample{74}{Describe what you see in this picture.}{The image shows a black sleeveless T-shirt with a printed design on the front.}
    \hfill
    \itttsample{75}{Describe what you see in this picture.}{The image shows a person wearing a white blouse over a black and white striped skirt.}

    \caption{\textbf{Image$\times$text-to-text examples (Part IV).} Additional qualitative results over people-centric, object-centric, and illustration-style inputs.}
    \label{fig:app_unified_ixt2t_4}
\end{figure*}

\begin{figure*}[t]
    \centering
    \itttsample{78}{Describe what you see in this picture.}{The image shows a white Toyota car with a sleek design, featuring prominent front and side headlights.}
    \hfill
    \itttsample{79}{Describe what you see in this picture.}{The image shows a man wearing a black baseball cap and a grey shirt. The focus is on the cap, and the background is a plain, neutral grey.}
    \hfill
    \itttsample{80}{Describe what you see in this picture.}{The image shows a necklace with a red, ball-shaped pendant.}

    \vspace{0.9em}

    \itttsample{81}{Describe what you see in this picture.}{The image shows a person standing on a paved area with a scenic green backdrop.}
    \hfill
    \itttsample{90}{Describe what you see in this picture.}{The image shows an old building with distinctive traditional Chinese-style architecture. It features traditional entrance details, and an iron structure with a roof is visible above the building.}
    \hfill
    \itttsample{91}{Describe what you see in this picture.}{The image shows a decorative ornament in the shape of a flower.}

    \caption{\textbf{Image$\times$text-to-text examples (Part V).} Additional qualitative results on apparel, accessories, vehicles, architecture, and scenic human portraits.}
    \label{fig:app_unified_ixt2t_5}
\end{figure*}

\begin{figure*}[t]
    \centering
    \itttsample{94}{Describe what you see in this picture.}{The image shows several people playing volleyball, with one of them jumping up. The scene takes place under a clear afternoon sky.}
    \hfill
    \itttsample{48}{Generate a detailed caption for this image.}{The image shows a soccer player in action during a match, wearing a numbered uniform. The player appears to be running and kicking the ball on the field.}
    \hfill
    \itttsample{50}{Generate a detailed caption for this image.}{The image shows a black T-shirt with a design featuring a blue cat printed on the front.}

    \vspace{0.9em}

    \itttsample{51}{Generate a detailed caption for this image.}{The image shows a young boy wearing a grey vest over a white shirt, with a blue accent.}
    \hfill
    \itttsample{96}{Generate a detailed caption for this image.}{The image shows a young girl with straight hair, wearing a yellow crop top and white shorts. She is making a fist with one hand and jumping, set against a blue background.}
    \hfill
    \itttsample{97}{Generate a detailed caption for this image.}{The image shows the stands of a large soccer stadium. The field and running track are visible, with green grass surrounding the stands and a few spectators present.}

    \caption{\textbf{Image$\times$text-to-text examples (Part VI).} Additional qualitative results for sports, apparel, portrait, and stadium scenes, including caption-generation prompts.}
    \label{fig:app_unified_ixt2t_6}
\end{figure*}